\newtheorem{lemma}{Lemma}[section]
\newtheorem{definition}{Definition}[section]
\newtheorem{example}{Example}[section]
\newcommand{\indep}{\perp\kern-6pt\perp}
\newcommand{\dep}{\centernot{\perp\kern-6pt\perp}}
\newcommand{\starleft}{*\kern-5pt}
\newcommand{\starright}{\kern-5pt*}
\acrodef{AID}{adjustment identification distance}
\acrodef{ATE}{average treatment effect}
\acrodef{CI}{conditional independence}
\acrodef{CPDAG}{complete partially directed acyclic graph}
\acrodef{DAG}{directed acyclic graph}
\acrodef{KCI}{Kernel-based conditional independence}
\acrodef{LCD}{local causal discovery}
\acrodef{MB}{Markov blanket}
\acrodef{MEC}{Markov equivalence class}
\acrodef{SHD}{structural Hamming distance}
\acrodef{SNAP}{Sequential Non-Ancestor Pruning}
\begin{document}

\twocolumn[

\aistatstitle{SNAP: Sequential Non-Ancestor Pruning for Targeted Causal Effect Estimation With an Unknown Graph}

\aistatsauthor{ Mátyás Schubert \And Tom Claassen \And  Sara Magliacane }

\aistatsaddress{
University of Amsterdam\\
\And
Radboud University Nijmegen\\
\And
University of Amsterdam\\
}
]

\begin{abstract}

Causal discovery can be computationally demanding for large numbers of variables. 
If we only wish to estimate the causal effects on a small subset of target  variables, we might not need to learn the causal graph for all variables, but only a small subgraph that includes the targets and their adjustment sets. 
In this paper, we focus on identifying causal effects between target variables in a computationally and statistically efficient way.
This task combines causal discovery and effect estimation, aligning the discovery objective with the effects to be estimated.
We show that definite non-ancestors of the targets are unnecessary to learn causal relations between the targets and to identify efficient adjustments sets.
We sequentially identify and prune these definite non-ancestors with our Sequential Non-Ancestor Pruning (SNAP) framework, which can be used either as a preprocessing step to standard causal discovery methods, or as a standalone sound and complete causal discovery algorithm.
Our results on synthetic and real data show that both approaches substantially reduce the number of  independence tests and the computation time without compromising the quality of causal effect estimations.

\end{abstract}
\section{INTRODUCTION}
\label{sec:introduction}

Causal inference \citep{pearl2009causality} is fundamental to our scientific understanding and practical decision-making.
In many settings, we do not know the causal relations between the variables, which we can learn with \emph{causal discovery} methods \citep{glymour2019review}.
These methods can be computationally demanding for large numbers of variables.
In many cases, we are only interested in estimating the causal effects between a small subset of variables, which does not require recovering the causal graph over all variables.
We formalize this setting as \emph{targeted causal effect estimation with an unknown graph}, a task that focuses on identifying the causal effects $P(T_i|do(T_j))$  between pairs of target variables $T_i,T_j \in \mathbf{T}$, where $\mathbf{T}$ is a  subset of all variables $\mathbf{V}$ in a \emph{computationally and statistically efficient way}.

We assume that we are in a causally sufficient setting, i.e., there are no unobserved confounders or selection bias.
Under these assumptions, we can use constraint-based causal discovery algorithms \citep{spirtes2000causation} to identify the \ac{MEC} of the causal graph \citep{verma90equivalence}, represented by a mixed graph, called the \ac{CPDAG}.
The \ac{CPDAG} can then be used to identify valid adjustment sets for causal effect estimation \citep{perkovic2015complete}.
However, discovering the \acs{CPDAG} over all variables can scale poorly in terms of \ac{CI} tests for large numbers of nodes \citep{mokhtarian2021recursive}.

\Acl{LCD} methods \citep{wang2014discovering, gupta2023local} address this issue for a pair of targets, a treatment and an outcome, by identifying the parent adjustment set of the treatment, but they cannot learn other types of adjustments that are statistically more efficient.
\citet{maasch2024local} learn to group nodes according to their ancestral relationship to a treatment-outcome pair, which can be used to identify various adjustment sets, but not necessarily optimal ones.
These algorithms focus only on two target variables and assume that we know the causal relations between them.
\citet{watson2022causal} propose an algorithm to discover the causal relations between multiple targets, which they call \emph{foreground variables}, but assume 
that the other variables, the \emph{background variables}, are all non-descendants of the target variables.

In this paper we propose Sequential Non-Ancestor Pruning (SNAP), an approach that efficiently discovers the ancestral relationships between multiple target variables, as well as their adjustments sets, from observational data.
We show that only possible ancestors of the target variables are required to both identify these relationships and provide efficient adjustment sets.
Thus, during the process of causal discovery we identify definite non-ancestors of the targets, remove them from consideration, and continue causal discovery only on the remaining variables.
SNAP is also straightforward to combine with readily available causal discovery algorithms, significantly decreasing their complexity in terms of execution time.
Our contributions are:
\begin{compactitem}
   \item We introduce the \emph{targeted causal effect estimation with an unknown graph} task in which we focus on estimating causal effects for a set of targets in a computationally and statistically efficient way.
    \item To solve this task, we propose the \ac{SNAP} framework.
    SNAP can be used as a sound and complete standalone method,
    or as a preprocessing  step that can be combined with standard causal discovery methods.
    \item We evaluate  \ac{SNAP} on simulated and real-world data, showing their benefits in reducing conditional independence tests and computation time.
\end{compactitem}

\section{PRELIMINARIES}
\label{sec:preliminaries}

We focus on graphs $G=(\mathbf{V}, \mathbf{E})$ with nodes $\mathbf{V}$ and edges $\mathbf{E}$.
The edges $\mathbf{E}$ can be undirected edges $X-Y$, directed edges $X \to Y$ or $X \gets Y$ and bidirected edges $X \leftrightarrow Y$ that are directed both towards $X$ and $Y$.
If two nodes are connected by an edge in a graph $G$, we say that they are \emph{adjacent}, and denote the set of nodes adjacent to $X$ as $Adj_G(X)$.
An undirected graph is a graph with only undirected edges, and a directed graph is a graph with only directed edges.
A mixed graph can contain undirected, directed and bidirected edges.

A path between two nodes $X$ and $Y$ is a sequence of distinct adjacent nodes that starts at $X$ and ends at $Y$.
A directed path from $X$ to $Y$ is a path where each edge on the path is directed towards $Y$.
A directed cycle is a directed path from a node to itself.
A \ac{DAG} is a graph with only directed edges and without directed cycles.
If there is a directed path from a node $X$ to another node $Y$ in a graph $G$, then we say that $X$ is an ancestor of $Y$ and $Y$ is a descendant of $X$.
We denote the set of ancestors of $Y$ as $An_G(Y)$ and the set of descendants of $X$ as $De_G(X)$.
We say that a set of nodes $\mathbf{V}' \subseteq \mathbf{V}$ is an \emph{ancestral set} in a DAG $D$ if for all $V \in \mathbf{V}'$ it holds that $An_D(V) \subseteq \mathbf{V}'$.

A causal graph $D$ is a \ac{DAG} that describes the data generating process of a joint observational distribution $p$ over variables $\mathbf{V}$ \citep{pearl2009causality}.
We assume \emph{causal sufficiency}, i.e., the distribution $p$ is over all variables $\mathbf{V}$ and 
there are no latent confounders or selection bias.
Furthermore, we assume that the distribution $p$ is \emph{Markov and faithful} to $D$, i.e., that for all $X,Y \in \mathbf{V}$ and $\mathbf{S} \subseteq \mathbf{V}\setminus \{X,Y\}$ it holds that the \acl{CI} $X \indep Y | \mathbf{S}$ in the distribution $p$ is equivalent to d-separation $X \perp_d Y | \mathbf{S}$ in the true causal graph $D$.

Under these assumptions, constraint-based causal discovery algorithms use conditional independence (CI) tests to identify causal relations between the variables.
However, without further assumptions, \ac{CI} tests alone generally cannot identify all causal relationships, because multiple causal graphs with conflicting causal relations can imply the same set of conditional independence.
This set of causal graphs is called the \emph{Markov Equivalence Class} (\ac{MEC}) of $D$.
\citet{verma90equivalence} show that all graphs in a \ac{MEC} have the same adjacencies and v-structures, i.e., patterns of $X \to Z \gets Y$, such that $X$ and $Y$ are not adjacent. \citet{meek1995causal} shows how to identify some further edge orientations.

We  graphically represent a \ac{MEC} by a mixed graph, called the Completed Partially oriented DAG (\ac{CPDAG}).
In a \ac{CPDAG} $G$, directed edges are oriented the same way in all graphs in the \ac{MEC} of $D$, while undirected edges indicate conflicting orientations between graphs in the \ac{MEC}.
Due to conflicting orientations, some ancestral relationships might hold only in some graphs in the \ac{MEC}, but not all \citep{zhang2006causal, roumpelaki2016marginal}.
If $X$ is an ancestor of $Y$ in at least one graph in the \ac{MEC} represented by the CPDAG $G$, then $X$ is a \emph{possible ancestor} of $Y$.

\begin{definition}
\label{def:possan}
Given a \ac{CPDAG} $G$, the node $X$ is a \emph{possible ancestor} of the node $Y$, denoted as $X\in PossAn_G(Y)$, iff $X \in An_D(Y)$ in at least one graph $D$ in the MEC of $G$, or equivalently, iff there exists a \emph{possibly directed path} from $X$ to $Y$ in $G$, i.e., a path from $X$ to $Y$ that only contains undirected edges or directed edges pointing towards $Y$.
Otherwise, $X$ is a \emph{definite non-ancestor} of $Y$.
The possible ancestors for sets of nodes $\mathbf{V}'$ are the union of the possible ancestors of all nodes
$
    PossAn_G(\mathbf{V}') = \bigcup_{V \in \mathbf{V}'} PossAn(V).
$
\end{definition}

Possible ancestry is transitive in a CPDAG: if $X$ is a possible ancestor of $Y$, and $Y$ is a possible ancestor of $Z$, then $X$ is a possible ancestor of $Z$. 
Similarly, $Y$ is a possible descendant of $X$, i.e., $Y \in PossDe_G(X)$, if there is a possibly directed path from $X$ to $Y$ in $G$.

The induced subgraph of a graph $G = (\mathbf{V}, \mathbf{E})$ over a subset of variables $\mathbf{V}^* \subseteq \mathbf{V}$ is a graph, denoted as $G|_{\mathbf{V}^*}$, with edges $\mathbf{E}' \subseteq \mathbf{E}$ that are between pairs of variables in $\mathbf{V}^*$.
Let $G(\mathbf{V})$ be the \ac{CPDAG} according to the observational distribution over $\mathbf{V}$ and $G(\mathbf{V}^*)$ be the CPDAG according to the marginal observational distribution over $\mathbf{V}^*$.
\citet{guo2023variable} (Lemma D.1) show  that if $\mathbf{V}^*$ is an ancestral set in the underlying causal graph, then $G(\mathbf{V})|_{\mathbf{V}^*} = G(\mathbf{V}^*)$.

Given a \ac{CPDAG} $G$, we can estimate causal effects between its variables \citep{maathuis2009estimating}, either single values, if they are identifiable, or as a set of values.
If a causal effect is identifiable, there are various adjustment sets available to estimate it.
Parents \citep{maathuis2015generalized} and definite ancestors \citep{henckel2024adjustment} of the cause $X$ are valid adjustment sets for any outcome $Y$ that is not a parent of $X$.
If the causal effect is identifiable, then we can always estimate the effect of a cause $X$ on an outcome $Y$ in a CPDAG $G$ with the canonical adjustment \citep{perkovic2018complete}, defined as 
$
PossAn_G(\{X,Y\}) \setminus (Forb_G(X,Y) \cup \{X, Y \}),
$
where $Forb_G(X,Y)$ is the \emph{forbidden set} defined as $PossDe_G(Cn_G(X,Y))$ such that $Cn_G(X,Y)$ is the set of all nodes on  possibly directed paths from $X$ to $Y$ in $G$. While this adjustment always exists if the causal effect is identifiable, it is not necessarily the most statistically efficient adjustment.
\citet{henckel2022graphical} define the asymptotically optimal adjustment set to estimate the causal effect of $X$ on $Y$ as
$$
Pa_G(Cn_G(X,Y)) \setminus (Forb_G(X,Y) \cup \{X\}),
$$
where $Pa_G$ are the definite parents in $G$.
All the listed adjustment sets are subsets of the possible ancestors of the outcome $Y$.
\citet{guo2023variable} show that non-ancestors of the outcome are \emph{uninformative}, in the sense that they have no impact on the statistical efficiency of estimating the causal effect.
Furthermore, \citet{henckel2022graphical} show that the parents of the cause $X$ are usually strongly correlated with $X$, making the asymptotic variance of the parent adjustment set suboptimal.
\section{SEQUENTIAL NON-ANCESTOR PRUNING}
\label{sec:method}

Causal effect estimation requires knowledge about the causal graph.
If causal relations are not known a priori, we can use causal discovery methods to learn them from data.
In many settings, we might have access to a large set of variables, but we are only interested in the causal effects between a small set of \emph{target variables}. Standard \emph{global} causal discovery on all variables can estimate the \emph{identifiable} causal effects between targets, as well as valid adjustment sets, but it might be computationally inefficient, since it might also learn parts of the graph that are uninformative for the causal effects of interest. 
Alternatively, focusing on the target variables and only learning the causal relations between them might lead to confounded causal effects and less identifiable causal relations than we would learn in global causal discovery.

Our goal is to estimate the identifiable causal effects between the targets as statistically efficiently as global causal discovery methods, but in a more computationally efficient way than estimating the complete graph.
We formalize this goal as a specific task as follows:

\begin{definition}
\label{def:task}
Given a joint distribution $p$ over variables $\mathbf{V}$ and targets $\mathbf{T} \subseteq \mathbf{V}$,  we define \emph{targeted causal effect estimation with an unknown graph} as the task of estimating in a computationally and statistically efficient way the interventional distributions $P(T_i|do(T_j))$, for all possible pairs $T_i,T_j \in \mathbf{T}$.
\end{definition}

Local causal discovery methods, such as \citep{wang2014discovering, gupta2023local,maasch2024local}, address the computational efficiency by only learning a local structure in the neighborhood of a pair of target variables for which we know which is the cause and which is the effect a priori.
On the other hand, these methods are  not as statistically efficient as global causal discovery, since they cannot identify optimal adjustment sets. In particular, they either learn local adjustment sets relying on the parents of the treatment, which are suboptimal in terms of variance, or other valid adjustment sets based on groups of variables, which might not include the optimal adjustment sets. 

In this work, we address this task by progressively pruning definite non-ancestors of the target variables. 
We first show that definite non-ancestors of the targets are not needed to identify valid and statistically efficient adjustment sets. While \citet{guo2023variable} show that non-ancestors of the outcome are uninformative, our results show that they are also unnecessary to identify the causal structure between the targets and their (optimal) adjustment sets.
Then, we present \acf{SNAP}, a framework to progressively identify definite non-ancestors, inspired by low-order constraint-based causal discovery.
We provide all proofs for the following results in App.~\ref{app:proof}.

\subsection{Possible Ancestors Are All You Need}
\label{sec:pruning}
In this section, we show that definite non-ancestors of the target variables are not needed to identify statistically efficient adjustment sets for the causal effects between the targets. \cite{guo2023variable} show that non-ancestors of the target variables are not needed as part of the statistically efficient adjustment sets for the targets. Here, we do not know the non-ancestors of the target variables, since we learn the causal structure as a CPDAG $G$ and we can at best identify the set of possible ancestors of the targets $PossAn_G(\mathbf{T})$. Additionally, we show that definite non-ancestors of the targets will not have any effect on the orientations of the causal structure for the targets or their possible ancestors, and hence they can be safely ignored. 

In general, we might not be able to estimate the possible ancestors of the targets without reconstructing the whole CPDAG.
On the other hand, we can overestimate this set and consider a set $\mathbf{V}^*$ that contains  $PossAn(\mathbf{T})$. If $\mathbf{V}^*$ also contains all its possible ancestors, we then show that learning a CPDAG on it will provide the same results as learning a complete CPDAG and restricting it to $\mathbf{V}^*$. 
We formalize this results as follows:

\begin{restatable}[]{lemma}{lemmaancestors}
\label{thm:ancestors}
Let $G$ be a full \ac{CPDAG} over variables $\mathbf{V}$. 
Let $\mathbf{V}^* \subseteq \mathbf{V}$ be a possibly ancestral set of nodes, i.e. $PossAn_G(\mathbf{V}^*) \subseteq \mathbf{V}^*$.
Let $G(\mathbf{V})|_{\mathbf{V}^*}$ be the induced subgraph of $G$ over $\mathbf{V}^*$ and let $G(\mathbf{V}^*)$ be the CPDAG over variables $\mathbf{V}^*$. Then we have $G(\mathbf{V})|_{\mathbf{V}^*} = G(\mathbf{V}^*)$.
\end{restatable}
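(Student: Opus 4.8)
The plan is to reduce the claim to the result of \citet{guo2023variable} (Lemma D.1) quoted above, which already states that $G(\mathbf{V})|_{\mathbf{V}^*} = G(\mathbf{V}^*)$ \emph{whenever $\mathbf{V}^*$ is an ancestral set in the underlying causal DAG}. The hypothesis we are handed, however, is phrased at the level of the \ac{CPDAG}: $\mathbf{V}^*$ is closed under \emph{possible} ancestors, i.e. $PossAn_G(\mathbf{V}^*) \subseteq \mathbf{V}^*$. So the entire proof amounts to bridging these two notions of ancestrality, after which Guo et al.'s lemma applies essentially verbatim.

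First I would fix the true causal DAG $D$, which generates $p$ and whose \ac{MEC} is represented by $G = G(\mathbf{V})$. The key step is to show that possible-ancestral closure in $G$ forces ordinary ancestral closure in $D$. Take any $V \in \mathbf{V}^*$ and any $W \in An_D(V)$. Since $D$ lies in the \ac{MEC} of $G$ and $W$ is an ancestor of $V$ in $D$, \Cref{def:possan} gives $W \in PossAn_G(V)$ (the definition only requires ancestry in \emph{at least one} graph of the \ac{MEC}, and $D$ is such a graph). By closure, $W \in PossAn_G(\mathbf{V}^*) \subseteq \mathbf{V}^*$. As $V$ and $W$ were arbitrary, $An_D(V) \subseteq \mathbf{V}^*$ for every $V \in \mathbf{V}^*$, which is exactly the statement that $\mathbf{V}^*$ is an ancestral set in $D$. (The same argument shows $\mathbf{V}^*$ is ancestral in every member of the \ac{MEC}, but ancestrality in the true $D$ is all we need.)

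Having established that $\mathbf{V}^*$ is an ancestral set in the underlying causal graph $D$, I would invoke Guo et al.'s Lemma D.1 directly: since $p$ is Markov and faithful to $D$ and $\mathbf{V}^*$ is ancestral in $D$, we obtain $G(\mathbf{V})|_{\mathbf{V}^*} = G(\mathbf{V}^*)$, which is the claim. The only real subtlety — and where I would be most careful — is the direction of the inclusion in the bridging step: one must use that $An_D(\cdot)$ is \emph{contained in} $PossAn_G(\cdot)$ for each DAG $D$ in the \ac{MEC}, so that closure under the larger set $PossAn_G$ implies closure under the smaller set $An_D$. Everything else is bookkeeping, and no property of the marginal CPDAG $G(\mathbf{V}^*)$ beyond Guo et al.'s lemma is required.
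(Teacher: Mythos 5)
Your proposal is correct and follows essentially the same route as the paper's own proof: both reduce the claim to the ancestral-set result of \citet{guo2023variable} (Lemma D.1) by observing that $An_D(\cdot) \subseteq PossAn_G(\cdot)$ for the true DAG $D$ in the \ac{MEC}, so closure under possible ancestors in $G$ implies closure under actual ancestors in $D$. Your write-up just makes this bridging step more explicit than the paper does.
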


Lemma~\ref{thm:ancestors} implies that running a global causal discovery algorithm on a possibly ancestral set  that includes the possible ancestors of the targets, solves the task, allowing for the estimation of the causal effects between the targets in an as  statistically efficient way as learning the full CPDAG.
In particular, the restricted \ac{CPDAG} $G(\mathbf{V}^*)$ has the same canonical, parent, ancestor and optimal adjustment sets for pairs of targets $\mathbf{T}$ as the full \ac{CPDAG} $G(\mathbf{V})$. 
As discussed by \citet{guo2023variable} it also allows for more general optimal sets beyond valid adjustments.
Intuitively, the challenge is then how to identify as many definite non-ancestors as possible in a computationally efficient way.
In the next section, we propose two methods to address this challenge.

\subsection{The SNAP Framework}
\label{sec:snap}

If the possibly ancestral set $\mathbf{V}^*$, which contains the possible ancestors of the targets, is much smaller than the total number of nodes, then discovering only $G(\mathbf{V}^*)$, instead of the full CPDAG $G(\mathbf{V})$, can save considerable computational resources.
Furthermore, $G(\mathbf{V}^*)$ allows us to use statistically efficient adjustment sets.
Without background knowledge on a suitable $\mathbf{V}^*$, we approach the discovery of $G(\mathbf{V}^*)$ using an iterative framework, which we call \acl{SNAP}. We propose two implementations of this framework: SNAP$(k)$ (Alg.~\ref{alg:snap(k)}) and SNAP$(\infty)$ (Alg.~\ref{alg:snap(inf)}).

SNAP$(k)$ is a causal discovery algorithm that uses information about ancestry to progressively eliminate definite non-ancestors of targets, removing them from $\mathbf{V}^*$.
In particular, SNAP$(k)$ adapts the PC-style skeleton search, iteratively increasing conditioning set sizes of \ac{CI} tests.
At every iteration $i$, SNAP$(k)$ computes a partially oriented graph $\hat{G}^i$ by orienting v-structures according to the intermediate skeleton $\hat{U}^i$ and separating sets discovered so far.
Then, $\hat{G}^i$ is used to identify and eliminate a subset of the definite non-ancestors of the targets, and SNAP$(k)$ continues to the next iteration only on the remaining variables $\hat{V}^{i+1}$.
Thus, SNAP$(k)$ considers fewer and fewer variables as the size of the conditioning set increases.
In practice, we see that many non-ancestors are identified already by marginal tests, leading to significantly fewer higher order tests.

\begin{algorithm*}
\caption{Sequential Non-Ancestor Pruning - SNAP$(k)$}
\label{alg:snap(k)}
\footnotesize{
\begin{algorithmic}[1]
    \Require Vertices $\mathbf{V}$, targets $\mathbf{T} \subseteq \mathbf{V}$, Maximum order $k$
    \State $\hat{\mathbf{V}}^{0} \gets \mathbf{V}$, $\hat{U}^{-1} \gets$ fully connected undirected graph over $\mathbf{V}$
    \For{$i \in 0..k$}
        \State $\hat{U}^i \gets$ induced subgraph of $\hat{U}^{i-1}$ over $\hat{\mathbf{V}}^{i}$
        \For{$X \in \hat{\mathbf{V}}^{i}$, $Y \in Adj_{\hat{U}^i}(X)$} \Comment{Learn skeleton step at order $i$}
                \For{$\mathbf{S} \subseteq Adj_{\hat{U}^i}(X) \setminus \{Y\}$ s.t. $|\mathbf{S}| = i$}
                        \If{$X \perp\kern-6pt\perp Y | \mathbf{S}$}
                            \State Delete the edge $X - Y$ from ${\hat{U}^i}$
                            \State $sepset(X,Y) \gets sepset(Y,X) \gets \mathbf{S}$
                            \State \textbf{break}
                        \EndIf
                \EndFor
        \EndFor
        \If{$i < 2$} \Comment{Orient v-structures}
            \State$\hat{G}^i \gets$ OrientVstructPC($\hat{U}^i, sepset$) (Alg.~\ref{alg:pc-v})
        \Else
            \State$\hat{G}^i, sepset \gets$ OrientVstructRFCI($\hat{U}^i, sepset$) (Alg.~\ref{alg:rfci-v})
            \State $\hat{U}^i \gets$ skeleton of $\hat{G}^i$
        \EndIf
        \State $\hat{\mathbf{V}}^{i+1} \gets$ all $V \in \hat{\mathbf{V}}^{i}$ with a possibly directed path to any $T \in \mathbf{T}$ in $\hat{G}^i$ \Comment{Prune non-ancestors}
    \EndFor
\State $\hat{G}^k \gets$ induced subgraph of $\hat{G}^k$ over $\hat{\mathbf{V}}^{k+1}$    \State\Return{$\hat{\mathbf{V}}^{k+1}, \hat{G}^k$}
\end{algorithmic}
}
\end{algorithm*}

\begin{algorithm}
\caption{SNAP$(\infty)$ }
\footnotesize{
\label{alg:snap(inf)}
\begin{algorithmic}[1]
    \Require Vertices $\mathbf{V}$, targets $\mathbf{T} \subseteq \mathbf{V}$
    \State $\hat{\mathbf{V}}, \hat{G} \gets \text{SNAP}(\mathbf{V}, \mathbf{T}, |\mathbf{V}|-2)$
    \Repeat \Comment{Apply Meek rules}
        \If{$X \to Z - Y$ in $\hat{G}$ and $X \notin Adj_{\hat{G}}(Y)$}
            \State Orient $Z - Y$ as $Z \to Y$ in $\hat{G}$
        \EndIf
        \If{$X \to Z \to Y$, and $X - Y$ in $\hat{G}$}
            \State Orient $X - Y$ as $X \to Y$ in $\hat{G}$
        \EndIf
        \If{$X \to Z \gets Y, X - V - Y, V - Z$ in $\hat{G}$ and $X \notin Adj_{\hat{G}}(Y)$}
            \State Orient $V - Z$ as $V \to Z$ in $\hat{G}$
        \EndIf
    \Until{no edges can be oriented}
    \State $\hat{\mathbf{V}} \gets$ all $V \in \hat{\mathbf{V}}$ with a possibly directed path to any $T \in \mathbf{T}$ in $\hat{G}$
    \State $\hat{G} \gets$ induced subgraph of $\hat{G}$ over $\hat{\mathbf{V}}$
    \State\Return{$\hat{G}$}
\end{algorithmic}
}
\end{algorithm}

In \ac{SNAP} we orient v-structures in $\hat{G}^i$ using only conditional independence of a maximum order $i$ to identify definite non-ancestors.
This requires particular care, since in general \ac{CI} test results restricted to a maximum order can lead to conflicting v-structure orientations, which are oriented as bidirected edges, as shown by Fig.~\ref{fig:bidirected_example_main} for order $0$.
\citet{wienobst2020recovering} show that if \emph{all} \ac{CI} test results up to order $k$ are available, then these bidirected edges indicate that neither variable is the possible ancestor of the other.
Surprisingly, if we instead only perform a subset of \ac{CI} tests based on the skeleton search up to order $k$, as in anytime FCI \citep{pmlr-vR3-spirtes01a}, and then orient v-structures as in PC (Alg.~\ref{alg:pc-v}), this can lead to incorrect information about possible ancestry, as shown in this example:

\begin{figure}[t]
    \centering
    \begin{subfigure}{.25\linewidth}
        \centering
        \includegraphics[width=\linewidth]{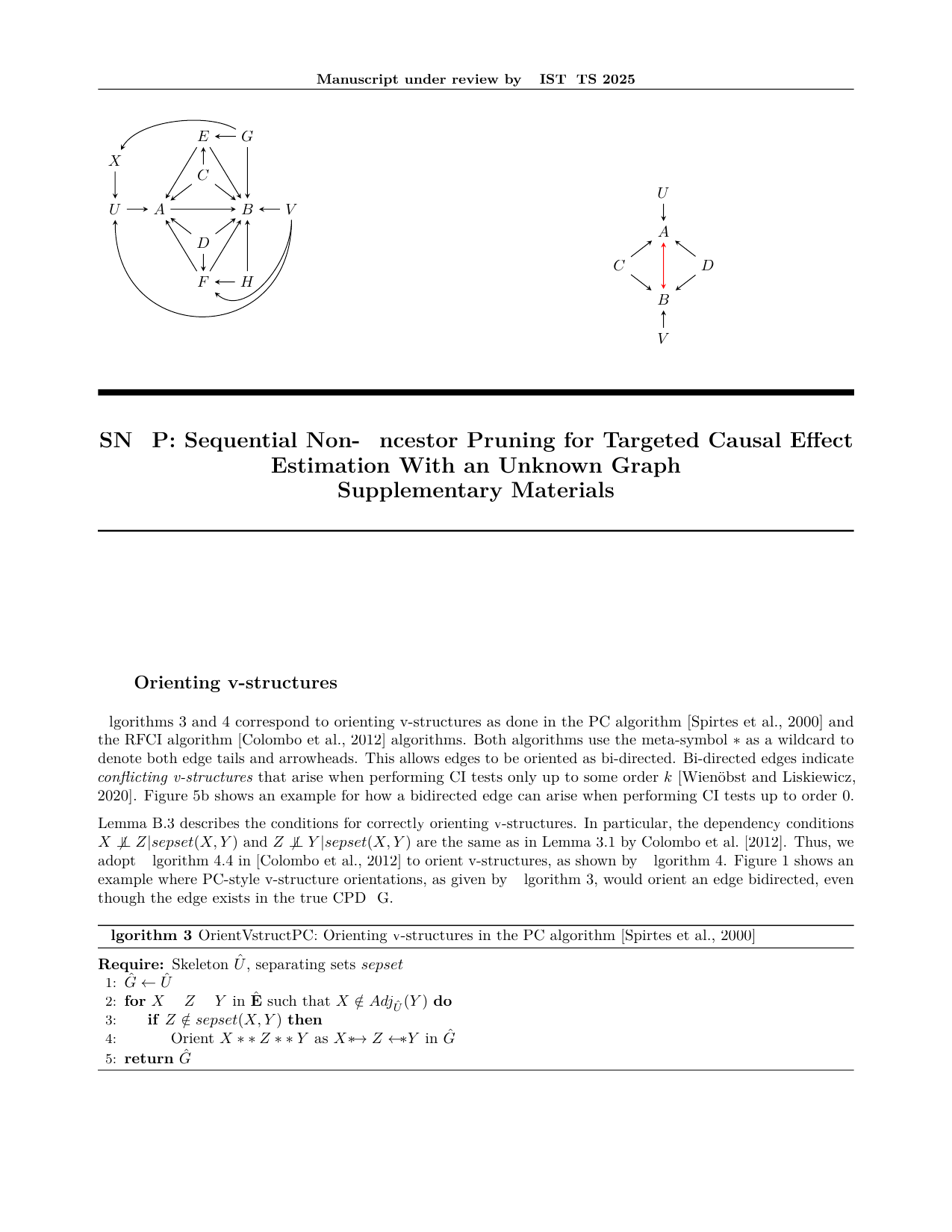}
        \caption{}
        \label{fig:bidirected_example_main}
    \end{subfigure}
    \hspace{.1\linewidth}
    \begin{subfigure}{.45\linewidth}
        \centering
        \includegraphics[width=\linewidth]{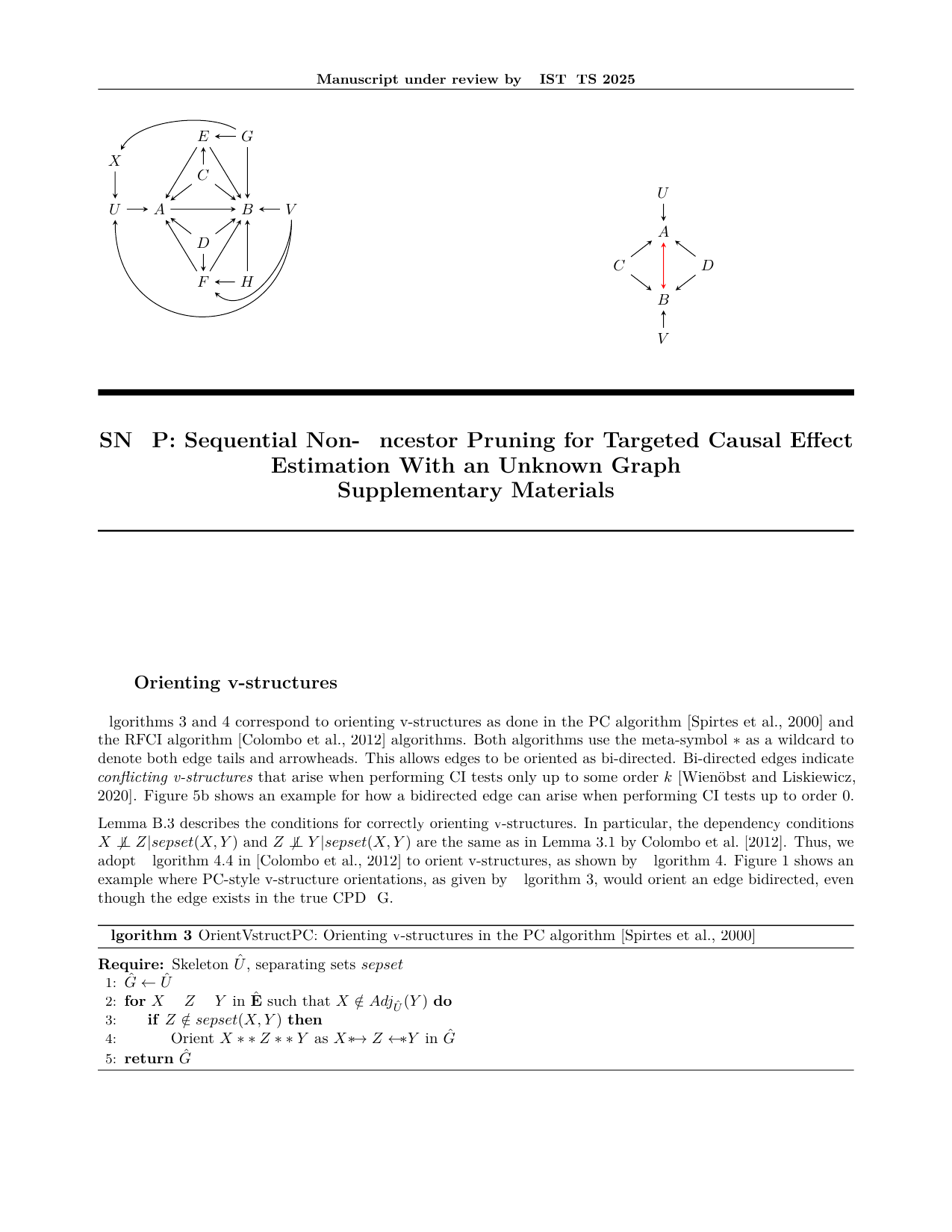}
        \caption{}
        \label{fig:rfci_example}
    \end{subfigure}
    \caption{Given the DAG in Fig.~\ref{fig:bidirected_example_main} denoted by black edges, $A$ and $B$ are marginally dependent with conflicting v-structures $U \to A \gets B$ and $A \to B \gets V$ if we orient v-structures at order 0, as shown by the red bidirected edge.
    Given the DAG in Fig.~\ref{fig:rfci_example}, performing skeleton search up to order 3 and then orienting v-structures as in PC (Alg.~\ref{alg:pc-v}) leads to an incorrect bidirected edge $A \leftrightarrow B $.
    Details in App.~\ref{app:vstruct}.}
\end{figure}

\begin{example}
\label{example:rfci}
Consider the DAG in Fig~\ref{fig:rfci_example}.
At order 3, skeleton search may find $A \perp\kern-6pt\perp G | \{E,C,X\}$ and remove the edge $A-G$, before testing $A \perp\kern-6pt\perp X | \{G,U,V\}$.
Then, finding $X \perp\kern-6pt\perp B | \{G,U,V\}$ removes the edge $X - B$.
Orienting v-structures as in PC (Alg.~\ref{alg:pc-v}) at order 3 creates the conflicting v-structures $X \to A \gets B$ and $A \to B \gets V$, resulting in the bidirected edge $A \leftrightarrow B$, even though $A \to B$ is in the true DAG.
\end{example}

To overcome this, we show in App~\ref{app:proof} that orienting v-structures as in RFCI \citep{colombo2012learning} (Alg.~\ref{alg:rfci-v}) leads to correct information about possible ancestry at any order of the PC-style skeleton search, in the causally sufficient case.
Since the RFCI orientation involves performing additional CI tests, we show that it is only required above order 1, which greatly reduces its overhead.
Then, as Theorem~\ref{thm:snap_is_sound} states, SNAP$(k)$ only removes definite non-ancestors of targets, and the remaining nodes form a possibly ancestral set.

\begin{restatable}[]{theorem}{snapsound}
\label{thm:snap_is_sound}
Given oracle conditional independence tests, at each step $i=0, \dots, k$ of \ac{SNAP}$(k)$ (Alg.~\ref{alg:snap(k)}) $\hat{\mathbf{V}}^{i+1}$ contains $PossAn(\mathbf{T})$. Additionally, $\hat{\mathbf{V}}^{i+1}$ is a possibly ancestral set, i.e. $PossAn_G(\hat{\mathbf{V}}^i) \subseteq \hat{\mathbf{V}}^{i+1}$.
\end{restatable}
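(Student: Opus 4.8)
The plan is to prove both conclusions simultaneously by induction on the iteration index $i$, carrying the combined hypothesis $P(i)$: ``$\hat{\mathbf{V}}^i \supseteq PossAn_G(\mathbf{T})$ and $\hat{\mathbf{V}}^i$ is possibly ancestral, i.e. $PossAn_G(\hat{\mathbf{V}}^i)\subseteq \hat{\mathbf{V}}^i$.'' The base case $P(0)$ is immediate since $\hat{\mathbf{V}}^0=\mathbf{V}$ trivially contains $PossAn_G(\mathbf{T})$ and satisfies $PossAn_G(\mathbf{V})\subseteq\mathbf{V}$. The theorem is then exactly the inductive step $P(i)\Rightarrow P(i+1)$, read off for each $i=0,\dots,k$.

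Two structural facts drive the step. First, a \emph{skeleton supergraph} property: the order-$i$ skeleton phase deletes an edge only when an oracle CI (equivalently, a d-separation) is found, and true adjacencies are never d-separated under faithfulness, so the induced true skeleton $G|_{\hat{\mathbf{V}}^i}$ is a subgraph of $\hat{U}^i$ (the $\hat{U}^i\gets$ skeleton-of-$\hat{G}^i$ update for $i\ge 2$ only removes further edges for which a genuine separating set is exhibited, so true edges still survive). Second, an \emph{orientation soundness} property: if $\hat{G}^i$ has an arrowhead at $P$ on an edge $P-Q$, then $P\notin PossAn_G(Q)$; in particular a bidirected edge $P\leftrightarrow Q$ certifies $P\notin PossAn_G(Q)$ and $Q\notin PossAn_G(P)$. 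For orders $i\le 1$ this holds for the PC-style orientation (Alg.~\ref{alg:pc-v}), and for $i\ge 2$ it is precisely the RFCI-orientation guarantee (Alg.~\ref{alg:rfci-v}) discussed above, which repairs the failure mode of Example~\ref{example:rfci} and, via \citet{wienobst2020recovering}, validates the bidirected clause.

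Given these, I would isolate a \emph{path-transfer} lemma: if $\rho$ is a possibly directed path in $G$ all of whose vertices lie in $\hat{\mathbf{V}}^i$, then $\rho$ is also a possibly directed path in $\hat{G}^i$. Indeed, every edge of $\rho$ survives in $\hat{U}^i$ by the supergraph property; and for an edge traversed from $W'$ to $W$ (the target direction), the edge is undirected or $W'\to W$ in $G$, so $W'\in PossAn_G(W)$, whence orientation soundness forbids an arrowhead at $W'$ or a bidirected mark, leaving the edge usable in the forward direction. The first claim follows directly: for $V\in PossAn_G(\mathbf{T})$ pick a possibly directed path to a target $T$; all its vertices are possible ancestors of $T$, hence lie in $\hat{\mathbf{V}}^i$ by $P(i)$, so path transfer gives a possibly directed path to $T$ in $\hat{G}^i$ and $V$ survives pruning, i.e. $V\in\hat{\mathbf{V}}^{i+1}$. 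For the possibly-ancestral claim, take $V\in\hat{\mathbf{V}}^{i+1}$ (so $V$ has a possibly directed path $\sigma$ to some $T$ in $\hat{G}^i$) and $W\in PossAn_G(V)$; since $V\in\hat{\mathbf{V}}^i$ and $\hat{\mathbf{V}}^i$ is possibly ancestral by $P(i)$, the possibly directed $G$-path $\rho$ from $W$ to $V$ stays inside $\hat{\mathbf{V}}^i$, transfers to $\hat{G}^i$, and concatenating $\rho$ with $\sigma$ yields a possibly directed walk, hence a possibly directed path, from $W$ to $T$ in $\hat{G}^i$; therefore $W\in\hat{\mathbf{V}}^{i+1}$, establishing $PossAn_G(\hat{\mathbf{V}}^{i+1})\subseteq\hat{\mathbf{V}}^{i+1}$.

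I expect the main obstacle to be the orientation-soundness ingredient rather than the bookkeeping above. Under oracle tests restricted to order $i$, naive v-structure orientation can introduce spurious arrowheads and bidirected edges (Example~\ref{example:rfci}), which would wrongly discard true possible ancestors of $\mathbf{T}$ during pruning; proving that the RFCI-style orientation never does this, and that each of its bidirected edges genuinely certifies mutual non-possible-ancestry in $G$, is the delicate part and is what the separate appendix argument supplies. The residual care points are routine: checking that the supergraph property persists through the skeleton-of-$\hat{G}^i$ update, and that a concatenated possibly directed walk reduces to a simple possibly directed path by shortcutting repeated vertices (equivalently, by transitivity of possible ancestry).
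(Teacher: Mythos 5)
Your plan reproduces the paper's architecture almost exactly: a skeleton--supergraph lemma (edges are deleted only when an oracle separating set is found, so true adjacencies inside $\hat{\mathbf{V}}^i$ survive; this is Lemma~\ref{lem:skeleton}), an orientation-soundness property (an arrowhead at $P$ on an edge $P$--$Q$ of $\hat{G}^i$ certifies $P \notin PossAn_G(Q)$), a path-transfer step showing that possibly directed paths of $G$ whose vertices lie in $\hat{\mathbf{V}}^i$ remain possibly directed in $\hat{G}^i$, and the concatenation/induction bookkeeping that yields both claims. That bookkeeping is correct, and if anything is spelled out slightly more carefully than in the paper (the walk-to-path reduction, the explicit inductive hypothesis $P(i)$).

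The genuine gap is that the orientation-soundness ingredient---which you yourself flag as ``the delicate part''---is asserted rather than proven, and it is precisely this that constitutes the mathematical core of the paper's proof. The paper establishes it in two nontrivial lemmas. First, for $i \geq 2$ (Lemma~\ref{lem:vstructure1}), it gives a proof by contradiction, adapted from Lemma~2 of \citet{wienobst2020recovering}, that whenever the RFCI conditions $Z \notin sepset(X,Y)$, $X \dep Z \mid sepset(X,Y)$ and $Z \dep Y \mid sepset(X,Y)$ hold, $Z$ cannot be a possible ancestor of $X$ in $G$. You cannot discharge this by citing \citet{wienobst2020recovering} directly for the ``bidirected clause'': their guarantee presupposes that \emph{all} CI tests up to order $k$ have been performed, which is exactly the hypothesis SNAP violates---that failure mode is the whole point of Example~\ref{example:rfci}, so the argument must be redone under the weaker RFCI dependence conditions. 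Second, for $i = 0,1$ (Lemma~\ref{lem:vstructure}), your claim that PC-style orientation (Alg.~\ref{alg:pc-v}) is already sound is not self-evident: the case $i=0$ is easy (all marginal tests are performed, so the two RFCI dependencies hold automatically), but the case $i=1$ requires a dedicated helper result about d-separation along treks (Lemma~\ref{lem:separated}: $X \not\perp_d Y$, $X \perp_d Y \mid Z$ and $Z \perp_d Y \mid W$ jointly imply $X \perp_d Y \mid W$), applied iteratively along a chain of non-adjacencies produced by the skeleton search. Neither argument is sketched in your proposal, so as it stands the proof establishes only the reduction of the theorem to these two unproven lemmas, not the theorem itself.
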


As shown by Corollary~\ref{cor:snap-prefiltering}, we can stop SNAP$(k)$ early at any maximum iteration $k$ to obtain the remaining variables $\hat{\mathbf{V}}^{k+1}$, and subsequently run a sound and complete global causal discovery algorithm of our choice only on $\hat{\mathbf{V}}^{k+1}$ to obtain a CPDAG that we can use for targeted causal effect estimation.
We refer to this approach as prefiltering with SNAP$(k)$.

\begin{restatable}[]{corollary}{snapkprefiltering}
\label{cor:snap-prefiltering}
Given oracle conditional independence tests, a sound and complete causal discovery algorithm on $\hat{\mathbf{V}}^{k+1}$, the output of \ac{SNAP}(k), will return a CPDAG on $\hat{\mathbf{V}}^{k+1}$, which is the same as the induced subgraph of the full CPDAG $G(\mathbf{V})$ restricted to $\hat{\mathbf{V}}^{k+1}$. Additionally, this CPDAG will contain all informative adjustment sets for estimating causal effects between the targets $\mathbf{T}$.
\end{restatable}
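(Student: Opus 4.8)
The plan is to combine the soundness guarantee of SNAP$(k)$ (Theorem~\ref{thm:snap_is_sound}) with the subgraph-equality result (Lemma~\ref{thm:ancestors}), and then invoke the known properties of informative adjustment sets from the preliminaries. First I would observe that Theorem~\ref{thm:snap_is_sound} tells us that the output set $\hat{\mathbf{V}}^{k+1}$ is a \emph{possibly ancestral set}, meaning $PossAn_G(\hat{\mathbf{V}}^{k+1}) \subseteq \hat{\mathbf{V}}^{k+1}$, and that it contains $PossAn_G(\mathbf{T})$. This is precisely the hypothesis required by Lemma~\ref{thm:ancestors}: a possibly ancestral subset $\mathbf{V}^* = \hat{\mathbf{V}}^{k+1}$ of the variables $\mathbf{V}$. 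Applying Lemma~\ref{thm:ancestors} directly with $\mathbf{V}^* = \hat{\mathbf{V}}^{k+1}$ yields $G(\mathbf{V})|_{\hat{\mathbf{V}}^{k+1}} = G(\hat{\mathbf{V}}^{k+1})$. Since a sound and complete causal discovery algorithm run on $\hat{\mathbf{V}}^{k+1}$ (using oracle CI tests over the marginal distribution on those variables) returns exactly the CPDAG $G(\hat{\mathbf{V}}^{k+1})$ by definition of soundness and completeness, the first claim of the corollary follows: the returned CPDAG equals the induced subgraph of the full CPDAG restricted to $\hat{\mathbf{V}}^{k+1}$.

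For the second claim, I would argue that all informative adjustment sets for causal effects between targets $T_i, T_j \in \mathbf{T}$ are preserved. The key point is that every adjustment set listed in the preliminaries---the canonical, parent, definite-ancestor, and asymptotically optimal adjustment sets of \citet{henckel2022graphical}---is a subset of $PossAn_G(\{T_i, T_j\}) \subseteq PossAn_G(\mathbf{T}) \subseteq \hat{\mathbf{V}}^{k+1}$. I would verify this containment by noting that each such set is defined through $PossAn_G$, $Pa_G$, or $Cn_G$ of the target pair, all of which only involve nodes on possibly directed paths into the targets, hence possible ancestors of the targets. Because these sets live entirely within $\hat{\mathbf{V}}^{k+1}$, and because the induced subgraph equals the true restricted CPDAG by the first claim, the relevant graphical structure (parents, possible ancestors, critical nodes, forbidden sets) determining each adjustment set is identical whether computed in $G(\mathbf{V})$ or in $G(\hat{\mathbf{V}}^{k+1})$. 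Invoking the result of \citet{guo2023variable} that non-ancestors of the outcome are uninformative then confirms that nothing informative has been discarded by the pruning.

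The main obstacle I anticipate is the careful justification that the relevant graphical quantities---in particular the forbidden set $Forb_G(T_i,T_j) = PossDe_G(Cn_G(T_i,T_j))$ and the critical set $Cn_G(T_i,T_j)$---are computed identically in the restricted graph. The subtlety is that $Forb_G$ involves \emph{possible descendants}, which a priori could lie outside a possibly ancestral set. I would resolve this by arguing that only the \emph{intersection} of $Forb_G$ with the candidate adjustment nodes matters: since each adjustment set is obtained by \emph{removing} $Forb_G$ from a set of possible ancestors, and that set of possible ancestors is contained in $\hat{\mathbf{V}}^{k+1}$, any forbidden node outside $\hat{\mathbf{V}}^{k+1}$ is irrelevant to the set difference. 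Equivalently, I would show that $Cn_G(T_i,T_j) \subseteq PossAn_G(\mathbf{T})$ (nodes on possibly directed paths from $T_i$ to $T_j$ are possible ancestors of $T_j$), so the critical set itself lies in $\hat{\mathbf{V}}^{k+1}$, and the relevant part of the forbidden set---its intersection with the possible ancestors from which we subtract---is correctly determined by the restricted CPDAG. Once this localization argument is in place, the equality of all four adjustment sets between the full and restricted CPDAGs follows, completing the proof.
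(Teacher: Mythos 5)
Your proposal is correct and follows essentially the same route as the paper: invoke Theorem~\ref{thm:snap_is_sound} to establish that $\hat{\mathbf{V}}^{k+1}$ is a possibly ancestral set containing $PossAn_G(\mathbf{T})$, then apply Lemma~\ref{thm:ancestors} to conclude that the CPDAG learned on $\hat{\mathbf{V}}^{k+1}$ equals $G(\mathbf{V})|_{\hat{\mathbf{V}}^{k+1}}$. The only difference is that you additionally spell out why the canonical, parent, ancestor, and optimal adjustment sets are preserved (including the localization argument for the forbidden set $Forb_G$), a point the paper's proof leaves implicit by deferring to the discussion following Lemma~\ref{thm:ancestors} and to \citet{guo2023variable}; this extra care is sound and only strengthens the argument.
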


Similarly, local algorithms by \citep{wang2014discovering} and \citep{gupta2023local} output the same parents and undirected neighbors when ran only on $\hat{\mathbf{V}}^{k+1}$ instead of all nodes.
Thus, standard discovery methods on $\hat{\mathbf{V}}^{k+1}$ provide us with adjustment sets to solve the targeted causal effect estimation task with an unknown graph.

Furthermore, \ac{SNAP}$(k)$ can be extended to obtain a stand-alone causal discovery algorithm.
We call this algorithm \ac{SNAP}$(\infty)$ and show it in Alg.~\ref{alg:snap(inf)}.
\ac{SNAP}$(\infty)$ runs \ac{SNAP}$(k)$ \emph{until completion} with $k=|\mathbf{V}|-2$.
Then, it applies Meek's rules on the resulting partially oriented graph $\hat{G}^k$, identifies definite non-ancestors one more time, which in this case will provide exactly the set of possible ancestors of the targets, and returns the induced subgraph of $\hat{G}^k$ over the remaining variables.
Theorem~\ref{thm:snap_is_complete} states that \ac{SNAP}$(\infty)$ is sound and complete over the possible ancestors of the targets.

\begin{restatable}[]{theorem}{snapcomplete}
\label{thm:snap_is_complete}
Given oracle conditional independence tests, let $\hat{G}$ be the output of graph of \ac{SNAP}$(\infty)$ for targets $\mathbf{T}$. Then, \ac{SNAP}$(\infty)$ returns $\mathbf{\hat{V}} = PossAn(\mathbf{T})$. Additionally, \ac{SNAP}$(\infty)$ is sound and complete over the possible ancestors $\mathbf{T}$, i.e.
$
\hat{G} = G|_{PossAn(\mathbf{T})}.
$
\end{restatable}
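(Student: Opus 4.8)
The plan is to show that once SNAP$(k)$ is run to completion with $k=|\mathbf{V}|-2$, the partially oriented graph it produces over the surviving vertices already carries the correct skeleton and v-structures of the marginal CPDAG, so that the subsequent application of Meek's rules and the final pruning step in Alg.~\ref{alg:snap(inf)} recover exactly $G|_{PossAn(\mathbf{T})}$ on the vertex set $PossAn(\mathbf{T})$. First I would invoke Theorem~\ref{thm:snap_is_sound} with $k=|\mathbf{V}|-2$ to conclude that the surviving set $\hat{\mathbf{V}} := \hat{\mathbf{V}}^{k+1}$ is a possibly ancestral set with $PossAn(\mathbf{T}) \subseteq \hat{\mathbf{V}}$. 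By Lemma~\ref{thm:ancestors} applied to $\mathbf{V}^*=\hat{\mathbf{V}}$, the marginal CPDAG $G(\hat{\mathbf{V}})$ coincides with the induced subgraph $G(\mathbf{V})|_{\hat{\mathbf{V}}}$, so it suffices to prove that the graph $\hat{G}$ obtained after Meek's rules (starting from $\hat{G}^k$) equals $G(\hat{\mathbf{V}})$.

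The heart of the argument is skeleton and v-structure correctness over $\hat{\mathbf{V}}$. For the skeleton: because $\hat{\mathbf{V}}$ is possibly ancestral, for any $X\in\hat{\mathbf{V}}$ we have $Pa_D(X) \subseteq An_D(X) \subseteq PossAn_G(X) \subseteq \hat{\mathbf{V}}$, and these definite parents are never pruned and never deleted from the skeleton. Hence for any non-adjacent pair $X,Y\in\hat{\mathbf{V}}$, taking whichever of $X,Y$ is a non-descendant of the other (at least one always is, since $D$ is acyclic), the local Markov separating set $Pa_D(X)$ is contained in $Adj(X)$ at order $|Pa_D(X)|\le k$, so the PC-style search deletes the edge $X-Y$; conversely no true edge is ever removed. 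Thus the skeleton of $\hat{G}^k$ restricted to $\hat{\mathbf{V}}$ equals that of $G(\hat{\mathbf{V}})$. For the v-structures, the naive PC orientation at a bounded order can introduce spurious bidirected edges, as Example~\ref{example:rfci} shows; I would instead rely on the RFCI-style orientation of \citet{colombo2012learning} (Alg.~\ref{alg:rfci-v}), whose correctness at every order of the skeleton search is established in App.~\ref{app:proof} (building on \citet{wienobst2020recovering}), yielding the correct v-structures and a valid skeleton on $\hat{\mathbf{V}}$. With the correct skeleton and v-structures in hand, completeness of Meek's rules \citep{meek1995causal}, applied in Alg.~\ref{alg:snap(inf)}, gives $\hat{G}=G(\hat{\mathbf{V}})=G(\mathbf{V})|_{\hat{\mathbf{V}}}$.

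Finally I would analyze the last pruning step, which keeps every $V\in\hat{\mathbf{V}}$ that has a possibly directed path to some $T\in\mathbf{T}$ in $\hat{G}$. Since possible ancestry is transitive, every node on a possibly directed path to $\mathbf{T}$ in $G$ lies in $PossAn(\mathbf{T})\subseteq\hat{\mathbf{V}}$; therefore such a path exists in the induced subgraph $\hat{G}=G(\mathbf{V})|_{\hat{\mathbf{V}}}$ if and only if it exists in $G$, and the surviving set is exactly $PossAn(\mathbf{T})$. Restricting $\hat{G}$ to this set then gives $G(\mathbf{V})|_{\hat{\mathbf{V}}}|_{PossAn(\mathbf{T})}=G|_{PossAn(\mathbf{T})}$ (since $PossAn(\mathbf{T})\subseteq\hat{\mathbf{V}}$), which establishes both $\hat{\mathbf{V}}=PossAn(\mathbf{T})$ and $\hat{G}=G|_{PossAn(\mathbf{T})}$. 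Note that this is essentially Corollary~\ref{cor:snap-prefiltering} instantiated with SNAP's own orientation procedure playing the role of the external sound and complete algorithm.

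I expect the main obstacle to be the skeleton and v-structure correctness of the bounded-order, pruned search: one must verify that pruning definite non-ancestors never removes a vertex needed in a separating set for a surviving pair (handled by the possibly ancestral property above), and, crucially, that the RFCI-style orientation repairs the failure mode of Example~\ref{example:rfci} so that no spurious ancestral information is introduced before Meek's rules and the final prune are applied.
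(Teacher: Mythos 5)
Your proposal is correct and takes essentially the same route as the paper's proof: Theorem~\ref{thm:snap_is_sound} for soundness of the surviving set, skeleton exactness at order $|\mathbf{V}|-2$ via the possibly-ancestral property (parents never pruned, hence the required separating sets lie inside the adjacency sets), v-structure exactness via the RFCI-style orientation (Alg.~\ref{alg:rfci-v}), completeness of Meek's rules, and a transitivity argument showing the final prune returns exactly $PossAn(\mathbf{T})$. The only imprecision is that the appendix result you cite guarantees correct possible-ancestry information at \emph{every} order, not correct v-structures; exact v-structure recovery holds only at the final order, where your own skeleton-exactness argument makes the RFCI dependency conditions hold automatically so that Lemma~3.1 of \citet{colombo2012learning} applies --- which is precisely how the paper's Lemma~\ref{snapk_completion} closes this step.
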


\subsection{Complexity Analysis}

In this section, we show that the worst-case computational complexity of \ac{SNAP}$(\infty)$ in terms of \ac{CI} tests matches the complexity of PC
for graphs with maximum degree $d_{max} \geq 2$.
The worst-case complexity for PC is determined by its skeleton search, which is $\mathcal{O}(|\mathbf{V}|^{d_{\max}+2})$ \citep{spirtes2000causation}, where $|\mathbf{V}|$ is the number of nodes and $d_{\max}$ is the maximum degree.
The worst-case complexity of \ac{SNAP}$(\infty)$ is given by the complexity of the skeleton search and the RFCI orientation rules (Algorithm~\ref{alg:rfci-v}).
We present a lemma that states that the worst-case complexity of the RFCI orientation rules is polynomial in the number of nodes.

\begin{restatable}[]{lemma}{rfcicomplexity}
\label{lem:rfci_complexity}
The worst-case complexity of the RFCI orientation rules (Algorithm~\ref{alg:rfci-v}) in terms of \ac{CI} tests performed is at most $\mathcal{O}(|\mathbf{V}|^4)$ \ac{CI} tests, where $|\mathbf{V}|$ is the number of nodes.
\end{restatable}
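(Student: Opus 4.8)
The plan is to bound the total number of \ac{CI} tests performed by Algorithm~\ref{alg:rfci-v} by charging every test to the \emph{unshielded triple} whose examination triggered it, and then separately bounding (i) how many triples are ever examined and (ii) how many tests each examination costs. First I would make precise \emph{when} the RFCI orientation issues a \ac{CI} test: tests are performed only while processing an unshielded triple $\langle X, Z, Y\rangle$ (with $X,Y$ non-adjacent), where the algorithm re-tests whether the endpoints remain dependent on the middle node given (a subset of) the stored separating set $sepset(X,Y)$, in order to detect a faulty separating set before committing to the orientation $X \to Z \gets Y$. The key structural fact I would use is that the working skeleton is \emph{monotone}: edges are only ever deleted, never added, so each of the at most $\binom{|\mathbf{V}|}{2} = \mathcal{O}(|\mathbf{V}|^2)$ edges is removed at most once.

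Next I would count the triples. At any point the number of unshielded triples is at most $\mathcal{O}(|\mathbf{V}|^3)$, since there are $\mathcal{O}(|\mathbf{V}|^2)$ non-adjacent pairs $(X,Y)$ and each has at most $|\mathbf{V}|-2$ common neighbors $Z$. The subtlety is that deleting an edge during the procedure can turn a previously shielded triple into an unshielded one, so triples may have to be re-examined. However, deleting a single edge $X-Y$ newly un-shields only triples of the form $\langle X, Z, Y \rangle$ with $Z$ a common neighbor of $X$ and $Y$, of which there are at most $\mathcal{O}(|\mathbf{V}|)$. Since there are at most $\mathcal{O}(|\mathbf{V}|^2)$ deletions in total, the number of triples introduced this way is $\mathcal{O}(|\mathbf{V}|^3)$, so the total number of triple examinations over the whole run remains $\mathcal{O}(|\mathbf{V}|^3)$.

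Finally I would bound the per-examination cost. Because the algorithm reuses the stored separating set rather than searching for one, examining a triple costs only a constant number of \ac{CI} tests (testing $X \indep Z \mid sepset(X,Y)\setminus\{Z\}$ and $Y \indep Z \mid sepset(X,Y)\setminus\{Z\}$); if the variant additionally refines $sepset(X,Y)$ toward a minimal separating set, this costs at most $\mathcal{O}(|\mathbf{V}|)$ tests per triple. Multiplying the $\mathcal{O}(|\mathbf{V}|^3)$ examinations by the per-examination cost yields at most $\mathcal{O}(|\mathbf{V}|^4)$ \ac{CI} tests, which is the claimed bound (indeed, with the constant-cost reading it is already $\mathcal{O}(|\mathbf{V}|^3)$, so $\mathcal{O}(|\mathbf{V}|^4)$ holds with room to spare).

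I expect the main obstacle to be the bookkeeping around re-examination: I must argue that edge deletions cannot cause triples to be reprocessed an unbounded number of times. The clean way to do this is the monotonicity argument above — each edge dies once, and each death un-shields only $\mathcal{O}(|\mathbf{V}|)$ triples — which caps the cumulative number of examinations at $\mathcal{O}(|\mathbf{V}|^3)$ rather than letting it compound. A secondary point to verify is that no \ac{CI} tests are hidden in the purely combinatorial orientation steps, so that the entire \ac{CI}-test budget is genuinely accounted for by triple examinations. Combined with the $\mathcal{O}(|\mathbf{V}|^{d_{\max}+2})$ skeleton-search cost, this confirms that the RFCI orientation does not dominate the complexity of \ac{SNAP}$(\infty)$ for $d_{\max} \ge 2$.
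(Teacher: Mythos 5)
Your decomposition is essentially the paper's (bound the number of unshielded-triple examinations, then the CI-test cost per examination), and your monotonicity bookkeeping for the number of examinations is sound. The gap is in the cost you assign to the minimal-separating-set refinement. In Algorithm~\ref{alg:rfci-v} this refinement is not a single pass over $sepset(X,Y)$: after every successful removal the inner loop restarts, so a search that begins from a separating set of size up to $|\mathbf{V}|$ costs $\Theta(|\mathbf{V}|^2)$ CI tests in the worst case, not the $\mathcal{O}(|\mathbf{V}|)$ you claim (a single greedy pass is also not guaranteed to return a \emph{minimal} set, and minimality is what the RFCI rules require). With the corrected per-search cost, your charging scheme --- which charges refinements to triple examinations, of which there are $\mathcal{O}(|\mathbf{V}|^3)$ --- only yields $\mathcal{O}(|\mathbf{V}|^3 \cdot |\mathbf{V}|^2) = \mathcal{O}(|\mathbf{V}|^5)$, which does not prove the lemma; the fact that your stated numbers multiply out to $\mathcal{O}(|\mathbf{V}|^4)$ is an artifact of two errors cancelling.

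The missing idea, and what the paper's proof does, is to charge each refinement not to the triple that triggered it but to the \emph{edge it deletes}: a minimal-separating-set search runs only when one of the two dependence checks returns independence, and immediately afterwards the edge $V - Z$ is deleted from $\hat{U}$; since edges are never re-inserted, there are at most $\binom{|\mathbf{V}|}{2} = \mathcal{O}(|\mathbf{V}|^2)$ such searches over the entire run. This gives $\mathcal{O}(|\mathbf{V}|^2) \cdot \mathcal{O}(|\mathbf{V}|^2) = \mathcal{O}(|\mathbf{V}|^4)$ for all refinements, plus $\mathcal{O}(|\mathbf{V}|^3)$ for the two dependence tests per examined triple, hence $\mathcal{O}(|\mathbf{V}|^4)$ in total. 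You already state the needed monotonicity fact (each edge is removed at most once); you apply it to bound the number of triple examinations, but not to bound the number of expensive searches, which is where it is actually indispensable.
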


We provide a formal proof in App.~\ref{app:complexity}.
Intuitively, Algorithm~\ref{alg:rfci-v} performs two \ac{CI} tests for each unshielded triple, leading to $\mathcal{O}(|\mathbf{V}|^3)$ \ac{CI} tests.
Then, for each $\mathcal{O}(|\mathbf{V}|^2)$ independences found, it finds a minimal separating set in $\mathcal{O}(|\mathbf{V}|^2)$ \ac{CI} tests.
This results in $\mathcal{O}(|\mathbf{V}|^3 + |\mathbf{V}|^2 \cdot |\mathbf{V}|^2) = \mathcal{O}(|\mathbf{V}|^4)$ \ac{CI} tests.
By combining the worst-case complexity of the PC-style skeleton search and the RFCI orientation rules, can then show that the worst-case computational complexity of SNAP($\infty$) is at most
$\mathcal{O}(|\mathbf{V}|^{d_{\max}+2} + |\mathbf{V}|^4)$. 

We now show that for graphs that allow a maximum degree $d_{max} \geq 2$, which include most realistic graphs, the complexity of the skeleton search dominates the complexity of the RFCI orientation rules, which means that on these graphs SNAP($\infty$) matches PC in terms of worst-case complexity.

\begin{restatable}[]{corollary}{snapcomplexity}
\label{col:snap_complexity}
For graphs with maximum degree $d_{max} \geq 2$, the worst-case computational complexity of SNAP($\infty$) in terms of \ac{CI} tests is $\mathcal{O}(|\mathbf{V}|^{d_{\max}+2})$, which matches the complexity of PC.
\end{restatable}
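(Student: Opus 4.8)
The plan is to treat this corollary purely as an asymptotic simplification of the combined bound established just above, namely that SNAP$(\infty)$ performs at most $\mathcal{O}(|\mathbf{V}|^{d_{\max}+2} + |\mathbf{V}|^4)$ CI tests. That bound already decomposes the cost into the PC-style skeleton search, which is $\mathcal{O}(|\mathbf{V}|^{d_{\max}+2})$ \citep{spirtes2000causation}, and the cumulative RFCI orientation cost, which Lemma~\ref{lem:rfci_complexity} bounds by $\mathcal{O}(|\mathbf{V}|^4)$. The only remaining work is to show that, under the degree assumption $d_{\max} \ge 2$, the skeleton-search term dominates the orientation term.

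First I would compare the two exponents directly. Since $d_{\max} \ge 2$, we have $d_{\max} + 2 \ge 4$, and therefore $|\mathbf{V}|^4 \le |\mathbf{V}|^{d_{\max}+2}$ for every $|\mathbf{V}| \ge 1$. Consequently the RFCI term is absorbed into the skeleton-search term, and
$$
\mathcal{O}\!\left(|\mathbf{V}|^{d_{\max}+2} + |\mathbf{V}|^4\right) = \mathcal{O}\!\left(|\mathbf{V}|^{d_{\max}+2}\right).
$$
In the boundary case $d_{\max} = 2$ the two terms have the same order $|\mathbf{V}|^4$, so the sum is still $\mathcal{O}(|\mathbf{V}|^{d_{\max}+2})$; for $d_{\max} > 2$ the skeleton term strictly dominates. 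Either way the conclusion holds, and this simplified bound is exactly the worst-case skeleton-search complexity of PC, which gives the claimed match.

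I do not expect a genuine obstacle at the level of this corollary: once the combined bound is in hand, the argument is a one-line comparison of polynomial degrees. The only points requiring care lie upstream, in justifying the combined bound itself — in particular, that SNAP's interleaved, pruned skeleton search never performs more tests than a full PC skeleton search on $\mathbf{V}$ (pruning only removes variables, so each order-$i$ stage runs over a subset of the nodes PC would consider), and that the RFCI orientation invoked at every iteration $i \ge 2$ accumulates to at most $\mathcal{O}(|\mathbf{V}|^4)$ across all iterations rather than $\mathcal{O}(|\mathbf{V}|)$ times that bound. Both are handled by the preceding discussion and Lemma~\ref{lem:rfci_complexity}, so for the corollary itself the degree comparison suffices.
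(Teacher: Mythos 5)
Your proposal is correct and matches the paper's own proof: both simply observe that $d_{\max} \ge 2$ implies $|\mathbf{V}|^4 \le |\mathbf{V}|^{d_{\max}+2}$, so $\mathcal{O}(|\mathbf{V}|^{d_{\max}+2} + |\mathbf{V}|^4) = \mathcal{O}(|\mathbf{V}|^{d_{\max}+2})$, which is PC's worst-case skeleton-search complexity. Your additional remarks about the upstream combined bound are consistent with the paper's preceding discussion and Lemma~\ref{lem:rfci_complexity}, so nothing is missing.
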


Intuitively this makes sense, since the RFCI orientation rules are dominated by the PC style skeleton search for dense enough graphs and in the worst case for SNAP, the ancestors of the targets include the whole graph. On the other hand, the average case complexity is intuitively substantially lower, as also shown empirically. Since this analysis can be quite complex, we only show a rough approximation of the expected number of possible ancestors in App.~\ref{app:expectedanc} as an indication for SNAP's average case complexity.

\section{RELATED WORK}
\label{sec:related_work}

Causal discovery is the task of identifying causal relations between variables from data \citep{glymour2019review}.
We limit our scope to observational data and assume causal sufficiency, meaning there are no unobserved confounders or selection bias.
Under these assumptions, there are multiple algorithms in the literature that can be used to learn an equivalence class of causal graphs. In this paper, we mostly focus on constraint-based algorithms, which use \ac{CI} tests to constrain the possible causal relations. PC \citep{spirtes2000causation} is one of the most famous constraint-based methods, but it also suffers from scalability issues and reliability. 

Several methods have been proposed to reduce the number of CI tests needed, e.g., MARVEL \citep{mokhtarian2021recursive}, or to only consider \ac{CI} tests of low order, which are assumed to be more reliable. For example, \citet{textor2015learning} extract all possible \acp{CPDAG} given marginal independence tests.
\citet{wienobst2020recovering} and \citet{kocaoglu2024characterization} expand on this by showing how to extract information from \ac{CI} tests up to order $k$, i.e., tests with maximum conditioning set size $k$.

These methods need to perform all possible \ac{CI} tests up to order $k$. We take inspiration from these approaches, as well as anytime FCI \citep{pmlr-vR3-spirtes01a}, an extension of the more general causal discovery method FCI \citep{spirtes2000causation} that uses CI tests up to order $k$, to develop our incremental pruning of non-ancestors with increasing order.
As opposed to most of these works, we perform a small number of CI tests for each order.

Due to the complexity of discovering the structure over all variables, a large body of literature is concerned with collecting the set of variables that belong in some neighborhood of a single target variable, and estimating their causal effects on the target and vice versa. This task is often called  \emph{local causal discovery}.

Local causal discovery methods sequentially find parent-children-descendant sets \citep{yin2008partial} or \acp{MB}, i.e., the set of parents, children, and the parents of the children of the variables \citep{wang2014discovering,gao2015local,ling2020using},   
until all edges around the target are oriented.
\citet{zhou2010discover} extend PCD-by-PCD \citep{yin2008partial}  to orient edges not only in the immediate neighborhood of the target, but within the \textit{depth $k$} neighborhood.
\citet{dai2024local} perform \acl{LCD} on possibly cyclic models, assuming that they are linear and non-Gaussian.
\citet{statnikov15ultra} and \citet{shiragur2023meek} use both observational and experimental data to discovery the direct causes and direct effects of the target variable.
\citet{choo2023subset} use experimental data to identify a subset of  target edges.

One of the most related local causal discovery methods is Local Discovery using Eager Collider Checks (LDECC) \citep{gupta2023local}, which identifies the parents of the target by finding the MB of the target and then performing \ac{CI} tests similarly to PC.
Since these algorithms recover only the local structure around a target, causal effect estimation is limited to using the parent adjustment set.

Local Discovery by Partitioning (LDP) \citep{maasch2024local} overcomes this limitation by learning groups of nodes according to their ancestral relations to a target pair.
These groups can then be used as valid adjustment sets.
However, the groups can be identified only if the causal relationship between the targets is known a priori.
Furthermore, LDP might not recover the optimal adjustment set.

All previous methods consider only a single target pair, where one variable is known to be the treatment and the other the outcome.
Similarly to us, Confounder Blanket Learner (CBL) \citep{watson2022causal} recovers the causal order among multiple targets, but it assumes all other variables are non-descendants of the targets. Our approach does not require this assumption, and its output is complementary to this work.

\section{EXPERIMENTS}
\label{sec:experiments}

\paragraph{Baselines.} We evaluate \ac{SNAP}$(\infty)$, along with global algorithms PC \citep{spirtes2000causation}, MARVEL \citep{mokhtarian2021recursive} and FGES \citep{ramsey2017million}, the modified versions of local algorithms MB-by-MB \citep{wang2014discovering} and LDECC \citep{gupta2023local},  
and their combination with \ac{SNAP}$(0)$.
We choose \ac{SNAP}$(0)$ as it is the simplest prefiltering method, and it is already very effective in practice, as shown in App.~\ref{app:snapk}.
We do not compare to CBL \citep{watson2022causal}, since it requires that all non-target nodes are non-descendants of the targets.
Similarly, we do not compare to LDP \citep{maasch2024local}, since it requires known ancestral relationships between targets to estimate the groups of variables that can be used for adjustment.
We publish our code at \url{https://github.com/matyasch/snap}.

The local structure discovered by LDECC and MB-by-MB is sound regardless of the ancestral relationships between targets.
However, knowledge about the ancestral relationships is still needed for causal effect estimation.
Thus, we apply these methods on all targets separately and estimate if a target $X$ is an ancestor of another target $Y$ by testing their independence conditioned on the identified definite parents of $X$.

We also modify the publicly available implementation of local algorithms to speed up their execution time, as explained in App~\ref{sec:local_speed}.
We denote the modified methods as MB-by-MB* and LDECC*.
Following the authors, we use total conditioning \citep{pellet2008using} for Markov blanket discovery in MARVEL and  IAMB \citep{tsamardinos2003algorithms} in MB-by-MB* and LDECC*.
We follow Sec.~5.7.1 by \citet{mokhtarian2024recursive} to orient edges in MARVEL.

\begin{figure*}[t]
    \centering
    \includegraphics[width=.7\linewidth]{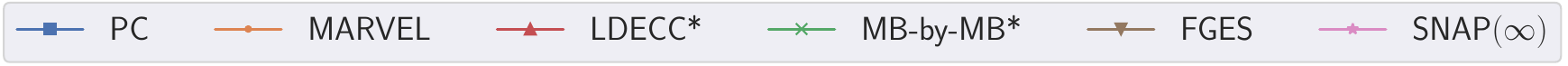}
    \begin{subfigure}[b]{\linewidth}
        \begin{subfigure}[b]{0.23\linewidth}
            \centering
            \caption*{ $\quad$ d-separation tests}
        \end{subfigure}
        \begin{subfigure}[b]{0.24\linewidth}
            \centering
            \caption*{$\quad$ Fisher-Z tests}
        \end{subfigure}
        \begin{subfigure}[b]{0.24\linewidth}
            \centering
            \caption*{KCI tests}
        \end{subfigure}
        \begin{subfigure}[b]{0.24\linewidth}
            \centering
            \caption*{$\chi^2$ tests}
        \end{subfigure}
        \includegraphics[width=\linewidth]{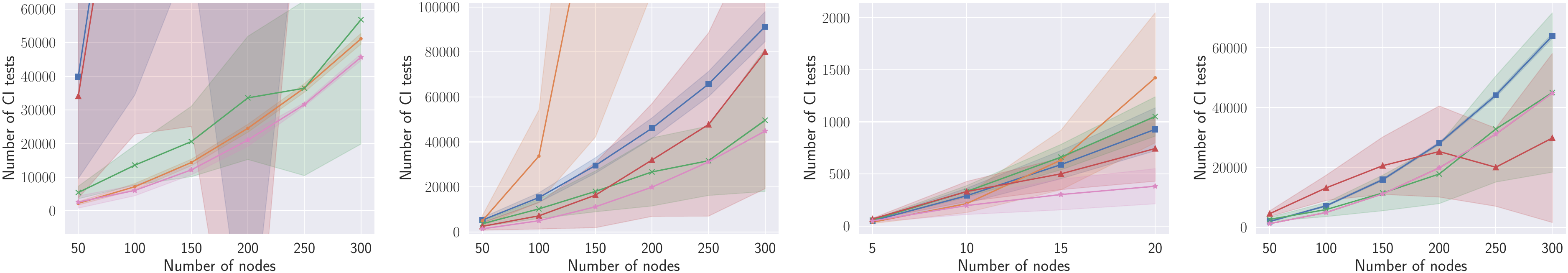}
    \end{subfigure}
    \begin{subfigure}[b]{\linewidth}
        \includegraphics[width=\linewidth]{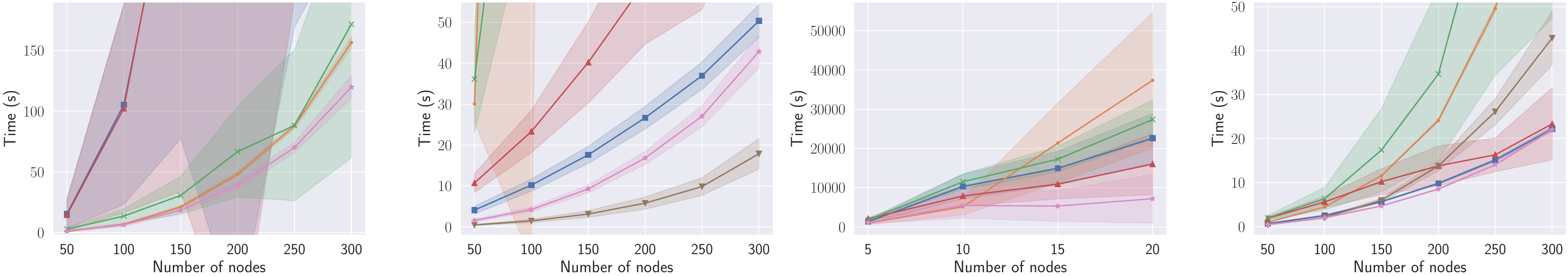}
    \end{subfigure}
    \caption{Number of \ac{CI} tests (top row) and computation time (bottom row) over number of nodes with $n_{\mathbf{T}}=4, \overline{d} = 3, d_{\max}=10$ and $n_{\mathbf{D}} = 1000$ data-points for different simulation settings in each column. The shadow area denotes the range of the standard deviation.
    SNAP $(\infty)$ is consistently one of the best methods.}
    \label{fig:test_and_time_per_node_unrest_std}
\end{figure*}

\paragraph{Metrics.} We compare the constraint-based algorithms in terms of the \ac{CI} tests they perform, and compare all algorithms, including FGES, in terms of computation time. To estimate the quality of the joint causal discovery and causal effect estimation, we report the intervention distance, i.e., the distance between the ground truth causal effect from the target $T$ to the target $T'$, and the predicted causal effect $\hat{\Theta}_{T, T'}$ for the same pair given an adjustment, which we define in App.~\ref{app:metrics}.
If the causal effect is identifiable from the CPDAG returned by a global algorithm or SNAP, then we estimate it using the optimal adjustment set \citep{henckel2022graphical} and $\hat{\Theta}_{T, T'}$ has a single value.

If the causal effect of $T$ on $T'$ is not identifiable, or the output is a local structure around $T$ by MB-by-MB* or LDECC*, then we estimate a set of possible causal effects $\hat{\Theta}_{T, T'}$ using the local structure around $T$ \citep{maathuis2009estimating}.
Following \citet{gradu2022valid}, we estimate causal effects using a separate set of samples after discovery.
Additionally, we also report the \Ac{SHD} between the ground truth and the estimated CPDAG, restricted to the possible ancestors of the target variables $\mathbf{T}$. For the subset of \emph{identifiable} causal effects, we  report the Adjustment identification distance \citep{henckel2024adjustment} for the parental, ancestor and optimal adjustment sets.

\paragraph{Synthetic data generation.}
We generate 100 random causal graphs with number of vertices $n_\mathbf{V} = 50, \dots, 300$, expected degree $\overline{d} = 2, \dots 4$ and maximum degree of $d_{\max} = 10$. For each graph, we generate $n_{\mathbf{D}} = 500, \dots, 10000$ number of data-points. For the linear Gaussian case, we sample edge weights from $[-3, -0.5] \cup [0.5, 3]$ with standard Gaussian noises.

For the binary data, we generate a conditional probability table according to the causal graph, where the  probabilities are sampled uniformly in  $[0,1]$.
For each graph, we sample randomly $n_\mathbf{T} = 2, 4, 6, 8$ targets.
To better show the general trends in the plots, we remove the top 5 and bottom 5 results for each method.

\subsection{Experimental Results}
We evaluate four settings: oracle \ac{CI} d-separation tests, Fisher-Z tests for the linear setting, KCI tests \citep{zhang2011kernel} for the linear setting, and $\chi^2$ test on binary data. For FGES, we use BIC \citep{schwarz1978estimating} for the linear setting and BDeu \citep{heckerman1995learning} for the binary setting. For KCI tests, we consider smaller graphs with $n_\mathbf{V} =5, \dots, 20$ nodes, because of computational constraints.
Fig.~\ref{fig:test_and_time_per_node_unrest_std} shows our results for $n_{\mathbf{T}}=4, \overline{d} = 3, d_{\max}=10$ and $n_{\mathbf{D}} = 1000$ 
in terms of the number of \ac{CI} tests performed and the execution time, over graphs with various number of nodes.

Under both metrics, SNAP$(\infty)$ performs consistently as one of the best methods across all domains, while the performance of other methods varies based on the setting. In particular, PC and LDECC* perform substantially more tests and are substantially slower for the oracle setting, while the gap is smaller for partial correlation and KCI tests. For the binary setting, LDECC* performs fewer tests, but is computationally comparable to SNAP$(\infty)$. On the contrary, MARVEL performs well with oracle tests, but requires more CI tests and is hence slower with linear Gaussian data. For KCI and $\chi^2$ tests, MARVEL performs comparably in the number of tests, but is substantially slower in  computation time. 
MB-by-MB* performs similarly to SNAP$(\infty)$ in terms of tests on most settings, except for KCI tests, but it is substantially slower for Fisher-Z and $\chi^2$ tests. FGES runs fastest when using the BIC score on linear data, but trails behind PC, LDECC* and SNAP$(\infty)$ on binary data using the BDeu score.

Fig.~\ref{fig:delta_time_per_node_unrest} shows the difference in computation time for the baselines methods and their versions combined with SNAP$(0)$, i.e., SNAP(k) with $k=0$, as a prefiltering method.
Prefiltering with SNAP$(0)$ can effectively improve the computation time in most settings.
Adding SNAP$(0)$ always improves the computation time of PC, MARVEL, and MB-by-MB*, and improves LDECC* on all domains except on binary data, while improving FGES on binary data. 
The corresponding figure for the number of CI tests and the computation times for each method with and without SNAP($0$) prefiltering (Fig.~\ref{fig:tests_and_time_per_node_unrest} in App.~\ref{app:additional_experiments}) shows the same trends.

In terms of causal effect estimation, our results in Fig.~\ref{fig:quality_per_node_unrest_std} and \ref{fig:quality_per_node_unrest} in App.~\ref{app:additional_experiments} show that all methods achieve comparable intervention distance, even though SNAP variants perform slightly worse in terms of \ac{SHD}. We show in
App~\ref{app:error_rates} that the difference is mostly due from pruning incorrectly some of the possible ancestors, and in App.~\ref{app:shd_opt} we demonstrate that this performance gap decreases substantially when we consider the \ac{SHD} of the induced CPDAG only over the targets and their oracle optimal adjustment sets.

\begin{figure}[t]
    \centering
    \includegraphics[width=\linewidth]{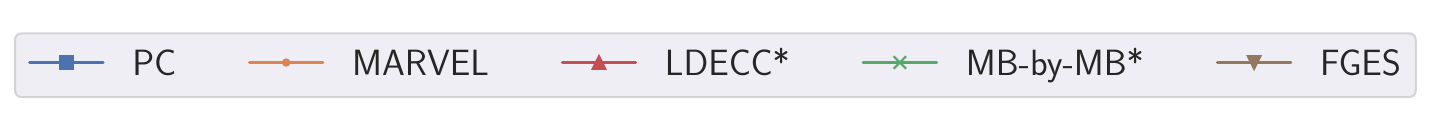}
    \includegraphics[width=.9\linewidth]{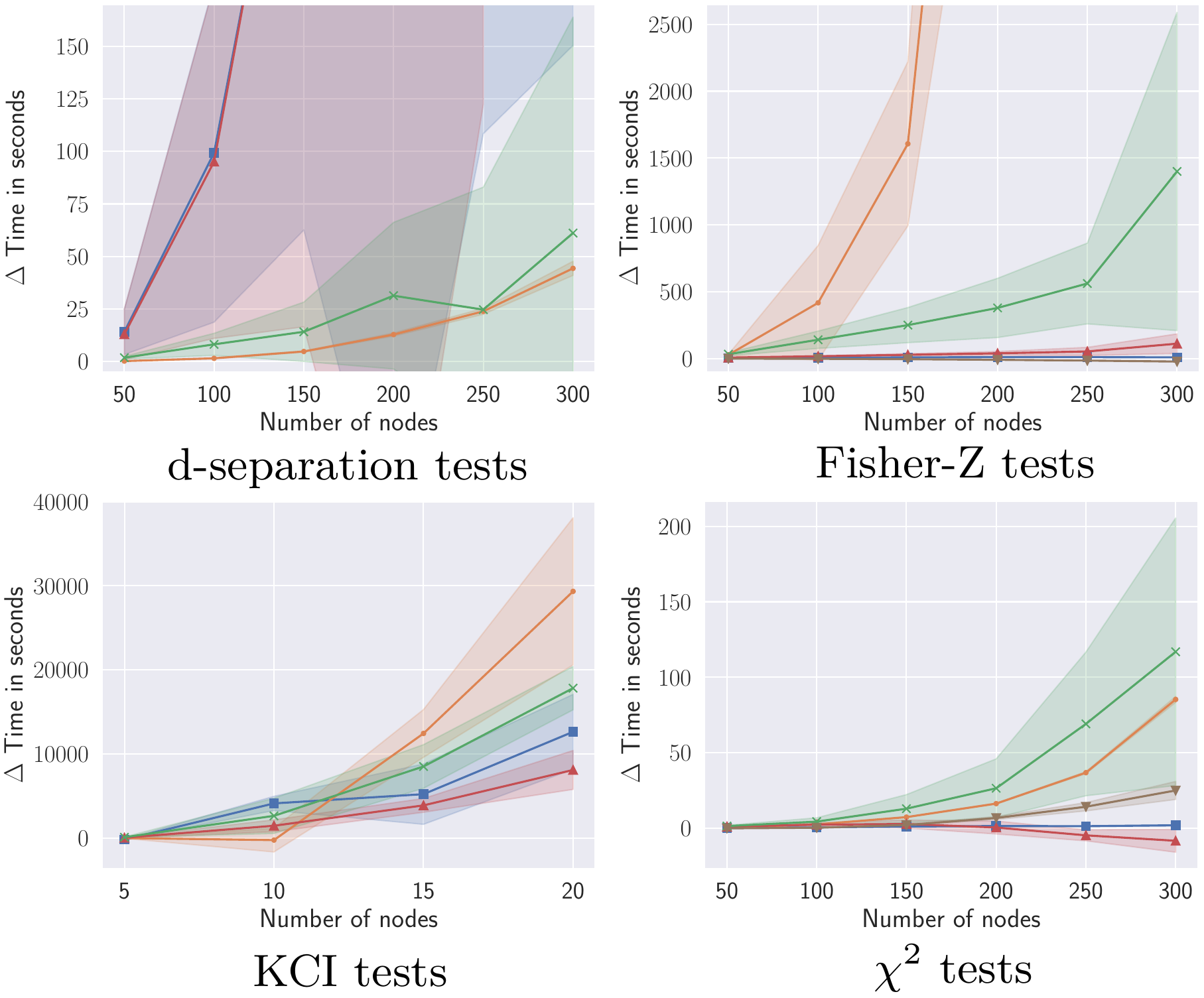}
    \caption{Difference in computation time for each baseline  with and without SNAP$(0)$ for $n_{\mathbf{T}}=4, \overline{d} = 3, d_{\max}=10$ and $n_{\mathbf{D}} = 1000$ data-points.
    Prefiltering with SNAP($0$) improves the baselines in most settings.
    \label{fig:delta_time_per_node_unrest}
    }    
\end{figure}

\begin{figure}
    \centering
    \includegraphics[width=.7\linewidth]{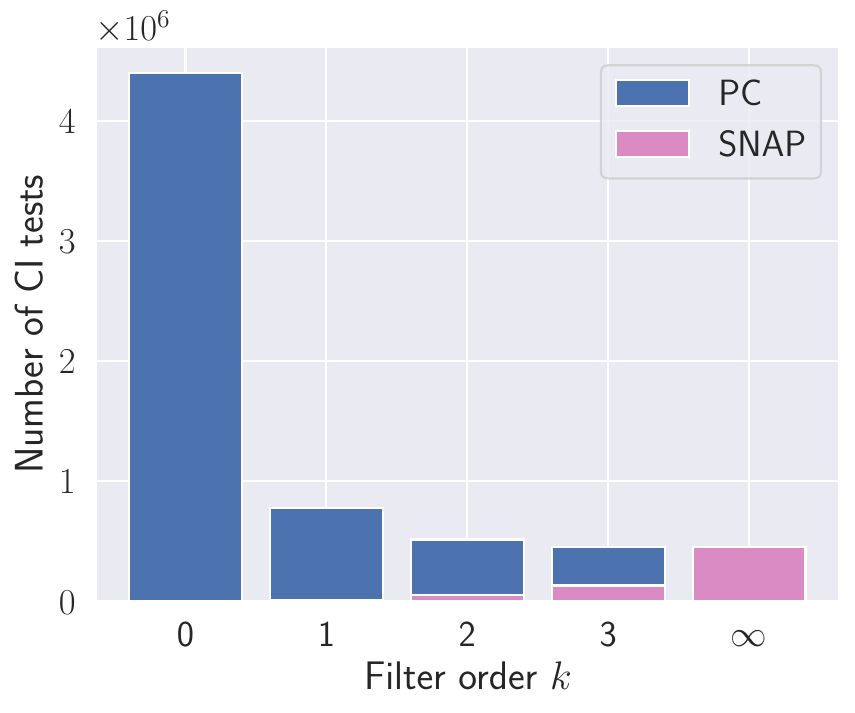}
    \caption{Number of d-separation tests performed by SNAP($k$) during prefiltering for $k=0,1,2,3$ and SNAP($\infty$) (pink), to which we add the number of tests PC performs after prefiltering (blue) on graphs with $|\mathbf{V}| = 50, n_{\mathbf{T}}=4, \overline{d} = 5$ and $d_{\max}=10$.
    }
    \label{fig:test_per_filter_order}
\end{figure}

In App.~\ref{app:identifiable}, we provide results on a subset of graphs where all the targets have \emph{identifiable} causal effects, and each target has to be the ancestor or the descendant of at least on other target.
Fig.~\ref{fig:appendix_per_node_ident_std} and \ref{fig:appendix_per_node_ident} show that forcing targets to be identifiable is the most beneficial for LDECC* and MB-by-MB*, due to the identifiable targets being more likely to have identifiable parents.
However, SNAP variants remain competitive in terms of computational performance even in this setting.
Fig.~\ref{fig:aid_per_node_ident_std} and \ref{fig:aid_per_node_ident} show that SNAP achieves mostly comparable, but in some cases slightly higher Adjustment Identification Distances (AID) than other methods. AID counts how many times the learned adjustment sets for various criteria are invalid given the ground truth causal graph. An invalid adjustment set can be still useful for finite sample prediction of causal effects, which seems to be the case in our experiments, given the comparable intervention distance across methods.

In App.~\ref{app:over_targets}, we vary the number of targets and fix all other parameters.
Our results show that the performance of SNAP($\infty$) is constant in the number of targets, unlike local methods, for which the performances worsen substantially with more targets.
This highlights the advantage of considering all targets jointly rather than independently one-by-one.
In App~\ref{app:over_degrees}, we show that SNAP variants remain top performers even on denser graphs, while the performance of the baselines depends highly on the setting.
App.~\ref{app:over_samples} shows that more data points may improve the \ac{SHD} of global methods, but have little impact on intervention distance.

App.~\ref{app:snapk} shows that SNAP$(0)$ is already almost as effective as SNAP$(1)$ and SNAP$(2)$ on the graphs considered in Fig.~\ref{fig:test_and_time_per_node_unrest_std} and \ref{fig:delta_time_per_node_unrest}.
However, prefiltering with $k > 0$ can be beneficial on denser graphs, especially in combination with PC.
Fig.~\ref{fig:test_per_filter_order} shows the number of d-separation tests performed by SNAP($k$) during prefiltering and PC after prefiltering on graphs with 50 nodes and an expected degree of 5.
These results show that increasing $k$ from 0 to 1 reduces the total number of \ac{CI} tests substantially, and increasing it to 2 and 3 further reduces it noticeably in this case.

App.~\ref{app:order} highlights how SNAP variants can reduce the number of higher order CI tests. This is particularly advantageous for the $\chi^2$ test on discrete binary data, where larger conditioning sets require a greater fraction of the samples to be excluded, diminishing the test’s power. Similarly, for KCI tests, larger conditioning set sizes are associated with a higher likelihood of type II errors \citep{zhang2011kernel}. By minimizing both the number and complexity of the tests, SNAP leverages these statistical advantages effectively

\paragraph{Real world data.}
We evaluate on the MAGIC-NIAB and Andes networks from bnlearn \citep{scutari2010bnlearn}, described in App.~\ref{app:real_networks}, as representative Gaussian and discrete models.
For both networks, we sample 100 times $n_{\mathbf{T}}=4$ identifiable targets and $n_{\mathbf{D}} = 1000$ number of data-points.
We report our results for MAGIC-NIAB in Tab.~\ref{tab:magic-niab}, which shows that SNAP reduces the number of CI tests and execution time compared to baselines. We show that the \ac{SHD} and intervention distance are comparable with most baselines in Tab.~\ref{tab:magic-niab_app} in App.~\ref{app:real-network-results}.  Tab.~\ref{tab:andes} shows that SNAP variants are also faster than baselines on the Andes data, while maintaining a comparable intervention distance.

\begin{table}
\centering
\footnotesize{
\begin{tabular}{lrrrr}
\hline
\multicolumn{1}{c}{} & \multicolumn{1}{c}{CI tests} & \multicolumn{1}{c}{Time} \\ \hline
PC                   & $12807 (\pm 2086)$           & $79.3 (\pm 24.7)$             \\
PC-SNAP($0$)           & $\mathbf{955 (\pm 10)}$       & $0.5 (\pm 0.1)$              \\ \hline
MARVEL               &{$8873 (\pm 3056)$}   & $27.3 (\pm 9.4)$                       \\
MARVEL-SNAP($0$)       & $960 (\pm 5)$                & $0.6 (\pm 0.1)$                       \\ \hline
LDECC*                & $18142 (\pm 2608)$           & $19.2 (\pm 4.0)$           \\
LDECC*-SNAP($0$)        & $981 (\pm 23)$               & $0.8 (\pm0.1)$        \\ \hline
MB-by-MB*             & $11464 (\pm 1995)$           & $25.7 (\pm 4.7)$        \\
MB-by-MB*-SNAP($0$)     & $972 (\pm 17)$               & $0.7 (\pm 0.2)$           \\ \hline
FGES                 & -                            & $0.7 (\pm 0.1)$             \\
FGES-SNAP($0$)         & -                            & $\mathbf{0.4 (\pm 0.1)}$     \\ \hline
SNAP($\infty$)       & $\mathbf{955 (\pm 10)}$       & $0.6 (\pm 0.2)$              \\ \hline
\end{tabular}
\caption{Results for the MAGIC-NIAB network. The best method is indicated in \textbf{bold}. Prefiltering with SNAP($0$) consistently improves most baselines.}
\label{tab:magic-niab}}
\end{table}

\section{CONCLUSIONS}
\label{sec:conclusion}

We propose SNAP, an efficient method for discovering the relevant portion of the CPDAG for causal effect estimation between target variables.
SNAP does not require prior knowledge of ancestral relationships and identifies statistically efficient adjustment sets.
We introduce two variants: SNAP($k$), which can be used as a preprocessing step for other discovery algorithms, and SNAP($\infty$), a stand-alone sound and complete discovery algorithm.
Our experiments show that both variants significantly reduce the number of CI tests and computation time while maintaining a comparable intervention distance. 
Future work will explore extending SNAP to causally insufficient settings.
\section*{Acknowledgements}

We thank SURF (\url{www.surf.nl}) for the support in using the National Supercomputer Snellius.

\bibliography{references}

\begin{thebibliography}{49}
\providecommand{\natexlab}[1]{#1}
\providecommand{\url}[1]{\texttt{#1}}
\expandafter\ifx\csname urlstyle\endcsname\relax
  \providecommand{\doi}[1]{doi: #1}\else
  \providecommand{\doi}{doi: \begingroup \urlstyle{rm}\Url}\fi

\bibitem[Choo and Shiragur(2023)]{choo2023subset}
Davin Choo and Kirankumar Shiragur.
\newblock Subset verification and search algorithms for causal dags.
\newblock In \emph{International Conference on Artificial Intelligence and Statistics}, pages 4409--4442. PMLR, 2023.

\bibitem[Colombo et~al.(2012)Colombo, Maathuis, Kalisch, and Richardson]{colombo2012learning}
Diego Colombo, Marloes~H Maathuis, Markus Kalisch, and Thomas~S Richardson.
\newblock Learning high-dimensional directed acyclic graphs with latent and selection variables.
\newblock \emph{The Annals of Statistics}, pages 294--321, 2012.

\bibitem[Csardi and Nepusz(2006)]{csardi2006igraph}
Gabor Csardi and Tamas Nepusz.
\newblock The igraph software package for complex network research.
\newblock \emph{InterJournal}, Complex Systems:\penalty0 1695, 2006.

\bibitem[Dai et~al.(2024)Dai, Ng, Zheng, Gao, and Zhang]{dai2024local}
Haoyue Dai, Ignavier Ng, Yujia Zheng, Zhengqing Gao, and Kun Zhang.
\newblock Local causal discovery with linear non-gaussian cyclic models.
\newblock In \emph{International Conference on Artificial Intelligence and Statistics}, pages 154--162. PMLR, 2024.

\bibitem[Gao and Ji(2015)]{gao2015local}
Tian Gao and Qiang Ji.
\newblock Local causal discovery of direct causes and effects.
\newblock \emph{Advances in Neural Information Processing Systems}, 28, 2015.

\bibitem[Glymour et~al.(2019)Glymour, Zhang, and Spirtes]{glymour2019review}
Clark Glymour, Kun Zhang, and Peter Spirtes.
\newblock Review of causal discovery methods based on graphical models.
\newblock \emph{Frontiers in Genetics}, 10, 2019.
\newblock ISSN 1664-8021.
\newblock \doi{10.3389/fgene.2019.00524}.

\bibitem[Gradu et~al.(2022)Gradu, Zrnic, Wang, and Jordan]{gradu2022valid}
Paula Gradu, Tijana Zrnic, Yixin Wang, and Michael Jordan.
\newblock Valid inference after causal discovery.
\newblock In \emph{NeurIPS 2022 Workshop on Causality for Real-world Impact}, 2022.

\bibitem[Guo et~al.(2023)Guo, Perkovi{\'c}, and Rotnitzky]{guo2023variable}
F~Richard Guo, Emilija Perkovi{\'c}, and Andrea Rotnitzky.
\newblock Variable elimination, graph reduction and the efficient g-formula.
\newblock \emph{Biometrika}, 110\penalty0 (3):\penalty0 739--761, 2023.

\bibitem[Gupta et~al.(2023)Gupta, Childers, and Lipton]{gupta2023local}
Shantanu Gupta, David Childers, and Zachary~Chase Lipton.
\newblock Local causal discovery for estimating causal effects.
\newblock In \emph{Conference on Causal Learning and Reasoning}, pages 408--447. PMLR, 2023.

\bibitem[Hagberg et~al.(2008)Hagberg, Swart, and S~Chult]{hagberg2008exploring}
Aric Hagberg, Pieter Swart, and Daniel S~Chult.
\newblock Exploring network structure, dynamics, and function using networkx.
\newblock Technical report, Los Alamos National Lab.(LANL), Los Alamos, NM (United States), 2008.

\bibitem[Heckerman et~al.(1995)Heckerman, Geiger, and Chickering]{heckerman1995learning}
David Heckerman, Dan Geiger, and David~M Chickering.
\newblock Learning bayesian networks: The combination of knowledge and statistical data.
\newblock \emph{Machine learning}, 20:\penalty0 197--243, 1995.

\bibitem[Henckel et~al.(2022)Henckel, Perkovi{\'c}, and Maathuis]{henckel2022graphical}
Leonard Henckel, Emilija Perkovi{\'c}, and Marloes~H Maathuis.
\newblock Graphical criteria for efficient total effect estimation via adjustment in causal linear models.
\newblock \emph{Journal of the Royal Statistical Society Series B: Statistical Methodology}, 84\penalty0 (2):\penalty0 579--599, 2022.

\bibitem[Henckel et~al.(2024)Henckel, W{\"u}rtzen, and Weichwald]{henckel2024adjustment}
Leonard Henckel, Theo W{\"u}rtzen, and Sebastian Weichwald.
\newblock Adjustment identification distance: A gadjid for causal structure learning.
\newblock In \emph{The 40th Conference on Uncertainty in Artificial Intelligence}, 2024.

\bibitem[Kalisch et~al.(2012)Kalisch, M\"achler, Colombo, Maathuis, and B\"uhlmann]{kalisch2012pcalg}
Markus Kalisch, Martin M\"achler, Diego Colombo, Marloes~H. Maathuis, and Peter B\"uhlmann.
\newblock Causal inference using graphical models with the {R} package {pcalg}.
\newblock \emph{Journal of Statistical Software}, 47\penalty0 (11):\penalty0 1--26, 2012.
\newblock \doi{10.18637/jss.v047.i11}.

\bibitem[Kocaoglu(2024)]{kocaoglu2024characterization}
Murat Kocaoglu.
\newblock Characterization and learning of causal graphs with small conditioning sets.
\newblock \emph{Advances in Neural Information Processing Systems}, 36, 2024.

\bibitem[Lauritzen(1996)]{lauritzen1996graphical}
Steffen~L Lauritzen.
\newblock \emph{Graphical models}, volume~17.
\newblock Clarendon Press, 1996.

\bibitem[Ling et~al.(2020)Ling, Yu, Wang, Li, and Wu]{ling2020using}
Zhaolong Ling, Kui Yu, Hao Wang, Lei Li, and Xindong Wu.
\newblock Using feature selection for local causal structure learning.
\newblock \emph{IEEE Transactions on Emerging Topics in Computational Intelligence}, 5\penalty0 (4):\penalty0 530--540, 2020.

\bibitem[Maasch et~al.(2024)Maasch, Pan, Gupta, Kuleshov, Gan, and Wang]{maasch2024local}
Jacqueline R. M.~A. Maasch, Weishen Pan, Shantanu Gupta, Volodymyr Kuleshov, Kyra Gan, and Fei Wang.
\newblock Local discovery by partitioning: Polynomial-time causal discovery around exposure-outcome pairs.
\newblock In \emph{The 40th Conference on Uncertainty in Artificial Intelligence}, 2024.

\bibitem[Maathuis and Colombo(2015)]{maathuis2015generalized}
Marloes~H Maathuis and Diego Colombo.
\newblock A generalized back-door criterion.
\newblock \emph{The Annals of Statistics}, 43\penalty0 (3):\penalty0 1060--1088, 2015.

\bibitem[Maathuis et~al.(2009)Maathuis, Kalisch, and B{\"u}hlmann]{maathuis2009estimating}
Marloes~H Maathuis, Markus Kalisch, and Peter B{\"u}hlmann.
\newblock Estimating high-dimensional intervention effects from observational data.
\newblock \emph{The Annals of Statistics}, 37\penalty0 (6A):\penalty0 3133--3164, 2009.

\bibitem[Meek(1995)]{meek1995causal}
Christopher Meek.
\newblock Causal inference and causal explanation with background knowledge.
\newblock In \emph{Proceedings of the Eleventh conference on Uncertainty in artificial intelligence}, pages 403--410, 1995.

\bibitem[Mokhtarian et~al.(2021)Mokhtarian, Akbari, Ghassami, and Kiyavash]{mokhtarian2021recursive}
Ehsan Mokhtarian, Sina Akbari, AmirEmad Ghassami, and Negar Kiyavash.
\newblock A recursive markov boundary-based approach to causal structure learning.
\newblock In \emph{The KDD'21 Workshop on Causal Discovery}, pages 26--54. PMLR, 2021.

\bibitem[Mokhtarian et~al.(2024)Mokhtarian, Elahi, Akbari, and Kiyavash]{mokhtarian2024recursive}
Ehsan Mokhtarian, Sepehr Elahi, Sina Akbari, and Negar Kiyavash.
\newblock Recursive causal discovery.
\newblock \emph{arXiv preprint arXiv:2403.09300}, 2024.

\bibitem[Pearl(2009)]{pearl2009causality}
Judea Pearl.
\newblock \emph{Causality}.
\newblock Cambridge university press, 2009.

\bibitem[Pellet and Elisseeff(2008)]{pellet2008using}
Jean-Philippe Pellet and Andr{\'e} Elisseeff.
\newblock Using markov blankets for causal structure learning.
\newblock \emph{Journal of Machine Learning Research}, 9\penalty0 (7), 2008.

\bibitem[Perkovi{\'c} et~al.(2015)Perkovi{\'c}, Textor, Kalisch, and Maathuis]{perkovic2015complete}
Emilija Perkovi{\'c}, Johannes Textor, Markus Kalisch, and Marloes~H Maathuis.
\newblock A complete generalized adjustment criterion.
\newblock In \emph{Uncertainty in Artificial Intelligence-Proceedings of the Thirty-First Conference (2015)}, pages 682--691. AUAI Press, 2015.

\bibitem[Perkovi\'c et~al.(2018)Perkovi\'c, Textor, Kalisch, and Maathuis]{perkovic2018complete}
Emilija Perkovi\'c, Johannes Textor, Markus Kalisch, and Marloes~H. Maathuis.
\newblock Complete graphical characterization and construction of adjustment sets in markov equivalence classes of ancestral graphs.
\newblock \emph{Journal of Machine Learning Research}, 18\penalty0 (220):\penalty0 1--62, 2018.
\newblock URL \url{http://jmlr.org/papers/v18/16-319.html}.

\bibitem[Ramsey and Andrews(2023)]{ramsey23pytetrad}
Joseph Ramsey and Bryan Andrews.
\newblock Py-tetrad and rpy-tetrad: A new python interface with r support for tetrad causal search.
\newblock In Erich Kummerfeld, Sisi Ma, Eric Rawls, and Bryan Andrews, editors, \emph{Proceedings of the 2023 Causal Analysis Workshop Series}, volume 223 of \emph{Proceedings of Machine Learning Research}, pages 40--51. PMLR, 14 Aug 2023.

\bibitem[Ramsey et~al.(2017)Ramsey, Glymour, Sanchez-Romero, and Glymour]{ramsey2017million}
Joseph Ramsey, Madelyn Glymour, Ruben Sanchez-Romero, and Clark Glymour.
\newblock A million variables and more: the fast greedy equivalence search algorithm for learning high-dimensional graphical causal models, with an application to functional magnetic resonance images.
\newblock \emph{International journal of data science and analytics}, 3:\penalty0 121--129, 2017.

\bibitem[Roumpelaki et~al.(2016)Roumpelaki, Borboudakis, Triantafillou, and Tsamardinos]{roumpelaki2016marginal}
Anna Roumpelaki, Giorgos Borboudakis, Sofia Triantafillou, and Ioannis Tsamardinos.
\newblock Marginal causal consistency in constraint-based causal learning.
\newblock In \emph{Causation: Foundation to Application Workshop, UAI}, 2016.

\bibitem[Schwarz(1978)]{schwarz1978estimating}
Gideon Schwarz.
\newblock {Estimating the Dimension of a Model}.
\newblock \emph{The Annals of Statistics}, 6\penalty0 (2):\penalty0 461 -- 464, 1978.
\newblock \doi{10.1214/aos/1176344136}.

\bibitem[Scutari(2010)]{scutari2010bnlearn}
Marco Scutari.
\newblock Learning bayesian networks with the {bnlearn} {R} package.
\newblock \emph{Journal of Statistical Software}, 35\penalty0 (3):\penalty0 1--22, 2010.
\newblock \doi{10.18637/jss.v035.i03}.

\bibitem[Shiragur et~al.(2023)Shiragur, Zhang, and Uhler]{shiragur2023meek}
Kirankumar Shiragur, Jiaqi Zhang, and Caroline Uhler.
\newblock Meek separators and their applications in targeted causal discovery.
\newblock In \emph{Thirty-seventh Conference on Neural Information Processing Systems}, 2023.

\bibitem[Spirtes(2001)]{pmlr-vR3-spirtes01a}
Peter Spirtes.
\newblock An anytime algorithm for causal inference.
\newblock In Thomas~S. Richardson and Tommi~S. Jaakkola, editors, \emph{Proceedings of the Eighth International Workshop on Artificial Intelligence and Statistics}, volume~R3 of \emph{Proceedings of Machine Learning Research}, pages 278--285. PMLR, 04--07 Jan 2001.
\newblock URL \url{https://proceedings.mlr.press/r3/spirtes01a.html}.
\newblock Reissued by PMLR on 31 March 2021.

\bibitem[Spirtes et~al.(2000)Spirtes, Glymour, and Scheines]{spirtes2000causation}
Peter Spirtes, Clark~N Glymour, and Richard Scheines.
\newblock \emph{Causation, prediction, and search}.
\newblock MIT press, 2000.

\bibitem[Statnikov et~al.(2015)Statnikov, Ma, Henaff, Lytkin, Efstathiadis, Peskin, and Aliferis]{statnikov15ultra}
Alexander Statnikov, Sisi Ma, Mikael Henaff, Nikita Lytkin, Efstratios Efstathiadis, Eric~R. Peskin, and Constantin~F. Aliferis.
\newblock Ultra-scalable and efficient methods for hybrid observational and experimental local causal pathway discovery.
\newblock \emph{Journal of Machine Learning Research}, 16\penalty0 (100):\penalty0 3219--3267, 2015.

\bibitem[Textor et~al.(2015)Textor, Idelberger, and Liskiewicz]{textor2015learning}
Johannes Textor, Alexander Idelberger, and Maciej Liskiewicz.
\newblock Learning from pairwise marginal independencies.
\newblock \emph{Conference on Uncertainty in Artificial Intelligence}, 2015.

\bibitem[Textor et~al.(2016)Textor, {van der Zander}, Gilthorpe, Li\'{s}kiewicz, and Ellison]{textor2016dagitty}
Johannes Textor, Benito {van der Zander}, Mark~S Gilthorpe, Maciej Li\'{s}kiewicz, and George~TH Ellison.
\newblock Robust causal inference using directed acyclic graphs: the r package 'dagitty'.
\newblock \emph{International Journal of Epidemiology}, 45\penalty0 (6):\penalty0 1887--1894, 2016.
\newblock \doi{10.1093/ije/dyw341}.

\bibitem[Tian et~al.(1998)Tian, Paz, and Pearl]{tian1998finding}
Jin Tian, Azaria Paz, and Judea Pearl.
\newblock Finding minimal d-separators.
\newblock Technical Report R-254, Computer Science Department, University of California, 1998.

\bibitem[Tsamardinos et~al.(2003)Tsamardinos, Aliferis, Statnikov, and Statnikov]{tsamardinos2003algorithms}
Ioannis Tsamardinos, Constantin~F Aliferis, Alexander~R Statnikov, and Er~Statnikov.
\newblock Algorithms for large scale markov blanket discovery.
\newblock In \emph{FLAIRS}, volume~2, pages 376--81, 2003.

\bibitem[Verma and Pearl(1990)]{verma90equivalence}
Thomas Verma and Judea Pearl.
\newblock Equivalence and synthesis of causal models.
\newblock In \emph{Proceedings of the Sixth Annual Conference on Uncertainty in Artificial Intelligence}, UAI '90, page 255–270, USA, 1990. Elsevier Science Inc.
\newblock ISBN 0444892648.

\bibitem[Wang et~al.(2014)Wang, Zhou, Zhao, and Geng]{wang2014discovering}
Changzhang Wang, You Zhou, Qiang Zhao, and Zhi Geng.
\newblock Discovering and orienting the edges connected to a target variable in a dag via a sequential local learning approach.
\newblock \emph{Computational statistics \& data analysis}, 77:\penalty0 252--266, 2014.

\bibitem[Watson and Silva(2022)]{watson2022causal}
David~S Watson and Ricardo Silva.
\newblock Causal discovery under a confounder blanket.
\newblock In \emph{Uncertainty in Artificial Intelligence}, pages 2096--2106. PMLR, 2022.

\bibitem[Wien{\"o}bst and Liskiewicz(2020)]{wienobst2020recovering}
Marcel Wien{\"o}bst and Maciej Liskiewicz.
\newblock Recovering causal structures from low-order conditional independencies.
\newblock In \emph{Proceedings of the AAAI Conference on Artificial Intelligence}, volume~34, pages 10302--10309, 2020.

\bibitem[Yin et~al.(2008)Yin, Zhou, Wang, He, Zheng, and Geng]{yin2008partial}
Jianxin Yin, You Zhou, Changzhang Wang, Ping He, Cheng Zheng, and Zhi Geng.
\newblock Partial orientation and local structural learning of causal networks for prediction.
\newblock In \emph{Proceedings of the Workshop on the Causation and Prediction Challenge at WCCI 2008}, volume~3 of \emph{Proceedings of Machine Learning Research}, pages 93--105. PMLR, 03--04 Jun 2008.

\bibitem[Zhang(2006)]{zhang2006causal}
Jiji Zhang.
\newblock \emph{Causal inference and reasoning in causally insufficient systems}.
\newblock PhD thesis, Citeseer, 2006.

\bibitem[Zhang et~al.(2011)Zhang, Peters, Janzing, and Sch{\"o}lkopf]{zhang2011kernel}
K~Zhang, J~Peters, D~Janzing, and B~Sch{\"o}lkopf.
\newblock Kernel-based conditional independence test and application in causal discovery.
\newblock In \emph{27th Conference on Uncertainty in Artificial Intelligence (UAI 2011)}, pages 804--813. AUAI Press, 2011.

\bibitem[Zheng et~al.(2024)Zheng, Huang, Chen, Ramsey, Gong, Cai, Shimizu, Spirtes, and Zhang]{zheng2024causal}
Yujia Zheng, Biwei Huang, Wei Chen, Joseph Ramsey, Mingming Gong, Ruichu Cai, Shohei Shimizu, Peter Spirtes, and Kun Zhang.
\newblock Causal-learn: Causal discovery in python.
\newblock \emph{Journal of Machine Learning Research}, 25\penalty0 (60):\penalty0 1--8, 2024.

\bibitem[Zhou et~al.(2010)Zhou, Wang, Yin, and Geng]{zhou2010discover}
You Zhou, Changzhang Wang, Jianxin Yin, and Zhi Geng.
\newblock Discover local causal network around a target to a given depth.
\newblock In \emph{Causality: Objectives and Assessment}, pages 191--202. PMLR, 2010.

\end{thebibliography}
\clearpage
\clearpage
\section*{Checklist}

 \begin{enumerate}

 \item For all models and algorithms presented, check if you include:
 \begin{enumerate}
   \item A clear description of the mathematical setting, assumptions, algorithm, and/or model. [Yes]
   \item An analysis of the properties and complexity (time, space, sample size) of any algorithm. [No]
   \item (Optional) Anonymized source code, with specification of all dependencies, including external libraries. [Yes]
 \end{enumerate}

 \item For any theoretical claim, check if you include:
 \begin{enumerate}
   \item Statements of the full set of assumptions of all theoretical results. [Yes]
   \item Complete proofs of all theoretical results. [Yes]
   \item Clear explanations of any assumptions. [Yes]     
 \end{enumerate}

 \item For all figures and tables that present empirical results, check if you include:
 \begin{enumerate}
   \item The code, data, and instructions needed to reproduce the main experimental results (either in the supplemental material or as a URL). [Yes]
   \item All the training details (e.g., data splits, hyperparameters, how they were chosen). [Yes]
   \item A clear definition of the specific measure or statistics and error bars (e.g., with respect to the random seed after running experiments multiple times). [Yes]
   \item A description of the computing infrastructure used. (e.g., type of GPUs, internal cluster, or cloud provider). [Yes]
 \end{enumerate}

 \item If you are using existing assets (e.g., code, data, models) or curating/releasing new assets, check if you include:
 \begin{enumerate}
   \item Citations of the creator If your work uses existing assets. [Yes]
   \item The license information of the assets, if applicable. [Yes]
   \item New assets either in the supplemental material or as a URL, if applicable. [Not Applicable]
   \item Information about consent from data providers/curators. [Not Applicable]
   \item Discussion of sensible content if applicable, e.g., personally identifiable information or offensive content. [Not Applicable]
 \end{enumerate}

 \item If you used crowdsourcing or conducted research with human subjects, check if you include:
 \begin{enumerate}
   \item The full text of instructions given to participants and screenshots. [Not Applicable]
   \item Descriptions of potential participant risks, with links to Institutional Review Board (IRB) approvals if applicable. [Not Applicable]
   \item The estimated hourly wage paid to participants and the total amount spent on participant compensation. [Not Applicable]
 \end{enumerate}

 \end{enumerate}

\newpage
\appendix
\clearpage

\onecolumn
\aistatstitle{SNAP: Sequential Non-Ancestor Pruning for Targeted Causal Effect Estimation With an Unknown Graph \\
Supplementary Materials}

\section{ORIENTING V-STRUCTURES}
\label{app:vstruct}

Algorithms~\ref{alg:pc-v} and \ref{alg:rfci-v} correspond to orienting v-structures as done in the PC algorithm \citep{spirtes2000causation} and the RFCI algorithm \citep{colombo2012learning} algorithms. Both algorithms use the meta-symbol $*$ as a wildcard to denote both edge tails and arrowheads. This allows edges to be oriented as bi-directed. Bi-directed edges indicate \emph{conflicting v-structures} that arise when performing \ac{CI} tests only up to some order $k$ \citep{wienobst2020recovering}. Fig.~\ref{fig:bidirected_cpdag} shows an example for how a bidirected edge can arise when performing \ac{CI} tests up to order 0.

Lemma~\ref{lem:vstructure} describes the conditions for correctly orienting v-structures.
In particular, the dependency conditions  $X \dep Z | sepset(X,Y)$ and $Z \dep Y | sepset(X,Y)$ are the same as in Lemma~3.1 by \citet{colombo2012learning}.
Thus, we adopt Algorithm~4.4 in \citep{colombo2012learning} to orient v-structures, as shown by Algorithm~\ref{alg:rfci-v}.
Fig.~\ref{fig:rfci_example} shows an example where PC-style v-structure orientations, as given by Algorithm~\ref{alg:pc-v}, would orient an edge bidirected, even though the edge exists in the true CPDAG.

\begin{algorithm}
\centering
\caption{OrientVstructPC: Orienting v-structures in the PC algorithm \citep{spirtes2000causation}}
\label{alg:pc-v}
\begin{algorithmic}[1]
    \Require Skeleton $\hat{U}$, separating sets $sepset$
    \State$\hat{G} \gets \hat{U}$
    \For{$X - Z - Y$ in $\hat{\mathbf{E}}$ such that $X \notin Adj_{\hat{U}}(Y)$}
        \If{$Z \notin sepset(X,Y)$}
            \State Orient $X \starleft-\starright Z \starleft-\starright Y$ as $X \starleft\to Z \gets\starright Y$ in $\hat{G}$
        \EndIf
    \EndFor
\State\Return{$\hat{G}$}
\end{algorithmic}
\end{algorithm}

\begin{algorithm}
\centering
\caption{OrientVstructRFCI: Orienting v-structures in the RFCI algorithm \citep{colombo2012learning}}
\label{alg:rfci-v}
\begin{algorithmic}[1]
    \Require Skeleton $\hat{U}$, separating sets $sepset$
    \State $M \gets \{(X,Z,Y) \text{ such that } X - Z - Y$ in $\hat{\mathbf{E}}$ \text{ and } $X \notin Adj_{\hat{U}}(Y)\}$
    \State $L \gets \{\}$
    \Repeat
        \State $X,Z,Y \gets$ choose an unshielded triple from $M$
        \If{$Z \notin sepset(X,Y)$}
            \If{$X \dep Z | sepset(X,Y)$ and $Z \dep Y | sepset(X,Y)$}
                \State $L \gets L \cup \{(X,Y,Z)\}$ \Comment{Add to legitimate v-structures}
            \Else
                \For{$V \in \{X,Y\}$}
                    \If{$V \indep Z | sepset(X,Y)$}
                        \State $\mathbf{S} \gets sepset(X,Y)$
                        \Comment{Find minimal separating set}
                        \State done $\gets$ False
                        \While{not done}
                            \State done $\gets$ True
                            \For{$S \in \mathbf{S}$}
                                \If{$V \indep Z | \mathbf{S} \setminus \{S\}$}
                                    \State $\mathbf{S} \gets \mathbf{S} \setminus \{S\}$
                                    \State done $\gets$ False
                                    \State \textbf{break}
                                \EndIf
                            \EndFor
                        \EndWhile
                        \State $sepset(V,Z) \gets sepset(Z,V) \gets \mathbf{S}$
                        \State $M \gets M \cup$ all triangles $(\min(V,Z),\cdot,\max(V,Z))$ in $\hat{U}$ \Comment{Update new unshielded triples.}
                        \State $M \gets M \setminus$ all triples in $M$ of the form $(V,Z,\cdot), (Z,V,\cdot), (\cdot,V,Z)$ and $(\cdot,Z,V)$
                        \State $L \gets L \setminus$ all triples in $L$ of the form $(V,Z,\cdot), (Z,V,\cdot), (\cdot,V,Z)$ and $(\cdot,Z,V)$
                        \State Delete the edge $V - Z$ from $\hat{U}$
                    \EndIf
                \EndFor
            \EndIf
        \EndIf
    \Until{$M$ is empty}
    \State$\hat{G} \gets \hat{U}$
    \For{$(X, Z, Y) \in L$}
        \State Orient $X \starleft-\starright Z \starleft-\starright Y$ as $X \starleft\to Z \gets\starright Y$ in $\hat{G}$
    \EndFor
\State\Return{$\hat{G}, sepset$}
\end{algorithmic}
\end{algorithm}

\begin{figure}[ht!]
    \centering
    \begin{subfigure}[b]{0.49\linewidth}
    \centering
    \begin{tikzpicture}[xscale = 1,yscale = 1.1]
        \node (a) at (0,0) {$A$};
        \node (b) at (1,0) {$B$};
        \node (c) at (0.5, 0.7) {$C$}
          edge [-stealth] (a)
          edge [-stealth] (b);
        \node (d) at (0.5, -0.7) {$D$}
          edge [-stealth] (a)
          edge [-stealth] (b);
        \node (u) at (-1,0) {$U$}
          edge [-stealth] (a);
        \node (v) at (2,0) {$V$}
          edge [-stealth] (b);
    \end{tikzpicture}
    \caption{Underlying true DAG}
    \label{fig:bidirected_dag}
    \end{subfigure}
    \begin{subfigure}[b]{0.49\linewidth}
    \centering
    \begin{tikzpicture}[xscale = 1,yscale = 1.1]
        \node (a) at (0,0) {$A$};
        \node (b) at (1,0) {$B$}
          edge [stealth-stealth, red] (a);
        \node (c) at (0.5, 0.7) {$C$}
          edge [-stealth] (a)
          edge [-stealth] (b);
        \node (d) at (0.5, -0.7) {$D$}
          edge [-stealth] (a)
          edge [-stealth] (b);
        \node (u) at (-1,0) {$U$}
          edge [-stealth] (a);
        \node (v) at (2,0) {$V$}
          edge [-stealth] (b);
    \end{tikzpicture}
    \caption{Partially directed graph $G^0$ found by SNAP}
    \label{fig:bidirected_cpdag}
    \end{subfigure}
  \caption{Example of a bidirected edge, based on Figure 2 in \citep{wienobst2020recovering}. Fig.~\ref{fig:bidirected_dag} shows the underlying true DAG. Fig.~\ref{fig:bidirected_cpdag} shows the partially directed graph $G^0$ found by SNAP (with any targets) at iteration $i=0$ after performing only marginal \ac{CI} tests. Nodes $A$ and $B$ are not separated by the empty set, thus there is still an edge between them. This results in two conflicting v-structures, $U \to A \gets B$ and $A \to B \gets V$, indicated by the bidirected edge $A \leftrightarrow B$. As shown by \citet{wienobst2020recovering}, given low-order \ac{CI} tests, nodes with such a bidirected edge between them are not adjacent in the true CPDAG.}
  \label{fig:bidirected_example}
\end{figure}
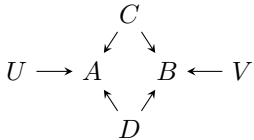
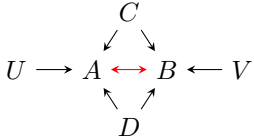

\section{PROOFS}
\label{app:proof}

\subsection{Proof for Lemma~\ref{thm:ancestors}}\label{app:ancestors}

\lemmaancestors*

\begin{proof}
    \citet{lauritzen1996graphical} (Proposition 3.22) and \citet{guo2023variable} (Lemma D.1) show that this holds for true causal graphs $D$ and ancestral sets.
    If two causal graphs are equal, then their \acp{CPDAG} are also equal.
    Since we assume that $\mathbf{V}^*$ is possibly ancestral (defined in Theorem.~\ref{thm:ancestors}), i.e., it contains all its own possible ancestors $PossAn_G(\mathbf{V^*}) \subseteq \mathbf{V^*}$, this implies that it also contains all its (actual) ancestors in the (unknown) ground truth DAG $D$. Hence, $\mathbf{V^*}$ is  ancestral in the unknown ground truth graph $D$, i.e. $An_D(\mathbf{V^*}) \subseteq \mathbf{V^*}$, in the MEC represented by the CPDAG $G$.
\end{proof}

\subsection{Proof for Theorem~\ref{thm:snap_is_sound}} \label{proof:snap_is_sound}

Theorem~\ref{thm:snap_is_sound} states that at each iteration $i$ of the SNAP(k) algorithm (Algorithm~\ref{alg:snap(k)}), the set of considered nodes $\hat{\mathbf{V}}^{i}$ contains all possible ancestors of the targets $\mathbf{T}$.
Additionally, $\hat{\mathbf{V}}^{i}$ is possibly ancestral, which means that it contains all of its own possible ancestors in the CPDAG $G$, i.e. $PossAn_G(\hat{\mathbf{V}}^{i}) \subseteq \hat{\mathbf{V}}^{i}$.  

To prove this result, we first show that no edge between the nodes $\hat{\mathbf{V}}^{i}$ are wrongly deleted during the execution of SNAP(k).
In other words, the intermediate undirected graph $\hat{U}^i$ over the subset $\hat{\mathbf{V}}^{i} \subseteq \mathbf{V}$ is a supergraph of $U|_{\hat{\mathbf{V}}^{i}}$, the induced subgraph of the true skeleton $U$ over $\hat{\mathbf{V}}^{i}$, at each iteration $i$. This means that  $\hat{U}^i$ contains all edges in $U|_{\hat{\mathbf{V}}^{i}}$, and possibly some additional edges.

\begin{lemma}
\label{lem:skeleton}
Given oracle conditional independence tests, at any iteration $i = 0,..,k$ of Algorithm~\ref{alg:snap(k)}, the undirected graph $\hat{U}^{i}$ is a supergraph of $U|_{\hat{\mathbf{V}}^{i}}$, the induced subgraph of the true skeleton $U$ over $\hat{\mathbf{V}}^{i}$.

\begin{proof}
In the Algorithm~\ref{alg:snap(k)} we only remove edges in $\hat{U}^i$ between two nodes $X, Y \in \hat{\mathbf{V}}^{i}$ for which we can find a separating set.
By faithfulness, we assume that these nodes are therefore also non-adjacent in the ground truth skeleton $U$.
Hence, the resulting skeleton $\hat{U}^i$ over the variables $\hat{\mathbf{V}}^{i}$ is a supergraph of the induced subgraph of the true skeleton, denoted by  $U|_{\hat{\mathbf{V}}^{i}}$.
\end{proof}

\end{lemma}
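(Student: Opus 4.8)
The plan is to prove Lemma~\ref{lem:skeleton} by induction on the iteration index $i$, maintaining as an invariant that every edge of the true induced skeleton $U|_{\hat{\mathbf{V}}^i}$ is present in $\hat{U}^i$. The single fact driving the whole argument is that Algorithm~\ref{alg:snap(k)} only ever deletes an edge $X-Y$ when some \ac{CI} test $X \indep Y \mid \mathbf{S}$ passes. Under oracle tests and faithfulness this is equivalent to $X \perp_d Y \mid \mathbf{S}$ in the ground-truth DAG $D$, and since two adjacent vertices in a DAG are joined by a single-edge path that contains no intermediate node and hence cannot be blocked by any conditioning set, a truly adjacent pair can never be separated by any $\mathbf{S}$. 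Consequently no edge of $U$, and in particular no edge of $U|_{\hat{\mathbf{V}}^i}$, is ever removed.

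For the base case $i=0$, the graph $\hat{U}^{-1}$ is complete over $\mathbf{V} = \hat{\mathbf{V}}^0$, so its restriction to $\hat{\mathbf{V}}^0$ trivially contains $U|_{\hat{\mathbf{V}}^0}$, and the order-$0$ removals only delete non-edges by the core fact above. For the inductive step I would track the three places where $\hat{U}^i$ is modified. First, $\hat{U}^i$ is initialized as the induced subgraph of $\hat{U}^{i-1}$ over $\hat{\mathbf{V}}^i$; because the pruning step only ever removes nodes, we have $\hat{\mathbf{V}}^i \subseteq \hat{\mathbf{V}}^{i-1}$, so restricting the induction hypothesis $\hat{U}^{i-1} \supseteq U|_{\hat{\mathbf{V}}^{i-1}}$ to $\hat{\mathbf{V}}^i$ already yields $\hat{U}^i \supseteq U|_{\hat{\mathbf{V}}^i}$. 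Second, the order-$i$ skeleton search deletes only edges for which a separating set is found, which by the core fact are non-edges of $U$. Third, for $i \geq 2$ the v-structure orientation replaces $\hat{U}^i$ by the skeleton of $\hat{G}^i$ produced by OrientVstructRFCI (Alg.~\ref{alg:rfci-v}), a step that also deletes an edge $V-Z$ only after certifying $V \indep Z \mid \mathbf{S}$, so the same separation argument shows it removes only non-edges of $U$.

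The main obstacle to watch is precisely this last step: unlike a plain PC skeleton search, OrientVstructRFCI performs additional \ac{CI} tests and can delete edges while searching for minimal separating sets, so I must confirm that each such deletion is still backed by a passing oracle test (which it is, by construction of Alg.~\ref{alg:rfci-v}) and that rewriting $\hat{U}^i$ to the skeleton of $\hat{G}^i$ cannot drop a surviving true edge. Once these RFCI deletions are seen to be sound, the invariant is preserved across all three modifications, closing the induction and giving $\hat{U}^i \supseteq U|_{\hat{\mathbf{V}}^i}$ for every $i = 0, \dots, k$.
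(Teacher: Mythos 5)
Your proposal is correct and rests on exactly the same core argument as the paper's proof: every edge deletion in Algorithm~\ref{alg:snap(k)} (whether in the skeleton search or inside the RFCI orientation step of Alg.~\ref{alg:rfci-v}) is backed by a found separating set, and under oracle tests and faithfulness no pair adjacent in the true skeleton $U$ can ever be separated, so only non-edges of $U$ are removed. Your explicit induction and the separate accounting of the node-pruning restriction and the RFCI deletions merely make rigorous what the paper's one-paragraph proof states in a single blanket claim; the approach is the same.
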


Next, we show that no possible ancestor of any target in $\mathbf{T}$ gets wrongly eliminated at any iteration $i$ of SNAP(k).
We first show that if we used the rules to orient v-structures from RFCI, described in Algorithm~\ref{alg:rfci-v}, for all iterations, this would hold.
We then prove that these rules are not necessary for $i=0,1$, but we can instead just use the standard rules for orienting v-structures in PC, described in Algorithm~\ref{alg:pc-v}.

\begin{lemma}
\label{lem:vstructure1}
Let $G$ be the CPDAG of the ground truth DAG $D$ and $\mathbf{T} \subseteq \mathbf{V}$ a set of targets.
Given oracle conditional independence tests, at any iteration $i = 0,..,k$ of SNAP (k) (Algorithm~\ref{alg:snap(k)}), let $\hat{U}^i$ be the undirected graph with nodes $\hat{\mathbf{V}}^{i}$ estimated at iteration $i$ and $\hat{G}^i$ be an initially undirected graph with the same skeleton. After orienting v-structures in $\hat{G}^i$ using the RFCI orientation rules, described in Algorithm~\ref{alg:rfci-v}, in SNAP(k) we collect in $\hat{\mathbf{V}}^{i+1}$ only the nodes $V \in \hat{\mathbf{V}}^{i}$ that have a possibly directed path to any $T \in \mathbf{T}$.
Then it holds that $PossAn_G(\mathbf{T}) \subseteq \hat{\mathbf{V}}^{i+1}$ and that $\hat{\mathbf{V}}^{i+1}$ is \emph{possibly ancestral}, i.e.,  
for all $V \in \hat{\mathbf{V}}^{i+1}$ it holds that $PossAn_G(V) \subseteq \hat{\mathbf{V}}^{i+1}$.

\end{lemma}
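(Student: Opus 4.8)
The plan is to prove both claims --- that $\hat{\mathbf{V}}^{i+1} \supseteq PossAn_G(\mathbf{T})$ and that $\hat{\mathbf{V}}^{i+1}$ is possibly ancestral --- together by induction on $i$, with induction hypothesis that $\hat{\mathbf{V}}^{i} \supseteq PossAn_G(\mathbf{T})$ and that $\hat{\mathbf{V}}^{i}$ is itself possibly ancestral (the base case $i=0$ is immediate, since $\hat{\mathbf{V}}^{0}=\mathbf{V}$). The whole argument reduces to one \emph{reachability-preservation} statement: for any two nodes $W,V \in \hat{\mathbf{V}}^{i}$, if there is a possibly directed path from $W$ to $V$ in the true CPDAG $G$, then there is also a possibly directed path from $W$ to $V$ in $\hat{G}^i$. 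Granting this, the keeping rule ``$V$ has a possibly directed path to some $T\in\mathbf{T}$ in $\hat{G}^i$'' retains every possible ancestor of the targets and nothing that would break possible ancestry.

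First I would establish the reachability-preservation claim, which is the technical core. Fix a possibly directed path $p:\ W = W_0 - W_1 - \cdots - W_m = V$ in $G$. By transitivity of possible ancestry each $W_j \in PossAn_G(V)$, and since $V \in \hat{\mathbf{V}}^{i}$ and $\hat{\mathbf{V}}^{i}$ is possibly ancestral (induction hypothesis), every $W_j$ lies in $\hat{\mathbf{V}}^{i}$. Each edge $\{W_j,W_{j+1}\}$ is a true edge of $G$, hence of the true skeleton $U$; by Lemma~\ref{lem:skeleton}, and because the oracle edge deletions inside Algorithm~\ref{alg:rfci-v} only remove genuinely absent edges, this edge survives in the skeleton of $\hat{G}^i$. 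It remains to rule out a \emph{backward} arrowhead at $W_j$ on this edge. Since $p$ is possibly directed, every edge is undirected or oriented $W_j \to W_{j+1}$ in $G$, so $W_j \in PossAn_G(W_{j+1})$. The crucial ingredient is the non-ancestry soundness of the RFCI arrowheads: any arrowhead Algorithm~\ref{alg:rfci-v} places at a node $Z$ coming from a neighbor $X$ implies $Z \notin PossAn_G(X)$. Applying this with $Z=W_j$, $X=W_{j+1}$ would force $W_j \notin PossAn_G(W_{j+1})$, contradicting the previous line. Hence in $\hat{G}^i$ the edge is undirected or $W_j \to W_{j+1}$, and $p$ is a possibly directed path in $\hat{G}^i$.

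From reachability preservation both conclusions follow quickly. For $PossAn_G(\mathbf{T}) \subseteq \hat{\mathbf{V}}^{i+1}$: any $V \in PossAn_G(\mathbf{T})$ has a possibly directed path to some target $T$ in $G$, which preservation transports to $\hat{G}^i$, so $V$ satisfies the keeping rule. For possible ancestrality: if $V \in \hat{\mathbf{V}}^{i+1}$ and $W \in PossAn_G(V)$, then $W$ reaches $V$ by a possibly directed path in $\hat{G}^i$ (preservation), while $V$ reaches some $T\in\mathbf{T}$ by a possibly directed path in $\hat{G}^i$ (definition of the keeping rule); concatenating these two possibly directed walks and extracting a simple possibly directed path yields one from $W$ to $T$ in $\hat{G}^i$, so $W \in \hat{\mathbf{V}}^{i+1}$.

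The main obstacle is precisely the non-ancestry soundness of the RFCI arrowheads at a bounded CI-test order, which I would import from the earlier v-structure lemma (Lemma~\ref{lem:vstructure}) rather than reprove here. This is exactly where the naive PC orientation fails: as Example~\ref{example:rfci} shows, PC-style orientation can place an arrowhead at $A$ from $B$ even though $A \to B$ in the true DAG, violating $A \notin PossAn_G(B)$. The additional dependency checks $X \dep Z \,|\, sepset(X,Y)$ and $Z \dep Y \,|\, sepset(X,Y)$ in Algorithm~\ref{alg:rfci-v} are what ensure that any oriented collider reflects a genuine non-ancestry relation in $G$, even when only a subset of low-order CI tests has been performed. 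Carefully justifying that these checks suffice in the causally sufficient case, and that they remain valid across the interleaved edge deletions that update the separating sets, is the delicate part of the argument.
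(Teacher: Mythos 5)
Your scaffolding --- induction on $i$, a reachability-preservation claim for possibly directed paths, and the concatenate-and-extract step at the end --- is sound and is essentially the same skeleton as the paper's proof (the paper leaves the induction implicit but uses exactly this structure: Lemma~\ref{lem:skeleton} for edges, orientation soundness for arrowheads, then path preservation and the keeping rule). The problem is what you do with the one step you yourself identify as ``the main obstacle.'' The entire technical content of Lemma~\ref{lem:vstructure1} \emph{is} the claim that every arrowhead placed by Algorithm~\ref{alg:rfci-v} at $Z$ on an edge to $X$ entails $Z \notin PossAn_G(X)$, given only the bounded-order subset of CI tests actually performed; deferring it leaves the proof without its core. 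Worse, the place you defer it to is circular: in the paper, Lemma~\ref{lem:vstructure} is proved \emph{from} Lemma~\ref{lem:vstructure1} (its proof opens by invoking Lemma~\ref{lem:vstructure1} and then only handles the extra work needed to justify the PC-style orientation of Algorithm~\ref{alg:pc-v} at orders $0$ and $1$), and its statement is the possibly-ancestral conclusion itself, not a per-arrowhead non-ancestry guarantee you could extract and reuse. So as written, your argument assumes a statement that is downstream of, and essentially equivalent to, the one being proved.

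What is missing is the contradiction argument the paper gives (adapted from Lemma~2 of \citealp{wienobst2020recovering}). Suppose the triple $X \starleft-\starright Z \starleft-\starright Y$ satisfies $Z \notin sepset(X,Y)$, $X \dep Z \,|\, sepset(X,Y)$ and $Z \dep Y \,|\, sepset(X,Y)$, and assume for contradiction $Z \in PossAn_G(X)$, i.e., some DAG $D'$ in the MEC has a directed path $Z \to \dots \to X$. Since $X \indep Y \,|\, sepset(X,Y)$ while $Z \dep Y \,|\, sepset(X,Y)$, this directed path must be blocked by some $W \in sepset(X,Y)$ lying on it, so $W$ is a descendant of $Z$ in $D'$. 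By the two dependence conditions there are paths from $X$ to $Z$ and from $Z$ to $Y$ open given $sepset(X,Y)$; since $Z \notin sepset(X,Y)$, the only way their concatenation can be blocked (as required by $X \indep Y \,|\, sepset(X,Y)$) is for $Z$ to be a collider at the junction --- but conditioning on $W$, a descendant of $Z$ inside $sepset(X,Y)$, unblocks that collider, contradicting $X \indep Y \,|\, sepset(X,Y)$. Hence $Z \notin PossAn_G(X)$. Note also that the off-the-shelf RFCI result (Lemma~3.1 of \citealp{colombo2012learning}) concerns ancestral relations in the underlying DAG/MAG, whereas here one needs non-ancestry in \emph{every} DAG of the MEC (possible ancestry in the CPDAG), so even a citation to RFCI would not substitute for this argument without the adaptation above. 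With this step supplied, the rest of your proof goes through.
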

\begin{proof}
As shown in Lemma~\ref{lem:skeleton}, given oracle conditional independence tests at any iteration $i$ at Line 17 the skeleton $\hat{U}^i$ is a supergraph of the true skeleton $U$ restricted to the variables $\hat{\mathbf{V}}^{i}$.
We will now need to prove that orienting v-structures following the additional criteria in Algorithm~\ref{alg:rfci-v}, the orientations that we create in $\hat{G}^i$ are not in conflict with any possible ancestor of the variables $\hat{\mathbf{V}}^i$ in the ground truth CPDAG $G$. Our proof by contradiction follows the proof of Lemma~2 in \cite{wienobst2020recovering}. 

Let us assume that for an unshielded triple  $X \starleft-\starright Z \starleft-\starright Y$, the following three conditions hold: $Z \notin sepset(X,Y)$, $X \dep Z | sepset(X,Y)$ and $Z \dep Y | sepset(X,Y)$. Let us assume additionally that
$Z \in PossAn_G(X)$.
Then, there exists some DAG $D'$ in the \ac{MEC} represented by $G$ for which $Z \in An_{D'}(X)$. This means that there is a directed path from $Z \to \dots \to X$ in $D'$.
Since $Z \dep Y | sepset(X,Y)$, there exists at least one open path between $Y$ and $Z$ not blocked by $sepset(X,Y)$ in $D'$. 
However, since $X \indep Y | sepset(X,Y)$, there has to be a node $W \in sepset(X,Y)$ on the directed path from $Z \to \dots  W \dots \to X$ that blocks this directed path in $D'$.
Otherwise, the open path between $Y$ and $Z$ and the directed path from $Z$ to $X$ would not be blocked by $sepset(X,Y)$.
However, we also know that $X \dep Z | sepset(X,Y)$ implying that there is some other path between $X$ and $Z$ that is not blocked by $sepset(X,Y)$.
Thus, there is a path between $X$ and $Z$ and between $Z$ and $Y$ that are not blocked by $sepset(X,Y)$ in  $D'$.
In this case, $X \indep Y | sepset(X,Y)$ can only hold if $Z$ is a collider when connecting these two paths, because $Z \notin sepset(X,Y)$.
However, this path is then unblocked by $W$ since it is a descendant of $Z$ and it is in $sepset(X,Y)$.
It follows that $X \indep Y | sepset(X,Y)$ cannot hold in $D'$, where $Z \in An_{D'}(X)$ and hence we get a contradiction, proving that $Z \not \in PossAn_G(X)$.
Thus, if an edge is oriented as $X \starleft\to Z$ in $\hat G^i$ after satisfying the above three conditions, it holds that $Z \notin PossAn_{G}(X)$, which means that $Z$ is a definite non-ancestor of $X$ is $G$.

From this and Lemma~\ref{lem:skeleton} it follows that if there is a possibly directed path from $V' \in \hat{\mathbf{V}}^i$ to $V \in \hat{\mathbf{V}}^i$ defined by a sequence of nodes $\langle V', \dots V \rangle$ in the true CPDAG $G$, then a path with the same sequence of nodes also exists from $V'$ to $V$ in $\hat G^i$ and it is also possibly directed.
Hence, if $V' \in PossAn_G(V)$, then there is also a possibly directed path from $V'$ to $V$ in $\hat G^i$.

In Line 15 of Algorithm~\ref{alg:snap(k)} we collect in $\hat{\mathbf{V}}^{i+1}$ only the nodes $V \in \hat{\mathbf{V}}^{i}$ that have a possibly directed path to any $T \in \mathbf{T}$ in $G^i$.

Let us consider a node $V$ that has a possibly directed path to a target $T \in \mathbf{T}$.
If a node $V'$ has a possibly directed path to $V$, then it also has a possibly directed path to $T$ through $V$.
Hence, no $V'$ that has a possibly directed to $V$ is removed from the nodes under consideration $\hat{\mathbf{V}}^{i+1}$.
We then conclude that in the true CPDAG $G$, for all $V \in \mathbf{\hat{V}}^{i}$:
$$
PossAn_G(V) \subseteq  \mathbf{\hat{V}}^{i}.
$$

\end{proof}

The Lemma above states that if we use the orientations rules for v-structures in RFCI, described in Algorithm~\ref{alg:rfci-v}, our Algorithm~\ref{alg:snap(k)} will identify a superset of the ground truth possible ancestors for all the variables that we consider at iteration $i$. In the following, we show an optimization that allows us to avoid checking the two additional dependencies for iterations $i=0,1$, thus allowing us to reduce the computation and test time.
While the case of $i=0$ is simple, the case of $i=1$ requires particular care, so we first state the following helper lemma.

\begin{lemma}
\label{lem:separated}
Let $\{W,X,Y,Z\}$ be nodes in a DAG $D$, with $X$ and $Y$ d-connected given the empty set.
If $X$ is d-separated from $Y$ given $Z$, and $Z$ is d-separated from $Y$ given $W$, then $X$ is d-separated from $Y$ given $W$, i.e.,
$$X \not\perp_d Y \  \land  \ X \perp_d Y|Z \ \land\ Z\perp_d Y| W \implies X \perp_d Y|W.$$

\begin{proof}
As $X$ and $Y$ are d-connected given the empty set, they are connected by a trek in $D$, i.e., paths without colliders, thus either directed from $X$ to $Y$, from $Y$ to $X$, or of the form $\langle X \gets ... \gets V \to ... \to Y \rangle$.
If $Z$ d-separates $X$ and $Y$, then $Z$ blocks all treks between $X$ and $Y$.
Hence, all treks between $X$ and $Y$ are of the form $\langle X, ..., Z, ..., Y \rangle$ in $D$. Additionally,  these treks imply that $Z$ is d-connected to $Y$.
Similarly, if $Z$ and $Y$ are d-separated by $W$, then $W$ must block all treks between $Z$ and $Y$.
Since any subpath of a trek is also a trek, and conditioning on any node along the trek will block it, it follows that any trek between $X$ and $Y$ blocked by $Z$ is also blocked by $W$, and so $W$ also blocks all treks between $X$ and $Y$.
Therefore, all treks between $X$ and $Y$ are of the form $\langle X,\dots,Z,\dots,W,\dots,Y \rangle$.

Next, we show, by contradiction, that conditioning on $W$ also does not unblock any other paths between $X$ and $Y$.
Thus assume the opposite, i.e., that $W$ unblocks a path $\pi$ between $X$ and $Y$.
This path must contain one or more colliders $V_i$, as all treks (paths without colliders) are blocked by $W$.
Thus, $\pi = \langle X, ..., V_1, ..., V_2, ..., V_k, ..., Y \rangle$ in $D$, such that $\forall i: V_i \in An_D(W)$, and $\{V_1,..,V_k\}$ are all colliders along $\pi$, i.e.\ $\pi$ is of the form $X \ldots \to V_1 \gets \ldots \to V_2 \gets (\ldots) \to V_k \gets \ldots Y$ in $D$. Note that possibly for one collider along $\pi$ we have $V_j = W$ (as by definition $W \in An_D(W)$), but not for more than one, otherwise $\pi$ would not be a path (sequence of distinct vertices).
We show that the existence of such an unblocked path $\pi$ given $W$ would imply the existence of an unblocked path between $X$ and $Y$ given $Z$, and/or the existence of an unblocked path between $Z$ and $Y$ given $W$, both contrary to the given.

First, we know that $Z$ is not on $\pi$, otherwise $Z$ and $Y$ would be d-connected given $W$ via a subpath of $\pi$.
Furthermore, $W \notin An_D(Z)$, otherwise $\pi$ would also be unblocked given $Z$.
Therefore, all treks between $Z$ and $W$ must be \emph{into} $W$ (and so, by extension, all treks between $Z$ and $Y$ must be of the form $\langle Z\ \ldots \to W \to \ldots \to Y \rangle$, i.e., \emph{into} $Y$).
But then, if there are one or more colliders $V_i \in An_D(Z)$, then let $V_j \in An_D(Z)$ be the one closest to $Y$ along $\pi$.
Then the path $Z \gets \ldots \gets V_j \gets \ldots (\to V_k \gets) \ldots Y$, i.e.,\  the concatenation of the directed path $V_j \to \ldots \to Z$ and the subpath $\langle V_j,..,Y \rangle$ from $\pi$, would be unblocked given $W$, contrary to the given. Note that $V_j \neq W$, as we already concluded that $W \notin An_D(Z)$.

So in particular, $V_k \notin An_D(Z)$.
But that implies that there are unblocked paths $\langle Z\ \ldots \to W \rangle$ and $\langle W \gets \ldots \gets V_k \gets \ldots Y \rangle$ in $D$ (given the empty set), that are both \emph{into} $W$.
Then, conditioning on $W$ would open up an unblocked path between $Z$ and $Y$ given $W$, contrary to the given.

Therefore, the assumption that there is an unblocked path $\pi$ between $X$ and $Y$ given $W$ leads to a contradiction, and hence $X$ is d-separated from $Y$ given $W$.

\end{proof}
\end{lemma}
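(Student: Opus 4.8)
The plan is to work entirely with the \emph{treks} of $D$, i.e.\ the paths that contain no collider; these are precisely the paths that are active given the empty set. First I would extract the trek-level content of the two separation hypotheses. Because $X \not\perp_d Y$, at least one trek joins $X$ and $Y$, and since a collider-free path is blocked only by conditioning on one of its interior vertices, the hypothesis $X \perp_d Y \mid Z$ forces $Z$ to be an interior (non-collider) vertex of \emph{every} $X$--$Y$ trek. Splitting such a trek at $Z$ leaves a $Z$--$Y$ subpath that is again a trek, and the same argument applied to $Z \perp_d Y \mid W$ puts $W$ on every $Z$--$Y$ trek. Hence every $X$--$Y$ trek has the shape $\langle X,\dots,Z,\dots,W,\dots,Y\rangle$, so $W$ is a non-collider on it and conditioning on $W$ blocks all of them. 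This already disposes of every collider-free path between $X$ and $Y$.

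The remaining and harder task is to show that conditioning on $W$ does not \emph{activate} a path carrying colliders; this is the main obstacle. I would assume, for contradiction, that some path $\pi$ between $X$ and $Y$ is active given $W$. By the previous paragraph $\pi$ must contain at least one collider, and activeness forces every collider of $\pi$ to lie in $An_D(W)$. Two orientation facts then fall out of the hypotheses. First, $Z \notin \pi$: otherwise the $Z$-to-$Y$ subpath of $\pi$ would be active given $W$ (its colliders are colliders of $\pi$, hence open, and its interior non-colliders differ from $W$), contradicting $Z \perp_d Y \mid W$. Second, $W \notin An_D(Z)$: otherwise every collider of $\pi$, being in $An_D(W) \subseteq An_D(Z)$, would be open under conditioning on $Z$, while no non-collider of $\pi$ equals $Z$ (as $Z \notin \pi$), so $\pi$ would be active given $Z$, contradicting $X \perp_d Y \mid Z$. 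From $W \notin An_D(Z)$ it follows that no trek leaves $W$ toward $Z$, so every $Z$--$W$ trek is oriented \emph{into} $W$.

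With these facts I would split on whether any collider of $\pi$ is an ancestor of $Z$. If some collider is, let $V_j$ be the one closest to $Y$ along $\pi$; concatenating the reversed directed path $Z \gets \dots \gets V_j$ with the $V_j$-to-$Y$ part of $\pi$ produces a $Z$--$Y$ path that is active given $W$: $V_j$ becomes a non-collider different from $W$ (since $V_j \in An_D(Z)$ while $W \notin An_D(Z)$), the inherited colliders remain in $An_D(W)$, and no interior vertex of the directed segment can be $W$ without forcing $W \in An_D(Z)$. This contradicts $Z \perp_d Y \mid W$. Otherwise the collider $V_k$ closest to $Y$ is not an ancestor of $Z$; then I would glue the into-$W$ trek $\langle Z, \dots \to W \rangle$ to the path $\langle W \gets \dots \gets V_k, \dots, Y \rangle$, where $V_k \in An_D(W)$ supplies the directed segment $V_k \to \dots \to W$. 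The two pieces meet at $W$ with both edges pointing into $W$, making $W$ a collider that conditioning on $W$ opens, so this $Z$--$Y$ path is active given $W$, again contradicting $Z \perp_d Y \mid W$.

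Since both subcases are impossible, no active path $\pi$ given $W$ exists, and combined with the first paragraph this gives $X \perp_d Y \mid W$. I expect the genuine difficulty to be concentrated in the third paragraph: one must verify that the spliced $Z$--$Y$ objects are honest paths (no repeated vertices, which may require trimming the directed segments to simple paths) and that every non-collider created along those segments is distinct from $W$. The two orientation facts established in the second paragraph --- that $Z \notin \pi$ and that $Z$--$W$ treks are into $W$ --- are exactly what make this bookkeeping go through.
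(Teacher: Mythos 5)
Your proposal is correct and follows essentially the same argument as the paper's proof: the same trek decomposition forcing every $X$--$Y$ trek to have the form $\langle X,\dots,Z,\dots,W,\dots,Y\rangle$, the same two auxiliary facts ($Z \notin \pi$ and $W \notin An_D(Z)$, hence all $Z$--$W$ treks into $W$), and the same case split on whether a collider of $\pi$ is an ancestor of $Z$, with the identical concatenation and gluing constructions yielding a contradiction with $Z \perp_d Y \mid W$. The only difference is that you explicitly flag the walk-to-path bookkeeping (vertex repetitions in the spliced objects) that the paper leaves implicit, which is a point of additional care rather than a divergence in method.
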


\begin{restatable}[]{lemma}{lemmavstruct}
\label{lem:vstructure}
Let $G$ be the ground truth CPDAG. At any iteration $i = 0,..,k$ of Alg.~\ref{alg:snap(k)}, let $\hat{G}^i$ be the estimated mixed graph at iteration $i$ after orienting the v-structures with Alg.~\ref{alg:pc-v} for $i=0,1$ and Alg.~\ref{alg:rfci-v} for $i\geq2$.
Then for all $V \in \hat{\mathbf{V}}^{i+1}$ it holds that
$
PossAn_G(V) \subseteq  \hat{\mathbf{V}}^{i+1}$.
\end{restatable}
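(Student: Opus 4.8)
The plan is to show that the cheaper PC-style v-structure orientation (Alg.~\ref{alg:pc-v}), used at iterations $i=0,1$, produces exactly the same oriented graph $\hat{G}^i$ as the RFCI-style orientation (Alg.~\ref{alg:rfci-v}) would, so that the conclusion of Lemma~\ref{lem:vstructure1}---which already establishes $PossAn_G(V) \subseteq \hat{\mathbf{V}}^{i+1}$ under RFCI orientation---carries over verbatim. For $i \geq 2$ nothing new is needed, since Alg.~\ref{alg:snap(k)} already uses RFCI orientation there and Lemma~\ref{lem:vstructure1} applies directly. The whole argument therefore reduces to showing that, at orders $0$ and $1$, whenever PC orients an unshielded triple as $X \starleft\to Z \gets\starright Y$ (i.e.\ whenever $Z \notin sepset(X,Y)$), the two extra RFCI dependency conditions $X \dep Z \mid sepset(X,Y)$ and $Z \dep Y \mid sepset(X,Y)$ hold in the oracle. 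If they always hold, then Alg.~\ref{alg:rfci-v} never enters its edge-deletion branch and records exactly the PC v-structures in $L$, so the two orientations---and hence the pruned sets $\hat{\mathbf{V}}^{i+1}$---coincide, and I can appeal to Lemma~\ref{lem:vstructure1}.

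The order-$0$ case is immediate: there $sepset(X,Y)=\emptyset$, and the mere presence of the edges $X-Z$ and $Z-Y$ in $\hat{U}^0$ means they survived a marginal test, i.e.\ $X \dep Z \mid \emptyset$ and $Z \dep Y \mid \emptyset$, so both conditions hold automatically. The order-$1$ case is the crux and the main obstacle. I would argue by contradiction: suppose $X \perp_d Z \mid sepset(X,Y)$, the symmetric condition on $Y$ being handled identically by swapping $X$ and $Y$. The size-$0$ subcase again contradicts the presence of the edge $X-Z$, so assume $sepset(X,Y)=\{S\}$ with $S \neq Z$; because the triple was separated only at order $1$ and not at order $0$, I also have $X \not\perp_d Y \mid \emptyset$ and $X \perp_d Y \mid \{S\}$.

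From here I would split on why PC did not already delete the surviving edge $X-Z$ using $\{S\}$. If $S$ were still adjacent to $X$ or $Z$ when the edge $X-Z$ was processed, PC would have tested $X \perp Z \mid \{S\}$ and deleted the edge, contradicting survival; so the edge $X-S$ (resp.\ $Z-S$) must have been removed earlier by a test $X \perp_d S \mid \mathbf{W}$ with $|\mathbf{W}| \le 1$. If $X \perp_d S \mid \emptyset$, a trek argument shows $S$ lies on no trek between $X$ and $Z$ and hence cannot block their marginal d-connection, contradicting $X \perp_d Z \mid \{S\}$. Otherwise $X \perp_d S \mid \{W\}$ for a single node $W$, and I would invoke the helper Lemma~\ref{lem:separated} with the assignment $(X,Y,Z,W) \mapsto (Z,X,S,W)$: its three hypotheses $Z \not\perp_d X \mid \emptyset$, $Z \perp_d X \mid \{S\}$ and $S \perp_d X \mid \{W\}$ are exactly the facts in hand, and its conclusion yields $X \perp_d Z \mid \{W\}$, a new single-node separator of $X$ and $Z$.

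The remaining, delicate step is to trace this separator back to a node that is still adjacent to $X$ or $Z$ at order $1$---using Lemma~\ref{lem:skeleton}, since true neighbors are never deleted---so that PC must in fact have deleted $X-Z$ by order $1$, the desired contradiction. This bookkeeping, namely confirming that single-node separations are never \emph{missed} at order $1$ due to the deletion order, is precisely what fails at order $\ge 2$ as in Example~\ref{example:rfci}, and is where Lemma~\ref{lem:separated} does the real work; I expect this to be the hardest part of the write-up. Once the dependency conditions are established at both orders, PC and RFCI agree for $i=0,1$, the intermediate graphs and pruned sets match those analyzed under RFCI orientation, and the claim $PossAn_G(V) \subseteq \hat{\mathbf{V}}^{i+1}$ follows directly from Lemma~\ref{lem:vstructure1}.
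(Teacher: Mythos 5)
Your proposal follows the paper's proof essentially verbatim: you reduce to Lemma~\ref{lem:vstructure1} by arguing that the two extra RFCI dependency conditions hold automatically at orders $0$ and $1$ (order $0$ because both edges survived the exhaustive marginal tests; order $1$ via the case split on the sepset element and Lemma~\ref{lem:separated}, with exactly the substitution $(X,Y,Z,W)\mapsto(Z,X,S,W)$ the paper uses), so PC-style orientation is as good as RFCI-style there, while $i\geq 2$ is covered directly by Lemma~\ref{lem:vstructure1}. The one step you defer as the "hardest part" is completed in the paper just as you anticipate: repeatedly applying Lemma~\ref{lem:separated} along the chain of single-node separators (each obtained from an earlier edge deletion) until, by finiteness of the node set, one reaches a node still adjacent to $X$ in $\hat{G}^1$, at which point the skeleton search must already have tested and removed $X-Z$, giving the contradiction.
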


\begin{proof}
Lemma~\ref{lem:vstructure1} shows that the result holds if we only orient unshielded triples $X \starleft-\starright Z \starleft-\starright Y$, if $Z \notin sepset(X,Y)$, and if additionally the following two dependencies hold: $X \dep Z | sepset(X,Y)$ and $Z \dep Y | sepset(X,Y)$, as tested in the RFCI orientation rules in Algorithm~\ref{alg:rfci-v}.

At iterations $i\geq2$, Algorithm~\ref{alg:rfci-v} explicitly tests these dependencies, hence we can apply the result directly.

We now show that for iterations $i=0,1$ we do not need to test them explicitly, since they will also hold even if we use the simpler Algorithm~\ref{alg:pc-v} to orient v-structures. In particular:
\paragraph{Case $i=0$:} At iteration $i=0$ every pair of variables is tested for independence with the empty conditioning set during the skeleton step.
Thus, if there is still an edge between $X$ and $Z$ and between $Z$ and $Y$, then they are dependent given the empty set, or in other words, $X \dep Z$ and $Y \dep Z$ always hold.

\paragraph{Case $i=1$:} At iteration $i=1$, we check that $Z \notin sepset(X,Y)$. 
Given that we are testing separating set sizes of maximum order $1$, then $sepset(X,Y)$ is either empty or 
a single variable.
If $sepset(X,Y)$ is empty, then we know that $X \dep Z$ and $Y \dep Z$ also holds, since every pair of variables is tested for marginal independence and these variables are still adjacent after the marginal tests. 

Otherwise, the separating set is a single variable, $sepset(X,Y) = \{W\}$.
We will prove how this implies $X \dep Z | sepset(X,Y)$. The same proving strategy can be then applied to proving $Y \dep Z | sepset(X,Y)$, by substituting $X$ with $Y$. 

Let us consider two cases, when $W$ is adjacent to $X$ or $Z$, and when it is not adjacent to either.
\begin{itemize}
\item If $W$ is still adjacent to $X$ or $Z$ in $G^1$ after the skeleton search phase, then the skeleton search phase tested $X \indep Z | \{W\}$.
Since $Z$ is still adjacent to $X$, this test must have returned dependence, i.e., $X \dep Z | sepset(X,Y)$.

\item If instead $W$ is not adjacent to neither $X$ nor $Z$ in $G^1$, then the skeleton search found a node that blocks all unblocked paths between $W$ and $X$, or $W$ and $Z$.
Without loss of generality, we focus on the first case for $W$ and $X$, since the second case for $W$ and $Z$ follows the same logic.
Thus, if $W$ is not adjacent to $X$, there exists a node $V$, such that $X \indep W | \{V\}$.
According to Lemma~\ref{lem:separated}, if
$X \dep Z$, $ X \indep W | \{V\}$ and $X \indep Z | \{W\}$, then $X \indep Z | \{V\}$.
We consider two cases, one in which $V$ is still adjacent to $X$ and the other in which it is not.
\begin{itemize}
    \item If $V$ is still adjacent to $X$ in $G^1$ after the skeleton search phase, then the skeleton search phase tested already $X \indep Z | \{V\}$.
    Since $Z$ is still adjacent to $X$, this test must have returned dependence, i.e., $X \dep Z | \{V\}$, a contradiction with what Lemma~\ref{lem:separated} implies.
    Since we know the first two conditions of the Lemma already hold, if $V$ is still adjacent to $X$, then $X \indep Z | \{W\}$ cannot hold, i.e., $X \dep Z | \{W\}$, which means $X \dep Z | sepset(X,Y)$.

    \item If instead $V$ is also \emph{not} adjacent to $X$ in $G^1$, then it means the search found a node $V'$ such that $V$ is d-separated from $X$ given $V'$, i.e., $X \indep V | \{V'\}$.
    Then, by two successive applications of Lemma~\ref{lem:separated}, if $X \indep V | \{V'\}$ and $X \indep W | \{V\}$, then it holds that $X \indep W | \{V'\}$, and furthermore, if $X \indep W | \{V'\}$ and $X \indep Z | \{W\}$, then it holds that $X \indep Z | \{V'\}$.
    However, by the same logic as before, if $V'$ is still adjacent to $X$ in $\hat{G}^1$, then the skeleton search tested $X \indep Z | \{V'\}$, which returned dependence, i.e., $X \dep Z | \{V'\}$, a contradiction.
    Thus, if $V'$ is still adjacent to $X$, then $X \indep Z | \{W\}$ cannot hold, i.e., $X \dep Z | \{W\}$, which means $X \dep Z | sepset(X,Y)$.
    \vspace{2mm}

    Similarly, if $V'$ is also \emph{not} adjacent to $X$ in $G^1$, due to some other node $V''$, then we can follow the same logic by successively applying Lemma~\ref{lem:separated} three times to arrive at the same conclusion, that either $X \dep Z | \{W\}$ or $V''$ is also \emph{not} adjacent to $X$ in $G^1$.
    Given that there are only a finite number of nodes in the graph, we can continue this argument until we find a node $V^*$ such that $V^*$ is still adjacent to $X$ in $G^1$, in which case $X \indep Z | \{W\}$ cannot hold, i.e., $X \dep Z | \{W\}$, which means $X \dep Z | sepset(X,Y)$.
    \end{itemize}
\end{itemize}

The same derivation applies also for $Y$, meaning that both $X \dep Z | sepset(X,Y)$ and $Y \dep Z | sepset(X,Y)$ has to hold.
We then showed that at iterations $i=0,1$, the two dependencies $X \dep Z$ and $Y \dep Z$ always hold even if not tested explicitly.
Hence, the result of Lemma~\ref{lem:vstructure1} applies for iterations $i=0,1$ even when using Algorithm~\ref{alg:pc-v} to orient v-structures.
\end{proof}

\snapsound*

\begin{proof}

Lemma~\ref{lem:vstructure1} and Lemma~\ref{lem:vstructure} show that if $\hat{\mathbf{V}}^{i+1}$ is the set of remaining nodes obtained at line 15 in iteration $i$ of Algorithm~\ref{alg:snap(k)}, then $PossAn_G(\mathbf{T}) \subseteq \hat{\mathbf{V}}^{i+1}$ and that $\hat{\mathbf{V}}^{i+1}$ is \emph{possibly ancestral}, i.e.,  
for all $V \in \hat{\mathbf{V}}^{i+1}$ it holds that
$
PossAn_G(V) \subseteq  \hat{\mathbf{V}}^{i+1}$.
Since $\hat{\mathbf{V}}^{i+1}$ is not modified later, this also holds for iteration $i+1$.

\end{proof}

\subsection{Proof for Corollary~\ref{cor:snap-prefiltering}}

\snapkprefiltering*

\begin{proof}
    The proof follows from the application of Theorem~\ref{thm:snap_is_sound} which returns a possibly ancestral set of nodes that contains $PossAn(\mathbf{T})$, and from the application of Theorem~\ref{thm:ancestors}, that shows that a possibly ancestral set will have a valid CPDAG that is the same as restricting the CPDAG to the set.
\end{proof}

\subsection{Proof for Theorem~\ref{thm:snap_is_complete}}
\label{proof:snap_is_complete}

To prove Theorem~\ref{thm:snap_is_complete}, we first show that the skeleton and the v-structures are complete in SNAP($k$) (Algorithm~\ref{alg:snap(k)}) with respect to the ground truth CPDAG, when we run SNAP(k) until completion, i.e., $k=|\mathbf{V}|-2$.

\begin{lemma}\label{snapk_completion}
Let $k=|\mathbf{V}|-2$ and $\hat{G}^k$ be the mixed graph estimated by SNAP($k$) (Algorithm~\ref{alg:snap(k)}) at iteration $k$. Then $\hat{G}^k$ has the same skeleton and v-structures $G|_{\hat{\mathbf{V}}^k}$, the induced subgraph of the true CPDAG $G$ over $\hat{\mathbf{V}}^k$.

\begin{proof}
Let $D$ be the ground truth DAG with true skeleton $U$ and true \ac{CPDAG} $G$.
Let $\hat{U}^\infty$, $\hat{G}^\infty$ and $\hat{\mathbf{V}}^{\infty}$ denote the final skeleton, mixed graph and remaining nodes estimated at iteration $k=|\mathbf{V}|-2$.

Lemma~\ref{lem:skeleton} shows that $\hat{U}^{\infty}$ is a supergraph of $U|_{\hat{\mathbf{V}}^\infty}$, the induced subgraph of the true skeleton $U$ over $\hat{\mathbf{V}}^{\infty}$.
Theorem~\ref{thm:snap_is_sound} shows that $\hat{\mathbf{V}}^{\infty}$ is a possibly ancestral set, i.e. $PossAn_G(\hat{\mathbf{V}}^\infty) \subseteq \hat{\mathbf{V}}^\infty$.
Since the parents of any node are a subset of its possible ancestors,  $\hat{\mathbf{V}}^{\infty}$ contains all parents for all variables in $\hat{\mathbf{V}}^{\infty}$.
Then it follows that every node in $V \in \hat{\mathbf{V}}^{\infty}$ 
is adjacent in $\hat{U}^\infty$ to its ground truth parents in $D$, i.e. $Pa_D(V) \subseteq Adj_{\hat{U}^\infty}(V)$.

Since $|\mathbf{V}|-2$ is the maximum size of any conditioning set for $\mathbf{V}$ vertices, this means that SNAP$(k=|\mathbf{V}|-2)$ is allowed to test independence with any size of conditioning sets.
Hence, each pair of nodes $X,Y \in \hat{\mathbf{V}}^{\infty}$ still adjacent in $\hat{U}^\infty$ has been tested for independence using sets that include both $Pa_D(X)$ and $Pa_D(Y)$.
Thus, if $X$ and $Y$ are adjacent in $\hat{U}^\infty$, then since we assume faithfulness, they cannot be d-separated in the ground truth graph $D$, and hence they are also adjacent in the true skeleton $U$.
Due to Lemma~\ref{lem:skeleton}, every non-adjacent pair in $\hat{U}^\infty$ is also non-adjacent in the induced subgraph of the true CPDAG $G$ over $\hat{\mathbf{V}}^{\infty}$.
We conclude, that $\hat{U}^\infty$ is identical to $U|_{\hat{\mathbf{V}}^{\infty}}$, the induced subgraph of the true skeleton over $\hat{\mathbf{V}}^{\infty}$.

We obtain $\hat{G}^\infty$ by orienting v-structures in $\hat{U}^\infty$ with Algorithm~\ref{alg:pc-v} for $k=0,1$ or Algorithm~\ref{alg:rfci-v} for $k \geq 2$.
If two nodes are adjacent in $\hat{U}^k$, there exists no set separating them.
Thus, for any unshielded triple $X \starleft-\starright Z \starleft-\starright Y$ with $Z \notin sepset(X,Y)$, it automatically holds that $X \dep Z | sepset(X,Y)$ and $Z \dep Y | sepset(X,Y)$.
From Lemma~3.1 in \citep{colombo2012learning} it follows that every v-structure is correctly oriented.
Since, unshielded triples in $U^\infty$ and $U|_{\hat{\mathbf{V}}^{\infty}}$ are identical, it follows that the v-structures in $\hat{G}^\infty$ are identical to the v-structures in $G|_{\hat{\mathbf{V}}^{\infty}}$, the induced subgraph of $G$ over $\hat{\mathbf{V}}^{\infty}$.
\end{proof}
\end{lemma}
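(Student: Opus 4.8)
The plan is to prove the two claims separately: first that the final intermediate skeleton $\hat{U}^\infty := \hat{U}^k$ at $k=|\mathbf{V}|-2$ coincides exactly with the induced true skeleton $U|_{\hat{\mathbf{V}}^k}$, and then that the v-structures oriented on this skeleton agree with those of $G|_{\hat{\mathbf{V}}^k}$. For the skeleton, one inclusion is immediate from Lemma~\ref{lem:skeleton}: no genuine edge is ever deleted, since under faithfulness every removed edge witnesses a true d-separation, giving $\hat{U}^\infty \supseteq U|_{\hat{\mathbf{V}}^k}$. The work lies in the reverse inclusion, i.e. that no spurious edge survives.

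To set this up I would first invoke Theorem~\ref{thm:snap_is_sound}, which tells us $\hat{\mathbf{V}}^k$ is possibly ancestral and therefore contains $PossAn_G(V)$ for each of its nodes $V$. Since a node's true parents satisfy $Pa_D(V) \subseteq An_D(V) \subseteq PossAn_G(V)$, every ground-truth parent set lies inside $\hat{\mathbf{V}}^k$ and so is never pruned; moreover each parent--child edge, being genuine, is never deleted by Lemma~\ref{lem:skeleton}, so $Pa_D(V) \subseteq Adj_{\hat{U}^\infty}(V)$ throughout. The crux is then to argue that for any pair $X,Y$ still adjacent in $\hat{U}^\infty$, the order-bounded, adjacency-restricted skeleton search nonetheless tested a genuine separator whenever one existed. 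With $k=|\mathbf{V}|-2$ the allowed conditioning-set size reaches the maximum for $\mathbf{V}$, so the constraint $|\mathbf{S}|=i$ never blocks the relevant test; and since $Pa_D(X)$ (resp. $Pa_D(Y)$) stays inside the adjacency neighborhood, the candidate separators built from $Adj_{\hat{U}^\infty}$ include a true d-separator whenever $X$ and $Y$ are non-adjacent in $U$. By faithfulness that would force the edge to be removed, so survival in $\hat{U}^\infty$ implies genuine adjacency, yielding $\hat{U}^\infty = U|_{\hat{\mathbf{V}}^k}$.

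With the skeleton pinned down, the v-structure claim follows quickly. Because $k \geq 2$, the final orientation step uses the RFCI-style rules (Alg.~\ref{alg:rfci-v}). For any unshielded triple $X \starleft-\starright Z \starleft-\starright Y$ with $Z \notin sepset(X,Y)$, the endpoints $X,Z$ and $Z,Y$ are adjacent in the now-correct skeleton, so no separating set exists for either pair and the dependency checks $X \dep Z \mid sepset(X,Y)$ and $Z \dep Y \mid sepset(X,Y)$ hold automatically; Lemma~3.1 of \citet{colombo2012learning} then certifies that the orientation is correct. Since the unshielded triples of $\hat{U}^\infty$ and $U|_{\hat{\mathbf{V}}^k}$ coincide, the oriented v-structures match those of $G|_{\hat{\mathbf{V}}^k}$.

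I expect the main obstacle to be the middle step: carefully justifying that the adjacency-restricted search at maximum order still probes a genuine d-separator for every truly non-adjacent surviving pair. This rests on two facts that must be tracked simultaneously through the pruning process — that parents never leave the vertex set (via possible-ancestrality from Theorem~\ref{thm:snap_is_sound}) and that parent--child edges never leave the adjacency neighborhoods (via Lemma~\ref{lem:skeleton}) — so that the parent-based separating set is available to be tested precisely when it is needed.
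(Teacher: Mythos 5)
Your proposal is correct and takes essentially the same route as the paper's own proof: the supergraph inclusion via Lemma~\ref{lem:skeleton}, possible-ancestrality from Theorem~\ref{thm:snap_is_sound} to keep $Pa_D(V)$ inside both the vertex set and the adjacency neighborhoods so that a parent-based separator is actually tested once the order reaches $|\mathbf{V}|-2$, and then the automatic satisfaction of the RFCI dependency checks on the exact skeleton together with Lemma~3.1 of \citet{colombo2012learning} for the v-structures. If anything, your formulation that the tested candidate sets include a genuine d-separator (the parent set itself, rather than merely sets that contain the parents) is slightly more careful than the paper's wording of the same step.
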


\snapcomplete*
\begin{proof}
Given vertices $\mathbf{V}$, the first step of SNAP$(\infty)$ (Algorithm~\ref{alg:snap(inf)}) runs SNAP$(k)$ (Algorithm~\ref{alg:snap(k)}) with maximum iterations $k=|\mathbf{V}|-2$, and obtains the remaining nodes $\hat{\mathbf{V}}$ and the resulting mixed graph $\hat{G}$.
From Theorem~\ref{thm:snap_is_sound} it follows that $\hat{\mathbf{V}}$ contains $PossAn_G(\mathbf{T})$.
From Lemma~\ref{snapk_completion} it follows that $\hat{G}$ has the same skeleton and v-structure as $G|_{\hat{\mathbf{V}}}$, the induced subgraph of the true CPDAG $G$ over the remaining nodes $\hat{\mathbf{V}}$.
Thus, $\hat{G}$ and $G|_{\hat{\mathbf{V}}}$ belong to the same equivalence class.

The second step of SNAP$(\infty)$ applies Meek's rules on $\hat{G}$, which are sound and complete in terms of orientations and hence finally $\hat{G}$ is exactly  $G|_{\hat{\mathbf{V}}}$.
The final nodes $\hat{\mathbf{V}}$ are obtained at line 10 of Algorithm~\ref{alg:snap(inf)} by collecting every node with a possibly directed path to any target in $\hat{G}$.
Then, $\hat{\mathbf{V}}$ are exactly the possible ancestors of $\mathbf{T}$ according to the true CPDAG $G$, and the final graph returned by SNAP$(\infty)$ is $\hat{G} = G|_{PossAn(\mathbf{T})}$.
\end{proof}

\subsection{Computational complexity}
\label{app:complexity}
In this section, we discuss the computational complexity of SNAP($\infty$) in terms of \ac{CI} tests performed.
In the worst-case scenario, all nodes are possible ancestors of the targets and no nodes can be ever pruned at line 15 of Algorithm~\ref{alg:snap(k)}.
Thus, the worst-case complexity of unique \ac{CI} tests performed by SNAP($\infty$) equals to the complexity of the PC-style skeleton search plus the complexity of the RFCI orientation rules (Algorithm~\ref{alg:rfci-v}).
The computational complexity of PC is $\mathcal{O}(|\mathbf{V}|^{d_{\max}+2})$ \citep{spirtes2000causation}.

We did not find a formal complexity analysis of the RFCI orientation rules in the literature, so we show in Lemma~\ref{lem:rfci_complexity} that its worst-case complexity in terms of \ac{CI} tests is at most $\mathcal{O}(|\mathbf{V}|^4)$.

\rfcicomplexity*
\begin{proof}
    The worst-case computational complexity of the RFCI orientation rules depends on the maximum number of unshielded triples in a graph.
    In a graph with variables $\mathbf{V}$, there are at most $|\mathbf{V}|$ central nodes $Z$ in all unshielded triples $X - Z -Y$.
    If each central node $Z$ has at most $d_{\max}$ neighbors, then it has at most $d_{\max} \cdot (d_{\max} -1)$ pairs of neighbors $(X,Y)$.
    Hence, there are at most $\mathcal{O}(|\mathbf{V}| \cdot d_{\max} \cdot (d_{\max} -1))$ number of unshielded triples in the graph. 
    In the worst case, the central nodes $Z$ are never in the separating sets of any pairs of their neighbors, i.e. $ Z \notin sepset(X,Y)$.
    Then, the RFCI orientation rules perform two \ac{CI} tests for each unshielded triple, resulting in $\mathcal{O}(|\mathbf{V}| \cdot d_{\max} \cdot (d_{\max} -1))$ \ac{CI} tests.
    For simplicity, we can upper bound this to $\mathcal{O}(|\mathbf{V}|^3)$, by taking advantage of the fact that the maximum degree $d_{\max}$ can be at most $|\mathbf{V}| - 1$, e.g., when we consider a single node connected to all other nodes.

    Each time an independence is found by these $\mathcal{O}(|\mathbf{V}|^3)$ \ac{CI} tests, the RFCI orientation rules find a corresponding minimal separating set and remove the corresponding edge from the skeleton.
    When the edge is removed, it stops being part of any unshielded triples and the RFCI orientation rules do not test the independence of the corresponding pair again.
    Thus, the amount of independence found is upper bounded by the number of edges in the skeleton, which is at most $|\mathbf{V}|(|\mathbf{V}| -1)/2$ in a fully connected skeleton upper bounded by $|\mathbf{V}|^2$.
    A corresponding minimal separating set can be found in at most $\mathcal{O}(|\mathbf{V}|^2)$ number of \ac{CI} tests \citep{tian1998finding}.
    This results in $\mathcal{O}(|\mathbf{V}|^2 \cdot |\mathbf{V}|^2)= \mathcal{O}(|\mathbf{V}|^4)$ additional CI tests.

    In total, the number of \ac{CI} tests to check unshielded triples is upper bounded by $\mathcal{O}(|\mathbf{V}|^3)$, and the number of \ac{CI} tests to find minimal separating sets for each independence found is upper bounded $\mathcal{O}(|\mathbf{V}|^2 \cdot |\mathbf{V}|^2) = \mathcal{O}(|\mathbf{V}|^4)$.
    This means that the number of \ac{CI} tests performed by the RFCI orientation rules is upper bounded by $\mathcal{O}(|\mathbf{V}|^3 + |\mathbf{V}|^4) = \mathcal{O}(|\mathbf{V}|^4)$.
\end{proof}

From Lemma~\ref{lem:rfci_complexity} and the classical result on PC-style skeleton search by \citet{spirtes2000causation}, it follows that the worst-case computational complexity of SNAP($\infty$) is at most
$$
\mathcal{O}(|\mathbf{V}|^{d_{\max}+2} + |\mathbf{V}|^4),
$$
where the first part refers to the skeleton search and the second part refers to the RFCI orientation rules. Since a maximum degree $d_{\max}$ of 2 means that each node can have at most 2 neighbors, which implies very sparse graph, many real-world graphs will have a higher maximum degree $d_{\max}$. For these graphs, the worst-case complexity of SNAP will match the worst-case complexity of PC, since it will be dominated by the skeleton search.

\snapcomplexity*
\begin{proof}
We can easily show that the complexity of the skeleton search dominates the complexity of the RFCI orientation rules for ${d_{\max}} \geq 2$, since it holds that
$$
\mathcal{O}(|\mathbf{V}|^{d_{\max}+2} + |\mathbf{V}|^4) = \mathcal{O}(|\mathbf{V}|^{d_{\max}+2}).
$$
\end{proof}

According to our empirical results, SNAP($\infty$) performs substantially fewer \ac{CI} tests than PC in practice.
However, there exist counter examples where SNAP($\infty$) performs more \ac{CI} tests than PC due to the tests performed by the RFCI orientation rules.
We show such an example graph on Figure~\ref{fig:more_tests_example} with targets $\mathbf{T}=\{X_1, X_2\}$.
In this example, all nodes are possible ancestors of the targets, thus no nodes can be pruned.
Furthermore, SNAP($\infty$) performs one additional \ac{CI} test at order $k=2$ during the RFCI orientation rule phase, that PC does not perform.
Thus, in total, SNAP($\infty$) performs one more test than PC.

\begin{figure}[ht!]
    \centering
    \begin{tikzpicture}[xscale = 1,yscale = 1.1]
        \node (x1) at (1,0) {$X_1$};
        \node (x2) at (0,0) {$X_2$}
          edge [-stealth] (x1);
        \node (x3) at (2,1) {$X_3$}
          edge [-stealth] (x2);
        \node (x4) at (2,-1) {$X_4$}
          edge [-stealth] (x1)
          edge [-stealth] (x3);
        \node (x5) at (-1,1) {$X_5$}
          edge [-stealth] (x2)
          edge [-stealth] (x3);
        \node (x6) at (-1,-1) {$X_6$}
          edge [-stealth] (x2)
          edge [-stealth] (x4)
          edge [-stealth] (x5);
    \end{tikzpicture}
    \caption{An example graph with targets $\mathbf{T}=\{X_1, X_2\}$ on which SNAP($\infty$) performs more \ac{CI} tests than PC.}
    \label{fig:more_tests_example}
\end{figure}
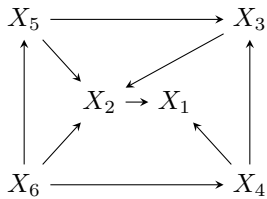

\subsection{Rough approximation of the expected number of possible ancestors}
\label{app:expectedanc}

In practice, the performance of SNAP  depends on the number of possible ancestors of targets in the graph.
We can get an idea of the expected number of \ac{CI} tests by estimating the expected number of possible ancestors.
We could not find an easy solution to this problem in the literature, so we provide a rough approximation for the expected size of the possible ancestors of $\mathbf{T}$ with empirical results that show that it is a substantial overestimation.

Assume $|\mathbf{T}|$ targets are chosen uniformly from all nodes $\mathbf{V}$ without replacement.
Given the topological order of the graph $1..|\mathbf{V}|$, if the highest target in the ordering is at rank $i$, then the rest of the $|\mathbf{T}-1|$ number of targets had to be chosen from nodes at orders $1..i-1$.
Choosing $|\mathbf{T}-1|$ targets from $i-1$ nodes can be done in $\binom{i-1}{|\mathbf{T}|-1}$ many ways.
Then, the probability of the highest target having rank $i$ is given by the number of ways that can be achieved, i.e., $\binom{i-1}{|\mathbf{T}|-1}$, normalised by all the ways one can choose $|\mathbf{T}|$ targets from all $|\mathbf{V}|$ amount of nodes, i.e. $\binom{|\mathbf{V}|}{|\mathbf{T}|}$, resulting in the probability $\binom{i-1}{|\mathbf{T}|-1} / \binom{|\mathbf{V}|}{|\mathbf{T}|}$.
We get the expected rank of the highest ranking target by taking the expectation over all possible highest ranks $i = |\mathbf{T}| .. |\mathbf{V}|$ as follows:
\begin{equation}\label{eq:exp_anc}
    \hat{M} = \frac{1}{ \binom{|\mathbf{V}|}{|\mathbf{T}|} } \sum_{i = |\mathbf{T}|}^{|\mathbf{V}|} i \binom{{i-1}}{|\mathbf{T}|-1}
\end{equation}

We can now overestimate the size of the possible ancestors of $\mathbf{T}$ by simply considering it to be $M$.
As $|\mathbf{T}|$ increases, this upper bound will become tighter, but it is still a loose overestimation.
As shown in Table~\ref{tab:exp_anc}, we empirically find that the number of possible ancestors is much lower than the above estimate.

\begin{table*}[ht]
\begin{tabular}{@{}lrrrrrr@{}}
\toprule
Nodes        & \multicolumn{1}{c}{50} & \multicolumn{1}{c}{100} & \multicolumn{1}{c}{150} & \multicolumn{1}{c}{200} & \multicolumn{1}{c}{250} & \multicolumn{1}{c}{300} \\ \midrule
$\bar{M}$   & $19.64 (\pm 7.16)$     & $23.95 (\pm 10.85)$     & $26.88 (\pm 13.60)$     & $29.49 (\pm 18.66)$     & $28.18 (\pm 15.50)$     & $33.78 (\pm 17.97)$     \\
$\hat{M}$ & $40.8$                 & $80.8$                  & $120.8$                 & $160.8$                 & $200.8$                 & $240.8$                 \\ \bottomrule
\end{tabular}
\caption{Estimates for the expected number of possible ancestors empirically ($\bar{M}$ ) over 100 seeds with various numbers of nodes, $n_{\mathbf{T}}=4, \overline{d} = 3$ and $d_{\max}=10$ in the first row, and by using the Equation~\ref{eq:exp_anc} in the second row ($\hat{M}$).}
\label{tab:exp_anc}
\end{table*}

It is important to note that for any number of possible ancestors of $\mathbf{T}$, SNAP($k$) + PC with $k=0,1$ performs at most as many CI tests as PC, since we do not apply the RFCI rules yet for these $k$. Hence, while such cases can limit the benefits of pruning, low order pruning ($k=0,1$) does not have any downside in terms of CI tests.

\section{EXPERIMENTAL DETAILS}
\label{app:experimental_details}

For our experiments, we used the following libraries: igraph \citep{csardi2006igraph} (GNU GPL version 2 or later), networkx \citep{hagberg2008exploring} (3-Clause BSD License), bnlearn \citep{scutari2010bnlearn} (MIT License), pcalg \citep{kalisch2012pcalg} (GNU GPL version 2 or later), dagitty \citep{textor2016dagitty} (GNU GPL), tetrad \citep{ramsey23pytetrad} (MIT License), causal-learn \citep{zheng2024causal} (MIT License) and RCD \citep{mokhtarian2024recursive} (BSD 2-Clause License).
In particular, we used the \ac{CI} test implementations and some orientation rules from the \texttt{causal-learn} package \citep{zheng2024causal}.
All experiments were performed on AMD Rome CPUs, using 48 CPU cores and 84 GiB of memory.
We let each experiment run for at most 24 hours.
If an experiment did not finish in the given time, we do not report any results for it.

\subsection{Speeding up local algorithms}
\label{sec:local_speed}
For LDECC* and MB-by-MB* we adapt the publicly available code\footnote{GitHub repository of \citet{gupta2023local}: \url{https://github.com/acmi-lab/local-causal-discovery}} of \citet{gupta2023local}.
In particular, we noticed that the running times of these algorithms were much slower than indicated by the number of tests they performed.
We found that this was caused by unnecessary \texttt{deepcopy} operations in their subroutines, thus we removed those.
This means that running times reported here are potentially faster than what one would achieve by using the original codebase.

\subsection{Real-world networks}
\label{app:real_networks}
The MAGIC-NIAB network has 66 nodes, an average degree 3, and is parameterized by linear Gaussian data.
We show the network structure of MAGIC-NIAB in Fig.~\ref{fig:magic-niab}.
The Andes network has 223 nodes, an average degree 3.03, and is parameterized by binary data.
We show the network structure of Andes in Fig.~\ref{fig:andes}.

\begin{figure}
    \centering
    \includegraphics[width=.8\linewidth]{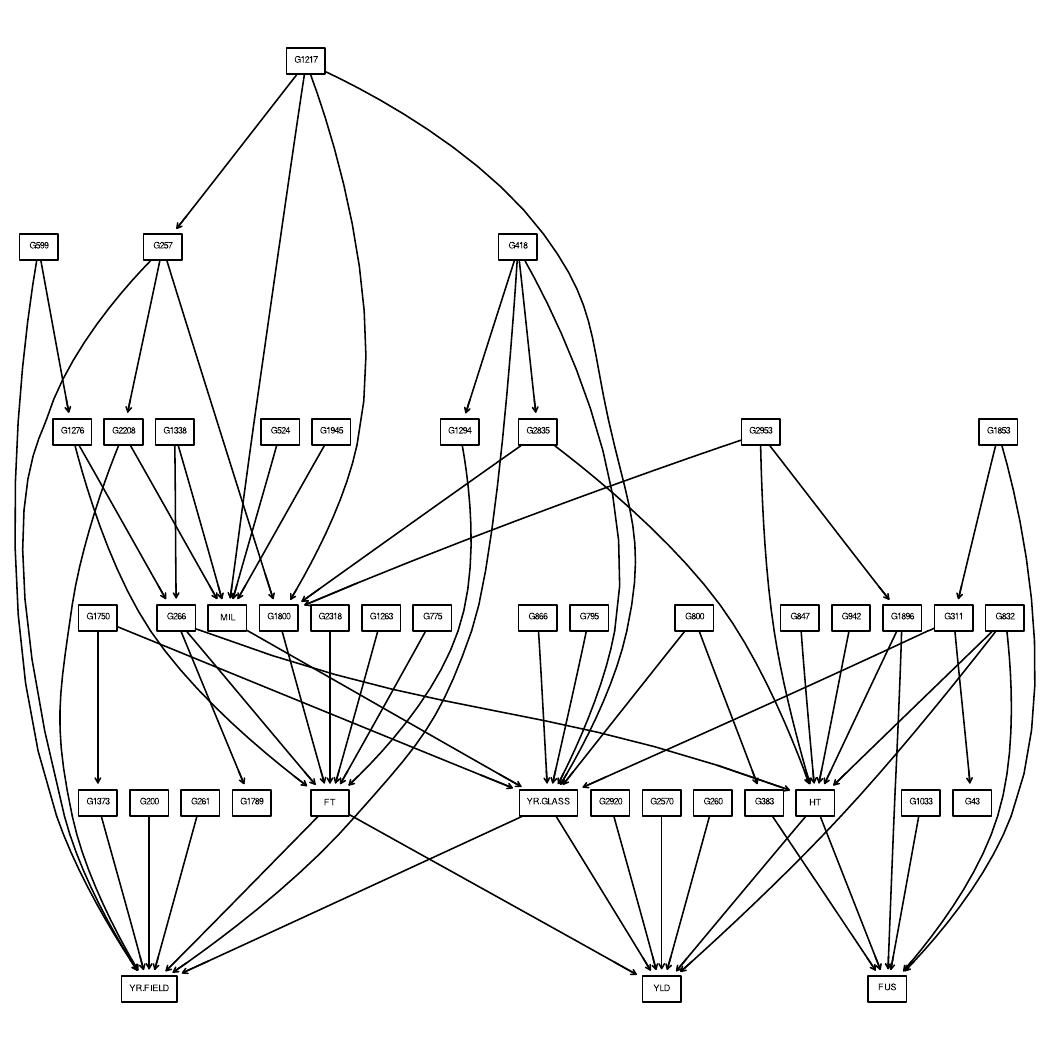}
    \caption{The MAGIC-NIAB network.}
    \label{fig:magic-niab}
\end{figure}

\begin{figure}
    \centering
    \includegraphics[width=\linewidth]{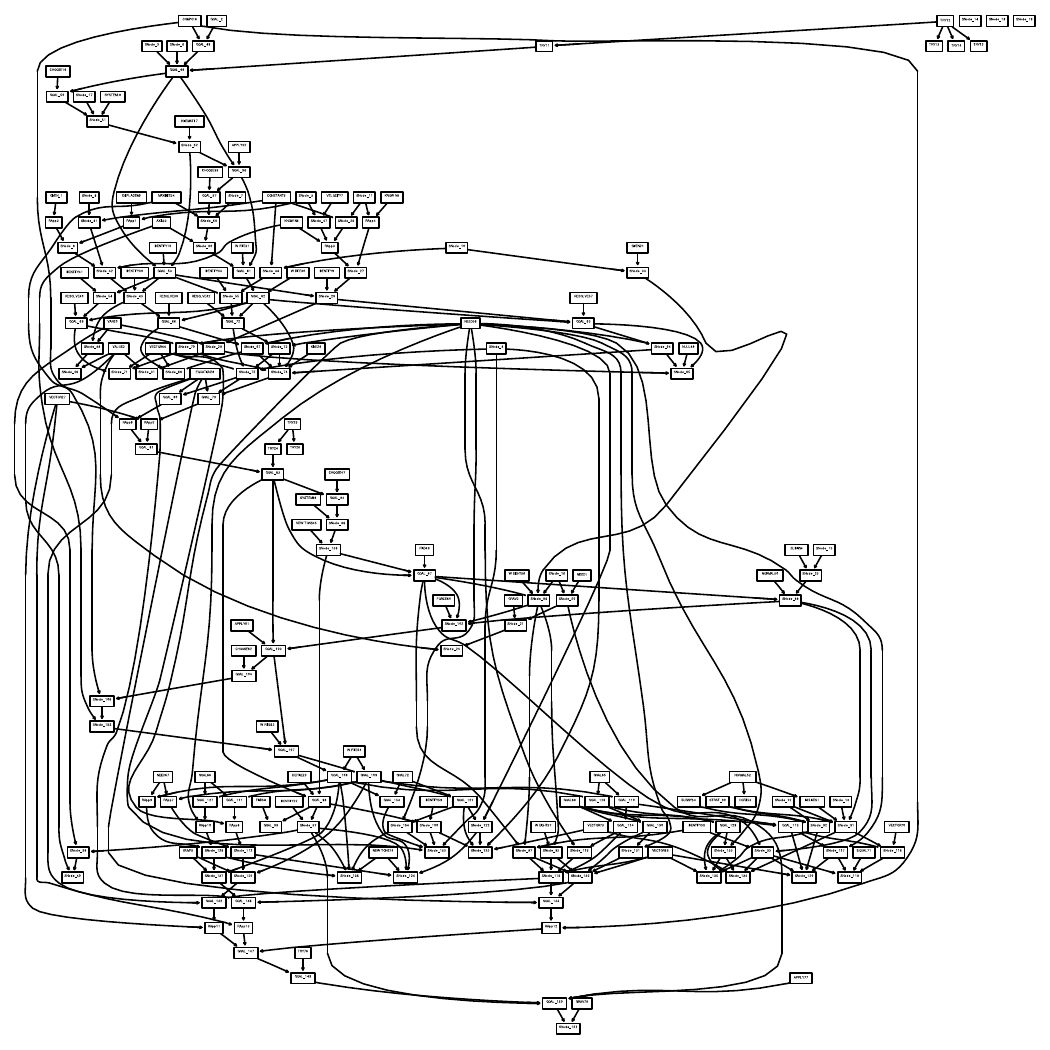}
    \caption{The Andes network.}
    \label{fig:andes}
\end{figure}

\subsection{Intervention distance} \label{app:metrics}
As one of the main metrics to evaluate the quality of the causal effect estimation, we considered the intervention distance between the estimated causal effect and the ground truth causal effect between pairs of target variables.

We developed this metric in order to be able to estimate the quality of the output even in cases in which the causal effects were not identifiable in the estimated causal graph. This allowed us to compare with all baselines on all the simulated graphs, instead of only focusing on a subset of graphs on which all baselines agreed that the causal effects were identifiable, which could potentially be a very biased set of graphs.

Formally, we then define the intervention distance as follows:
\begin{definition}
\label{def:intervention_distance}
Given targets $\mathbf{T} \subseteq \mathbf{V}$, for $T,T' \in \mathbf{T}$ let $\theta^*_{T, T'}$ be the true causal effect of $T$ on $T'$ and let $\hat{\Theta}_{T, T'}$ be the set of estimated causal effects of $T$ on $T'$ according to the output of an algorithm.
Then the intervention distance is defined as
$$
    \frac{1}{|\mathbf{T}|(|\mathbf{T}|-1) }\sum_{T \in \mathbf{T}} \sum_{T' \in \mathbf{T} \setminus \{T\}} \frac{1}{|\hat{\Theta}_{T, T'}|} \sum_{\hat{\theta} \in \hat{\Theta}_{T, T'}} |\theta^*_{T, T'}-\hat{\theta} |
$$
\end{definition}

\section{COMPLETE EXPERIMENTS}
\label{app:additional_experiments}
This section contains additional results to the ones shown in the main paper, as well as our results for all ablations.
App.~\ref{app:over_nodes} shows additional results on all metrics for graphs with various number of nodes.
App.~\ref{app:identifiable} shows our results for graphs with various number of nodes and with identifiable targets.
We present various ablations in App.~\ref{app:over_targets}, \ref{app:over_degrees} and \ref{app:over_samples} for varying number of targets, expected degree and number of data samples respectively.
App.~\ref{app:snapk} compares SNAP($k$) for $k=0,1,2$.
App.~\ref{app:order} provides a visual explanation on how SNAP variants avoid higher order \ac{CI} tests.
Finally, App.~\ref{app:real-network-results} shows all of our results for the MAGIC-NIAB and the Andes networks.

\subsection{Various number of nodes}
\label{app:over_nodes}
In this section, we consider the same setting presented in the main paper, in Figures~\ref{fig:test_and_time_per_node_unrest_std} and \ref{fig:delta_time_per_node_unrest}, i.e., graphs with various number of nodes, $n_{\mathbf{T}}=4, \overline{d} = 3, d_{\max}=10$ and $n_{\mathbf{D}} = 1000$ data-points.
Fig.~\ref{fig:quality_per_node_unrest_std} shows that intervention distance is not significantly different between most methods.
Fig~\ref{fig:tests_and_time_per_node_unrest} highlights the advantages of different SNAP variants over various settings.
In particular, SNAP($0$) combined with MARVEL or MB-by-MB seems to perform the fewest d-separation tests.
On the other hand, SNAP($\infty$) seems to do best with KCI tests.
On binary data, even though the computation time of FGES trails behind other methods, when combined with SNAP($0$) it becomes the fastest method on this setting.

Figures~\ref{fig:quality_per_node_unrest_std} and \ref{fig:quality_per_node_unrest} show the quality of the estimated structures in terms of \ac{SHD} and intervention distance.
Our results show that even though SNAP variants achieve higher \ac{SHD}, especially when using Fisher-Z tests, their intervention distance is competitive with baselines.
This indicates that structural metrics may not be suitable when the final objective is causal effect estimation.
Additionally, our results for intervention distance in the case of d-separation tests validate that parent adjustment sets are suboptimal for causal effect estimation.

\begin{figure}
    \centering
    \includegraphics[width=.6\linewidth]{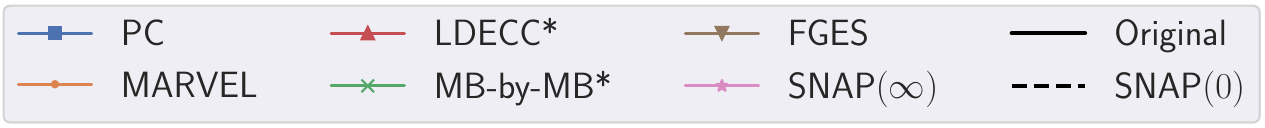}
    \begin{subfigure}[b]{\linewidth}
        \begin{subfigure}[b]{0.24\linewidth}
            \includegraphics[width=\linewidth]{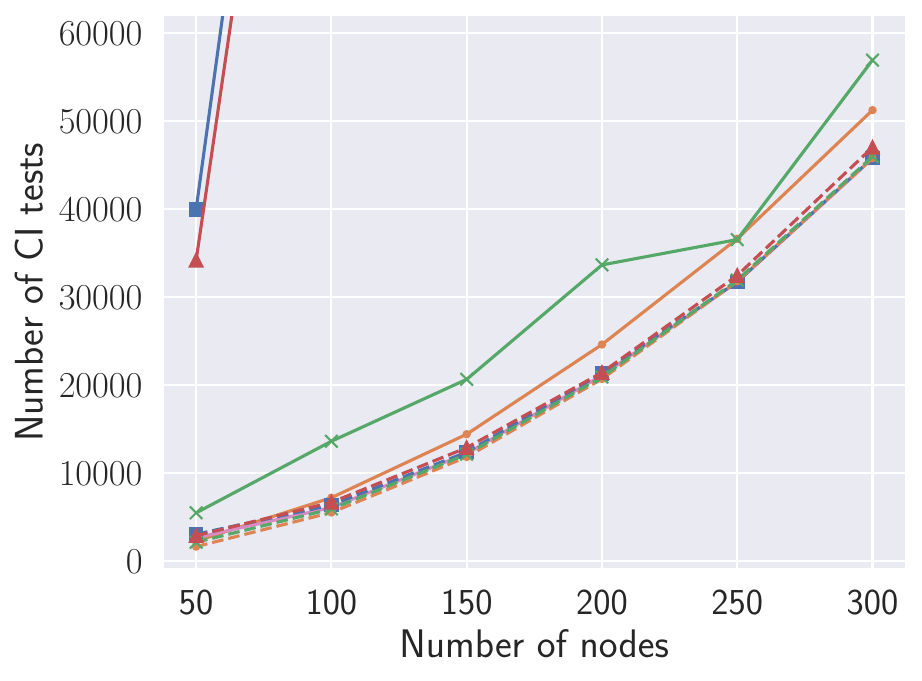}
            \caption*{d-separation tests}
            \label{fig:test_per_node_unrest_dsep}
        \end{subfigure}
        \begin{subfigure}[b]{0.24\linewidth}
            \includegraphics[width=\linewidth]{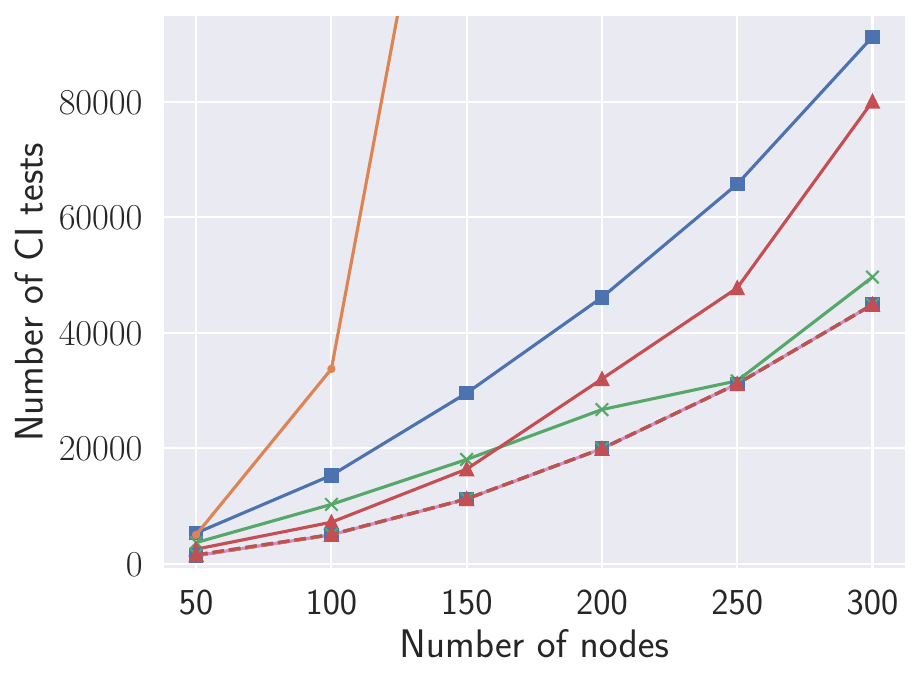}
            \caption*{Fisher-Z tests}
            \label{fig:test_per_node_unrest_fshz}
        \end{subfigure}
        \begin{subfigure}[b]{0.24\linewidth}
            \includegraphics[width=\linewidth]{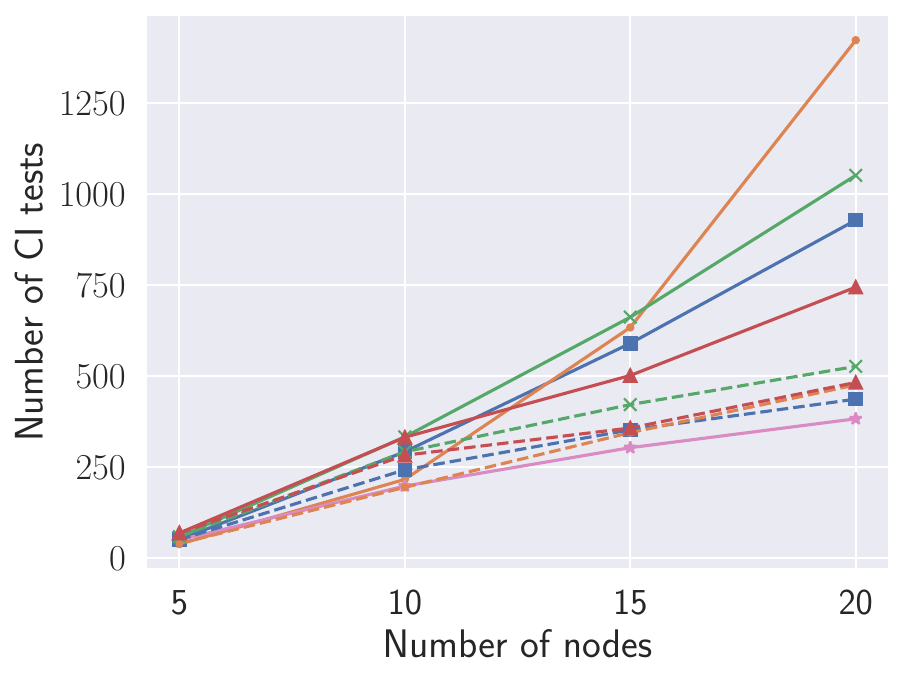}
            \caption*{KCI tests}
            \label{fig:test_per_node_unrest_kci}
        \end{subfigure}
        \begin{subfigure}[b]{0.24\linewidth}
            \includegraphics[width=\linewidth]{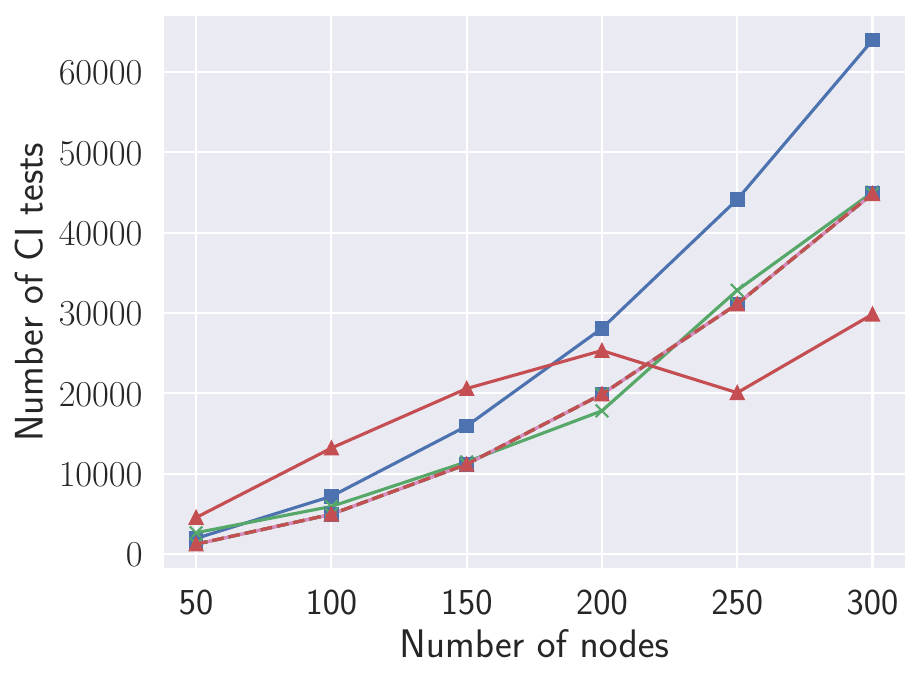}
            \caption*{$\chi^2$ tests}
            \label{fig:test_per_node_unrest_chsq}
        \end{subfigure}
        \caption{Number of \ac{CI} tests.}
        \label{test_per_node_unrest}
    \end{subfigure}
    \begin{subfigure}[b]{\linewidth}
        \begin{subfigure}[b]{0.25\linewidth}
            \includegraphics[width=\linewidth]{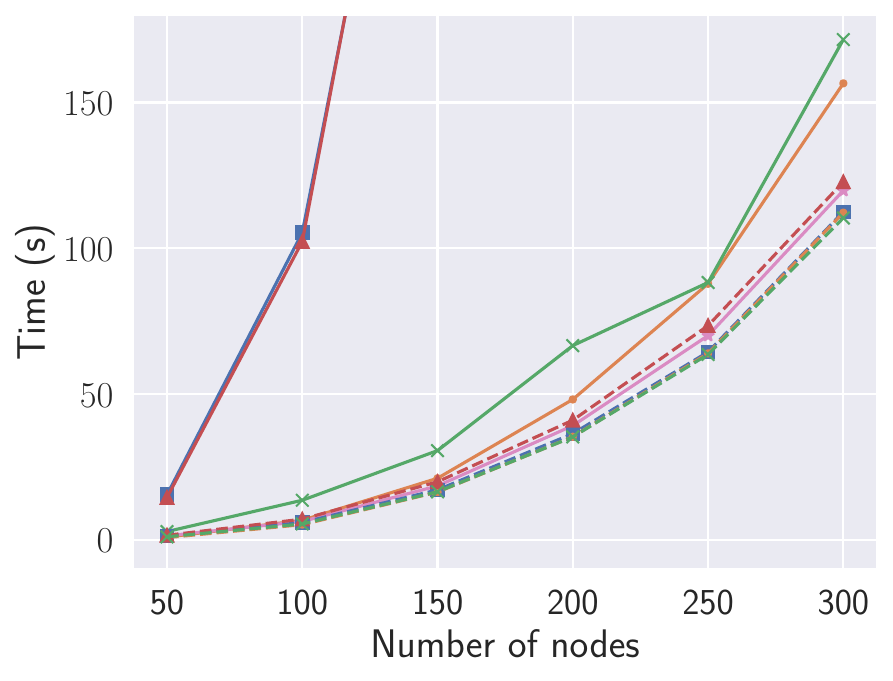}
            \caption*{d-separation tests}
            \label{fig:time_per_node_unrest_dsep}
        \end{subfigure}
        \begin{subfigure}[b]{0.24\linewidth}
            \includegraphics[width=\linewidth]{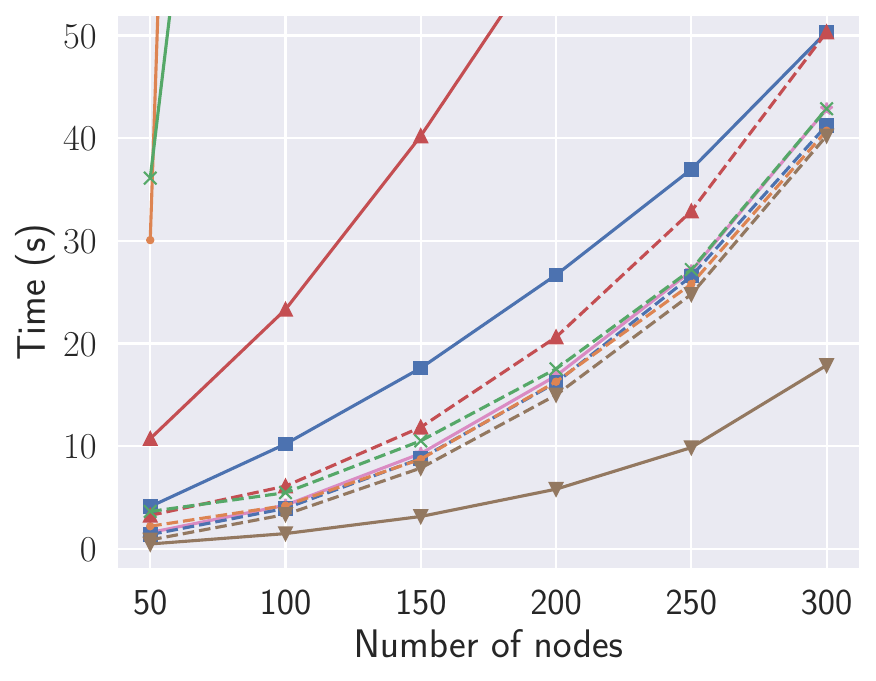}
            \caption*{Fisher-Z tests}
            \label{fig:time_per_node_unrest_fshz}
        \end{subfigure}
        \begin{subfigure}[b]{0.25\linewidth}
            \includegraphics[width=\linewidth]{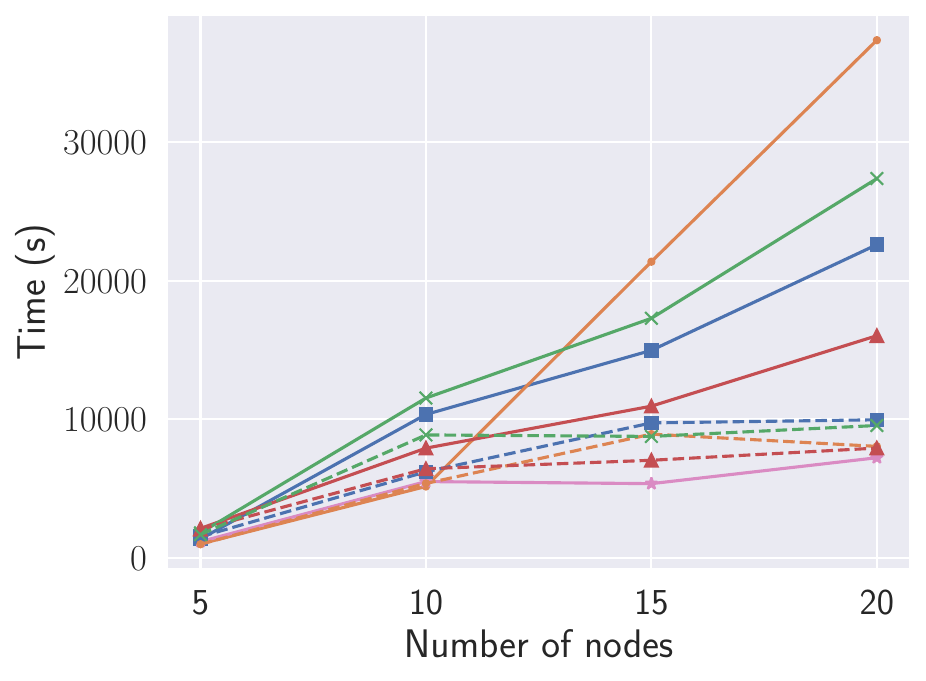}
            \caption*{KCI tests}
            \label{fig:time_per_node_unrest_kci}
        \end{subfigure}
        \begin{subfigure}[b]{0.24\linewidth}
            \includegraphics[width=\linewidth]{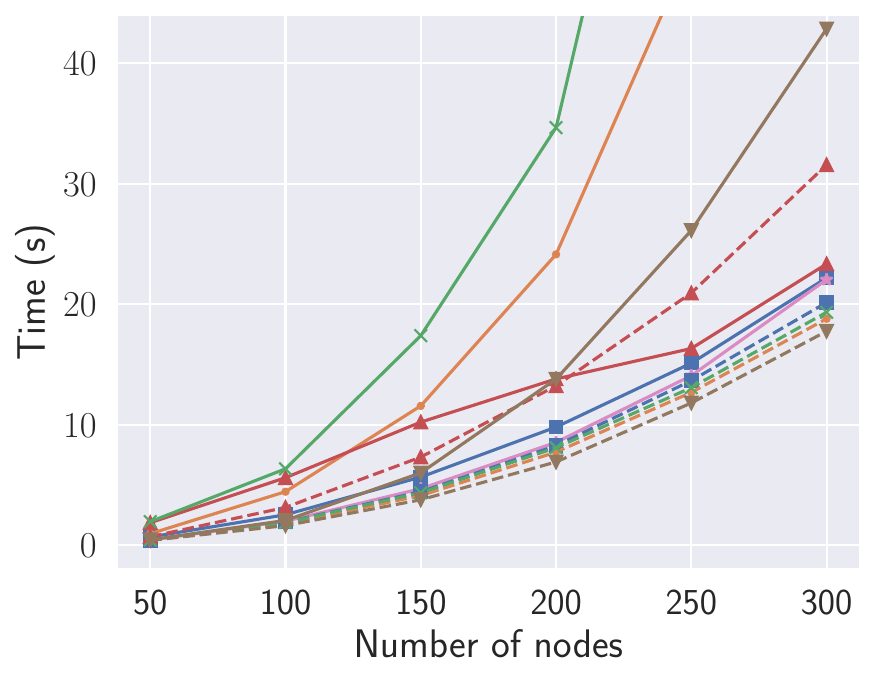}
            \caption*{$\chi^2$ tests}
            \label{fig:time_per_node_unrest_chsq}
        \end{subfigure}
        \caption{Computation time.}
        \label{time_per_node_unrest}
    \end{subfigure}
    \caption{Number of \ac{CI} tests and computation time for baseline methods combined with SNAP$(0)$ over number of nodes, with $n_{\mathbf{T}}=4, \overline{d} = 3, d_{\max}=10$ and $n_{\mathbf{D}} = 1000$ data-points.
    }
    \label{fig:tests_and_time_per_node_unrest}
\end{figure}

\begin{figure}
    \centering
    \includegraphics[width=.6\linewidth]{experiments/legend_small.pdf}
    \begin{subfigure}[b]{\linewidth}
        \begin{subfigure}[b]{0.24\linewidth}
            \caption*{d-separation tests}
        \end{subfigure}
        \begin{subfigure}[b]{0.24\linewidth}
            \caption*{Fisher-Z tests}
        \end{subfigure}
        \begin{subfigure}[b]{0.24\linewidth}
            \caption*{KCI tests}
        \end{subfigure}
        \begin{subfigure}[b]{0.24\linewidth}
            \caption*{$\chi^2$ tests}
        \end{subfigure}
    \end{subfigure}
    \begin{subfigure}[b]{\linewidth}
        \begin{subfigure}[b]{0.24\linewidth}
            \includegraphics[width=\linewidth]{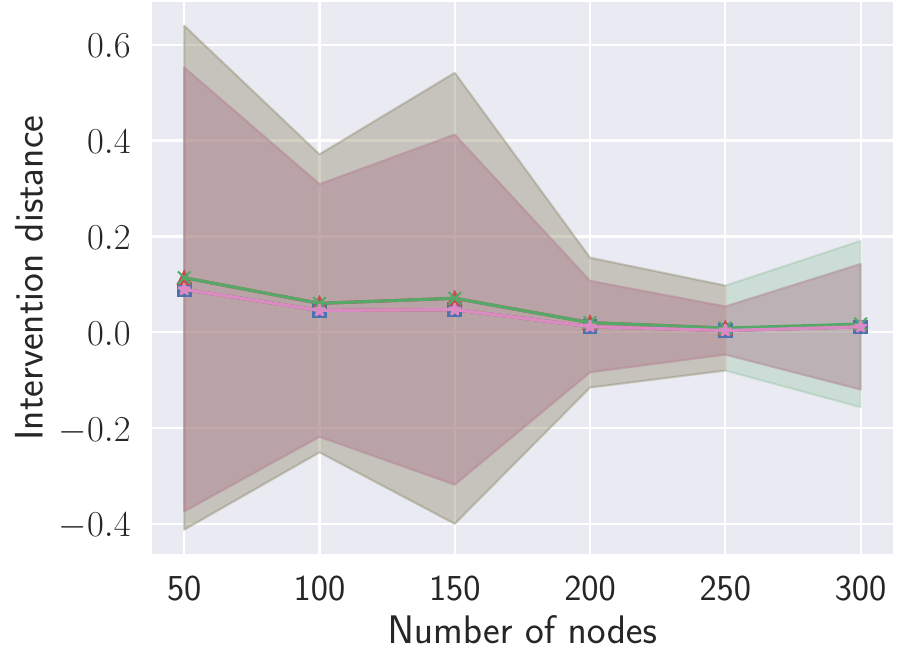}
        \end{subfigure}
        \begin{subfigure}[b]{0.24\linewidth}
            \includegraphics[width=\linewidth]{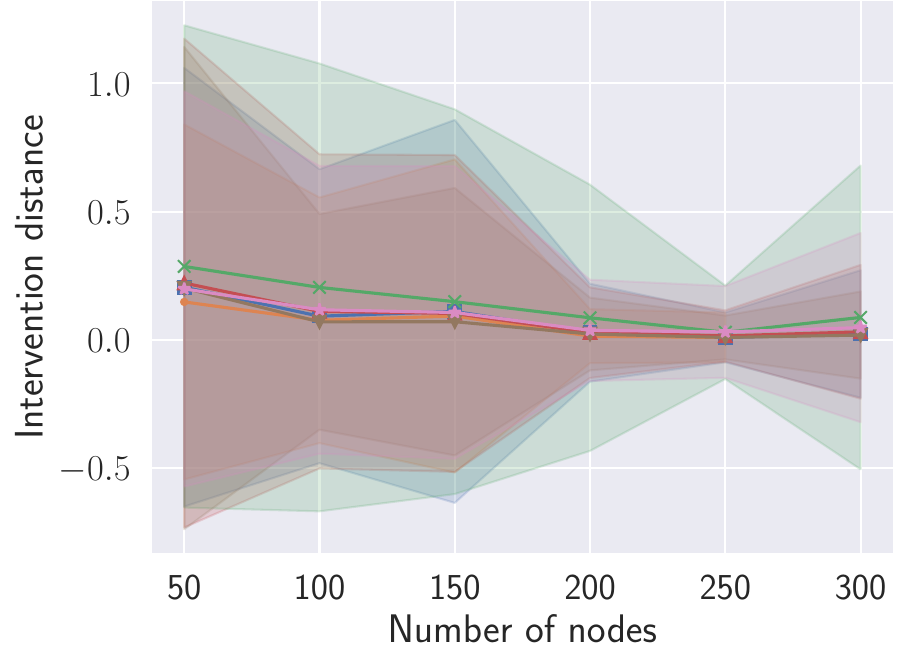}
        \end{subfigure}
        \begin{subfigure}[b]{0.24\linewidth}
            \includegraphics[width=\linewidth]{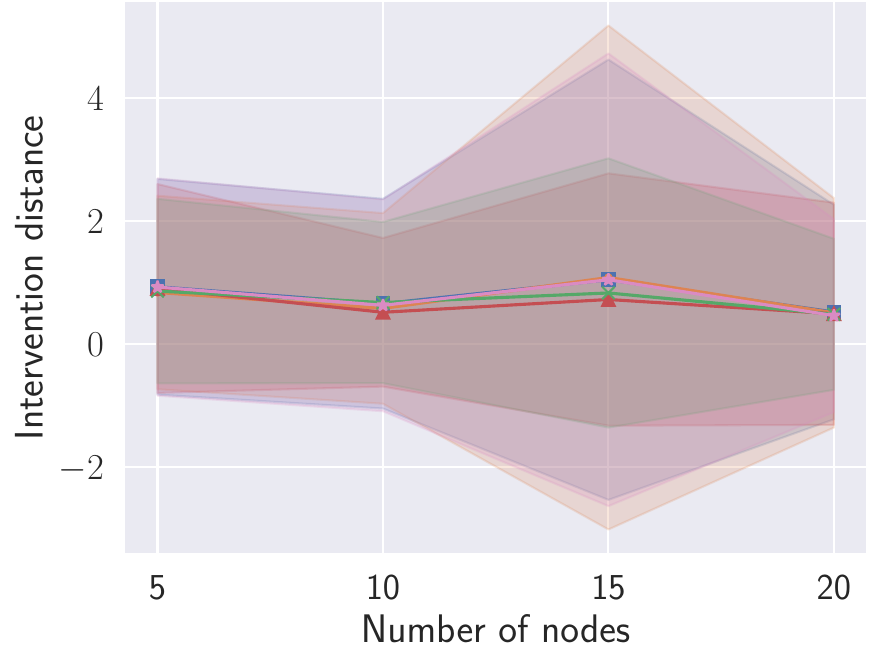}
        \end{subfigure}
        \begin{subfigure}[b]{0.24\linewidth}
            \includegraphics[width=\linewidth]{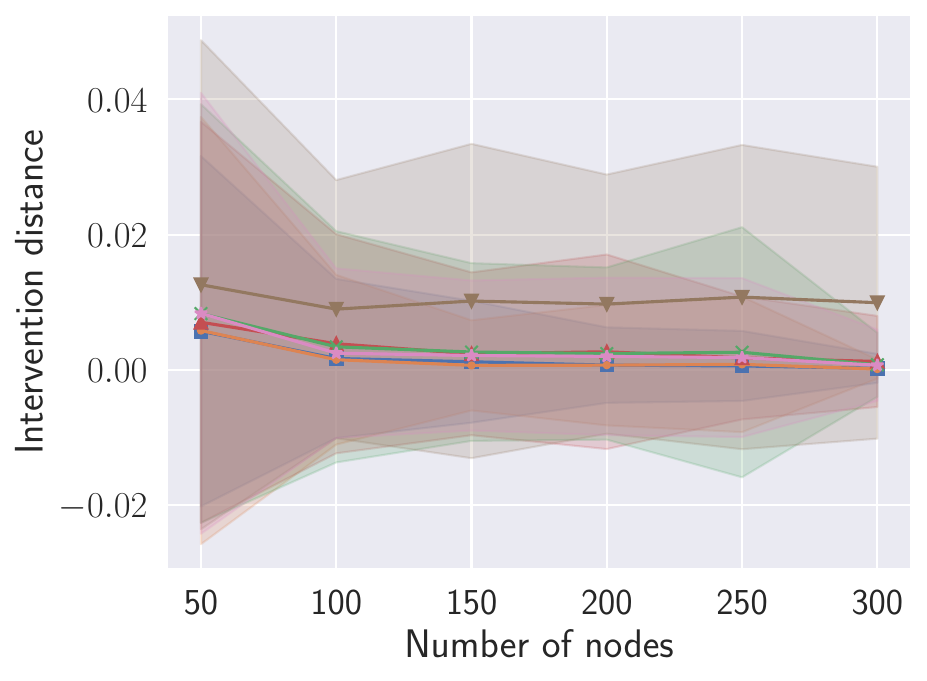}
        \end{subfigure}
        \caption{Intervention distance.}
        \label{fig:int_dist_per_node_unrest_std}
    \end{subfigure}
    \begin{subfigure}[b]{\linewidth}
        \begin{subfigure}[b]{0.24\linewidth}
            \includegraphics[width=\linewidth]{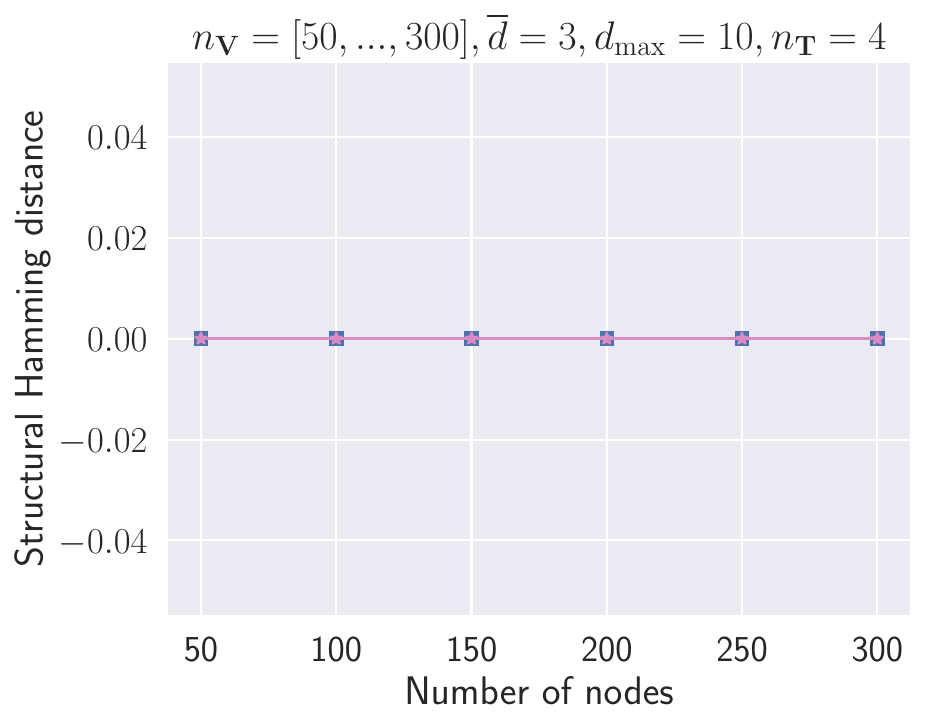}
        \end{subfigure}
        \begin{subfigure}[b]{0.24\linewidth}
            \includegraphics[width=\linewidth]{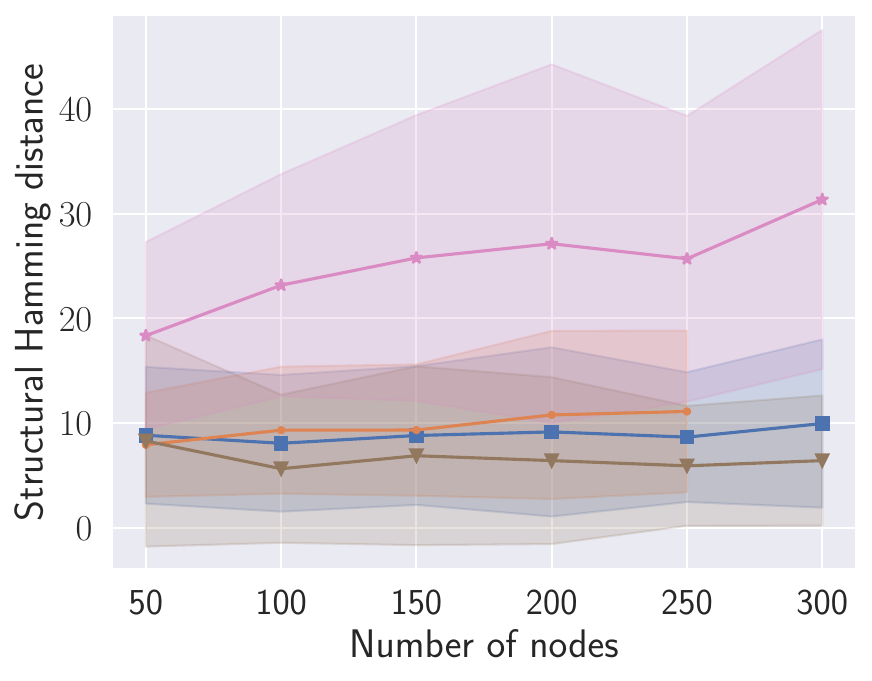}
        \end{subfigure}
        \begin{subfigure}[b]{0.24\linewidth}
            \includegraphics[width=\linewidth]{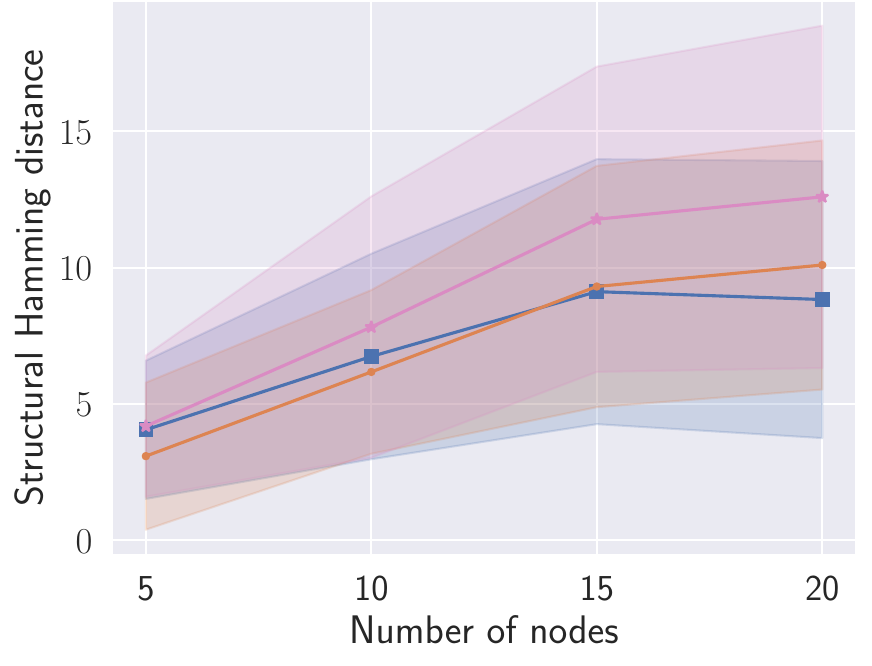}
        \end{subfigure}
        \begin{subfigure}[b]{0.24\linewidth}
            \includegraphics[width=\linewidth]{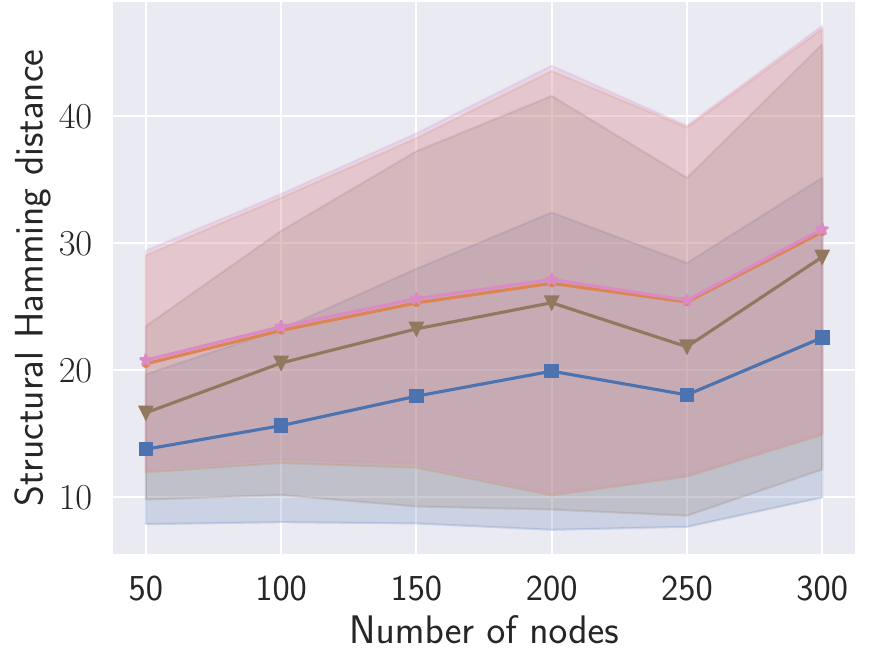}
        \end{subfigure}
        \caption{\Acf{SHD}.}
        \label{fig:shd_per_node_unrest_std}
    \end{subfigure}
    \caption{Quality of estimation over number of nodes, with $n_{\mathbf{T}}=4, \overline{d} = 3, d_{\max}=10$ and $n_{\mathbf{D}} = 1000$ data-points. We compute the intervention distance in the d-separation tests case using random linear Gaussian data according to the discovered structure.}
    \label{fig:quality_per_node_unrest_std}
\end{figure}

\begin{figure}
    \centering
    \includegraphics[width=.6\linewidth]{experiments/legend_big.pdf}
    \begin{subfigure}[b]{\linewidth}
        \begin{subfigure}[b]{0.24\linewidth}
            \includegraphics[width=\linewidth]{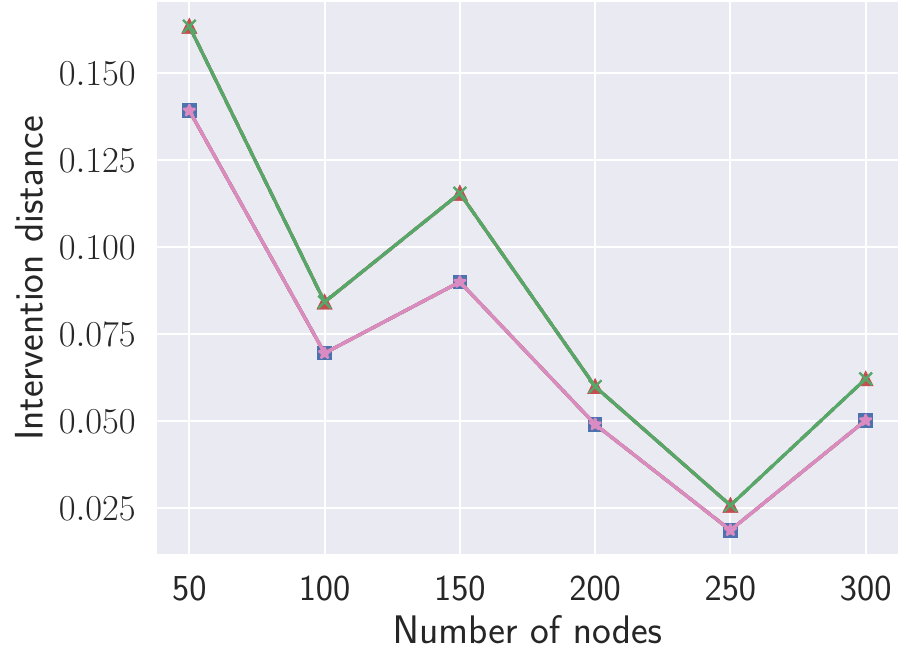}
            \caption*{d-separation tests}
            \label{fig:int_dist_per_node_unrest_dsep_abs}
        \end{subfigure}
        \begin{subfigure}[b]{0.24\linewidth}
            \includegraphics[width=\linewidth]{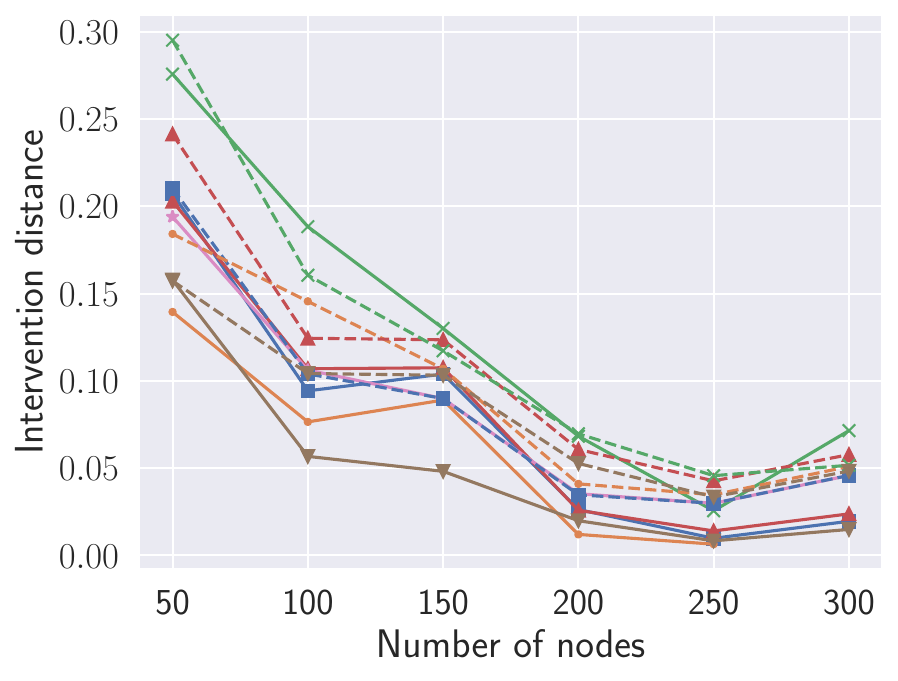}
            \caption*{Fisher-Z tests}
            \label{fig:int_dist_per_node_unrest_fshz_abs}
        \end{subfigure}
        \begin{subfigure}[b]{0.24\linewidth}
            \includegraphics[width=\linewidth]{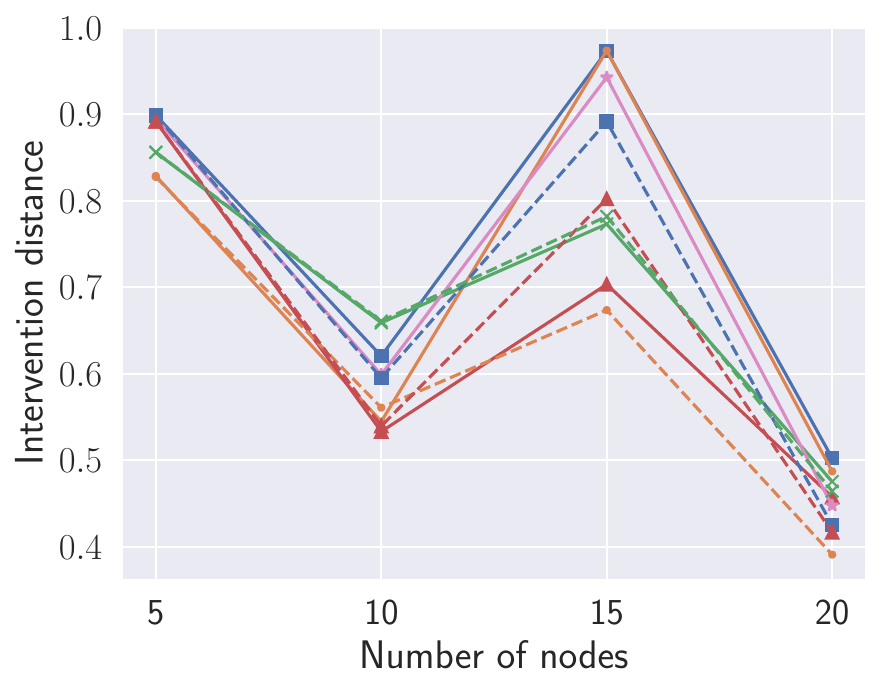}
            \caption*{KCI tests}
            \label{fig:int_dist_per_node_unrest_kci_abs}
        \end{subfigure}
        \begin{subfigure}[b]{0.24\linewidth}
            \includegraphics[width=\linewidth]{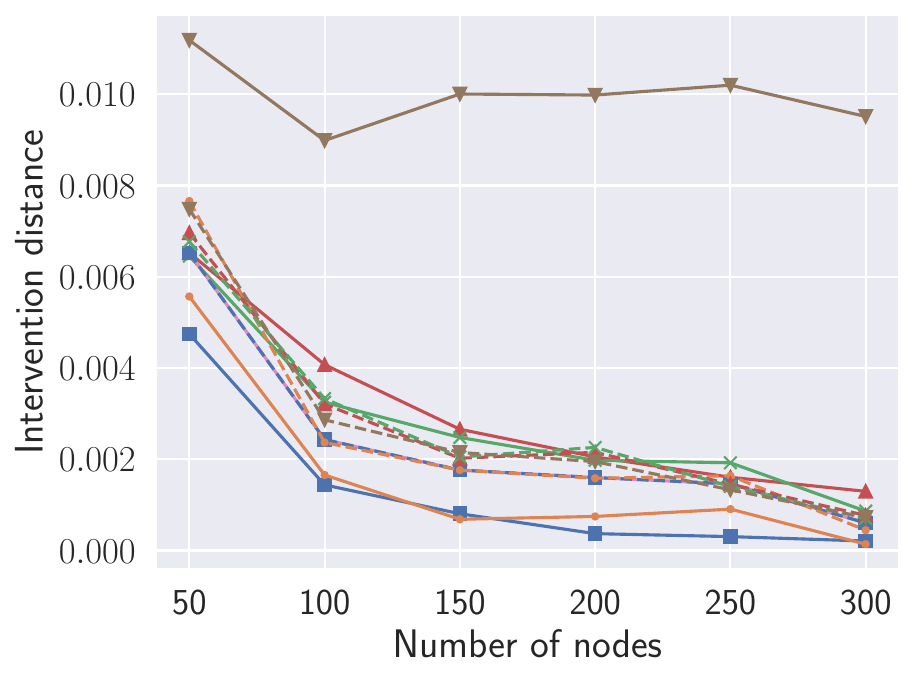}
            \caption*{$\chi^2$ tests}
            \label{fig:int_dist_per_node_unrest_chsq_abs}
        \end{subfigure}
        \caption{Intervention distance.}
        \label{fig:int_dist_per_node_unrest}
    \end{subfigure}
    \begin{subfigure}[b]{\linewidth}
        \begin{subfigure}[b]{0.24\linewidth}
            \includegraphics[width=\linewidth]{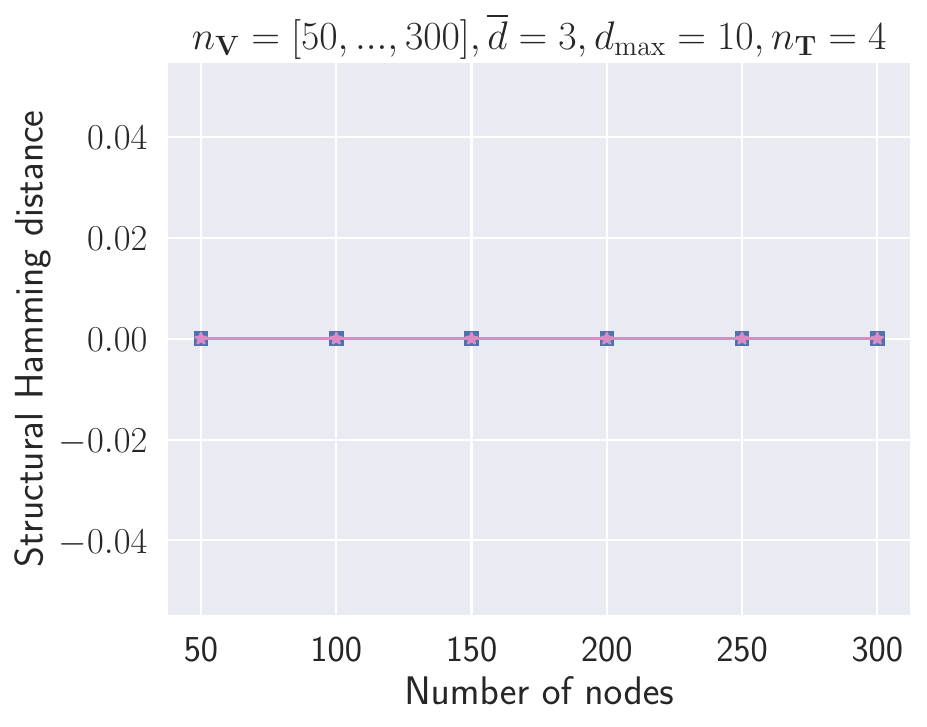}
            \caption*{d-separation tests}
            \label{fig:shd_per_node_unrest_dsep}
        \end{subfigure}
        \begin{subfigure}[b]{0.24\linewidth}
            \includegraphics[width=\linewidth]{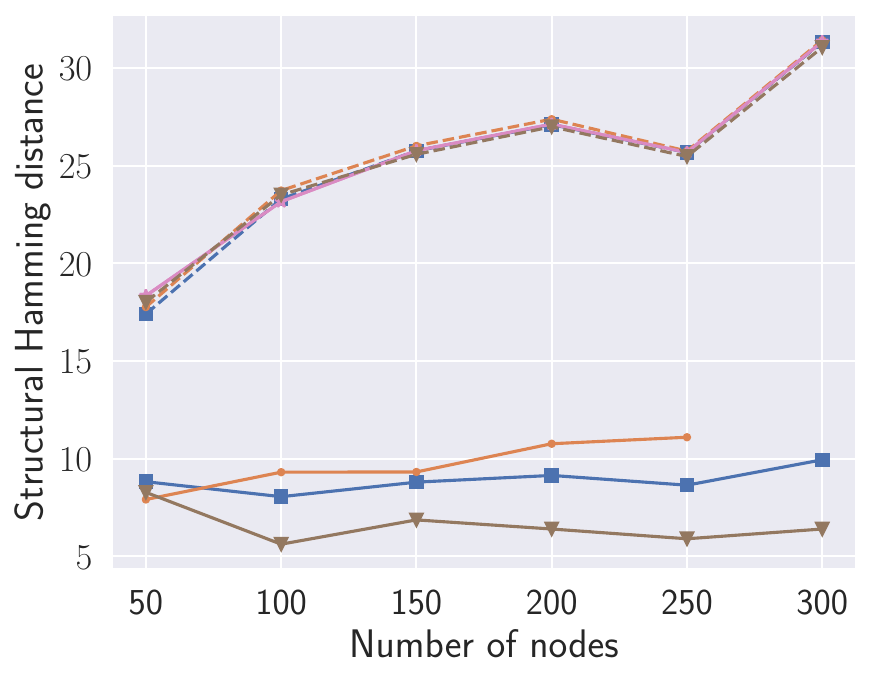}
            \caption*{Fisher-Z tests}
            \label{fig:shd_per_node_unrest_fshz}
        \end{subfigure}
        \begin{subfigure}[b]{0.24\linewidth}
            \includegraphics[width=\linewidth]{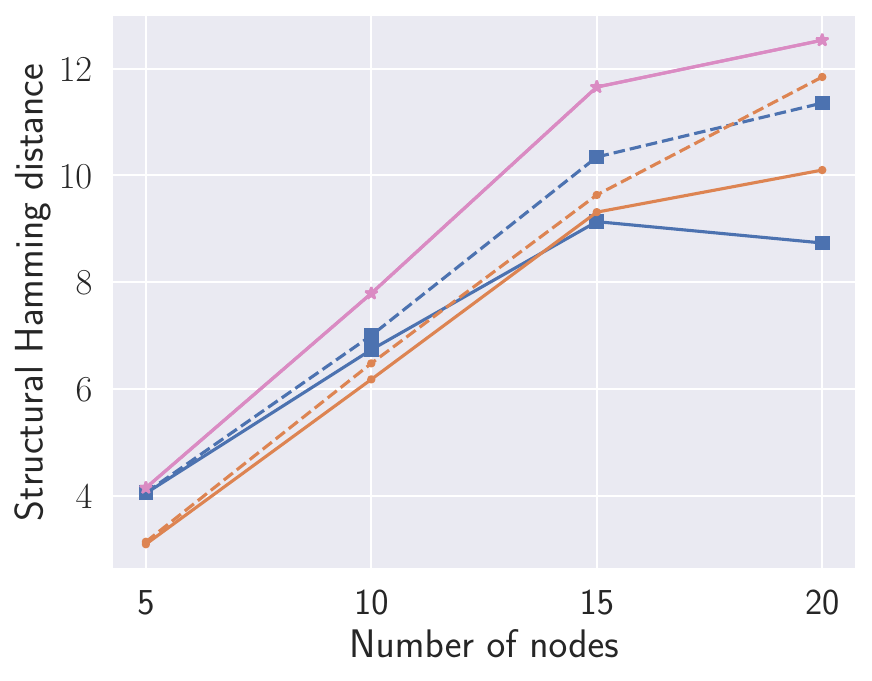}
            \caption*{KCI tests}
            \label{fig:shd_per_node_unrest_kci}
        \end{subfigure}
        \begin{subfigure}[b]{0.24\linewidth}
            \includegraphics[width=\linewidth]{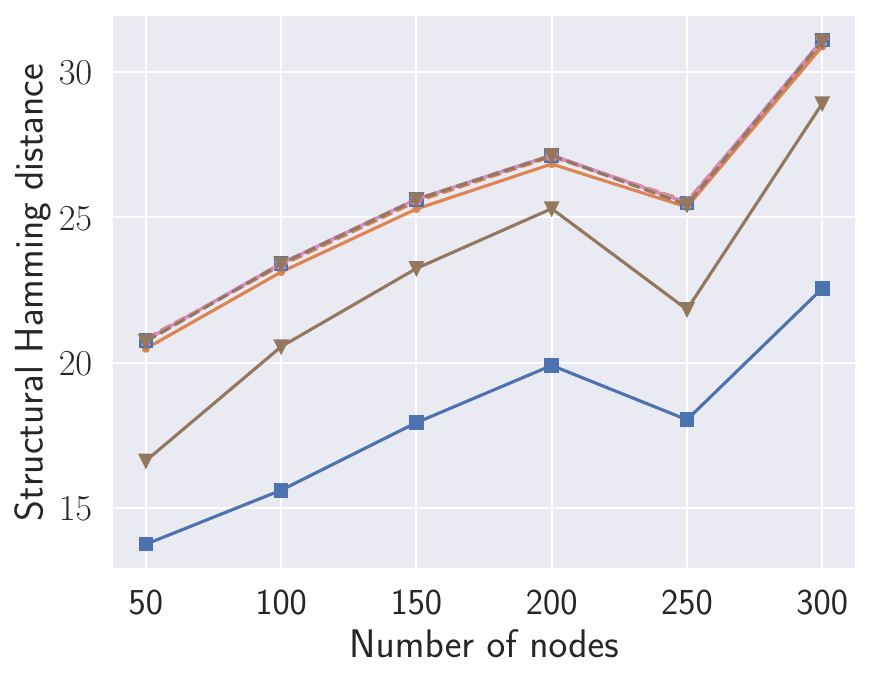}
            \caption*{$\chi^2$ tests}
            \label{fig:shd_per_node_unrest_chsq}
        \end{subfigure}
        \caption{\Acf{SHD}.}
        \label{fig:shd_per_node_unrest}
    \end{subfigure}
    \caption{Quality of estimation over number of nodes for baseline methods combined with SNAP$(0)$, with $n_{\mathbf{T}}=4, \overline{d} = 3, d_{\max}=10$ and $n_{\mathbf{D}} = 1000$ data-points. We compute the intervention distance in the d-separation tests case using random linear Gaussian data according to the discovered structure.}
    \label{fig:quality_per_node_unrest}
\end{figure}

\subsection{Identifiable targets}
\label{app:identifiable}
In this section, we sample target sets for experiment that are \emph{identifiable}.
We consider a set of targets identifiable if the causal effect is identifiable from the true CPDAG between each pair of targets.
To ensure that these identifiable causal effects are not mostly zero, when sampling identifiable targets, we also require that they are the ancestor or the descendant of at least one other target.
This makes intervention distance more meaningful, as the true CPDAG should achieve a near zero distance, while incorrect and overly sparse \acp{CPDAG} should achieve higher distances.
Furthermore, it allows us to measure the \acf{AID}.

Figures~\ref{fig:appendix_per_node_ident_std} and \ref{fig:appendix_per_node_ident} shows that enforcing targets to be identifiable does not have a significant impact on the results.
\Cref{fig:aid_per_node_ident_std,fig:aid_per_node_ident} show results for \acf{AID}, where SNAP variants are comparable to most methods.
In particular, SNAP$(0)$ improves MARVEL on linear Gaussian data with Fisher-Z tests.
SNAP variants seem to struggle the most on binary data in terms of adjustment identification distance.

\begin{figure}
    \centering
    \includegraphics[width=.6\linewidth]{experiments/legend_small.pdf}
    \begin{subfigure}[b]{\linewidth}
        \begin{subfigure}[b]{0.24\linewidth}
            \includegraphics[width=\linewidth]{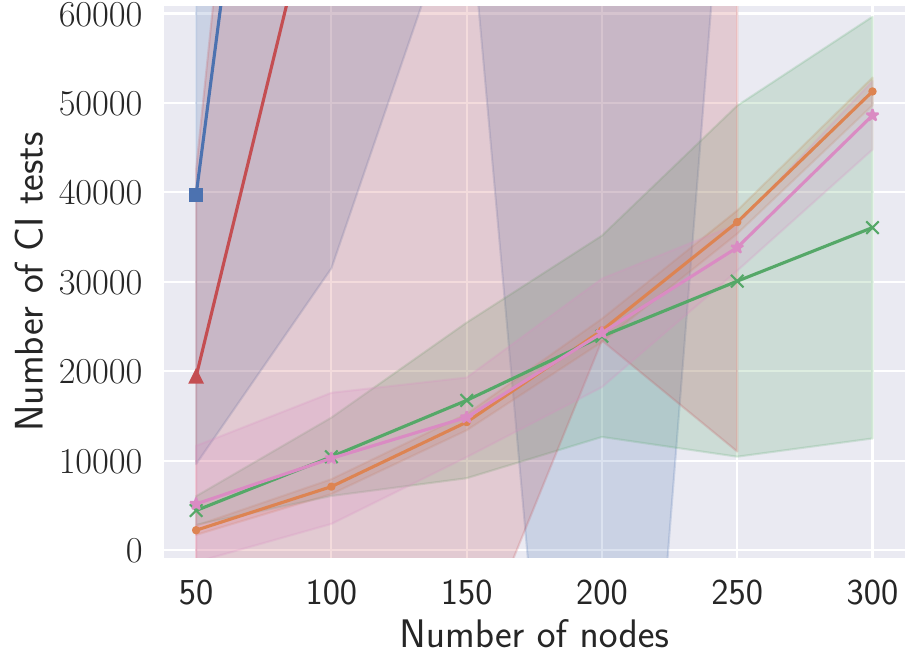}
            \caption*{d-separation tests}
            \label{fig:test_per_node_ident_dsep_std}
        \end{subfigure}
        \begin{subfigure}[b]{0.24\linewidth}
            \includegraphics[width=\linewidth]{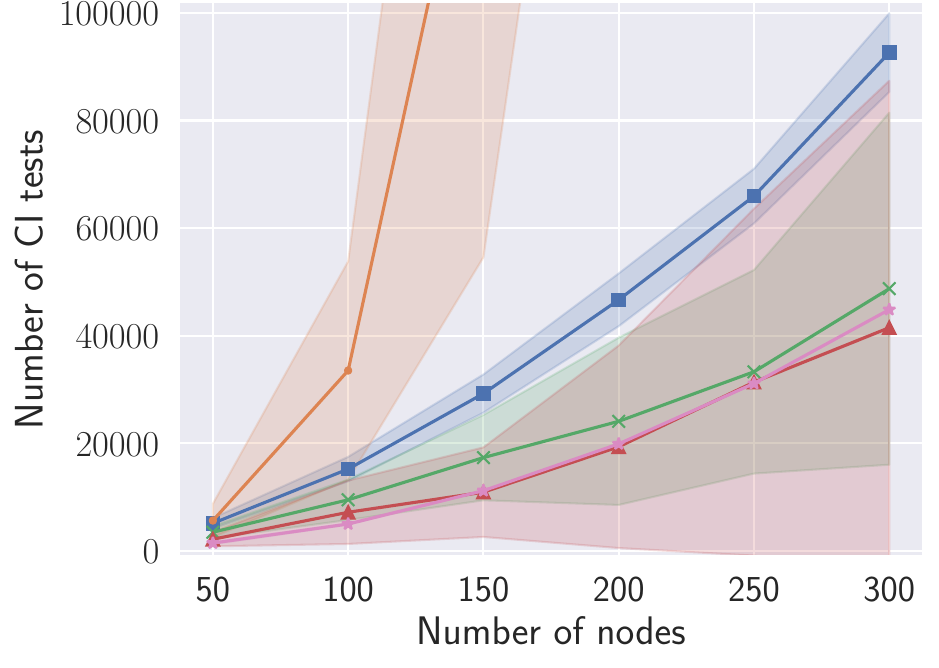}
            \caption*{Fisher-Z tests}
            \label{fig:test_per_node_ident_fshz_std}
        \end{subfigure}
        \begin{subfigure}[b]{0.24\linewidth}
            \includegraphics[width=\linewidth]{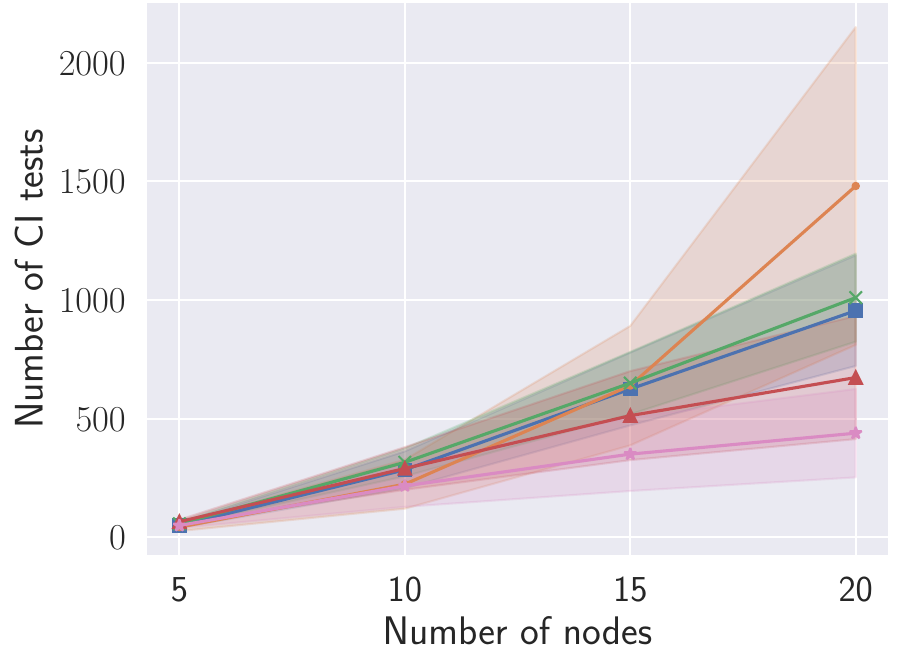}
            \caption*{KCI tests}
            \label{fig:test_per_node_ident_kci_std}
        \end{subfigure}
        \begin{subfigure}[b]{0.24\linewidth}
            \includegraphics[width=\linewidth]{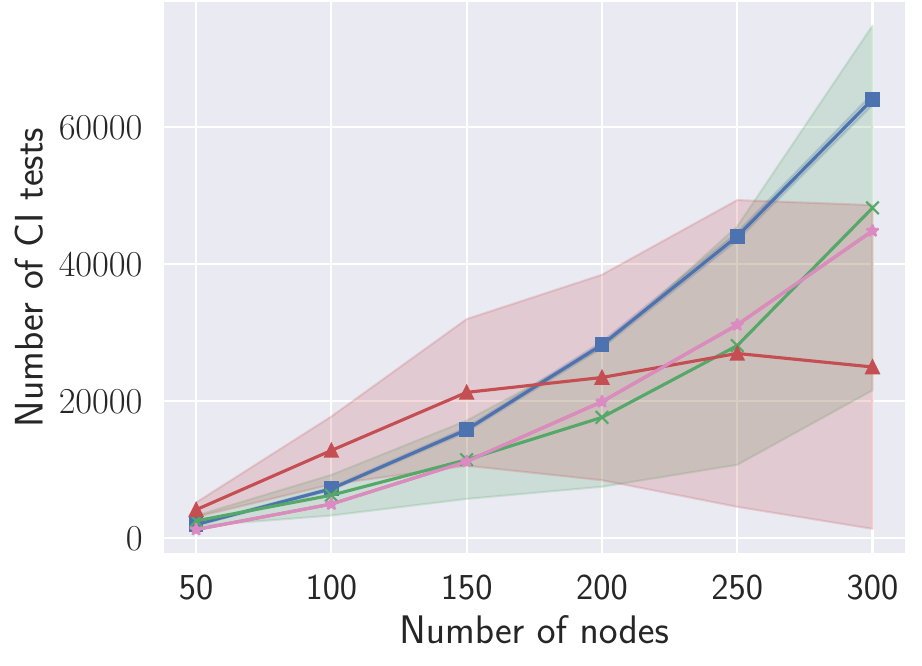}
            \caption*{$\chi^2$ tests}
            \label{fig:test_per_node_ident_chsq_std}
        \end{subfigure}
        \caption{Number of \ac{CI} tests.}
        \label{fig:test_per_node_ident_std}
    \end{subfigure}
    \begin{subfigure}[b]{\linewidth}
        \begin{subfigure}[b]{0.24\linewidth}
            \includegraphics[width=\linewidth]{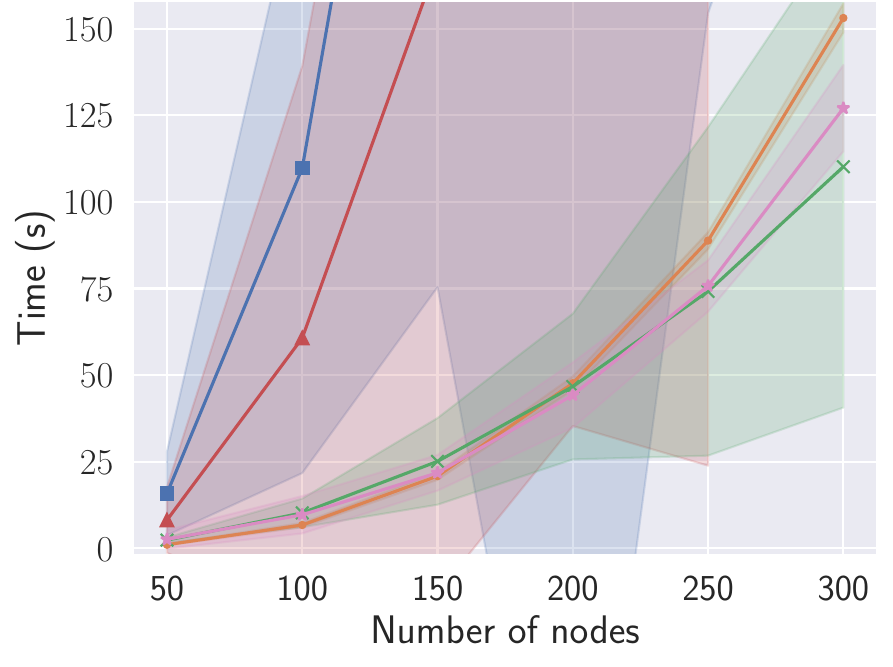}
            \caption*{d-separation tests}
            \label{fig:time_per_node_ident_dsep_std}
        \end{subfigure}
        \begin{subfigure}[b]{0.24\linewidth}
            \includegraphics[width=\linewidth]{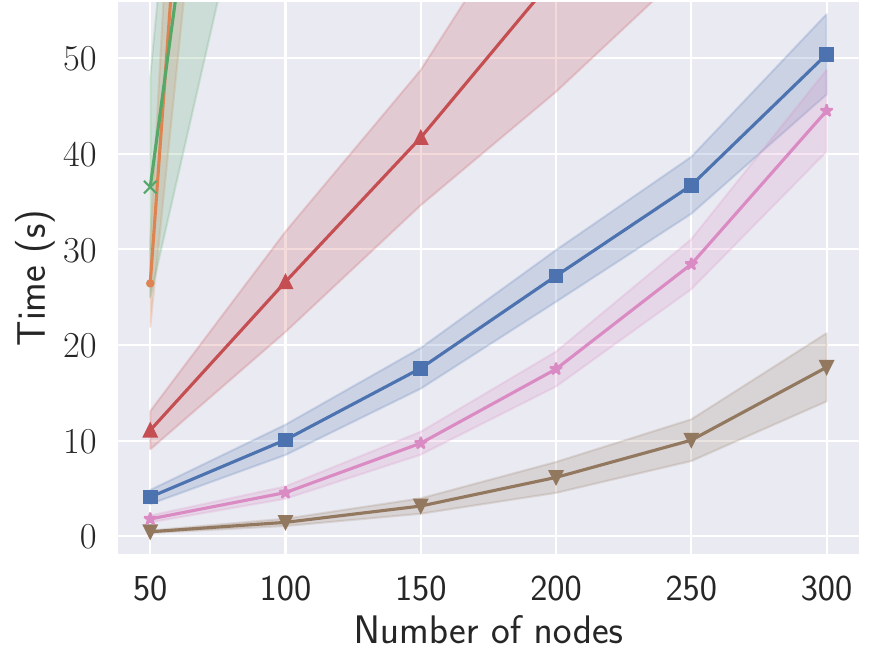}
            \caption*{Fisher-Z tests}
            \label{fig:time_per_node_ident_fshz_std}
        \end{subfigure}
        \begin{subfigure}[b]{0.24\linewidth}
            \includegraphics[width=\linewidth]{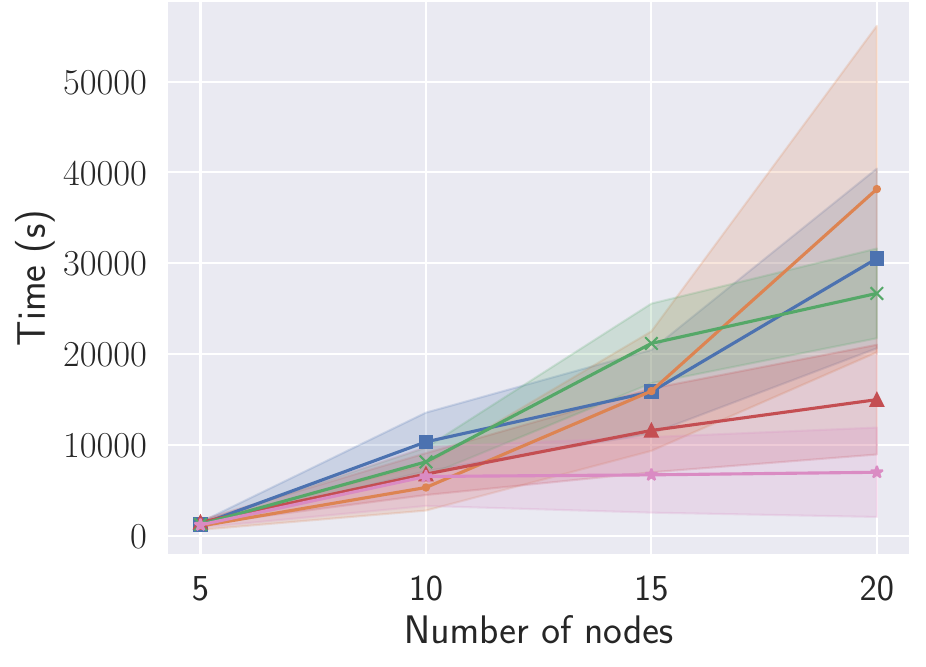}
            \caption*{KCI tests}
            \label{fig:time_per_node_ident_kci_std}
        \end{subfigure}
        \begin{subfigure}[b]{0.24\linewidth}
            \includegraphics[width=\linewidth]{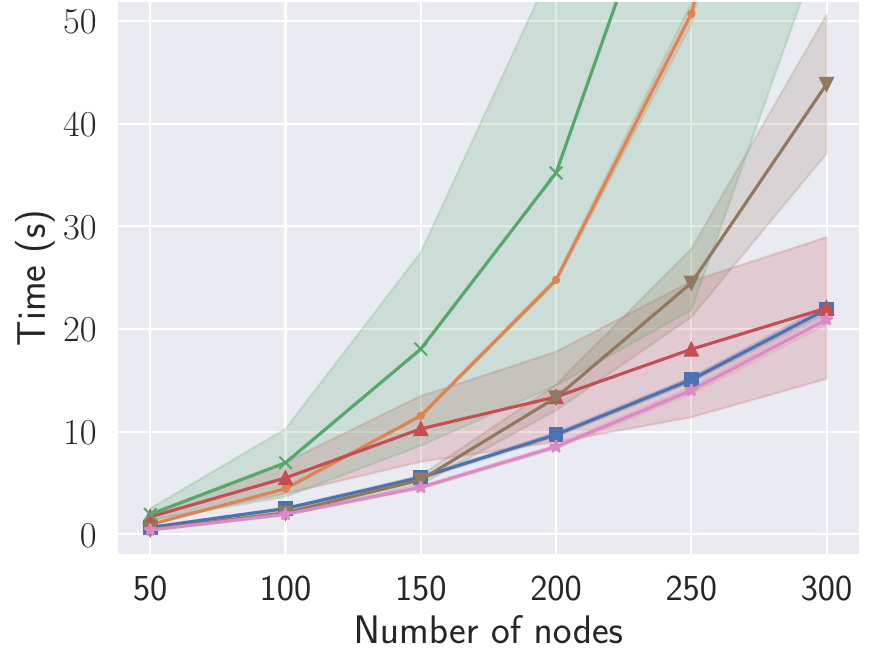}
            \caption*{$\chi^2$ tests}
            \label{fig:time_per_node_ident_chsq_std}
        \end{subfigure}
        \caption{Computation time.}
        \label{fig:time_per_node_ident_std}
    \end{subfigure}
    
    \begin{subfigure}[b]{\linewidth}
        \begin{subfigure}[b]{0.24\linewidth}
            \includegraphics[width=\linewidth]{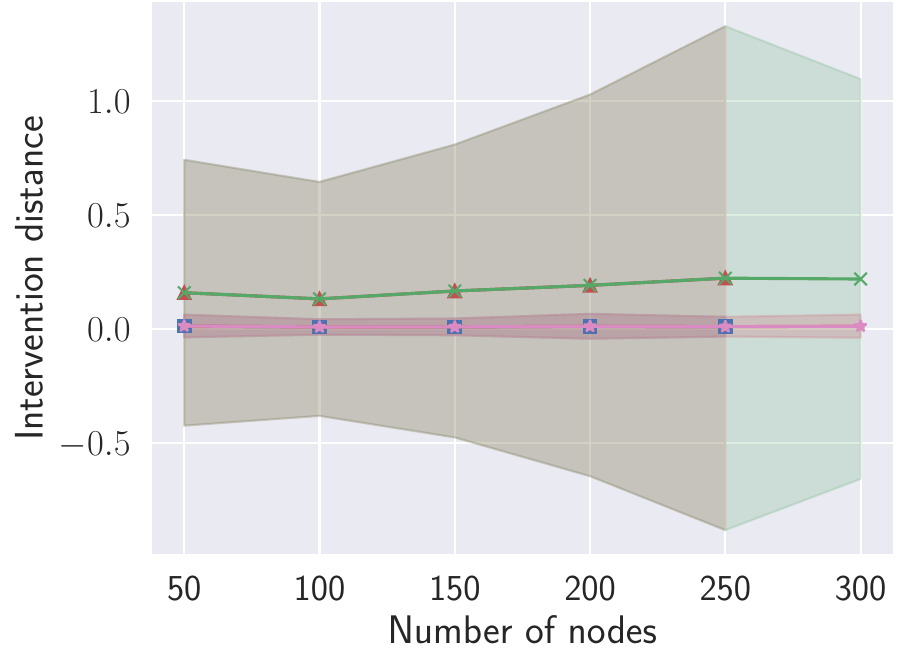}
            \caption*{d-separation tests}
            \label{fig:int_dist_per_node_ident_dsep_abs_std}
        \end{subfigure}
        \begin{subfigure}[b]{0.24\linewidth}
            \includegraphics[width=\linewidth]{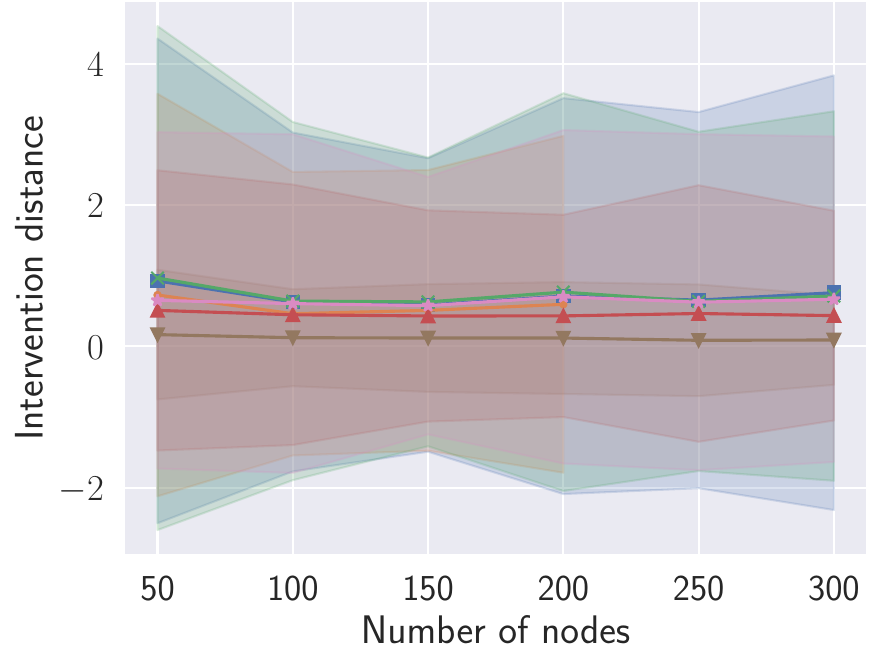}
            \caption*{Fisher-Z tests}
            \label{fig:int_dist_per_node_ident_fshz_abs_std}
        \end{subfigure}
        \begin{subfigure}[b]{0.24\linewidth}
            \includegraphics[width=\linewidth]{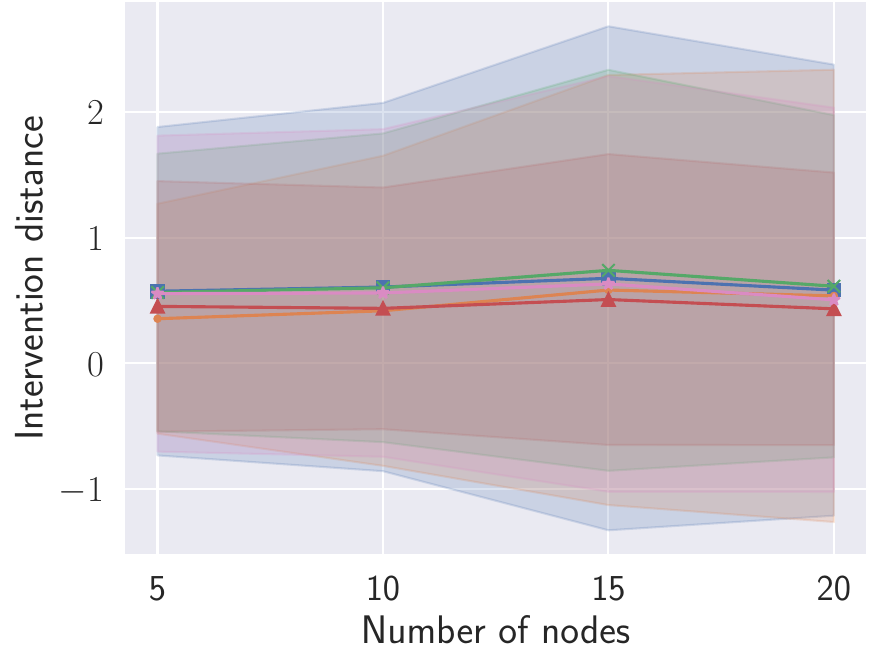}
            \caption*{KCI tests}
            \label{fig:int_dist_per_node_ident_kci_abs_std}
        \end{subfigure}
        \begin{subfigure}[b]{0.24\linewidth}
            \includegraphics[width=\linewidth]{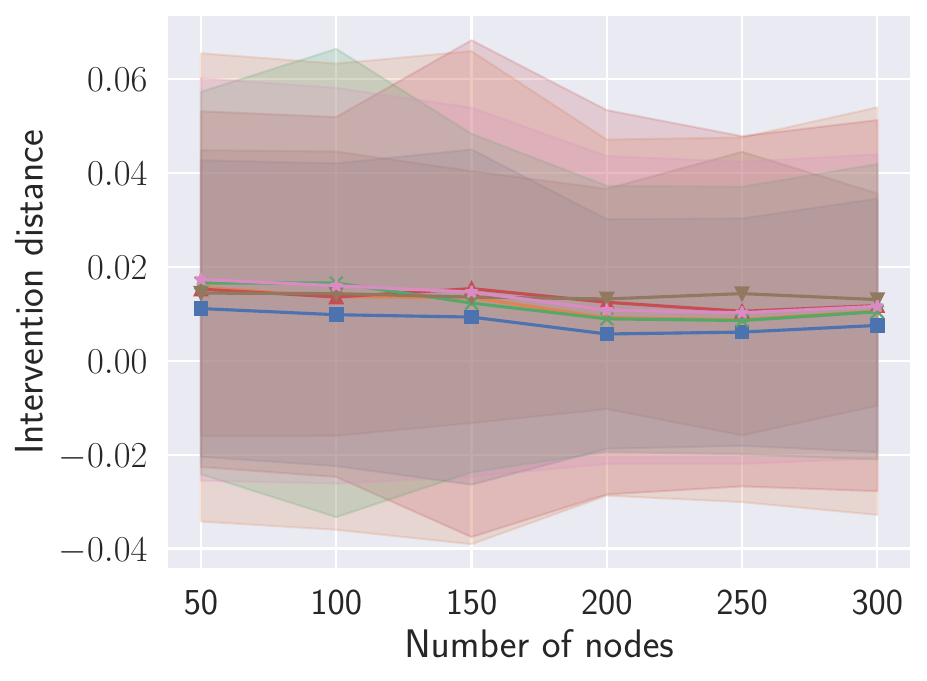}
            \caption*{$\chi^2$ tests}
            \label{fig:int_dist_per_node_ident_chsq_abs_std}
        \end{subfigure}
        \caption{Intervention distance.}
        \label{fig:int_dist_per_node_ident_std}
    \end{subfigure}
    \begin{subfigure}[b]{\linewidth}
        \begin{subfigure}[b]{0.24\linewidth}
            \includegraphics[width=\linewidth]{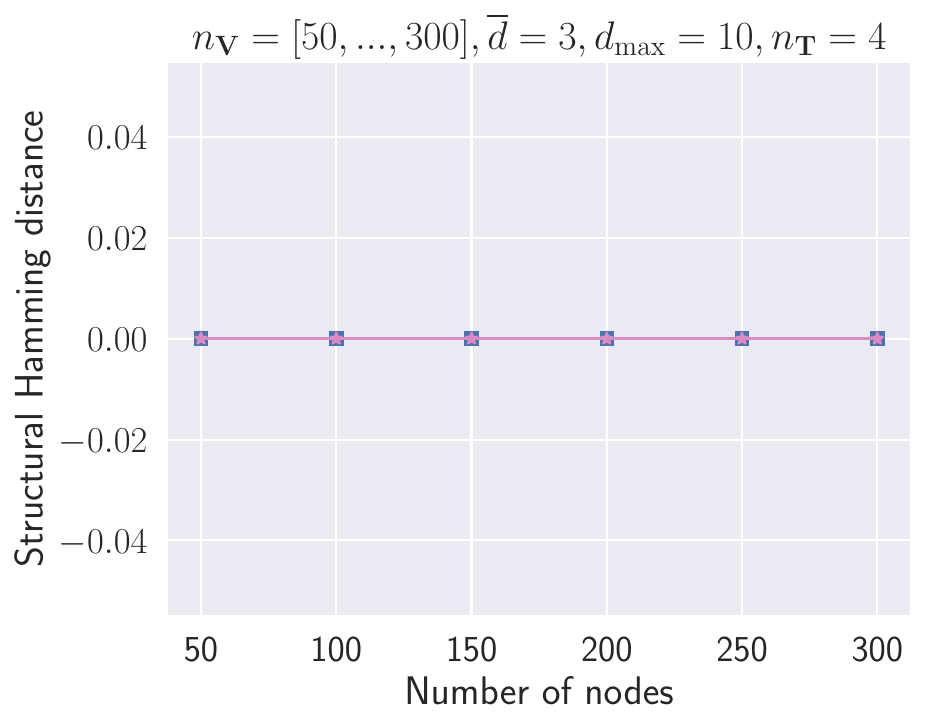}
            \caption*{d-separation tests}
            \label{fig:shd_per_node_ident_dsep_std}
        \end{subfigure}
        \begin{subfigure}[b]{0.24\linewidth}
            \includegraphics[width=\linewidth]{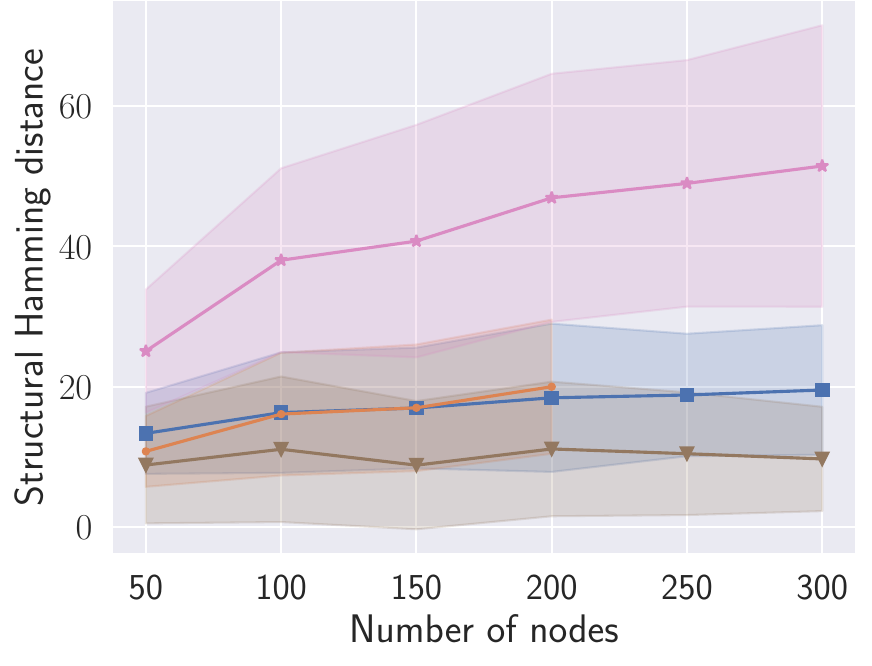}
            \caption*{Fisher-Z tests}
            \label{fig:shd_per_node_ident_fshz_std}
        \end{subfigure}
        \begin{subfigure}[b]{0.24\linewidth}
            \includegraphics[width=\linewidth]{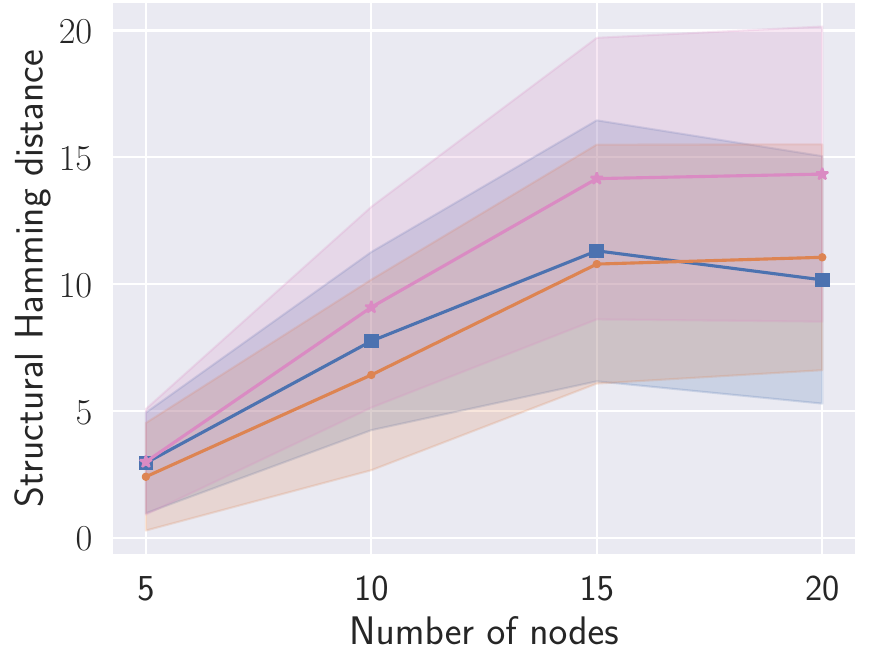}
            \caption*{KCI tests}
            \label{fig:shd_per_node_ident_kci_std}
        \end{subfigure}
        \begin{subfigure}[b]{0.24\linewidth}
            \includegraphics[width=\linewidth]{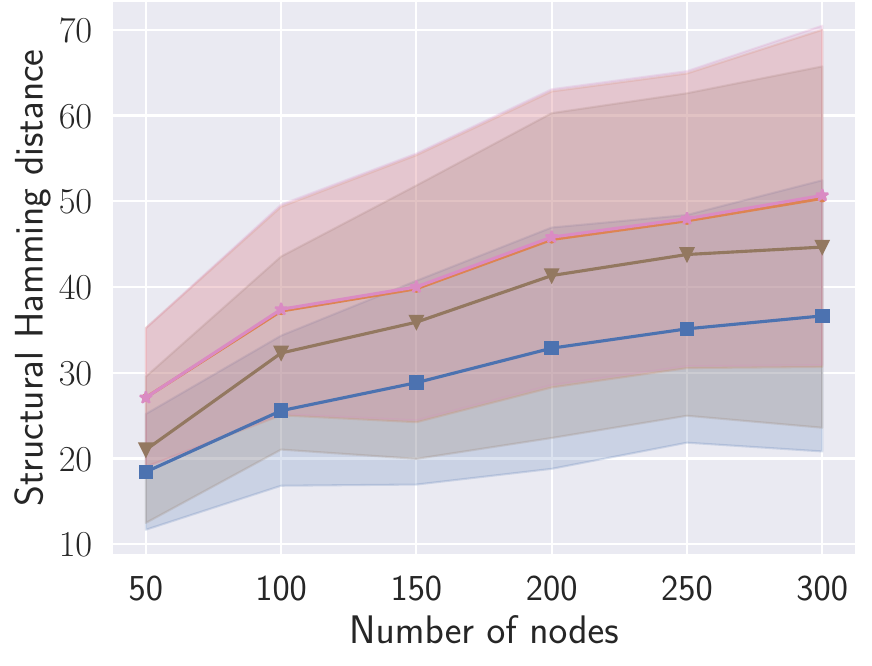}
            \caption*{$\chi^2$ tests}
            \label{fig:shd_per_node_ident_chsq_std}
        \end{subfigure}
        \caption{\Acf{SHD}.}
        \label{fig:shd_per_node_ident_std}
    \end{subfigure}
    \caption{Additional results over number of nodes for identifiable targets, with $n_{\mathbf{T}}=4, \overline{d} = 3, d_{\max}=10$ and $n_{\mathbf{D}} = 1000$ data-points. The shadow area denotes the range of the standard deviation. We compute the intervention distance in the d-separation tests case using random linear Gaussian data according to the discovered structure.}
    \label{fig:appendix_per_node_ident_std}
\end{figure}

\begin{figure}
    \centering
    \includegraphics[width=.6\linewidth]{experiments/legend_big.pdf}
    \begin{subfigure}[b]{\linewidth}
        \begin{subfigure}[b]{0.24\linewidth}
            \includegraphics[width=\linewidth]{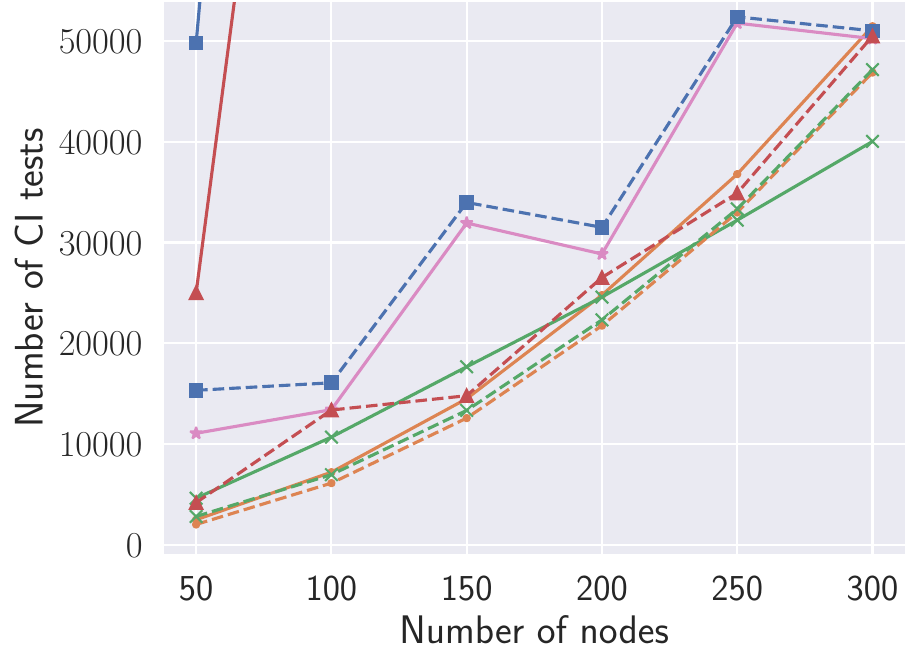}
            \caption*{d-separation tests}
            \label{fig:test_per_node_ident_dsep}
        \end{subfigure}
        \begin{subfigure}[b]{0.24\linewidth}
            \includegraphics[width=\linewidth]{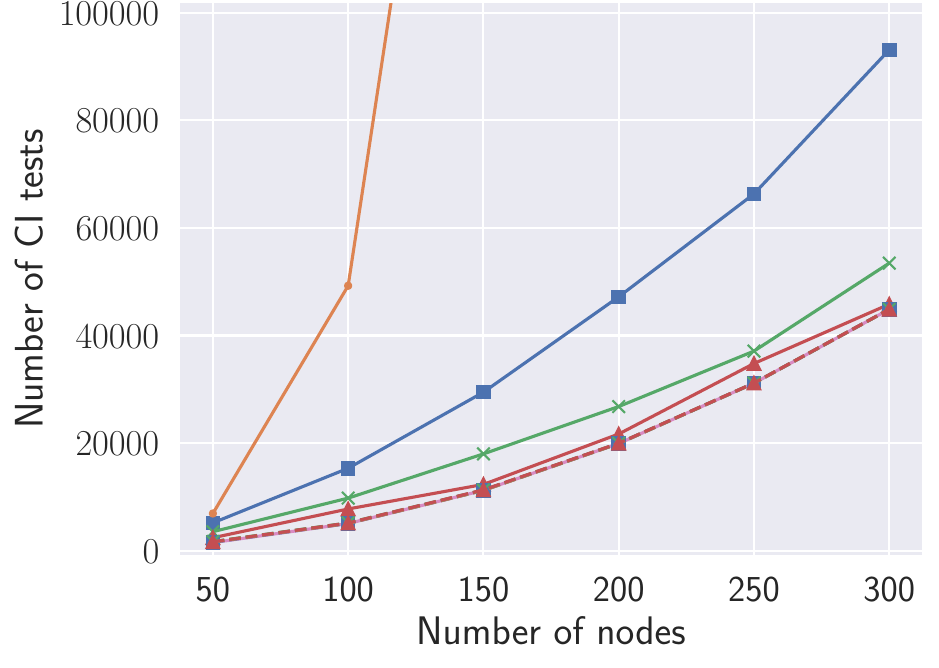}
            \caption*{Fisher-Z tests}
            \label{fig:test_per_node_ident_fshz}
        \end{subfigure}
        \begin{subfigure}[b]{0.24\linewidth}
            \includegraphics[width=\linewidth]{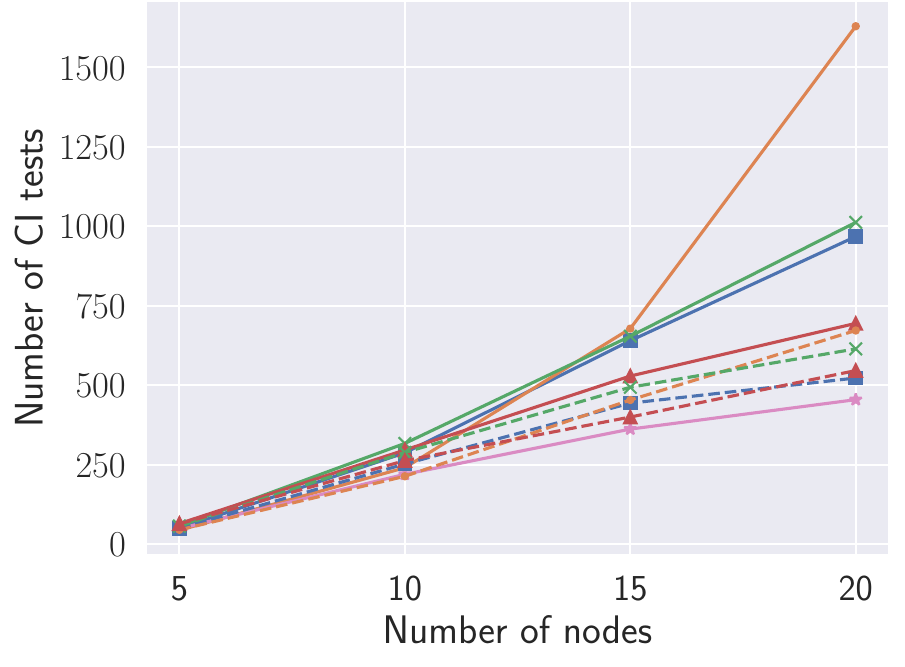}
            \caption*{KCI tests}
            \label{fig:test_per_node_ident_kci}
        \end{subfigure}
        \begin{subfigure}[b]{0.24\linewidth}
            \includegraphics[width=\linewidth]{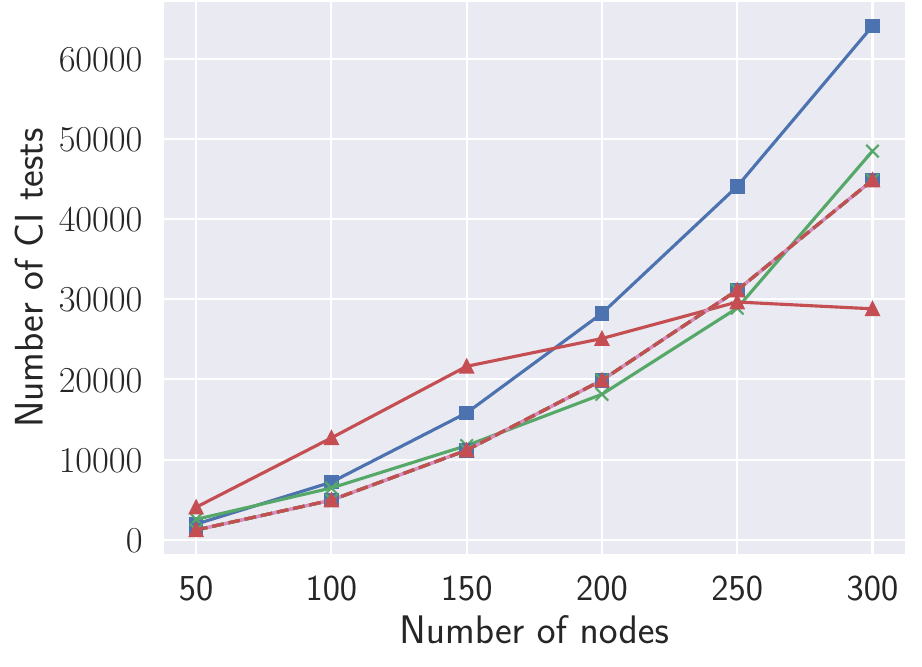}
            \caption*{$\chi^2$ tests}
            \label{fig:test_per_node_ident_chsq}
        \end{subfigure}
        \caption{Number of \ac{CI} tests.}
        \label{fig:test_per_node_ident}
    \end{subfigure}
    \begin{subfigure}[b]{\linewidth}
        \begin{subfigure}[b]{0.24\linewidth}
            \includegraphics[width=\linewidth]{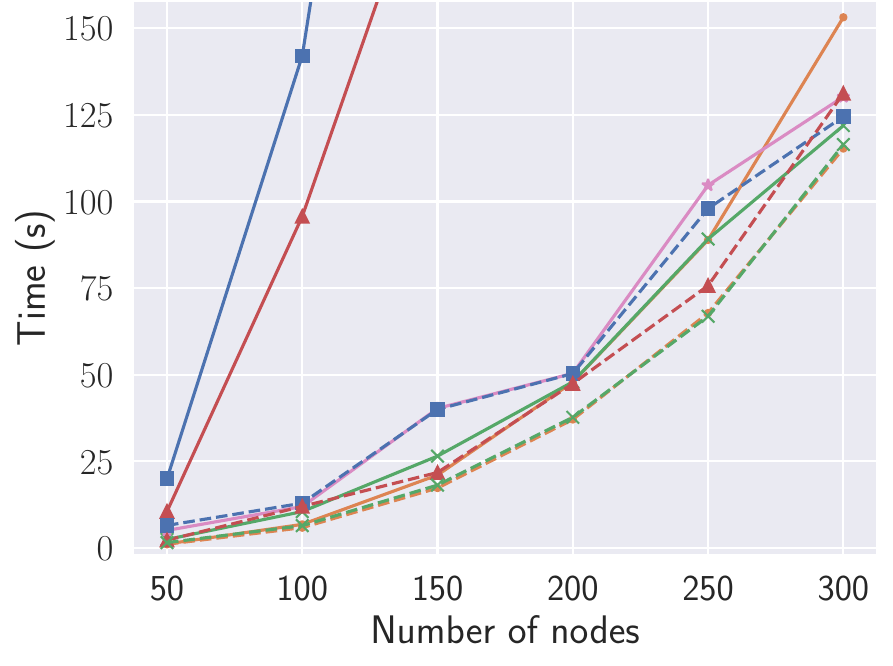}
            \caption*{d-separation tests}
            \label{fig:time_per_node_ident_dsep}
        \end{subfigure}
        \begin{subfigure}[b]{0.24\linewidth}
            \includegraphics[width=\linewidth]{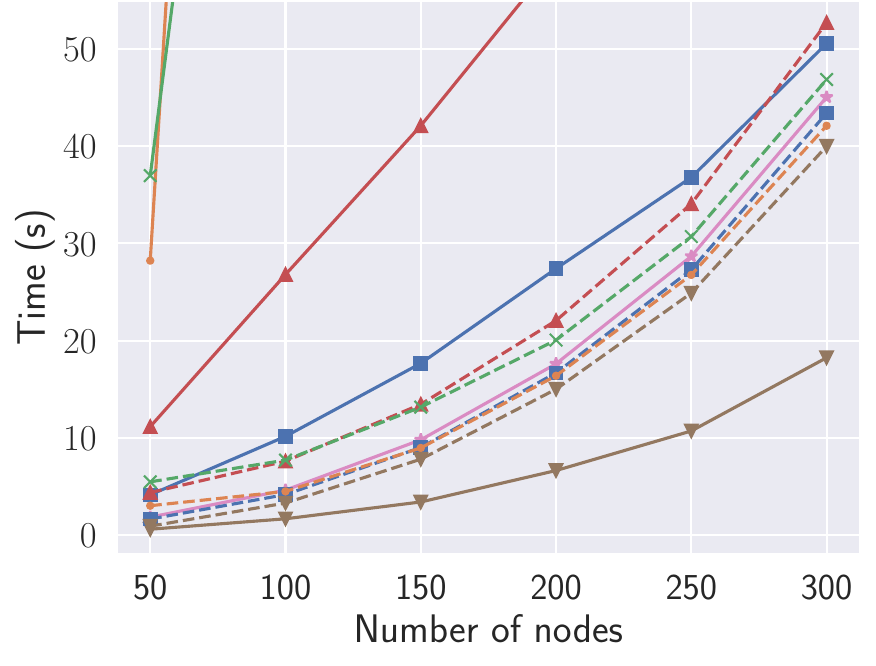}
            \caption*{Fisher-Z tests}
            \label{fig:time_per_node_ident_fshz}
        \end{subfigure}
        \begin{subfigure}[b]{0.24\linewidth}
            \includegraphics[width=\linewidth]{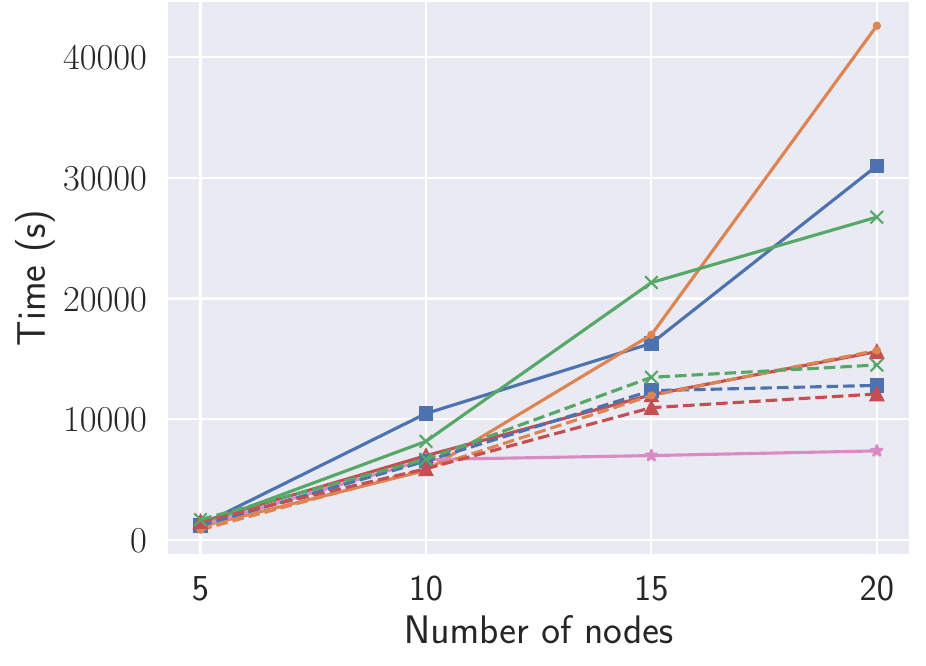}
            \caption*{KCI tests}
            \label{fig:time_per_node_ident_kci}
        \end{subfigure}
        \begin{subfigure}[b]{0.24\linewidth}
            \includegraphics[width=\linewidth]{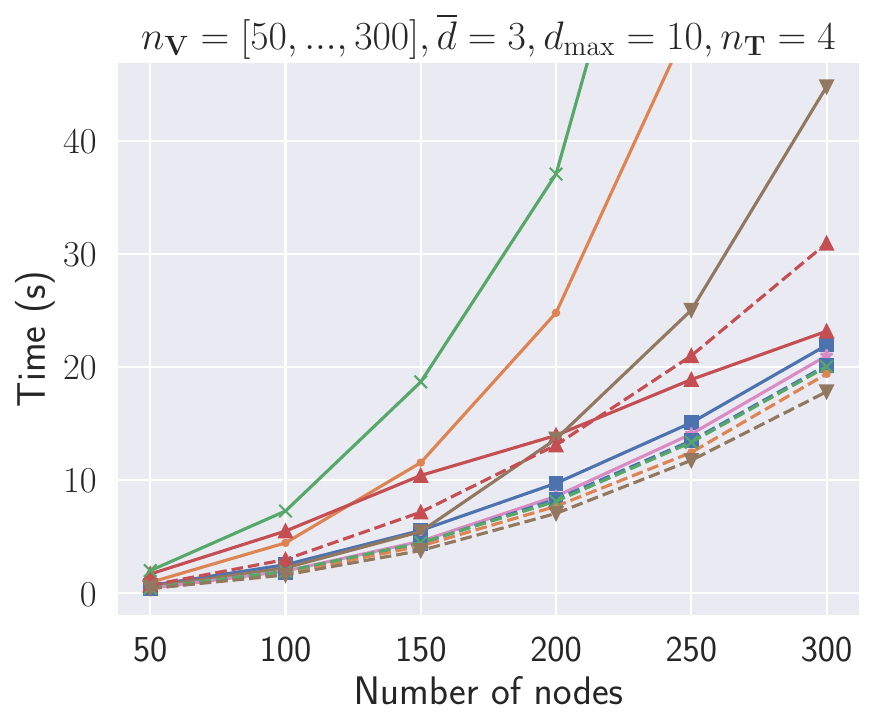}
            \caption*{$\chi^2$ tests}
            \label{fig:time_per_node_ident_chsq}
        \end{subfigure}
        \caption{Computation time.}
        \label{fig:time_per_node_ident}
    \end{subfigure}
    \begin{subfigure}[b]{\linewidth}
        \begin{subfigure}[b]{0.24\linewidth}
            \includegraphics[width=\linewidth]{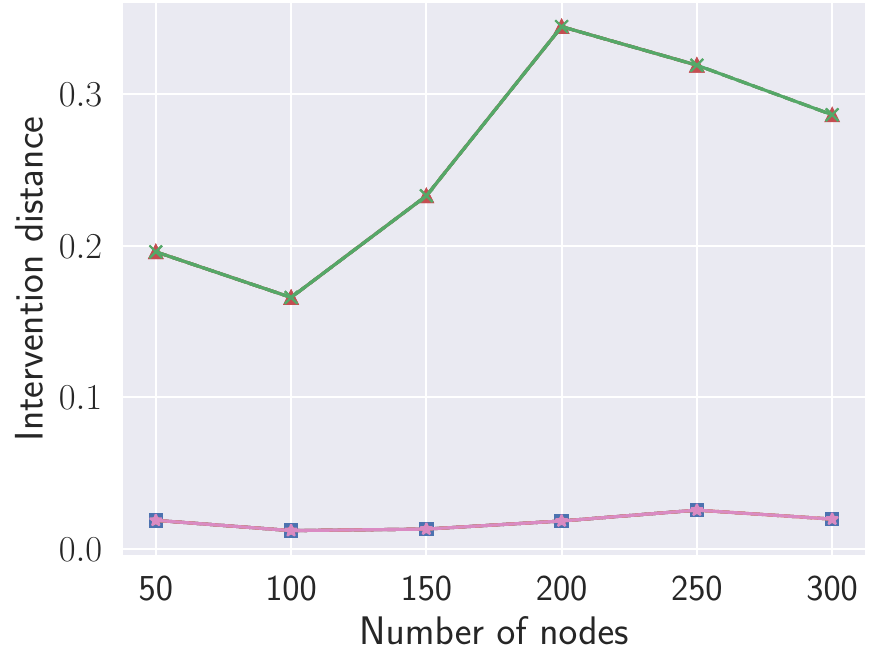}
            \caption*{d-separation tests}
            \label{fig:int_dist_per_node_ident_dsep_abs}
        \end{subfigure}
        \begin{subfigure}[b]{0.24\linewidth}
            \includegraphics[width=\linewidth]{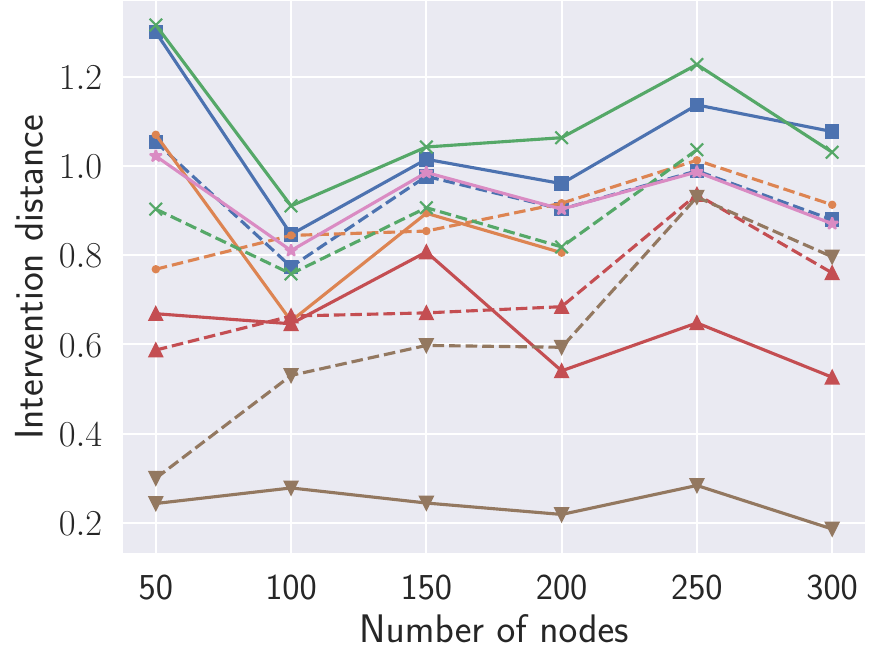}
            \caption*{Fisher-Z tests}
            \label{fig:int_dist_per_node_ident_fshz_abs}
        \end{subfigure}
        \begin{subfigure}[b]{0.24\linewidth}
            \includegraphics[width=\linewidth]{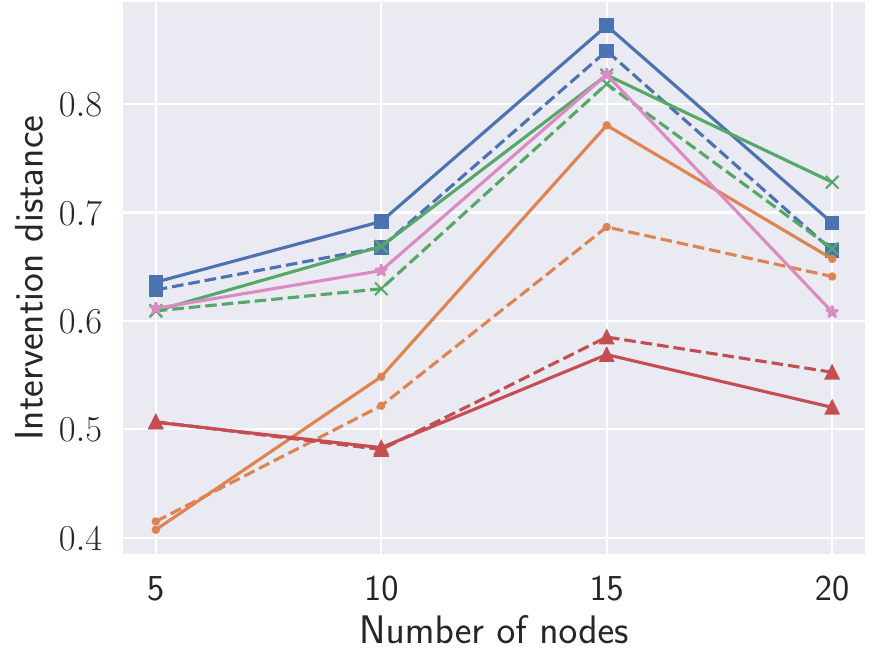}
            \caption*{KCI tests}
            \label{fig:int_dist_per_node_ident_kci_abs}
        \end{subfigure}
        \begin{subfigure}[b]{0.24\linewidth}
            \includegraphics[width=\linewidth]{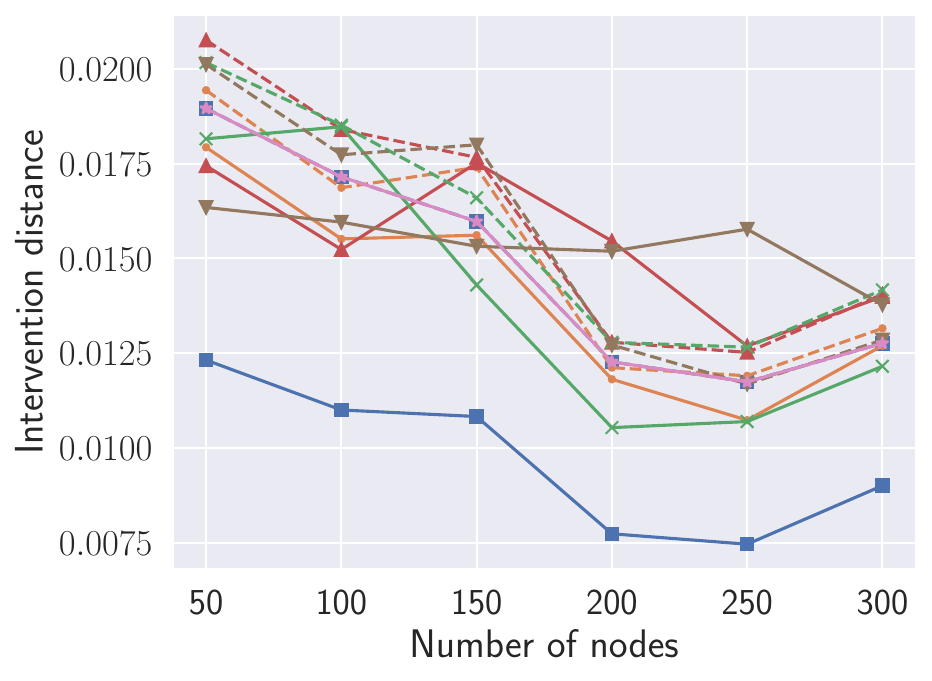}
            \caption*{$\chi^2$ tests}
            \label{fig:int_dist_per_node_ident_chsq_abs}
        \end{subfigure}
        \caption{Intervention distance.}
        \label{fig:int_dist_per_node_ident}
    \end{subfigure}
    \begin{subfigure}[b]{\linewidth}
        \begin{subfigure}[b]{0.24\linewidth}
            \includegraphics[width=\linewidth]{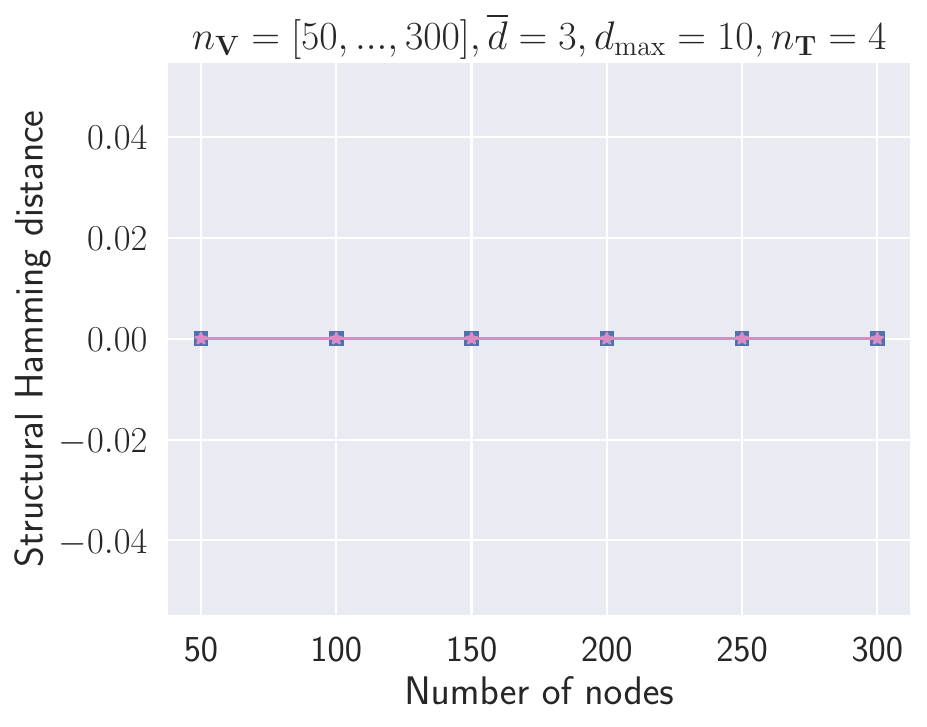}
            \caption*{d-separation tests}
            \label{fig:shd_per_node_ident_dsep}
        \end{subfigure}
        \begin{subfigure}[b]{0.24\linewidth}
            \includegraphics[width=\linewidth]{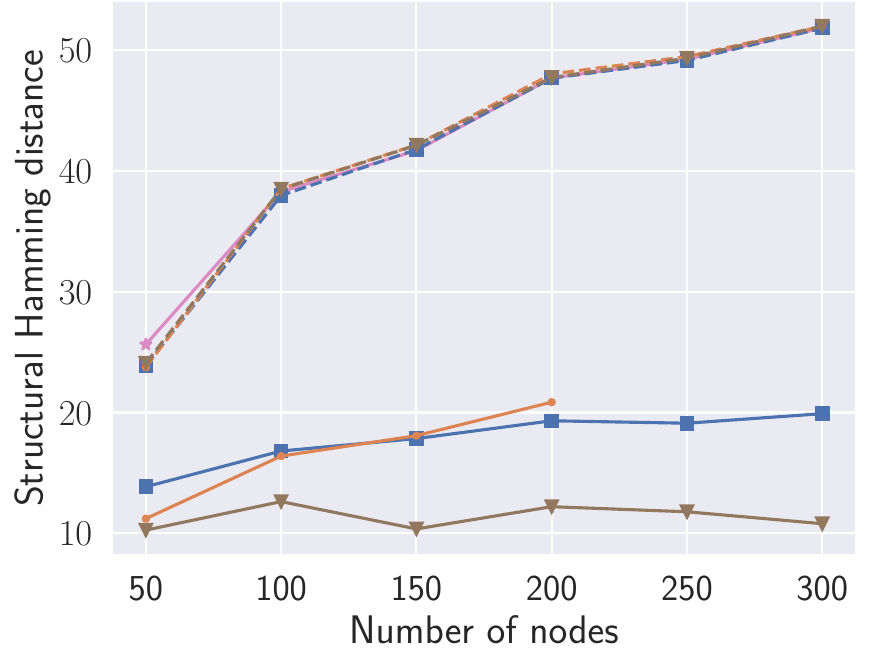}
            \caption*{Fisher-Z tests}
            \label{fig:shd_per_node_ident_fshz}
        \end{subfigure}
        \begin{subfigure}[b]{0.24\linewidth}
            \includegraphics[width=\linewidth]{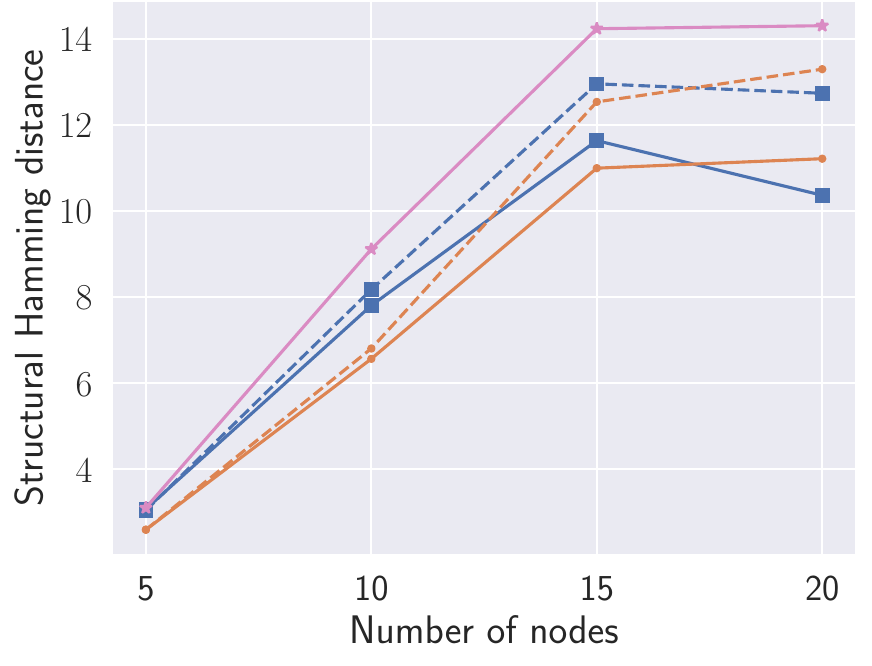}
            \caption*{KCI tests}
            \label{fig:shd_per_node_ident_kci}
        \end{subfigure}
        \begin{subfigure}[b]{0.24\linewidth}
            \includegraphics[width=\linewidth]{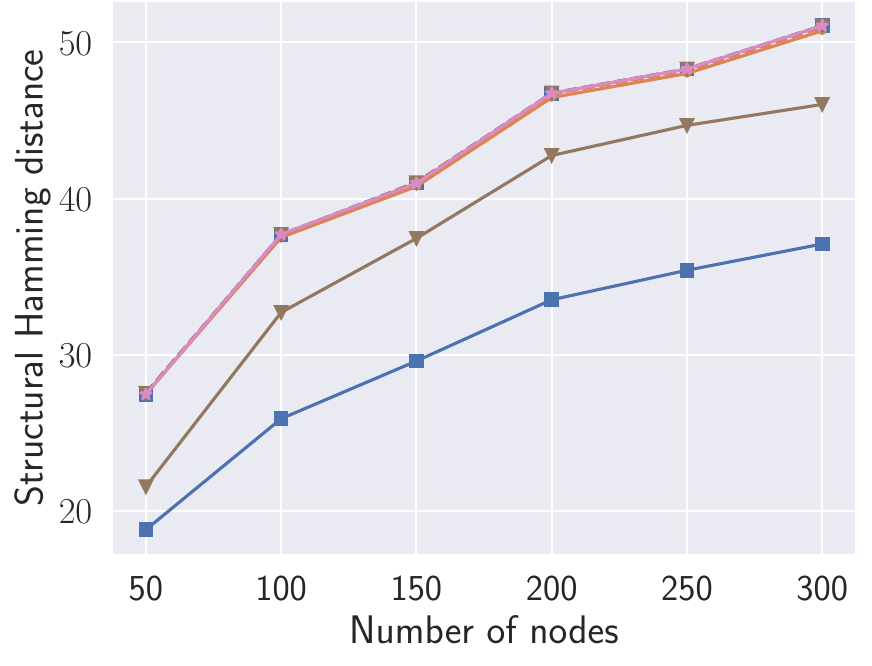}
            \caption*{$\chi^2$ tests}
            \label{fig:shd_per_node_ident_chsq}
        \end{subfigure}
        \caption{\Acf{SHD}.}
        \label{fig:shd_per_node_ident}
    \end{subfigure}
    \caption{Additional results over number of nodes for identifiable targets, with $n_{\mathbf{T}}=4, \overline{d} = 3, d_{\max}=10$ and $n_{\mathbf{D}} = 1000$ data-points. We also show baseline methods combined with SNAP$(0)$. We compute the intervention distance in the d-separation tests case using random linear Gaussian data according to the discovered structure.}
    \label{fig:appendix_per_node_ident}
\end{figure}

\begin{figure}
    \centering
    \includegraphics[width=.6\linewidth]{experiments/legend_small.pdf}
    \begin{subfigure}[b]{\linewidth}
        \begin{subfigure}[b]{0.24\linewidth}
            \includegraphics[width=\linewidth]{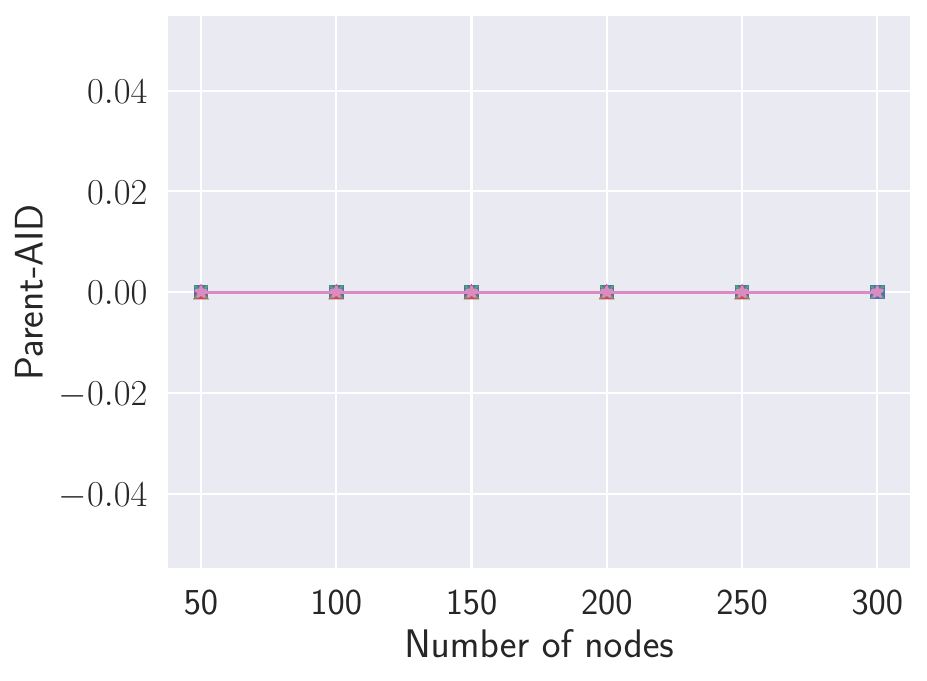}
            \caption*{d-separation tests}
            \label{fig:parent_aid_per_node_ident_dsep_std}
        \end{subfigure}
        \begin{subfigure}[b]{0.24\linewidth}
            \includegraphics[width=\linewidth]{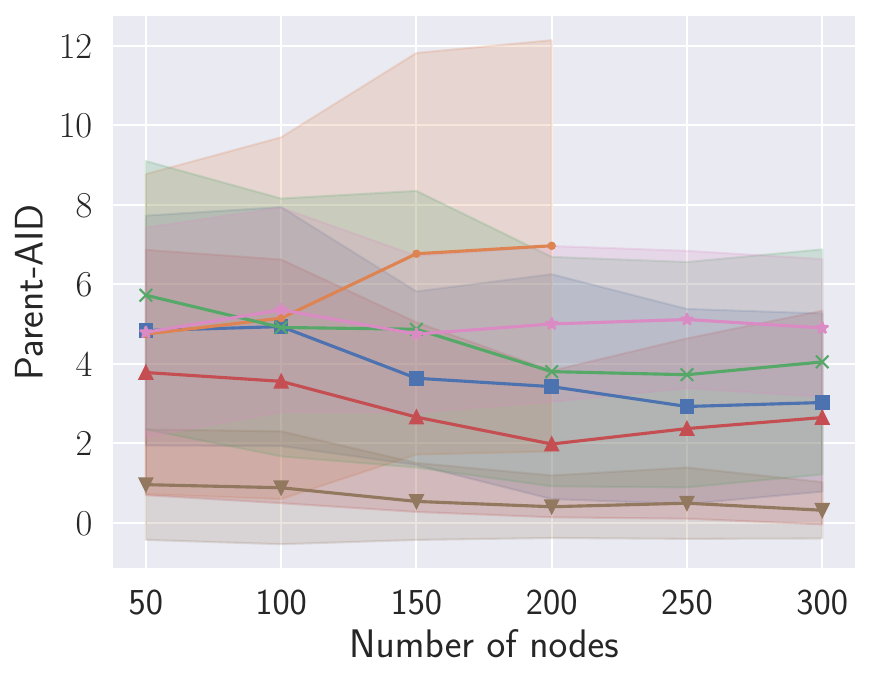}
            \caption*{Fisher-Z tests}
            \label{fig:parent_aid_per_node_ident_fshz_std}
        \end{subfigure}
        \begin{subfigure}[b]{0.24\linewidth}
            \includegraphics[width=\linewidth]{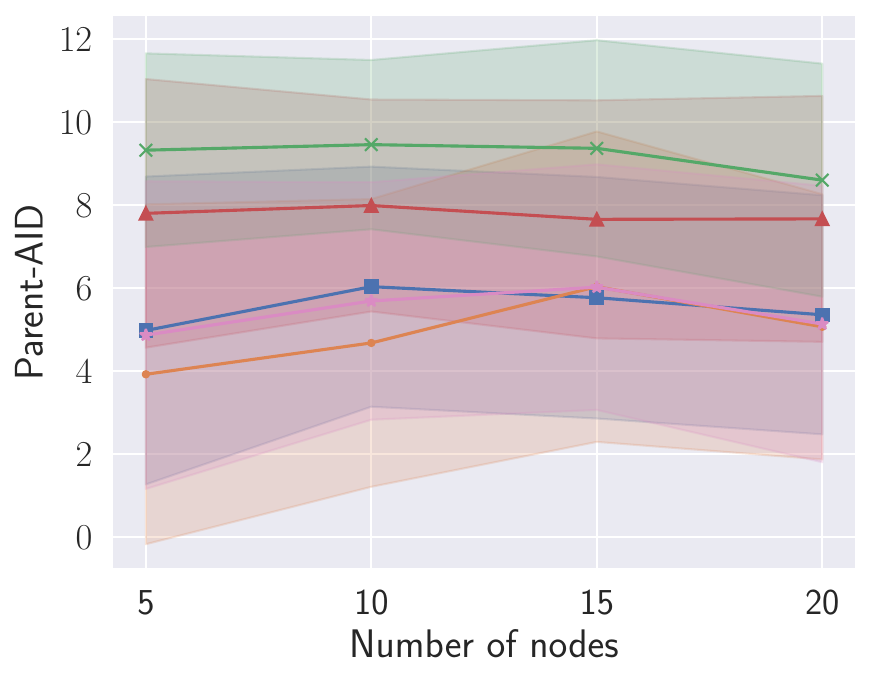}
            \caption*{KCI tests}
            \label{fig:parent_aid_per_node_ident_kci_std}
        \end{subfigure}
        \begin{subfigure}[b]{0.24\linewidth}
            \includegraphics[width=\linewidth]{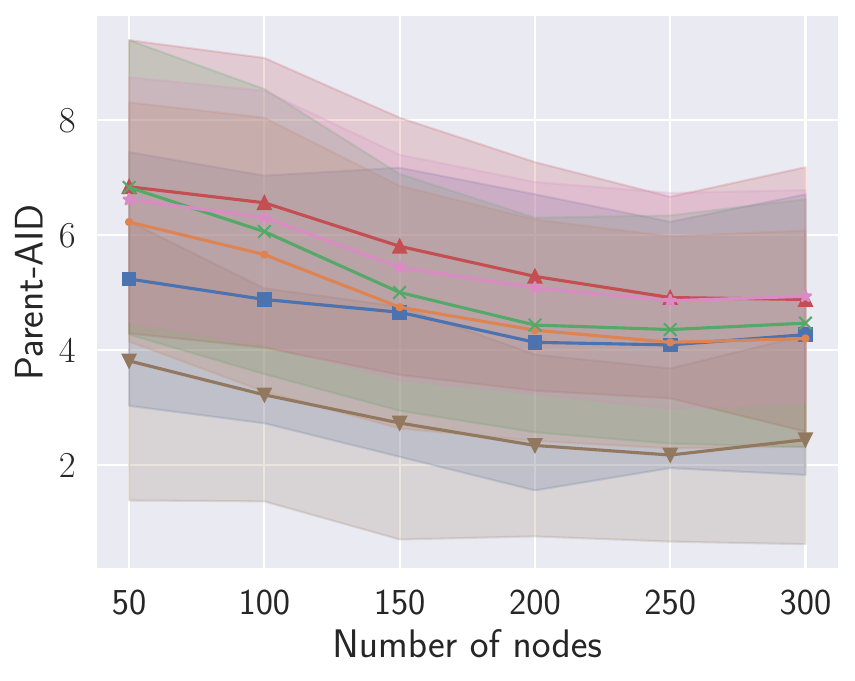}
            \caption*{$\chi^2$ tests}
            \label{fig:parent_aid_per_node_ident_chsq_std}
        \end{subfigure}
        \caption{Parent-AID.}
        \label{fig:parent_aid_per_node_ident_std}
    \end{subfigure}
    \begin{subfigure}[b]{\linewidth}
        \begin{subfigure}[b]{0.24\linewidth}
            \includegraphics[width=\linewidth]{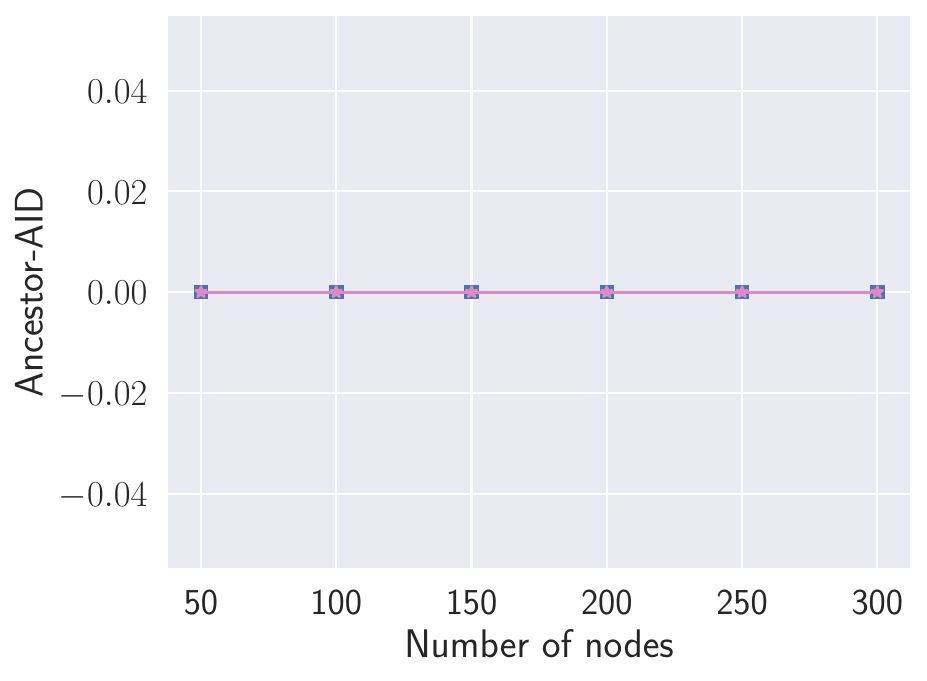}
            \caption*{d-separation tests}
            \label{fig:ancestor_aid_per_node_ident_dsep_std}
        \end{subfigure}
        \begin{subfigure}[b]{0.24\linewidth}
            \includegraphics[width=\linewidth]{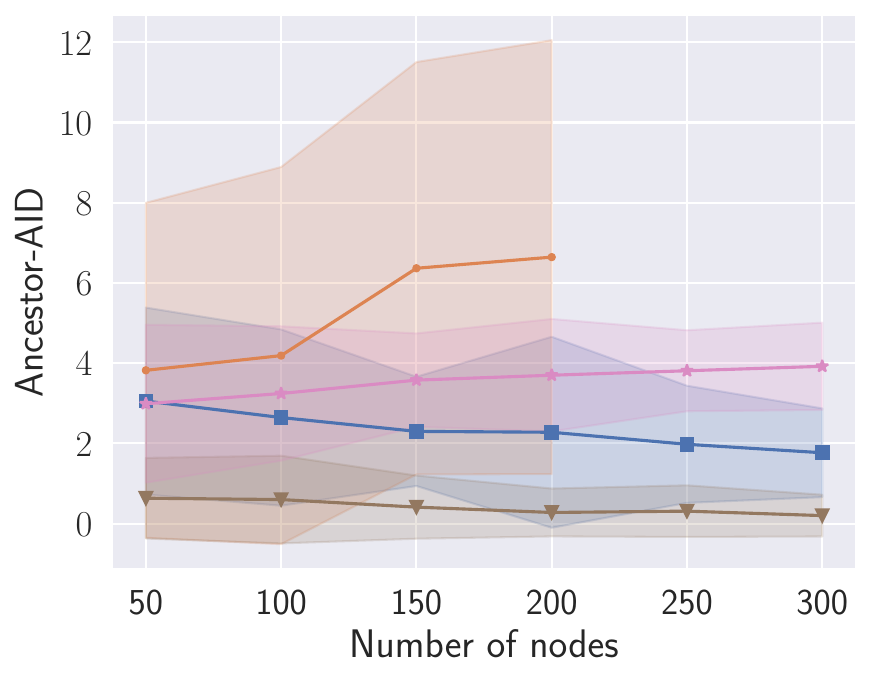}
            \caption*{Fisher-Z tests}
            \label{fig:ancestor_aid_per_node_ident_fshz_std}
        \end{subfigure}
        \begin{subfigure}[b]{0.24\linewidth}
            \includegraphics[width=\linewidth]{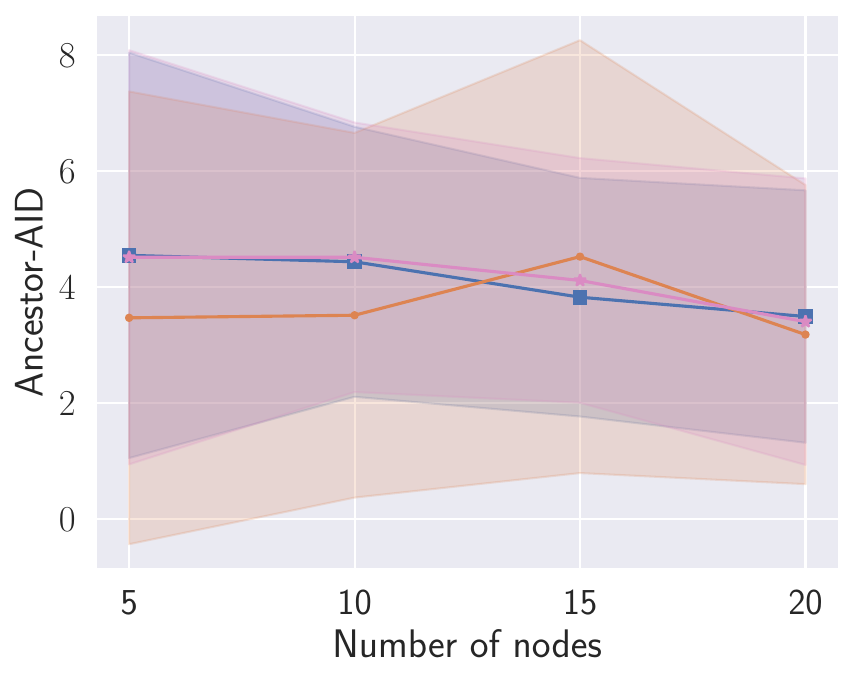}
            \caption*{KCI tests}
            \label{fig:ancestor_aid_per_node_ident_kci_std}
        \end{subfigure}
        \begin{subfigure}[b]{0.24\linewidth}
            \includegraphics[width=\linewidth]{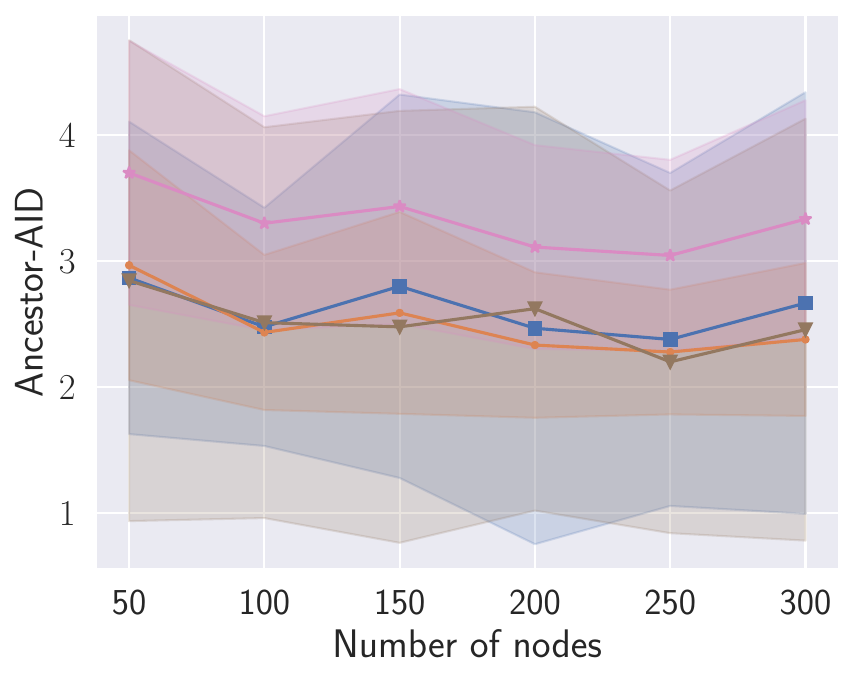}
            \caption*{$\chi^2$ tests}
            \label{fig:ancestor_aid_per_node_ident_chsq_std}
        \end{subfigure}
        \caption{Ancestor-AID.}
        \label{fig:ancestor_aid_per_node_ident_std}
    \end{subfigure}
    \begin{subfigure}[b]{\linewidth}
        \begin{subfigure}[b]{0.24\linewidth}
            \includegraphics[width=\linewidth]{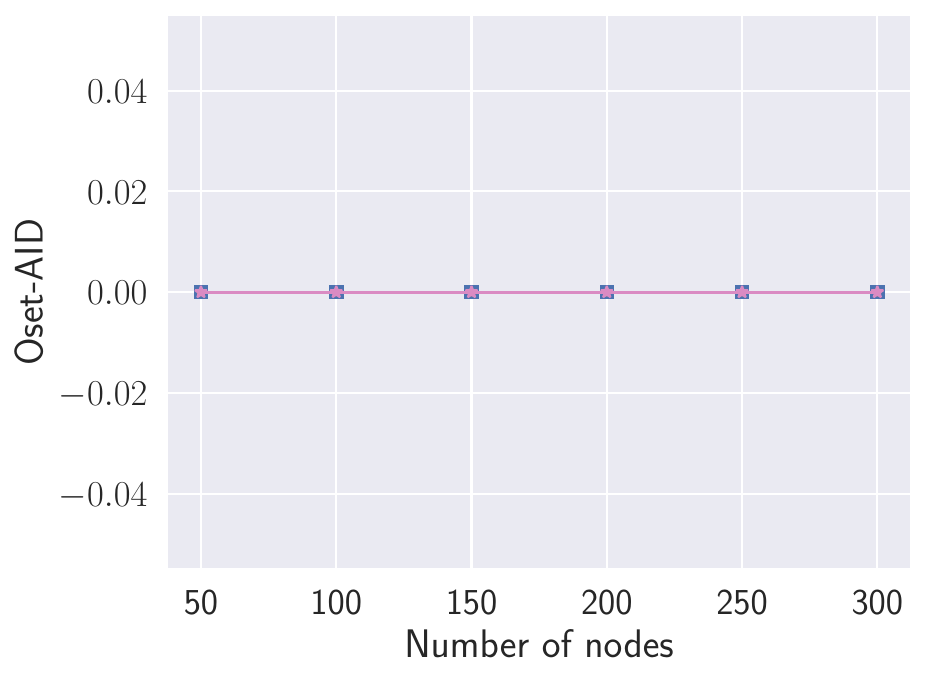}
            \caption*{d-separation tests}
            \label{fig:oset_aid_per_node_ident_dsep_std}
        \end{subfigure}
        \begin{subfigure}[b]{0.24\linewidth}
            \includegraphics[width=\linewidth]{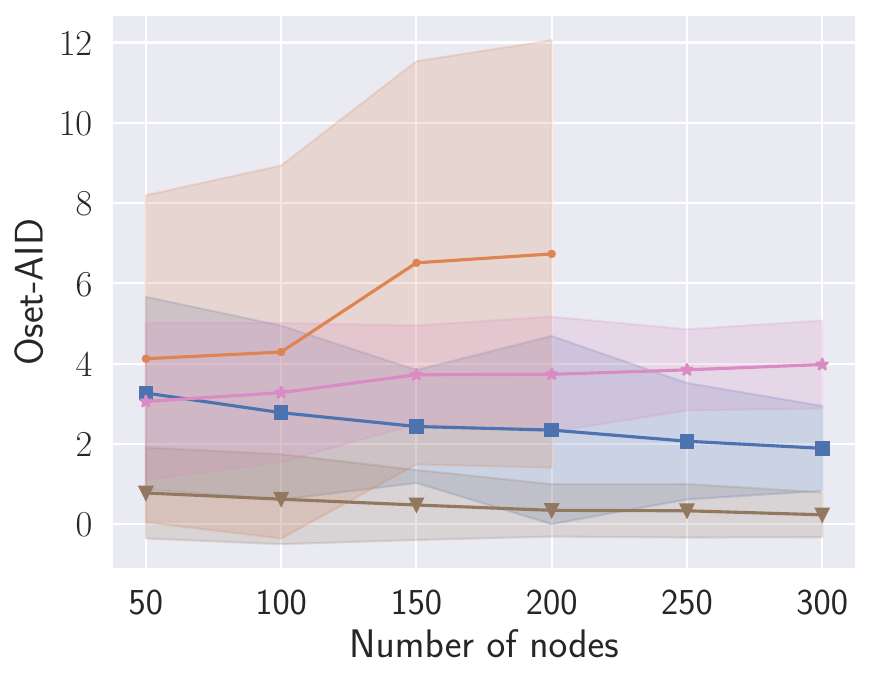}
            \caption*{Fisher-Z tests}
            \label{fig:oset_aid_per_node_ident_fshz_std}
        \end{subfigure}
        \begin{subfigure}[b]{0.24\linewidth}
            \includegraphics[width=\linewidth]{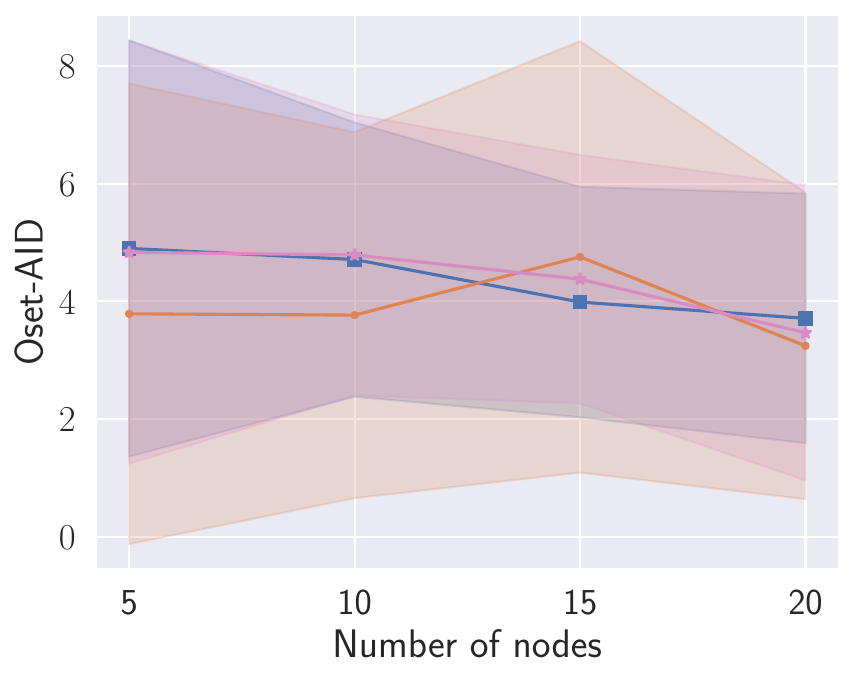}
            \caption*{KCI tests}
            \label{fig:oset_aid_per_node_ident_kci_std}
        \end{subfigure}
        \begin{subfigure}[b]{0.24\linewidth}
            \includegraphics[width=\linewidth]{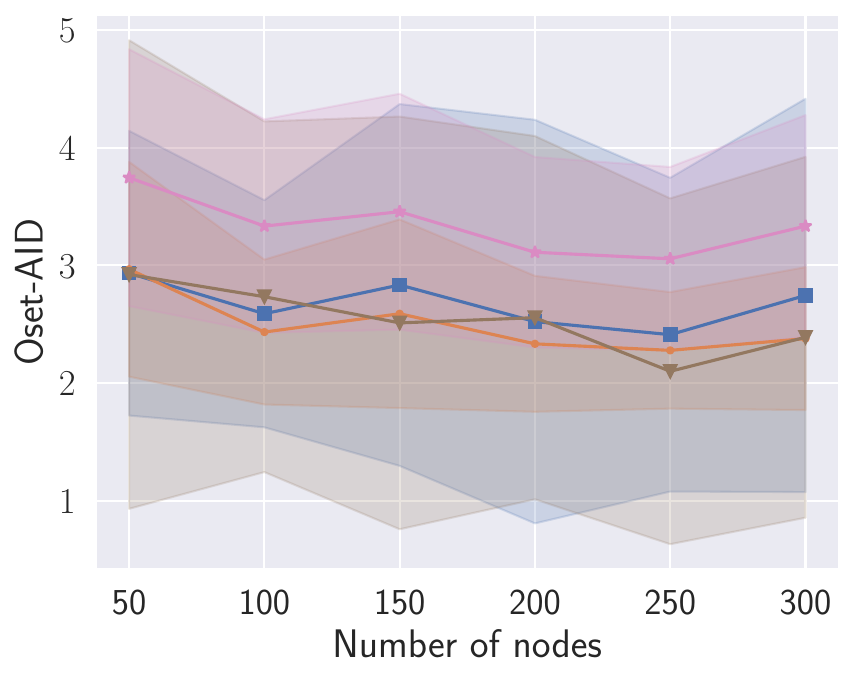}
            \caption*{$\chi^2$ tests}
            \label{fig:oset_aid_per_node_ident_chsq_std}
        \end{subfigure}
        \caption{Oset-AID.}
        \label{fig:oset_aid_per_node_ident_std}
    \end{subfigure}
    \caption{Adjustment identification distance over number of nodes for identifiable targets, with $n_{\mathbf{T}}=4, \overline{d} = 3, d_{\max}=10$ and $n_{\mathbf{D}} = 1000$ data-points. The shadow area denotes the range of the standard deviation.}
    \label{fig:aid_per_node_ident_std}
\end{figure}

\begin{figure}
    \centering
    \includegraphics[width=.6\linewidth]{experiments/legend_big.pdf}
    \begin{subfigure}[b]{\linewidth}
        \begin{subfigure}[b]{0.24\linewidth}
            \includegraphics[width=\linewidth]{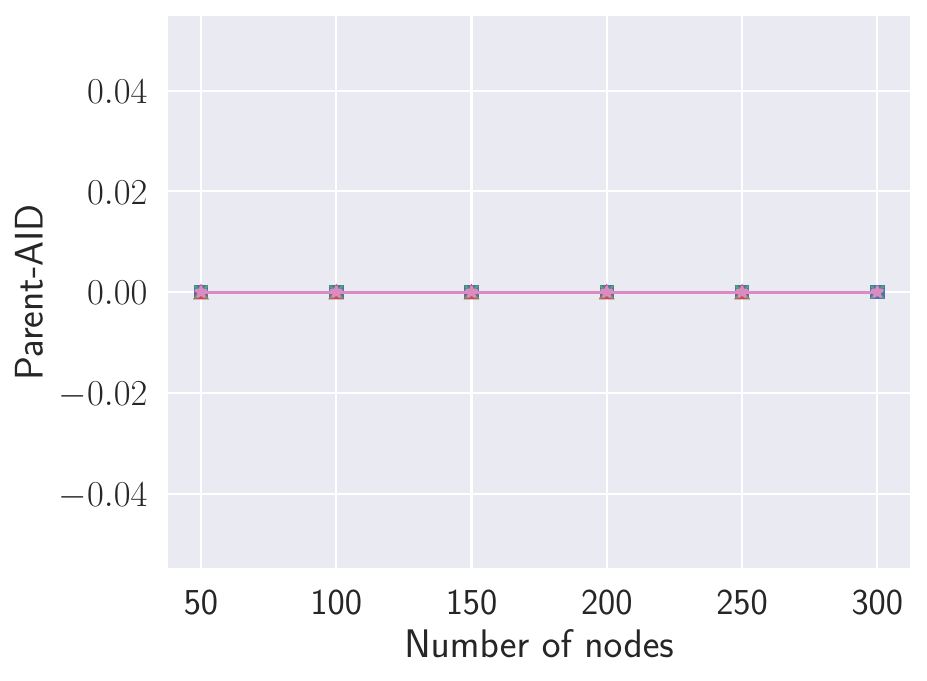}
            \caption*{d-separation tests}
            \label{fig:parent_aid_per_node_ident_dsep}
        \end{subfigure}
        \begin{subfigure}[b]{0.24\linewidth}
            \includegraphics[width=\linewidth]{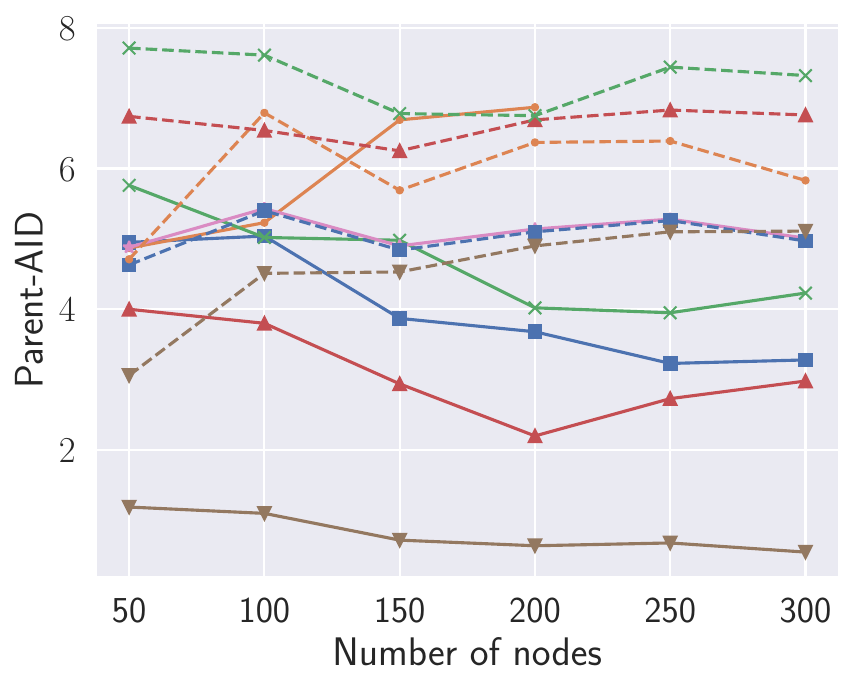}
            \caption*{Fisher-Z tests}
            \label{fig:parent_aid_per_node_ident_fshz}
        \end{subfigure}
        \begin{subfigure}[b]{0.24\linewidth}
            \includegraphics[width=\linewidth]{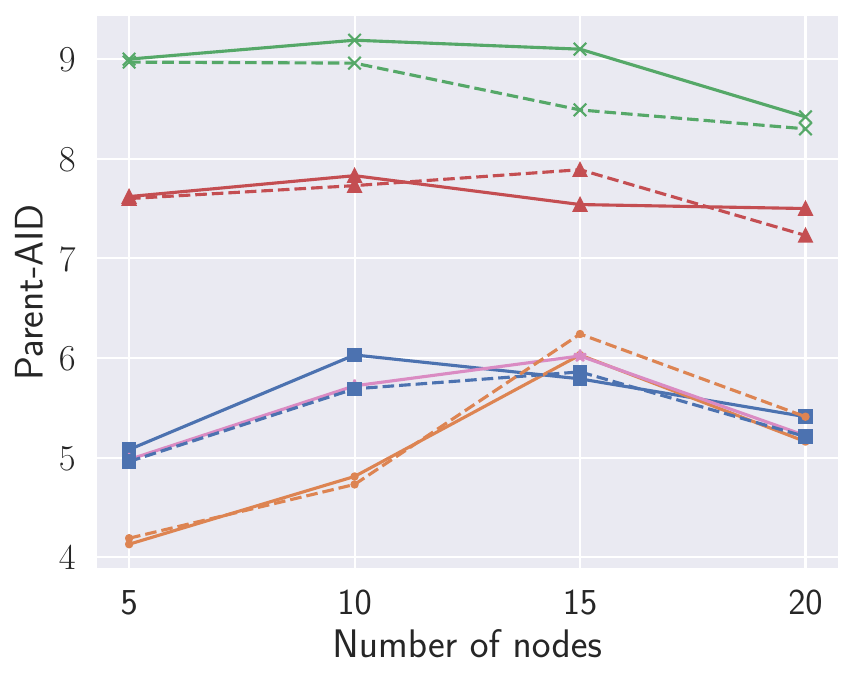}
            \caption*{KCI tests}
            \label{fig:parent_aid_per_node_ident_kci}
        \end{subfigure}
        \begin{subfigure}[b]{0.24\linewidth}
            \includegraphics[width=\linewidth]{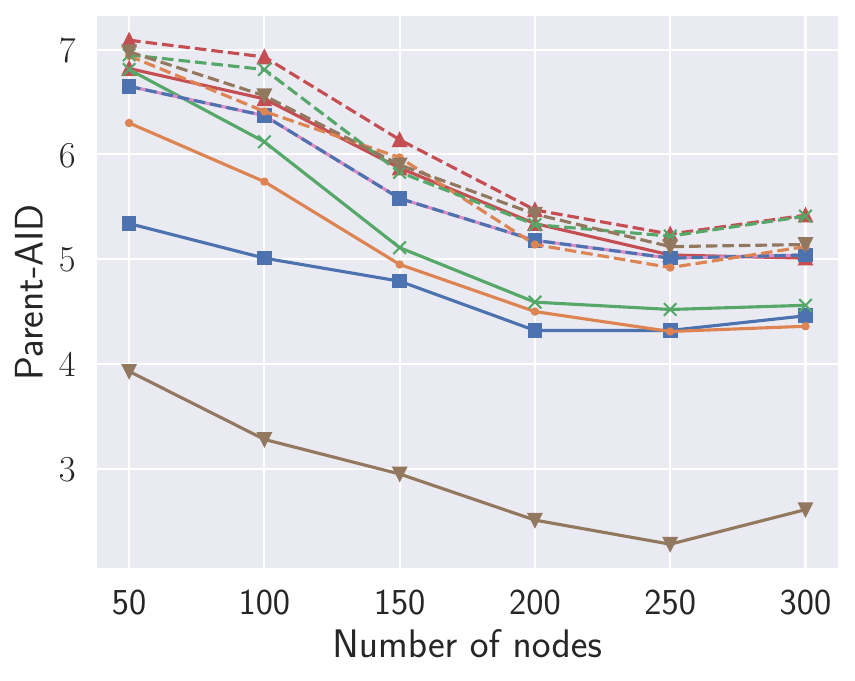}
            \caption*{$\chi^2$ tests}
            \label{fig:parent_aid_per_node_ident_chsq}
        \end{subfigure}
        \caption{Parent-AID.}
        \label{fig:parent_aid_per_node_ident}
    \end{subfigure}
    \begin{subfigure}[b]{\linewidth}
        \begin{subfigure}[b]{0.24\linewidth}
            \includegraphics[width=\linewidth]{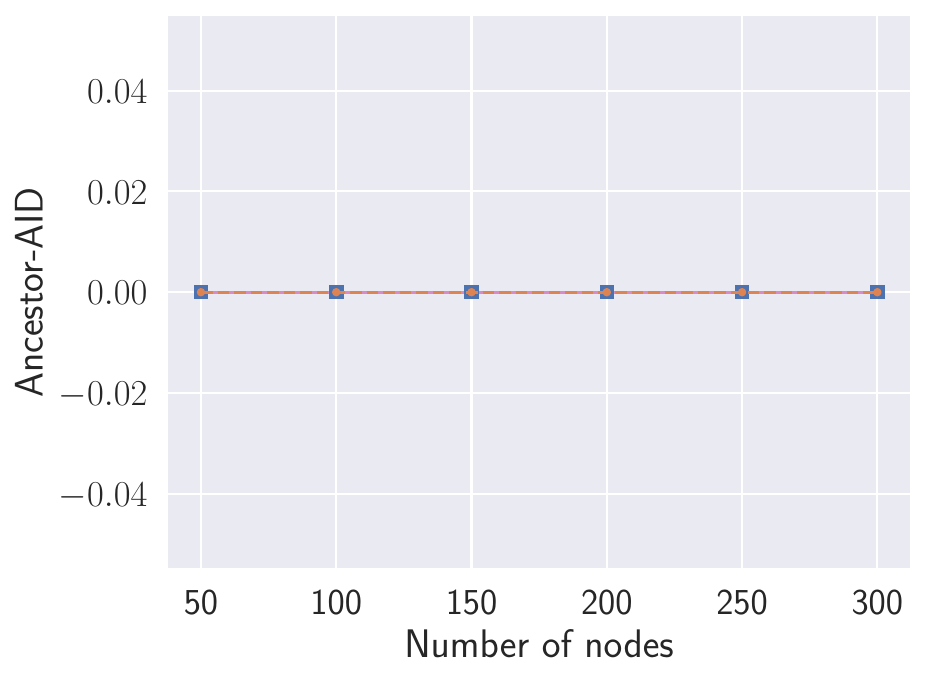}
            \caption*{d-separation tests}
            \label{fig:ancestor_aid_per_node_ident_dsep}
        \end{subfigure}
        \begin{subfigure}[b]{0.24\linewidth}
            \includegraphics[width=\linewidth]{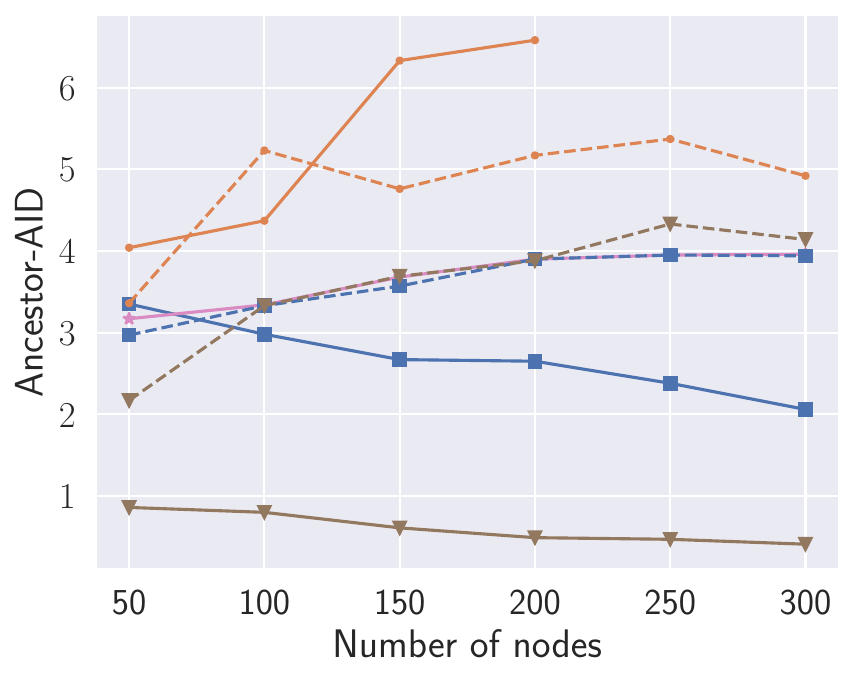}
            \caption*{Fisher-Z tests}
            \label{fig:ancestor_aid_per_node_ident_fshz}
        \end{subfigure}
        \begin{subfigure}[b]{0.24\linewidth}
            \includegraphics[width=\linewidth]{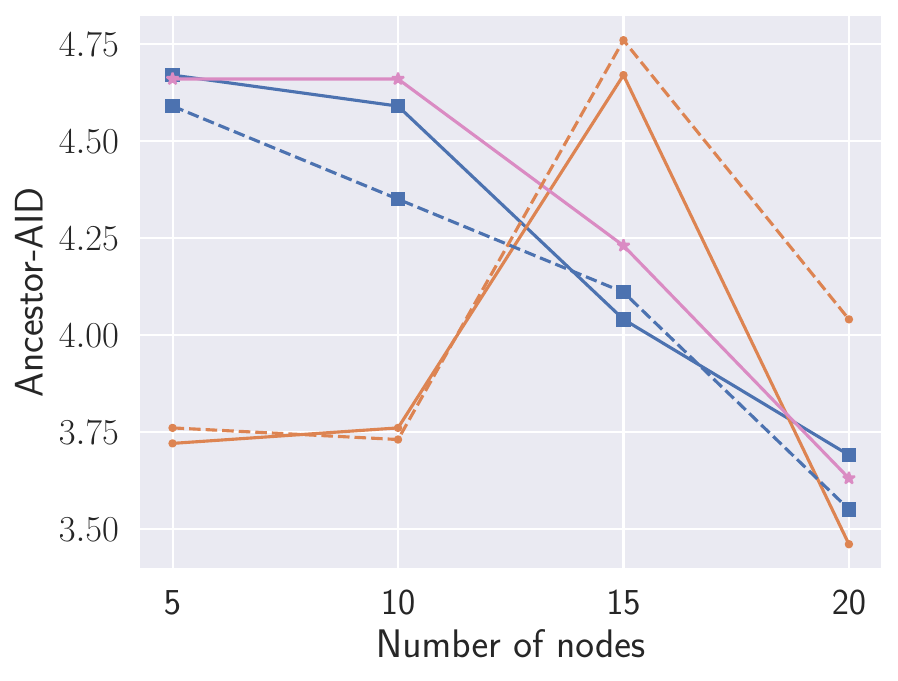}
            \caption*{KCI tests}
            \label{fig:ancestor_aid_per_node_ident_kci}
        \end{subfigure}
        \begin{subfigure}[b]{0.24\linewidth}
            \includegraphics[width=\linewidth]{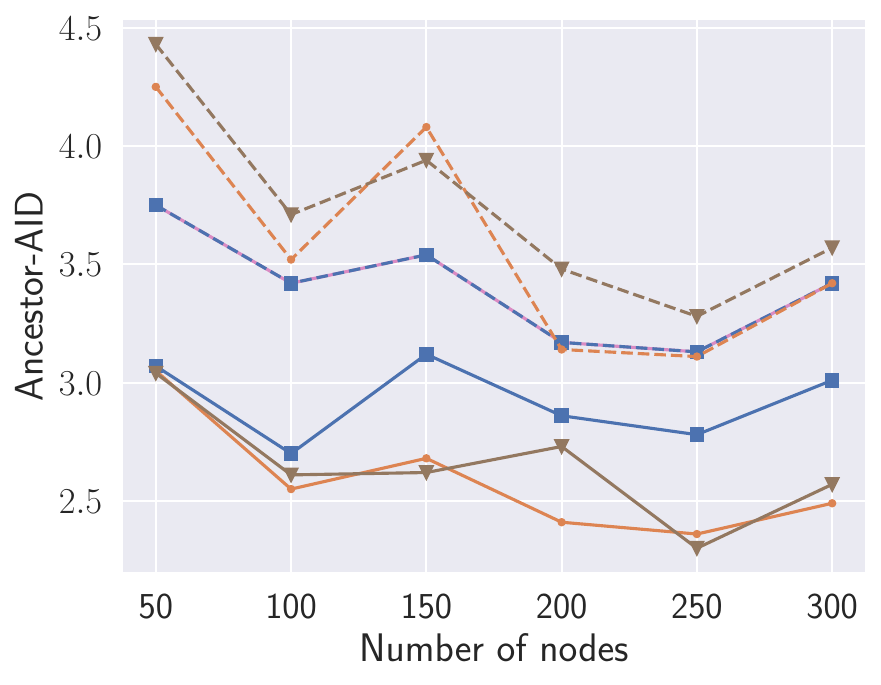}
            \caption*{$\chi^2$ tests}
            \label{fig:ancestor_aid_per_node_ident_chsq}
        \end{subfigure}
        \caption{Ancestor-AID.}
        \label{fig:ancestor_aid_per_node_ident}
    \end{subfigure}
    \begin{subfigure}[b]{\linewidth}
        \begin{subfigure}[b]{0.24\linewidth}
            \includegraphics[width=\linewidth]{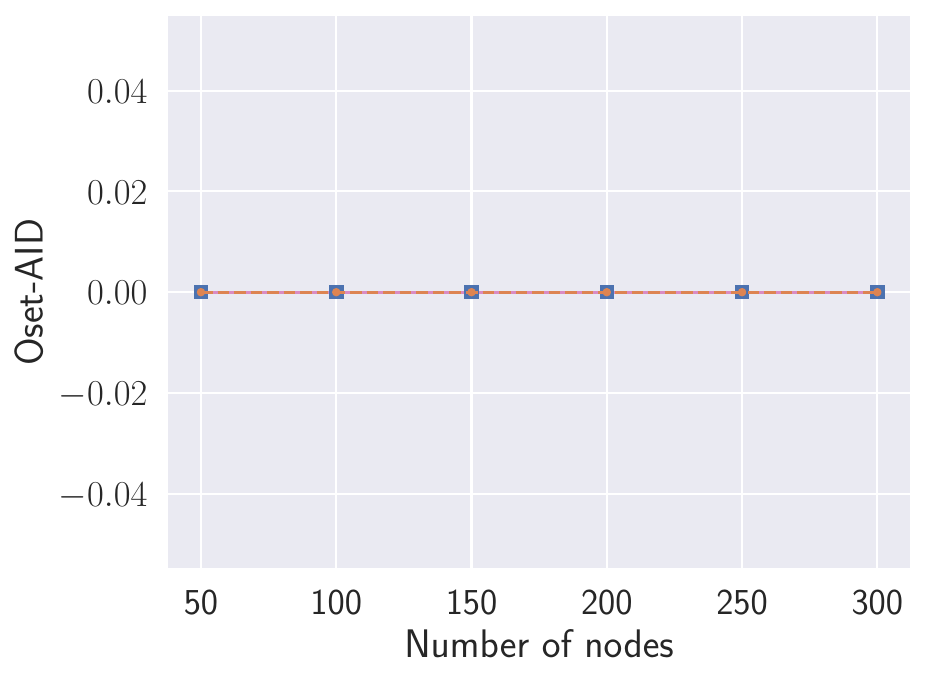}
            \caption*{d-separation tests}
            \label{fig:oset_aid_per_node_ident_dsep}
        \end{subfigure}
        \begin{subfigure}[b]{0.24\linewidth}
            \includegraphics[width=\linewidth]{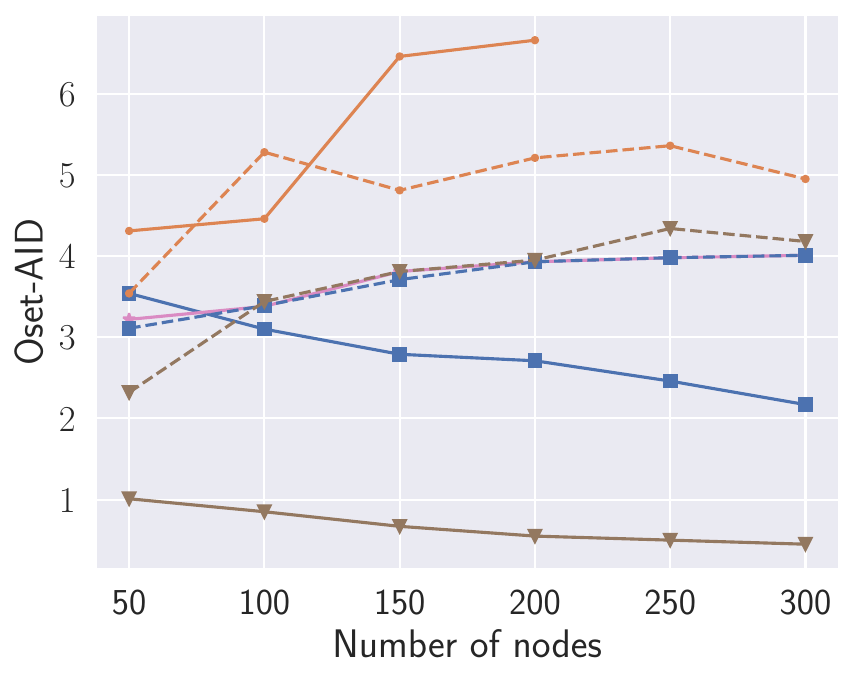}
            \caption*{Fisher-Z tests}
            \label{fig:oset_aid_per_node_ident_fshz}
        \end{subfigure}
        \begin{subfigure}[b]{0.24\linewidth}
            \includegraphics[width=\linewidth]{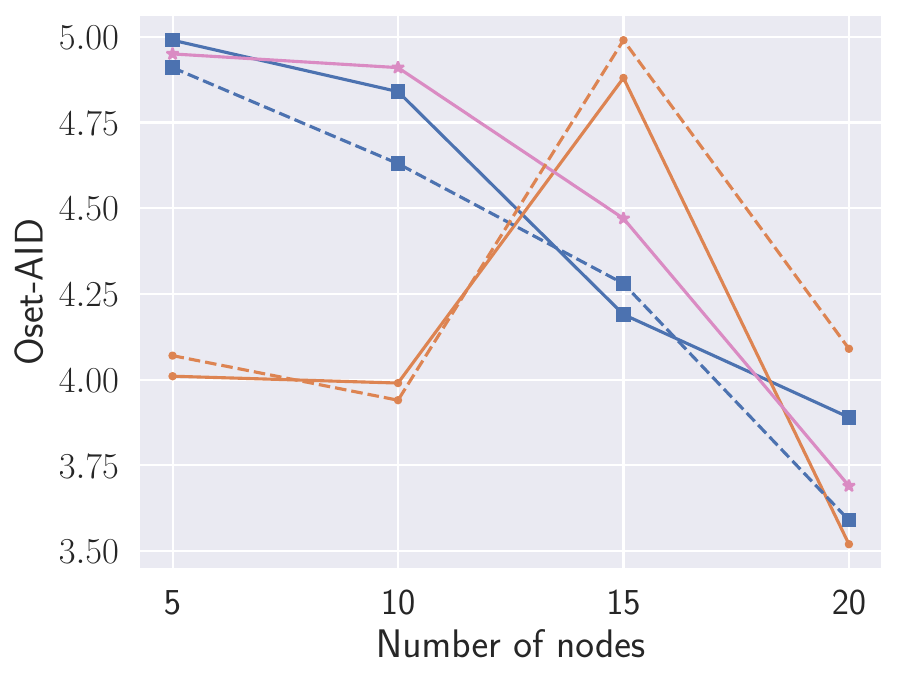}
            \caption*{KCI tests}
            \label{fig:oset_aid_per_node_ident_kci}
        \end{subfigure}
        \begin{subfigure}[b]{0.24\linewidth}
            \includegraphics[width=\linewidth]{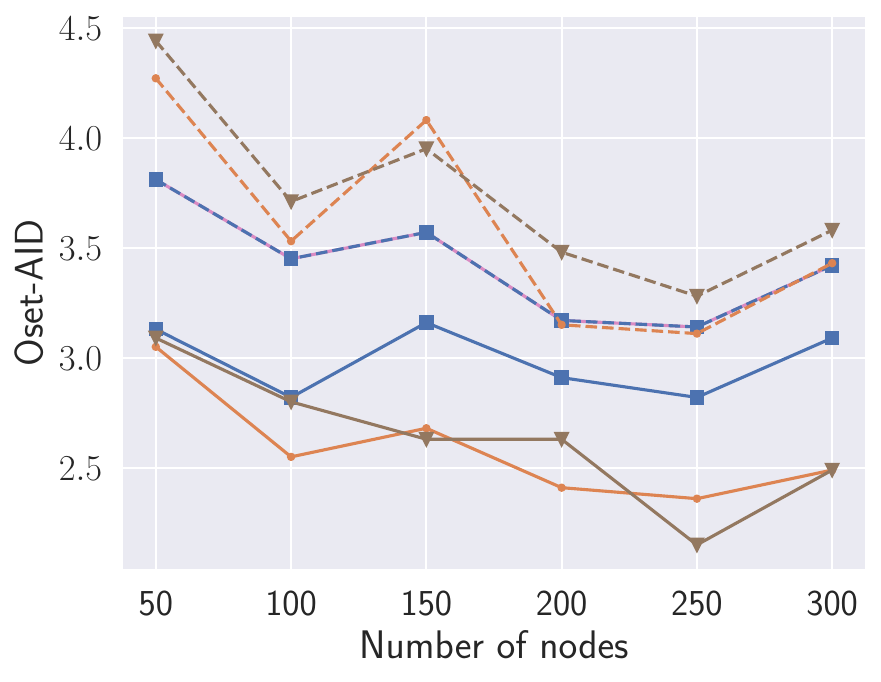}
            \caption*{$\chi^2$ tests}
            \label{fig:oset_aid_per_node_ident_chsq}
        \end{subfigure}
        \caption{Oset-AID.}
        \label{fig:oset_aid_per_node_ident}
    \end{subfigure}
    \caption{Adjustment identification distance over number of nodes for identifiable targets, with $n_{\mathbf{T}}=4, \overline{d} = 3, d_{\max}=10$ and $n_{\mathbf{D}} = 1000$ data-points. We also show baseline methods combined with SNAP$(0)$.}
    \label{fig:aid_per_node_ident}
\end{figure}

\subsection{Various numbers of targets}
\label{app:over_targets}

\Cref{fig:computation_per_target_unrest_std,fig:quality_per_target_unrest_std,fig:computation_per_target_unrest,fig:quality_per_target_unrest} and \Cref{fig:computation_per_target_ident_std,fig:quality_per_target_ident_std,fig:computation_per_target_ident,fig:quality_per_target_ident} show results for varying number of random and identifiable targets respectively, with all other parameters fixed.
In terms of \ac{CI} tests and computation time, SNAP is much more robust against different number of targets than local methods, while maintaining a comparable intervention distance.
Furthermore, SHD is only worse compared to other methods when using Fisher-Z tests on linear Gaussian data.

\begin{figure}
    \centering
    \includegraphics[width=.6\linewidth]{experiments/legend_big.pdf}
    \begin{subfigure}[b]{\linewidth}
        \centering
        \begin{subfigure}[b]{0.24\linewidth}
            \includegraphics[width=\linewidth]{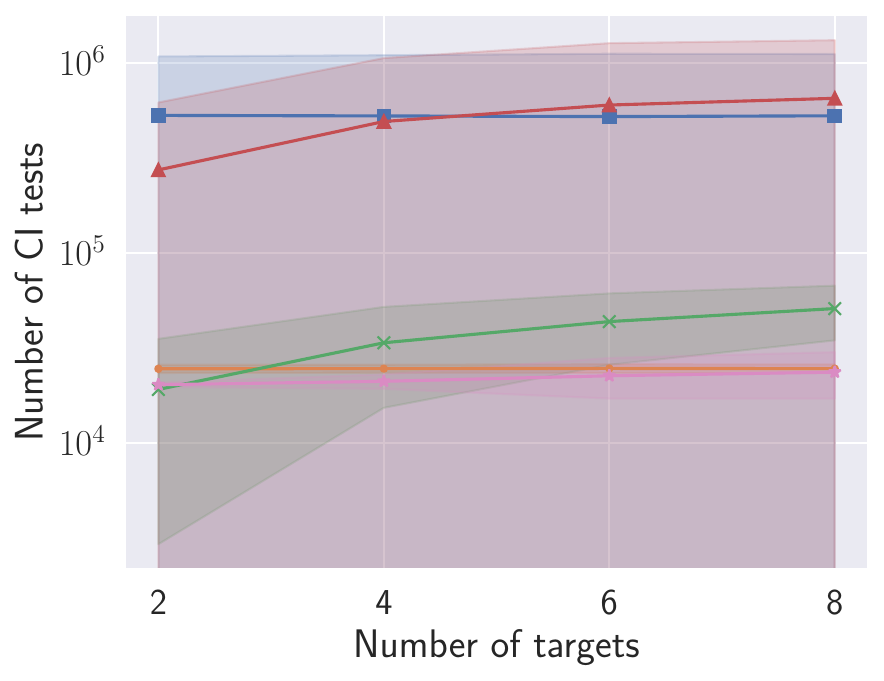}
            \caption*{d-separation tests}
            \label{fig:test_per_target_unrest_dsep_std}
        \end{subfigure}
        \begin{subfigure}[b]{0.24\linewidth}
            \includegraphics[width=\linewidth]{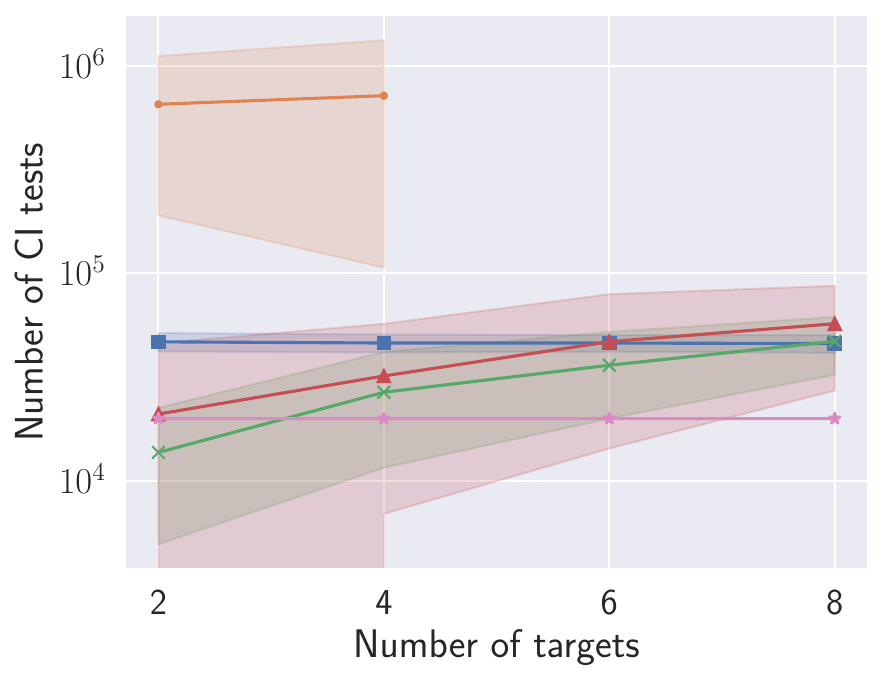}
            \caption*{Fisher-Z tests}
            \label{fig:test_per_target_unrest_fshz_std}
        \end{subfigure}
        \begin{subfigure}[b]{0.24\linewidth}
            \includegraphics[width=\linewidth]{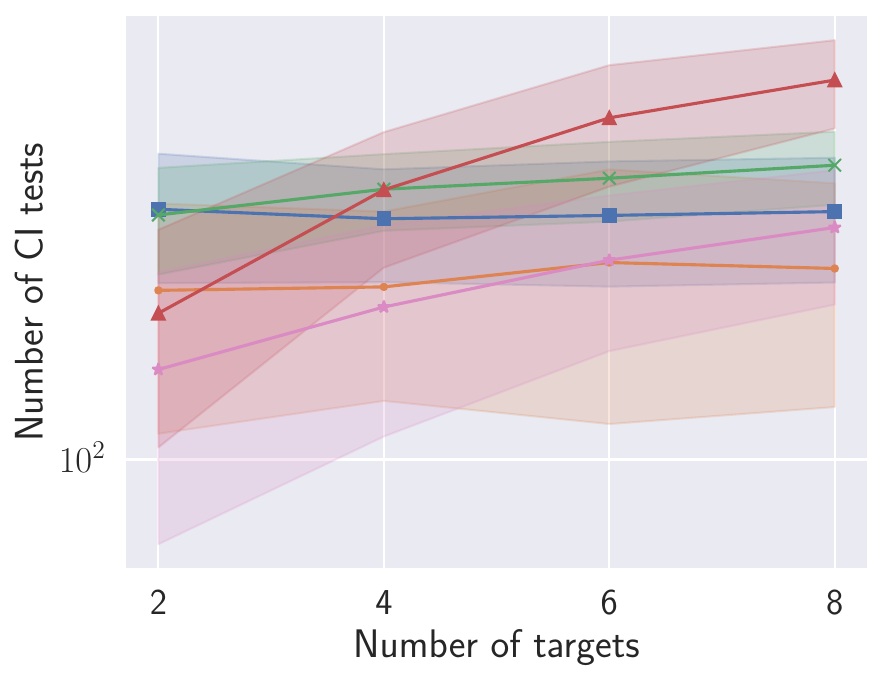}
            \caption*{KCI tests}
            \label{fig:test_per_target_unrest_kci_std}
        \end{subfigure}
        \begin{subfigure}[b]{0.24\linewidth}
            \includegraphics[width=\linewidth]{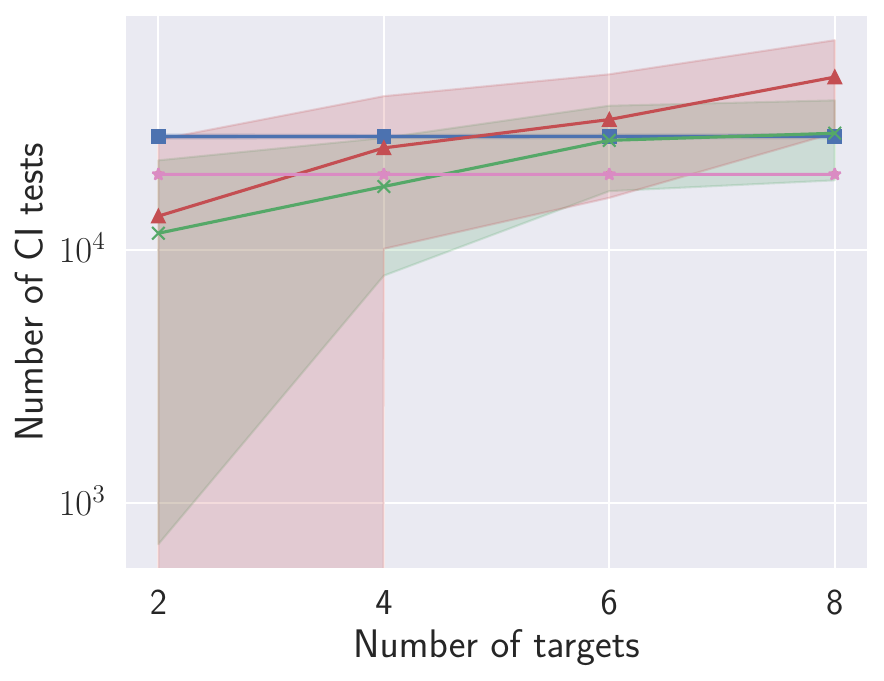}
            \caption*{$\chi^2$ tests}
            \label{fig:test_per_target_unrest_chsq_std}
        \end{subfigure}
        \caption{Number of \ac{CI} tests.}
        \label{fig:test_per_target_unrest_std}
    \end{subfigure}
    \begin{subfigure}[b]{\linewidth}
        \centering
        \begin{subfigure}[b]{0.24\linewidth}
            \includegraphics[width=\linewidth]{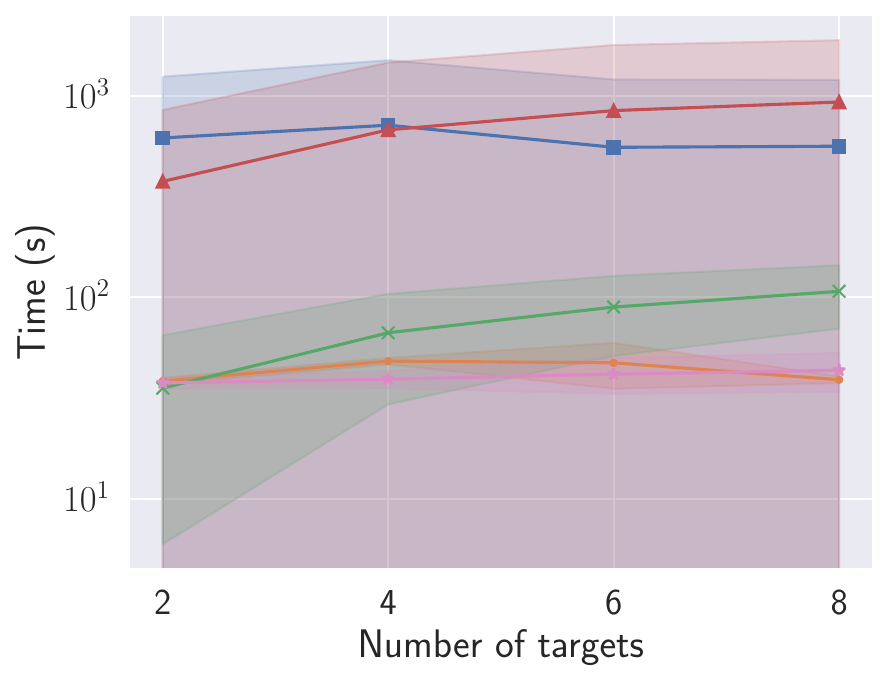}
            \caption*{d-separation tests}
            \label{fig:time_per_target_unrest_dsep_std}
        \end{subfigure}
        \begin{subfigure}[b]{0.24\linewidth}
            \includegraphics[width=\linewidth]{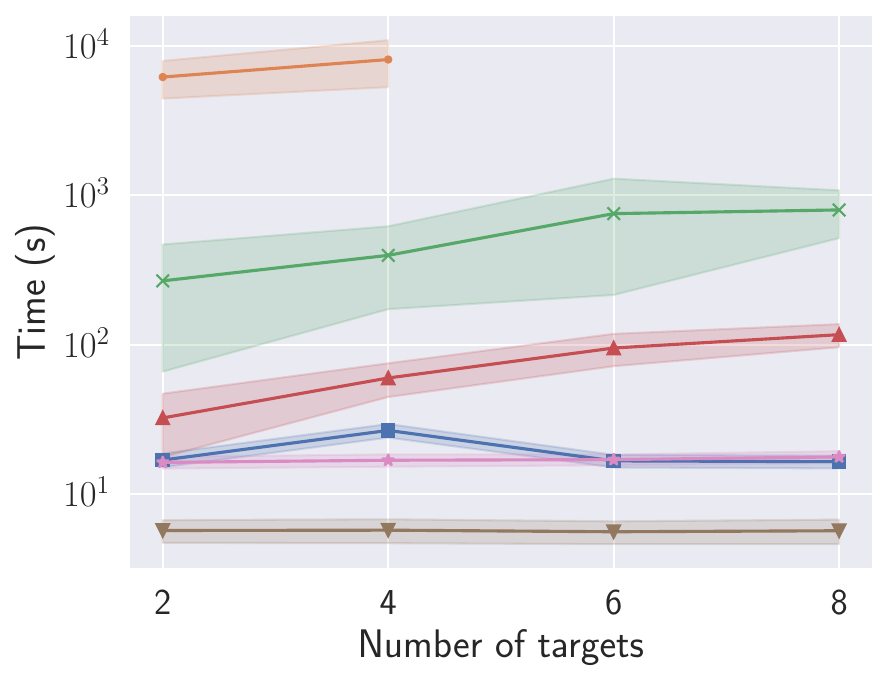}
            \caption*{Fisher-Z tests}
            \label{fig:time_per_target_unrest_fshz_std}
        \end{subfigure}
        \begin{subfigure}[b]{0.24\linewidth}
            \includegraphics[width=\linewidth]{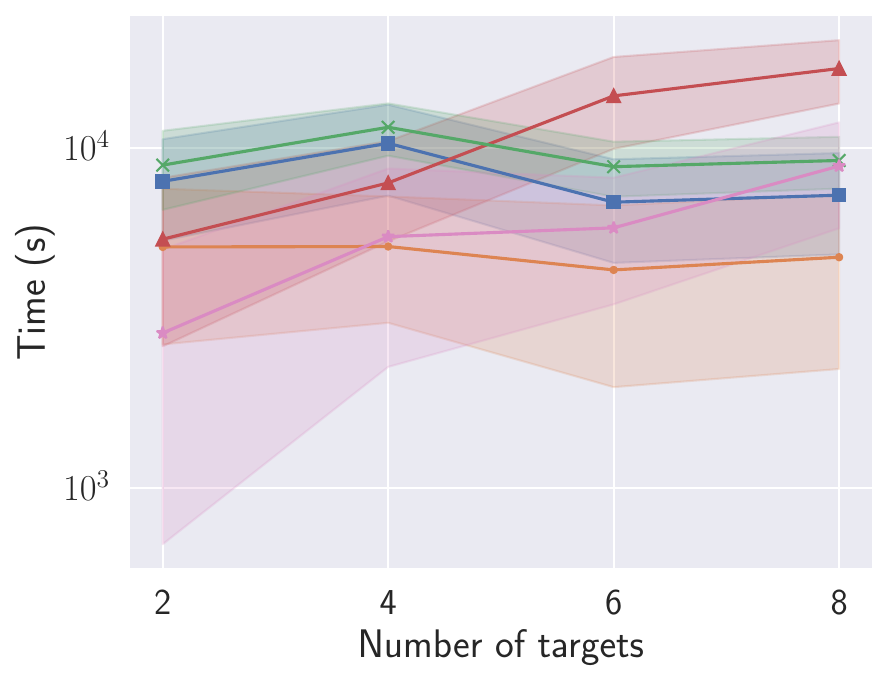}
            \caption*{KCI tests}
            \label{fig:time_per_target_unrest_kci_std}
        \end{subfigure}
        \begin{subfigure}[b]{0.24\linewidth}
            \includegraphics[width=\linewidth]{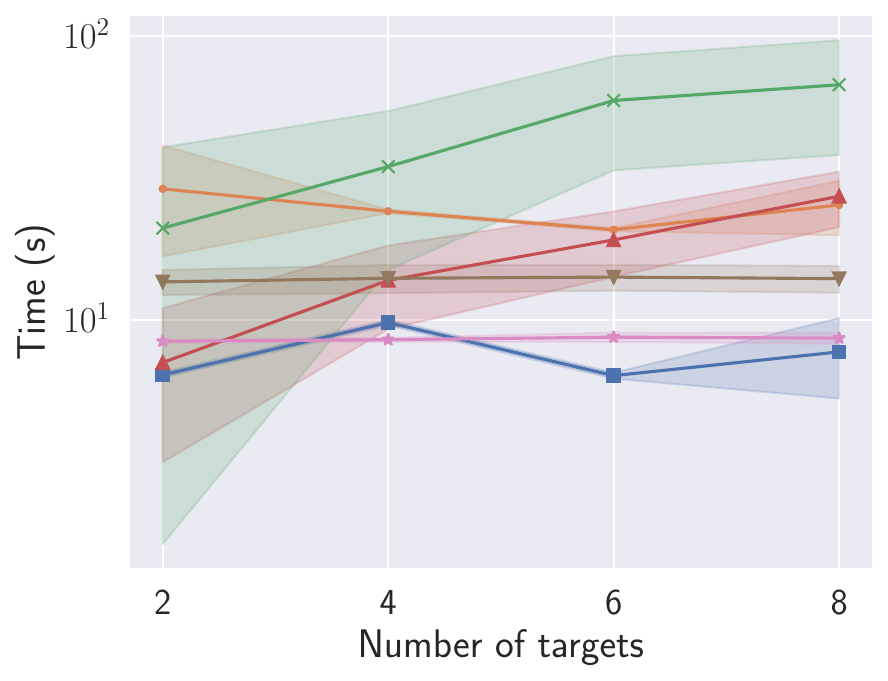}
            \caption*{$\chi^2$ tests}
            \label{fig:time_per_target_unrest_chsq_std}
        \end{subfigure}
        \caption{Computation time.}
        \label{fig:time_per_target_unrest_std}
    \end{subfigure}
    \caption{Number of \ac{CI} tests and computation time over number of targets, with $n_{\mathbf{V}}=10$ for KCI tests and $n_{\mathbf{V}}=200$ otherwise, $\overline{d} = 3, d_{\max}=10$ and $n_{\mathbf{D}} = 1000$ data-points. The shadow area denotes the range of the standard deviation. We plot values on a $\log$ scale.}
    \label{fig:computation_per_target_unrest_std}
\end{figure}

\begin{figure}
    \centering
    \includegraphics[width=.6\linewidth]{experiments/legend_big.pdf}
    \begin{subfigure}[b]{\linewidth}
        \centering
        \begin{subfigure}[b]{0.24\linewidth}
            \includegraphics[width=\linewidth]{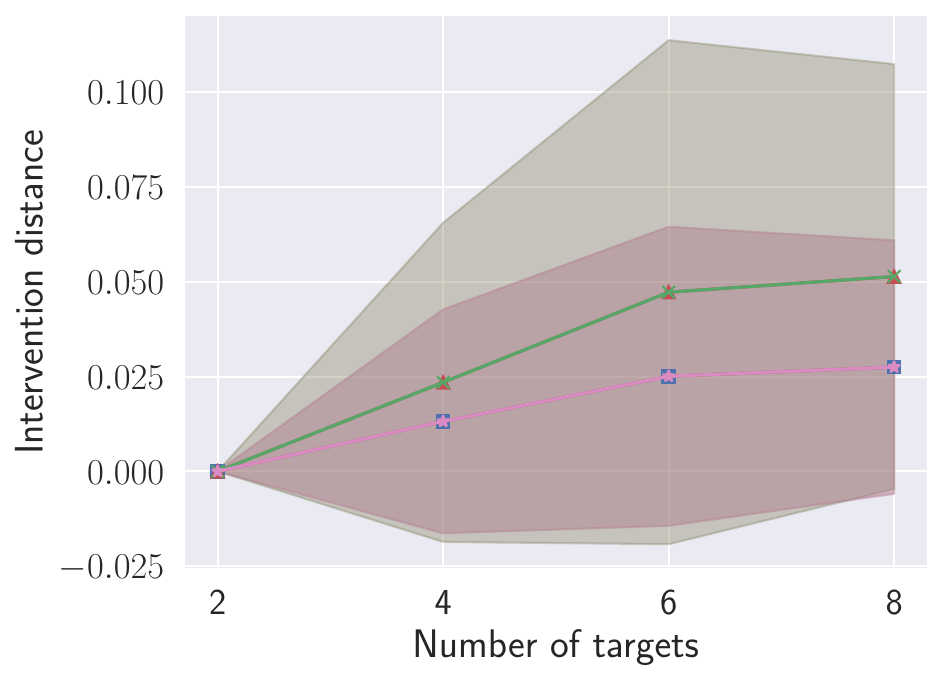}
            \caption*{d-separation tests}
            \label{fig:int_dist_per_target_unrest_dsep_abs_std}
        \end{subfigure}
        \begin{subfigure}[b]{0.24\linewidth}
            \includegraphics[width=\linewidth]{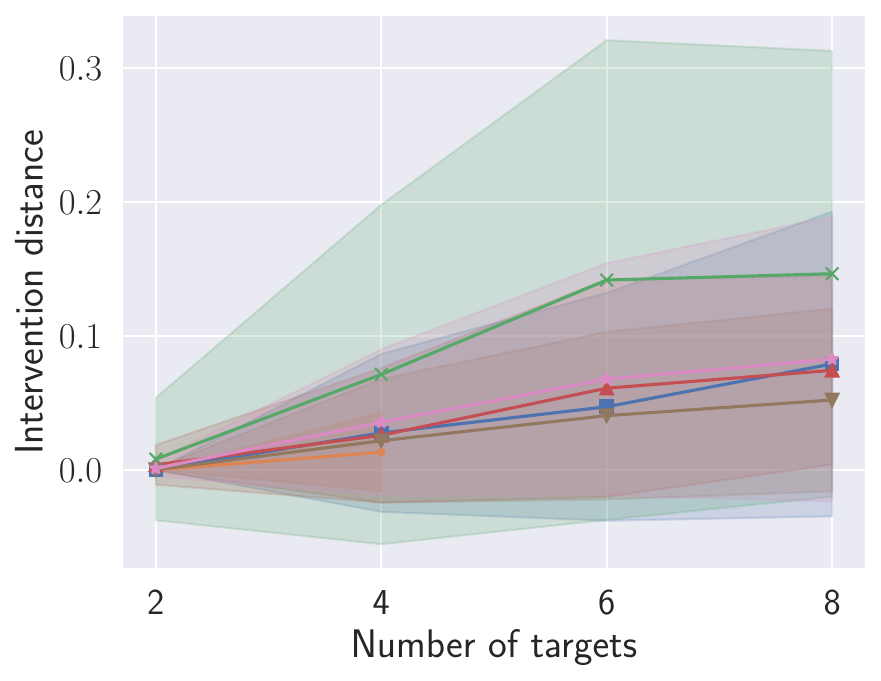}
            \caption*{Fisher-Z tests}
            \label{fig:int_dist_per_target_unrest_fshz_abs_std}
        \end{subfigure}
        \begin{subfigure}[b]{0.24\linewidth}
            \includegraphics[width=\linewidth]{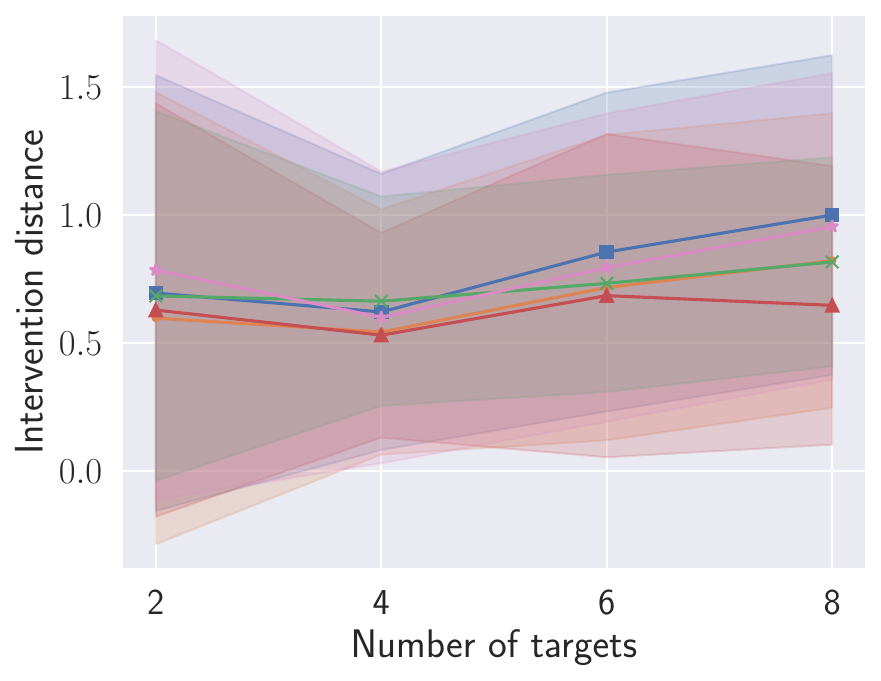}
            \caption*{KCI tests}
            \label{fig:int_dist_per_target_unrest_kci_abs_std}
        \end{subfigure}
        \begin{subfigure}[b]{0.24\linewidth}
            \includegraphics[width=\linewidth]{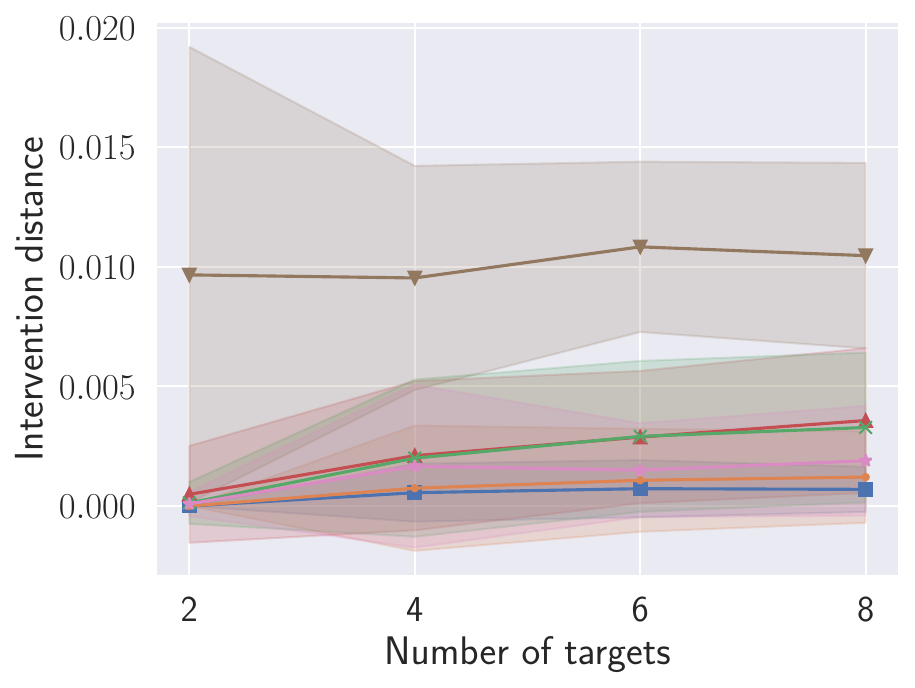}
            \caption*{$\chi^2$ tests}
            \label{fig:int_dist_per_target_unrest_chsq_abs_std}
        \end{subfigure}
        \caption{Intervention distance}
        \label{fig:int_dist_per_target_unrest_abs_std}
    \end{subfigure}
    \begin{subfigure}[b]{\linewidth}
        \centering
        \begin{subfigure}[b]{0.24\linewidth}
            \includegraphics[width=\linewidth]{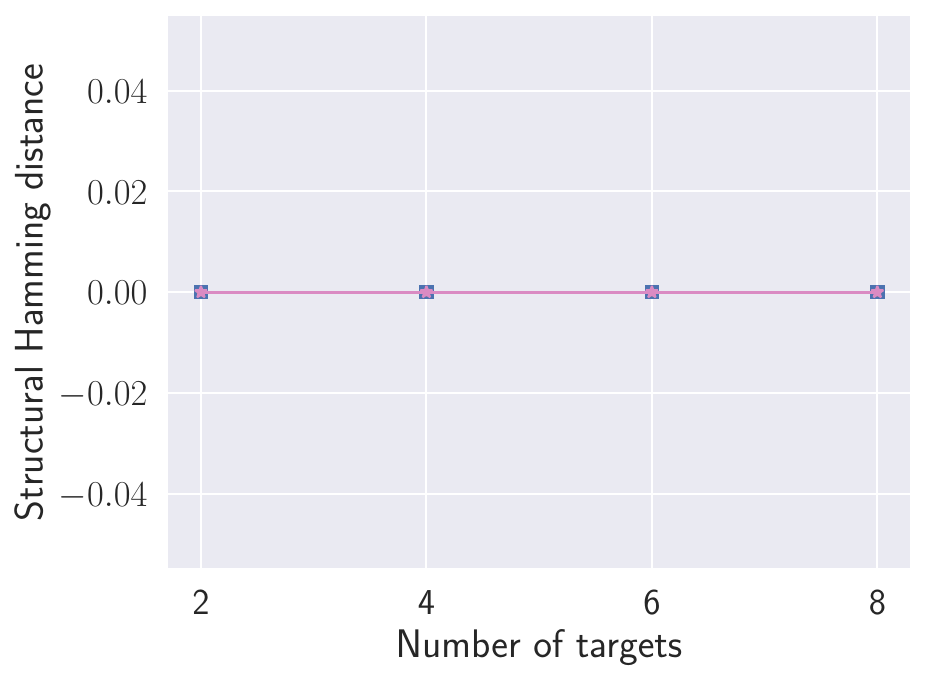}
            \caption*{d-separation tests}
            \label{fig:shd_per_target_unrest_dsep_std}
        \end{subfigure}
        \begin{subfigure}[b]{0.24\linewidth}
            \includegraphics[width=\linewidth]{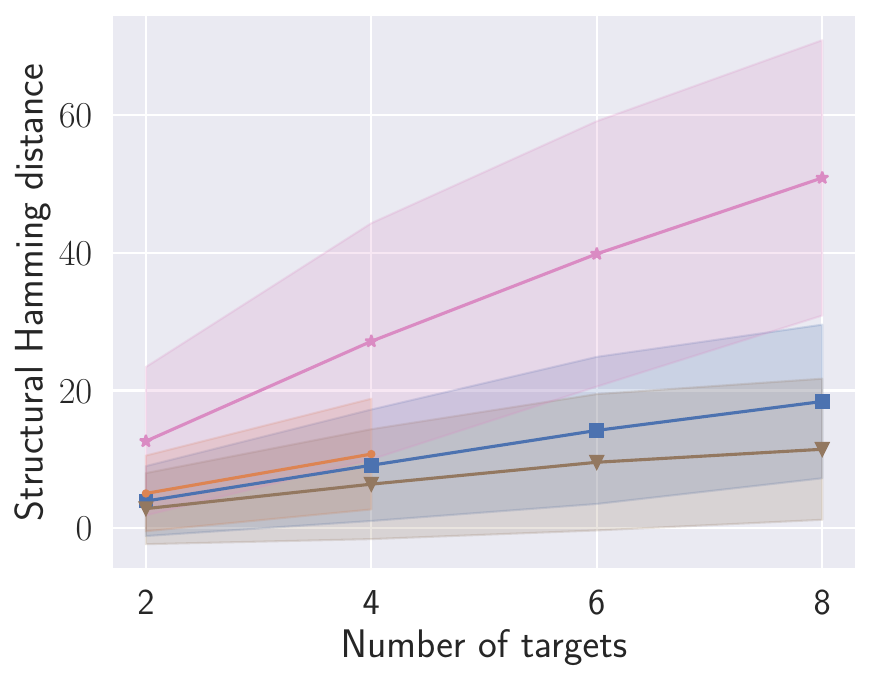}
            \caption*{Fisher-Z tests}
            \label{fig:shd_per_target_unrest_fshz_std}
        \end{subfigure}
        \begin{subfigure}[b]{0.24\linewidth}
            \includegraphics[width=\linewidth]{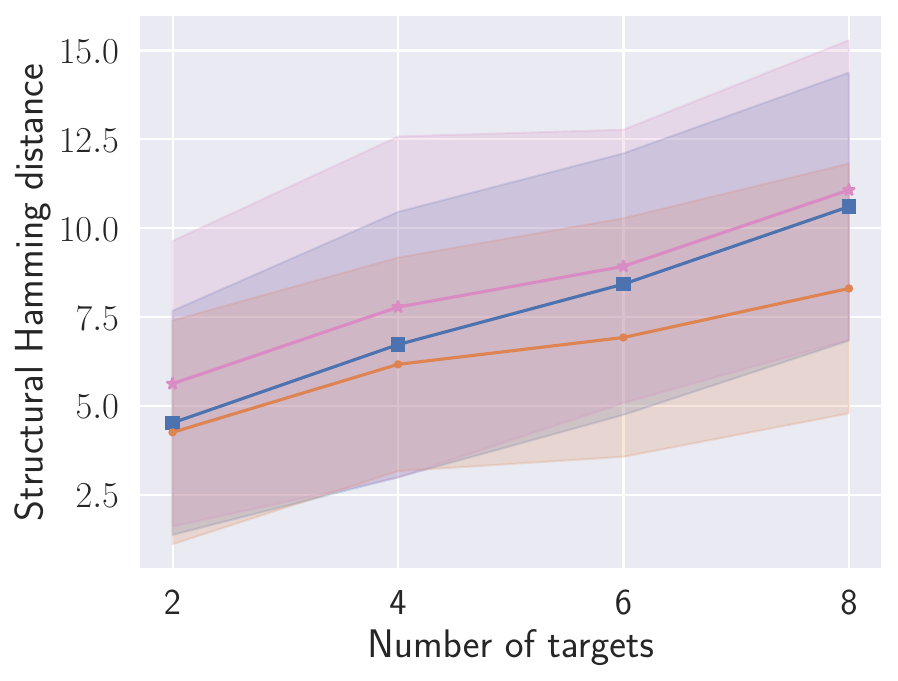}
            \caption*{KCI tests}
            \label{fig:shd_per_target_unrest_kci_std}
        \end{subfigure}
        \begin{subfigure}[b]{0.24\linewidth}
            \includegraphics[width=\linewidth]{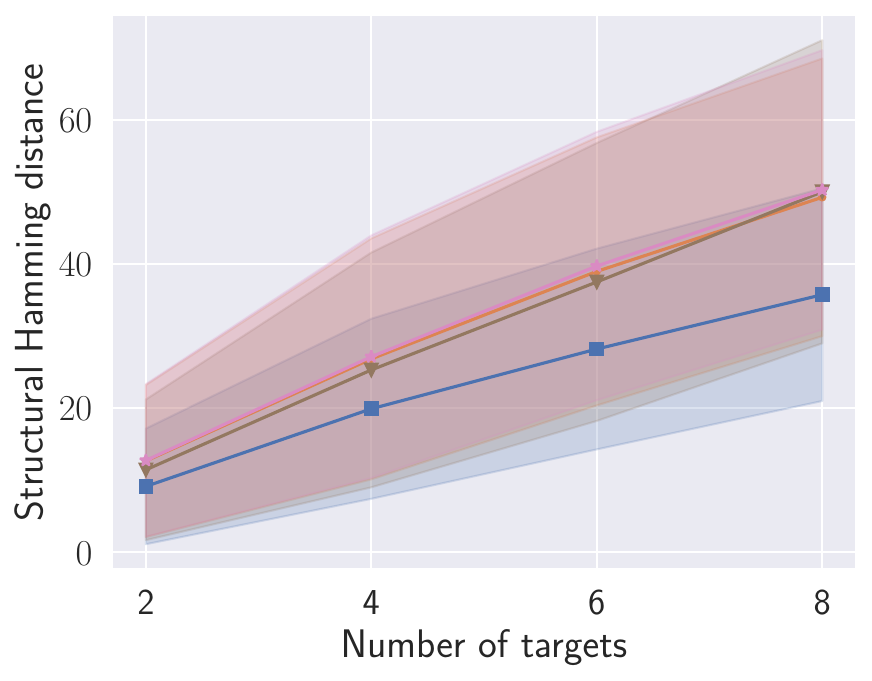}
            \caption*{$\chi^2$ tests}
            \label{fig:shd_per_target_unrest_chsq_std}
        \end{subfigure}
        \caption{\Acl{SHD}}
        \label{fig:shd_per_target_unrest_std}
    \end{subfigure}
    \caption{Estimation quality over number of targets, with $n_{\mathbf{V}}=10$ for KCI tests and $n_{\mathbf{V}}=200$ otherwise, $\overline{d} = 3, d_{\max}=10$ and $n_{\mathbf{D}} = 1000$ data-points. The shadow area denotes the range of the standard deviation.}
    \label{fig:quality_per_target_unrest_std}
\end{figure}

\begin{figure}
    \centering
    \includegraphics[width=.6\linewidth]{experiments/legend_big.pdf}
    \begin{subfigure}[b]{\linewidth}
        \centering
        \begin{subfigure}[b]{0.24\linewidth}
            \includegraphics[width=\linewidth]{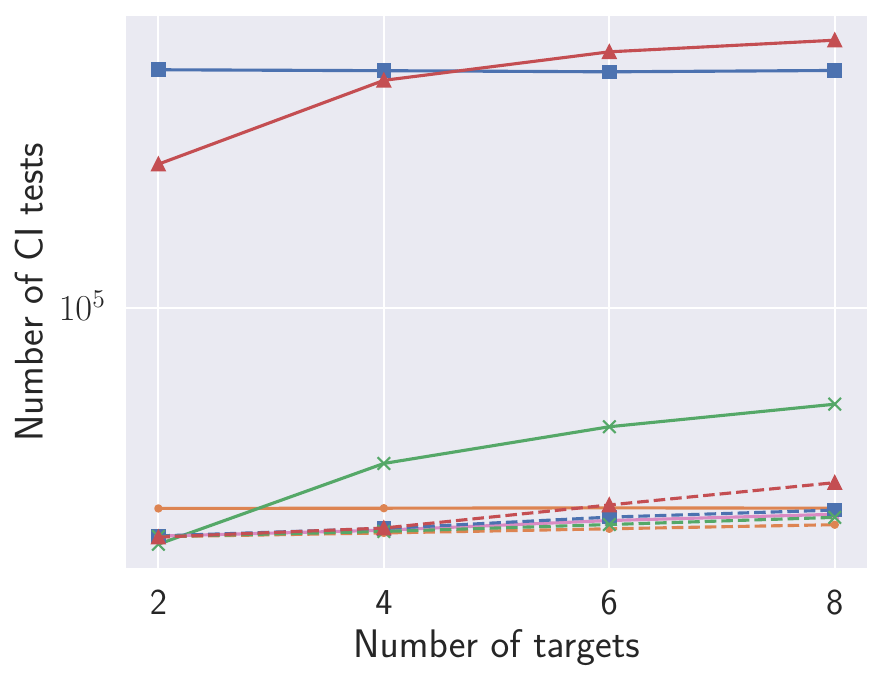}
            \caption*{d-separation tests}
            \label{fig:test_per_target_unrest_dsep}
        \end{subfigure}
        \begin{subfigure}[b]{0.24\linewidth}
            \includegraphics[width=\linewidth]{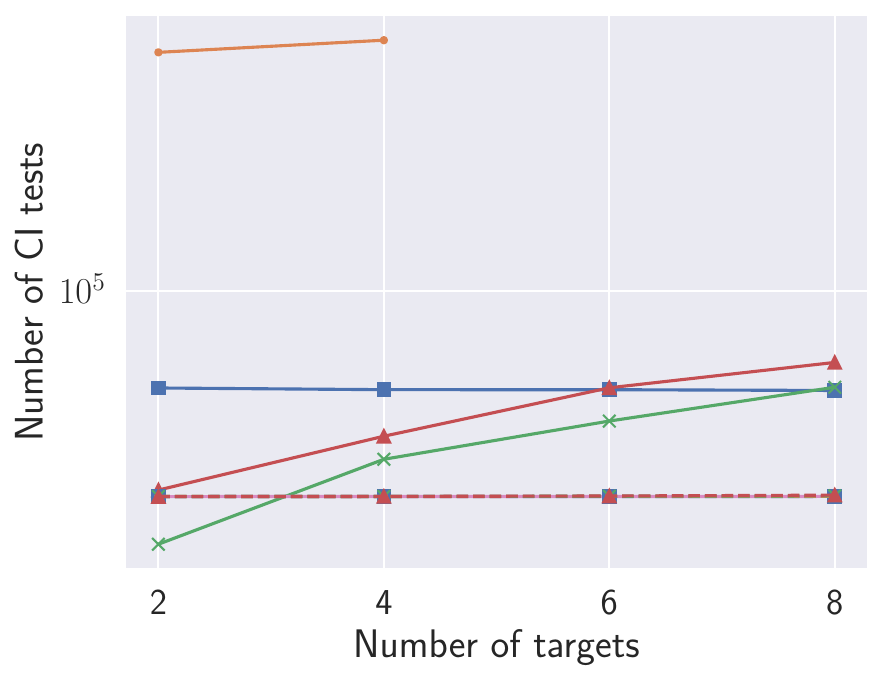}
            \caption*{Fisher-Z tests}
            \label{fig:test_per_target_unrest_fshz}
        \end{subfigure}
        \begin{subfigure}[b]{0.24\linewidth}
            \includegraphics[width=\linewidth]{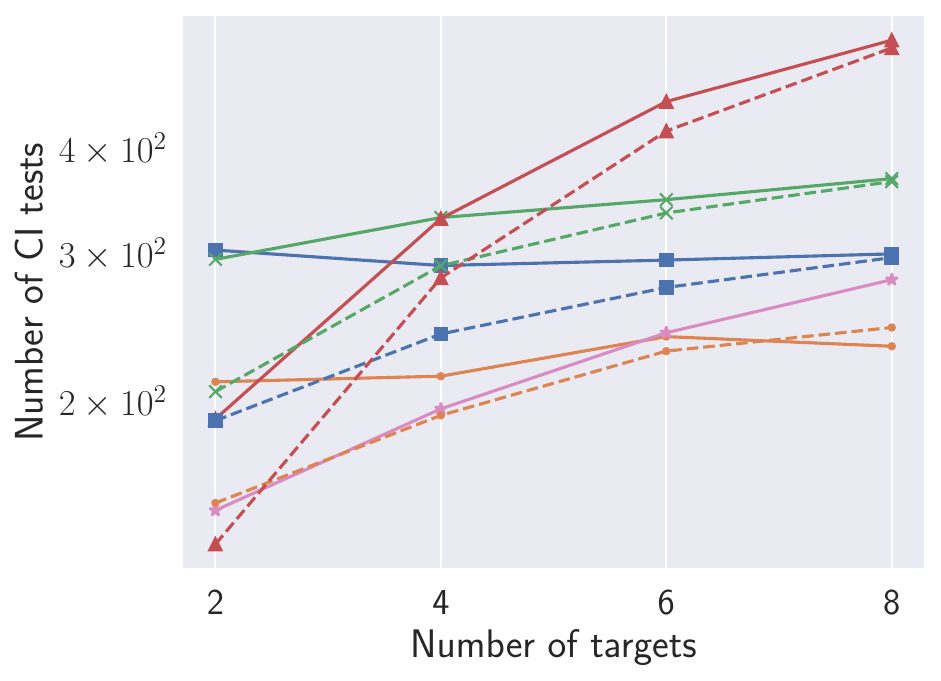}
            \caption*{KCI tests}
            \label{fig:test_per_target_unrest_kci}
        \end{subfigure}
        \begin{subfigure}[b]{0.24\linewidth}
            \includegraphics[width=\linewidth]{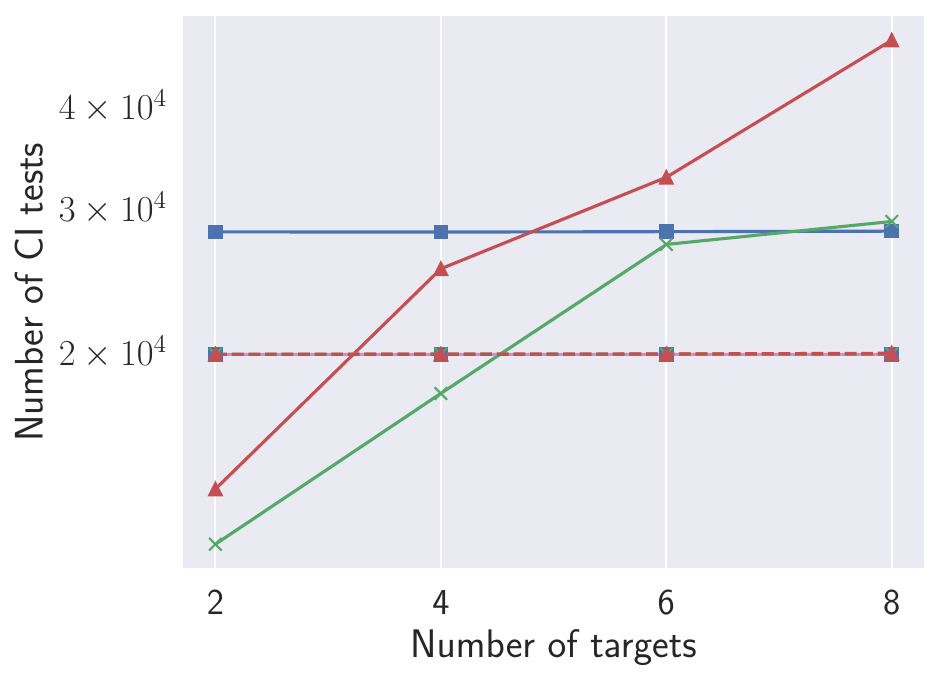}
            \caption*{$\chi^2$ tests}
            \label{fig:test_per_target_unrest_chsq}
        \end{subfigure}
        \caption{Number of \ac{CI} tests.}
        \label{fig:test_per_target_unrest_no}
    \end{subfigure}
    \begin{subfigure}[b]{\linewidth}
        \centering
        \begin{subfigure}[b]{0.24\linewidth}
            \includegraphics[width=\linewidth]{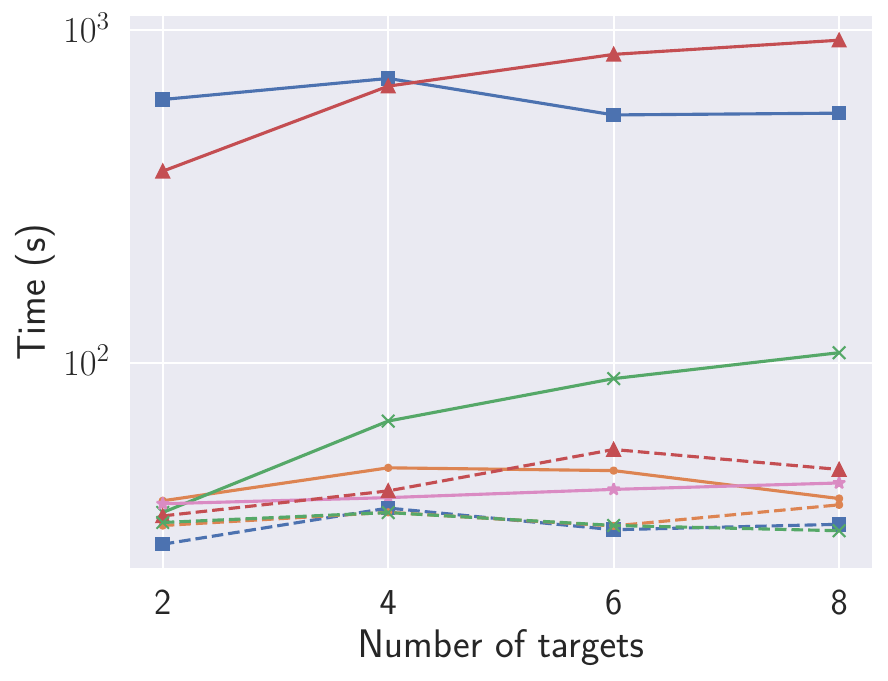}
            \caption*{d-separation tests}
            \label{fig:time_per_target_unrest_dsep}
        \end{subfigure}
        \begin{subfigure}[b]{0.24\linewidth}
            \includegraphics[width=\linewidth]{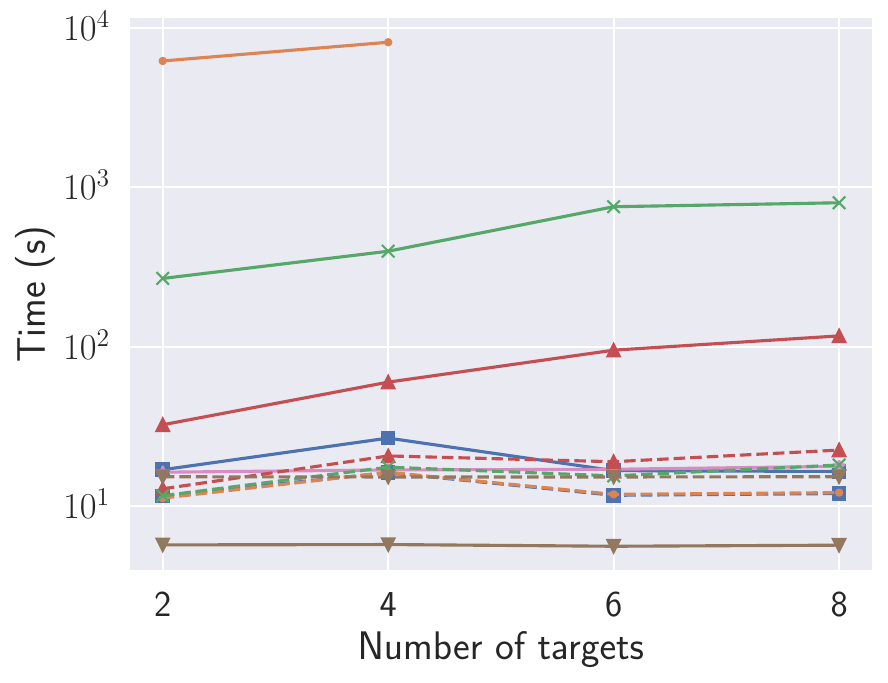}
            \caption*{Fisher-Z tests}
            \label{fig:time_per_target_unrest_fshz}
        \end{subfigure}
        \begin{subfigure}[b]{0.24\linewidth}
            \includegraphics[width=\linewidth]{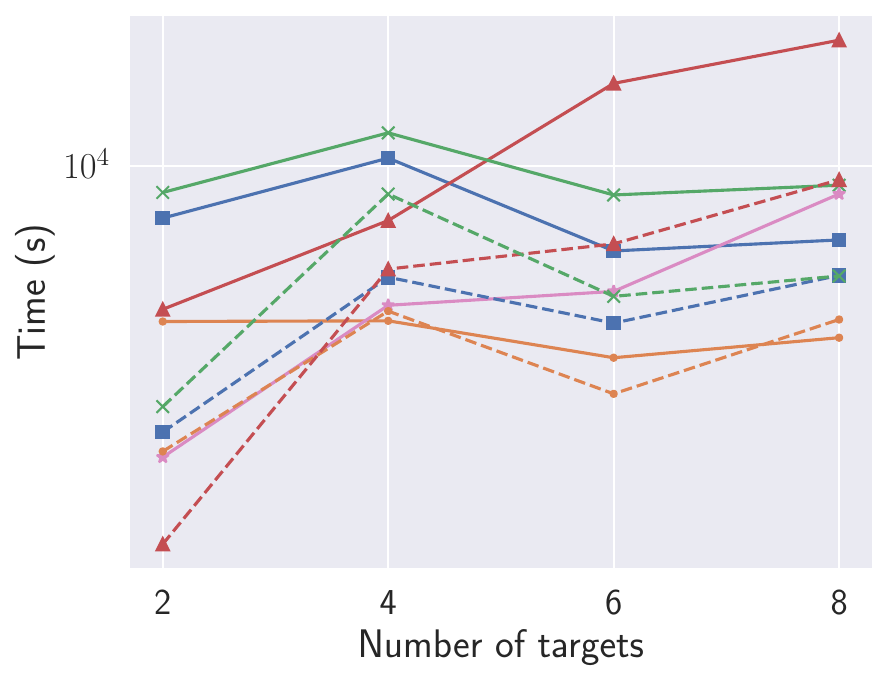}
            \caption*{KCI tests}
            \label{fig:time_per_target_unrest_kci}
        \end{subfigure}
        \begin{subfigure}[b]{0.24\linewidth}
            \includegraphics[width=\linewidth]{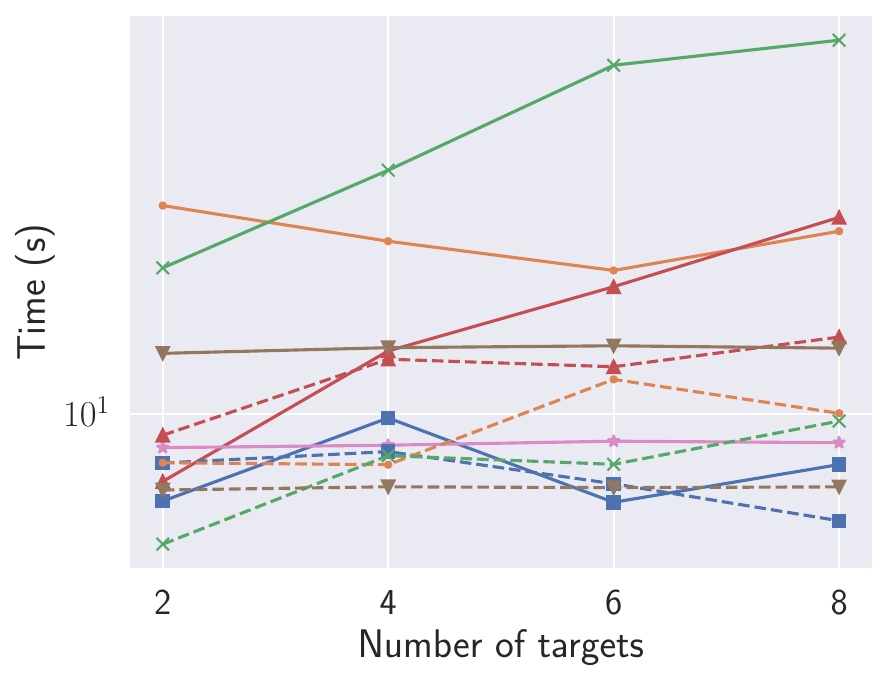}
            \caption*{$\chi^2$ tests}
            \label{fig:time_per_target_unrest_chsq}
        \end{subfigure}
        \caption{Computation time.}
        \label{fig:time_per_target_unrest}
    \end{subfigure}
    \caption{Number of \ac{CI} tests and computation time over number of targets, with $n_{\mathbf{V}}=10$ for KCI tests and $n_{\mathbf{V}}=200$ otherwise, $\overline{d} = 3, d_{\max}=10$ and $n_{\mathbf{D}} = 1000$ data-points. We also show baseline methods combined with SNAP$(0)$. We plot values on a $\log$ scale.}
    \label{fig:computation_per_target_unrest}
\end{figure}

\begin{figure}
    \centering
    \includegraphics[width=.6\linewidth]{experiments/legend_big.pdf}
    \begin{subfigure}[b]{\linewidth}
        \centering
        \begin{subfigure}[b]{0.24\linewidth}
            \includegraphics[width=\linewidth]{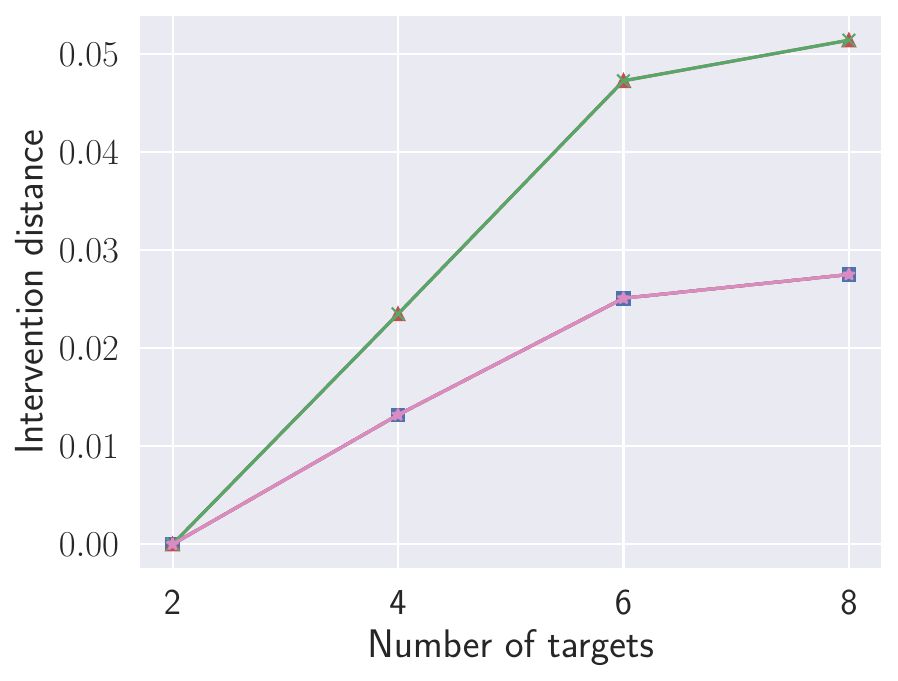}
            \caption*{d-separation tests}
            \label{fig:int_dist_per_target_unrest_dsep_abs}
        \end{subfigure}
        \begin{subfigure}[b]{0.24\linewidth}
            \includegraphics[width=\linewidth]{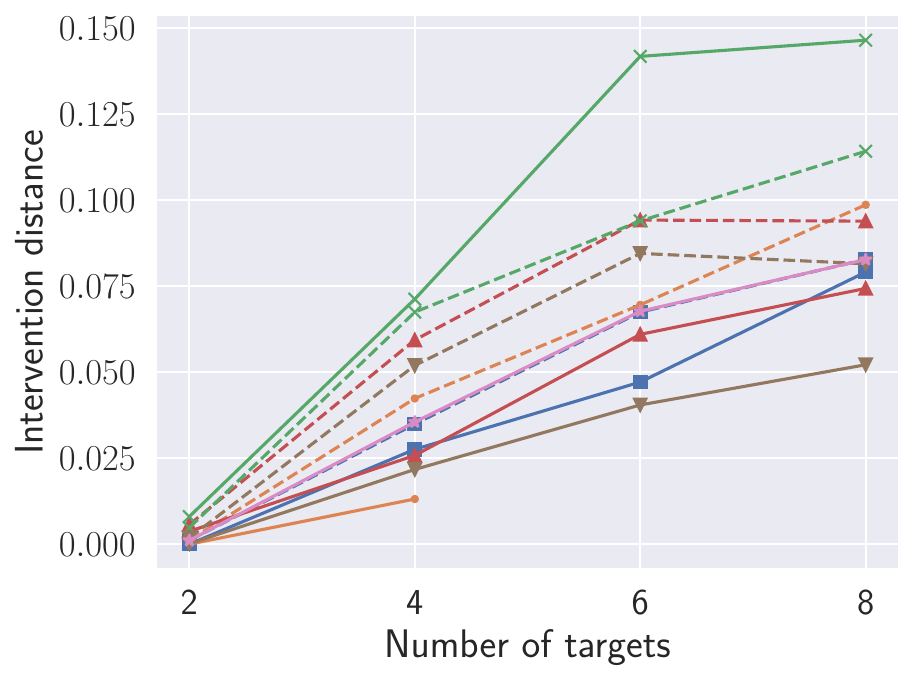}
            \caption*{Fisher-Z tests}
            \label{fig:int_dist_per_target_unrest_fshz_abs}
        \end{subfigure}
        \begin{subfigure}[b]{0.24\linewidth}
            \includegraphics[width=\linewidth]{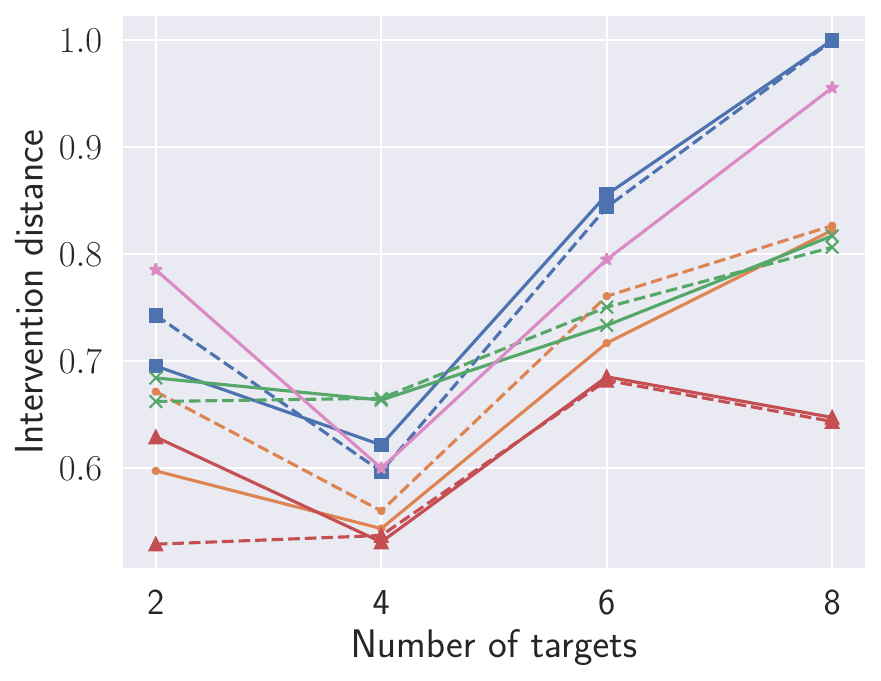}
            \caption*{KCI tests}
            \label{fig:int_dist_per_target_unrest_kci_abs}
        \end{subfigure}
        \begin{subfigure}[b]{0.24\linewidth}
            \includegraphics[width=\linewidth]{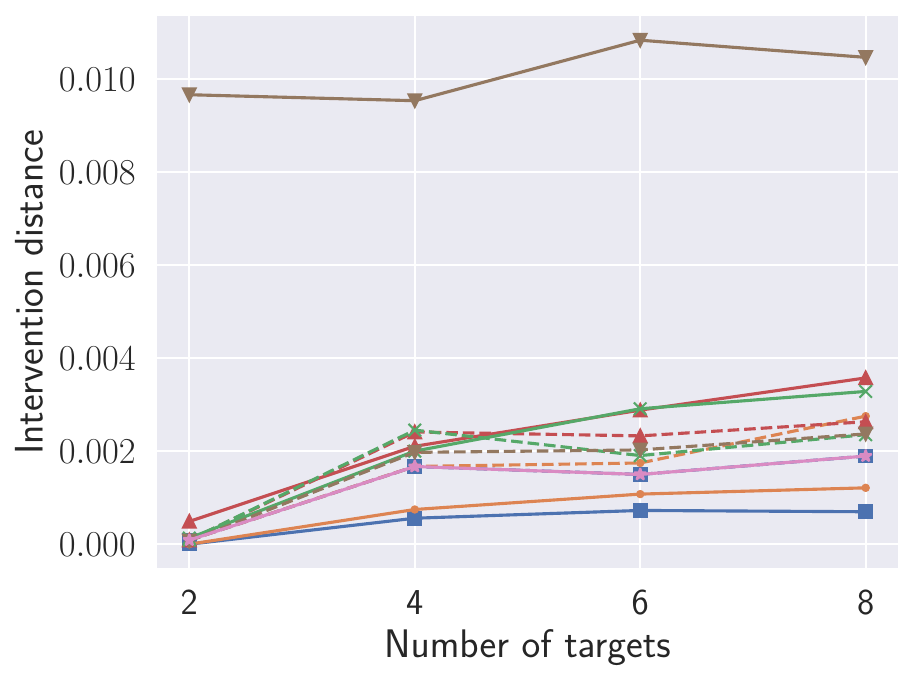}
            \caption*{$\chi^2$ tests}
            \label{fig:int_dist_per_target_unrest_chsq_abs}
        \end{subfigure}
        \caption{Intervention distance}
        \label{fig:int_dist_per_target_unrest_abs}
    \end{subfigure}
    \begin{subfigure}[b]{\linewidth}
        \centering
        \begin{subfigure}[b]{0.24\linewidth}
            \includegraphics[width=\linewidth]{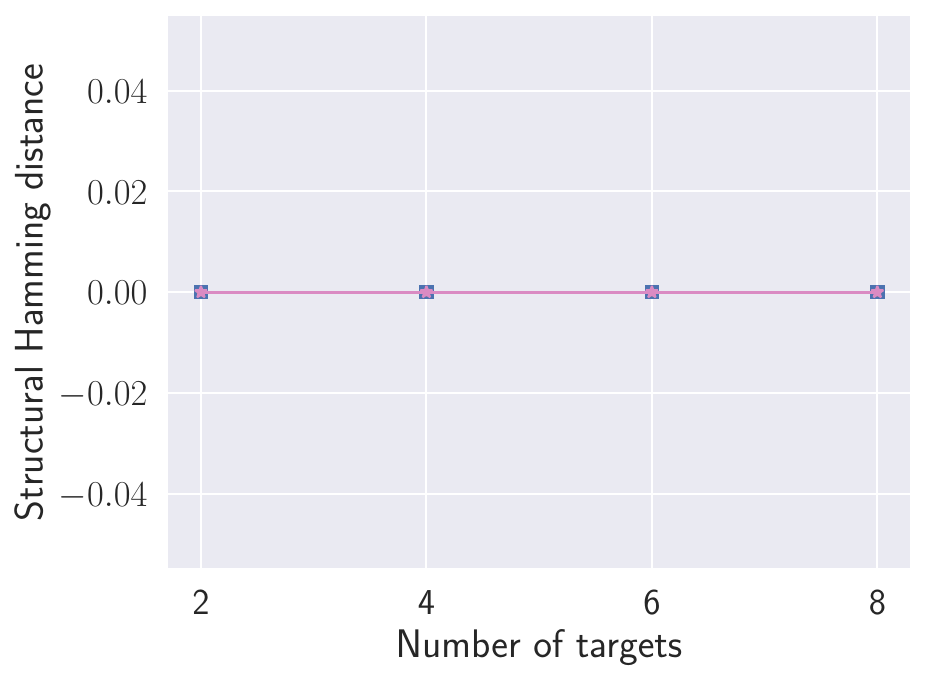}
            \caption*{d-separation tests}
            \label{fig:shd_per_target_unrest_dsep}
        \end{subfigure}
        \begin{subfigure}[b]{0.24\linewidth}
            \includegraphics[width=\linewidth]{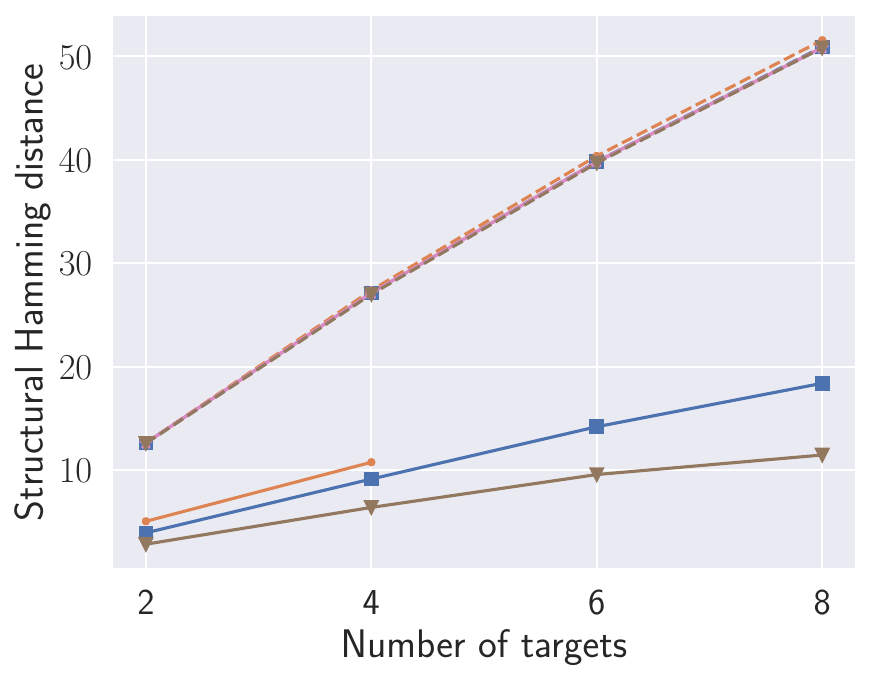}
            \caption*{Fisher-Z tests}
            \label{fig:shd_per_target_unrest_fshz}
        \end{subfigure}
        \begin{subfigure}[b]{0.24\linewidth}
            \includegraphics[width=\linewidth]{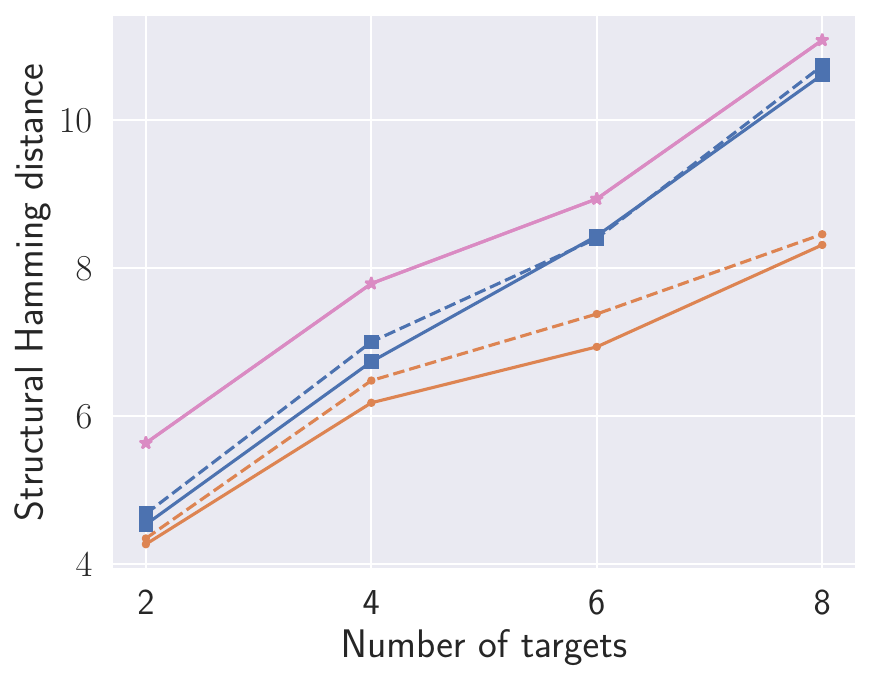}
            \caption*{KCI tests}
            \label{fig:shd_per_target_unrest_kci}
        \end{subfigure}
        \begin{subfigure}[b]{0.24\linewidth}
            \includegraphics[width=\linewidth]{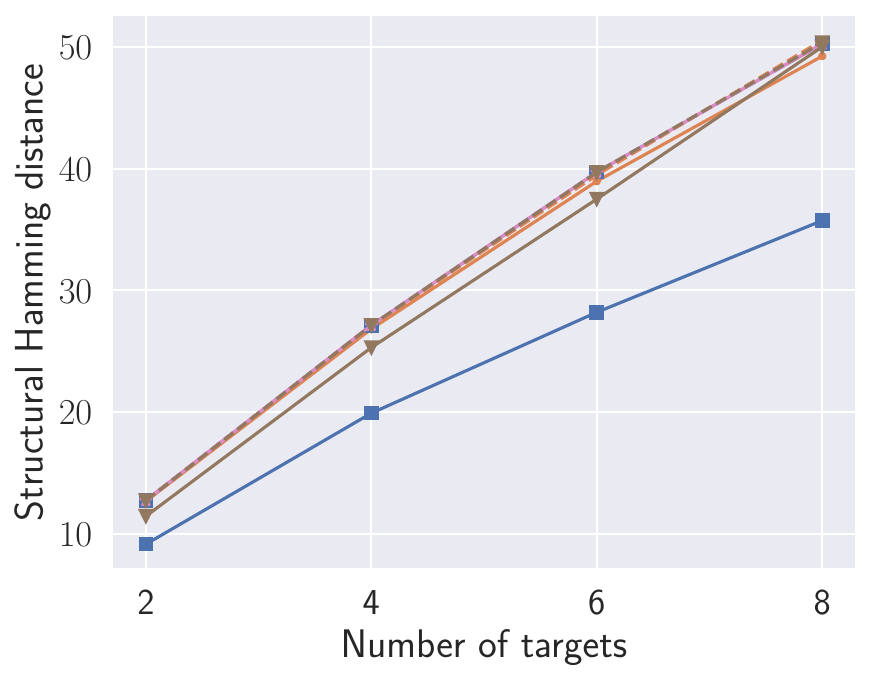}
            \caption*{$\chi^2$ tests}
            \label{fig:shd_per_target_unrest_chsq}
        \end{subfigure}
        \caption{\Acl{SHD}}
        \label{fig:shd_per_target_unrest}
    \end{subfigure}
    \caption{Estimation quality over number of targets, with $n_{\mathbf{V}}=10$ for KCI tests and $n_{\mathbf{V}}=200$ otherwise, $\overline{d} = 3, d_{\max}=10$ and $n_{\mathbf{D}} = 1000$ data-points. We also show baseline methods combined with SNAP$(0)$.}
    \label{fig:quality_per_target_unrest}
\end{figure}


\begin{figure}
    \centering
    \includegraphics[width=.6\linewidth]{experiments/legend_big.pdf}
    \begin{subfigure}[b]{\linewidth}
        \centering
        \begin{subfigure}[b]{0.24\linewidth}
            \includegraphics[width=\linewidth]{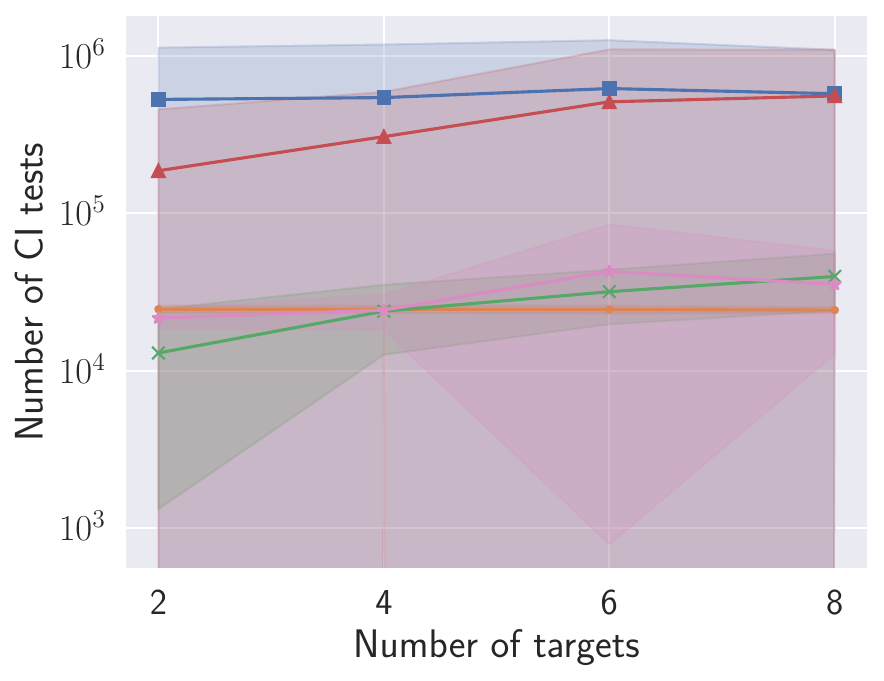}
            \caption*{d-separation tests}
            \label{fig:test_per_target_ident_dsep_std}
        \end{subfigure}
        \begin{subfigure}[b]{0.24\linewidth}
            \includegraphics[width=\linewidth]{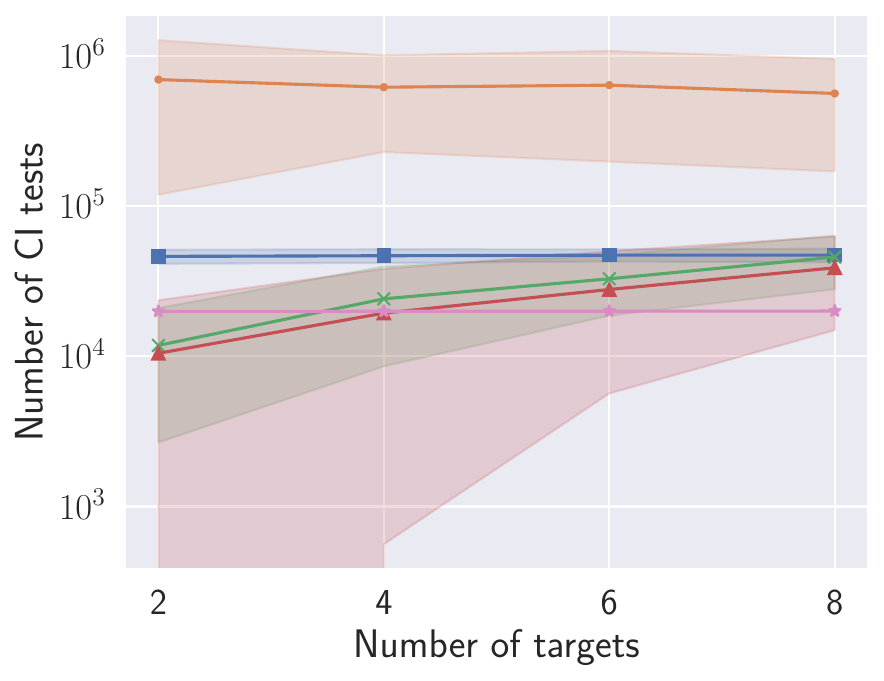}
            \caption*{Fisher-Z tests}
            \label{fig:test_per_target_ident_fshz_std}
        \end{subfigure}
        \begin{subfigure}[b]{0.24\linewidth}
            \includegraphics[width=\linewidth]{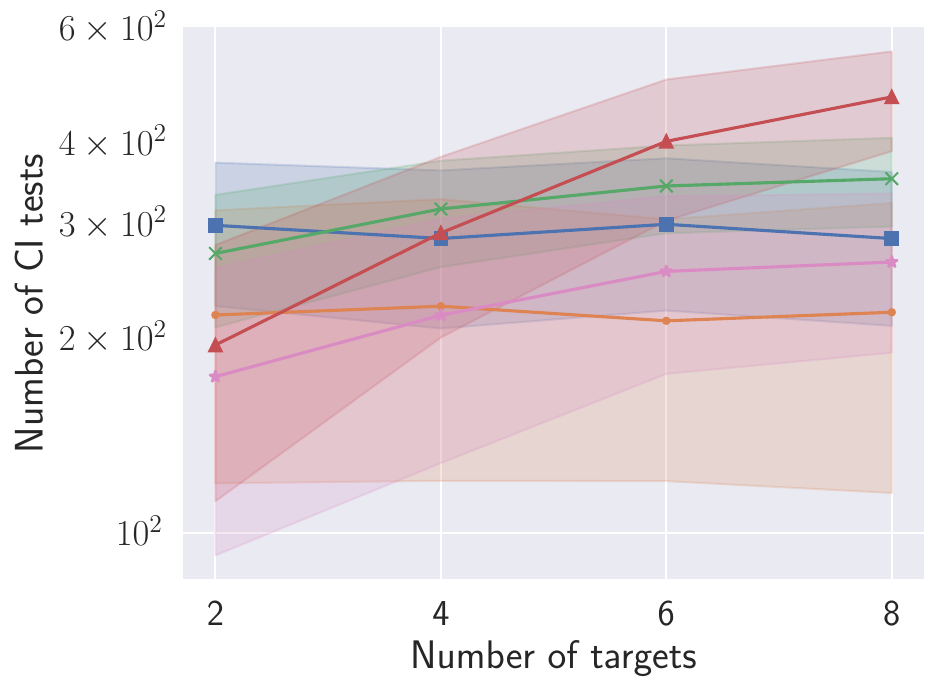}
            \caption*{KCI tests}
            \label{fig:test_per_target_ident_kci_std}
        \end{subfigure}
        \begin{subfigure}[b]{0.24\linewidth}
            \includegraphics[width=\linewidth]{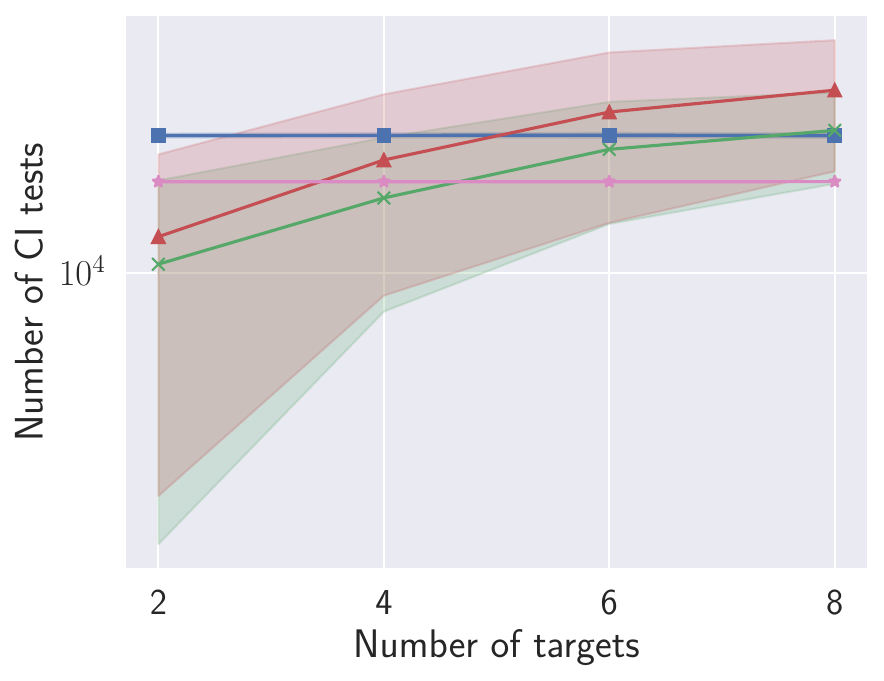}
            \caption*{$\chi^2$ tests}
            \label{fig:test_per_target_ident_chsq_std}
        \end{subfigure}
        \caption{Number of \ac{CI} tests.}
        \label{fig:test_per_target_ident_std}
    \end{subfigure}
    \begin{subfigure}[b]{\linewidth}
        \centering
        \begin{subfigure}[b]{0.24\linewidth}
            \includegraphics[width=\linewidth]{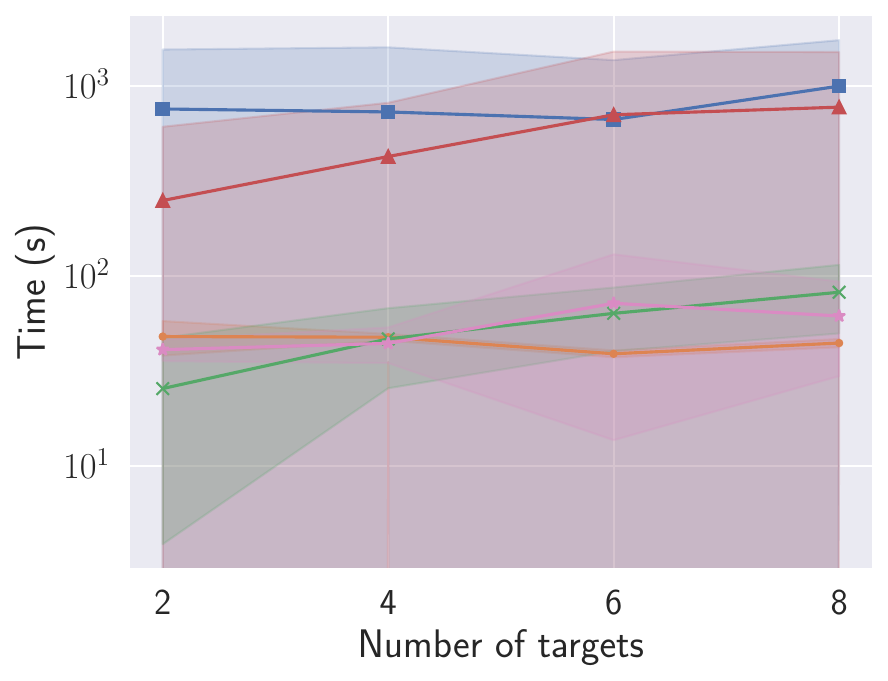}
            \caption*{d-separation tests}
            \label{fig:time_per_target_ident_dsep_std}
        \end{subfigure}
        \begin{subfigure}[b]{0.24\linewidth}
            \includegraphics[width=\linewidth]{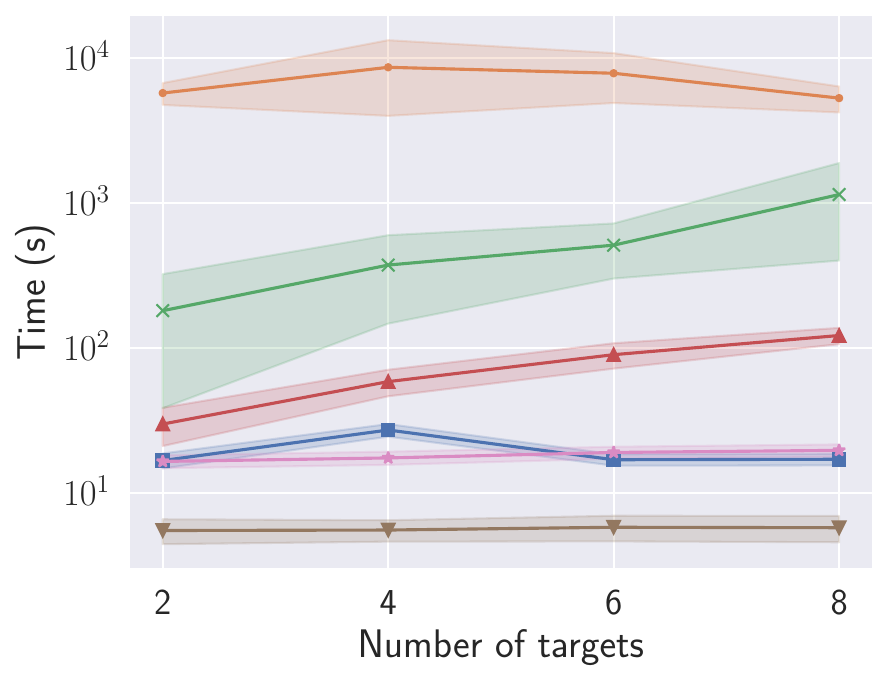}
            \caption*{Fisher-Z tests}
            \label{fig:time_per_target_ident_fshz_std}
        \end{subfigure}
        \begin{subfigure}[b]{0.24\linewidth}
            \includegraphics[width=\linewidth]{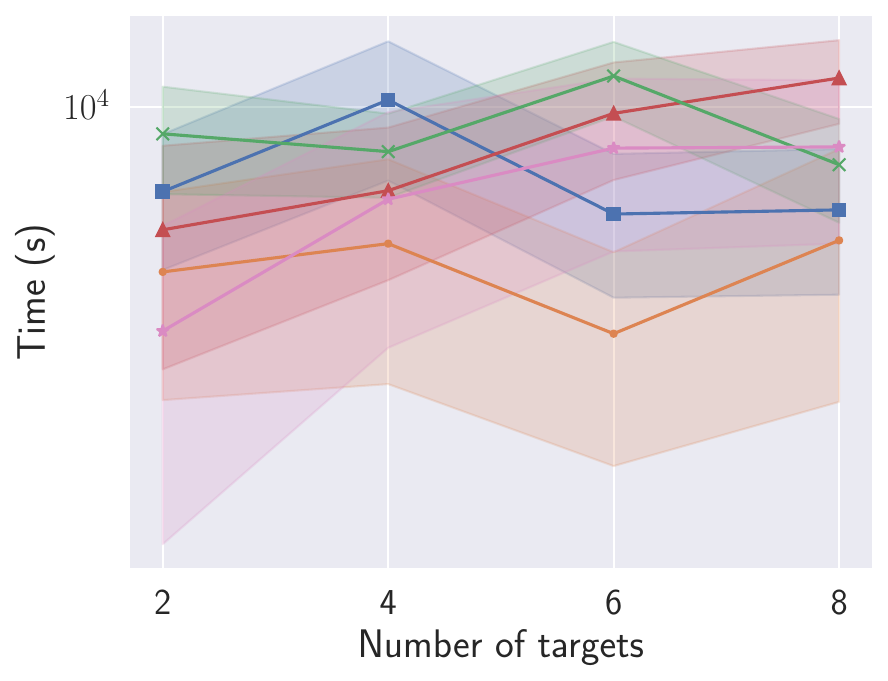}
            \caption*{KCI tests}
            \label{fig:time_per_target_ident_kci_std}
        \end{subfigure}
        \begin{subfigure}[b]{0.24\linewidth}
            \includegraphics[width=\linewidth]{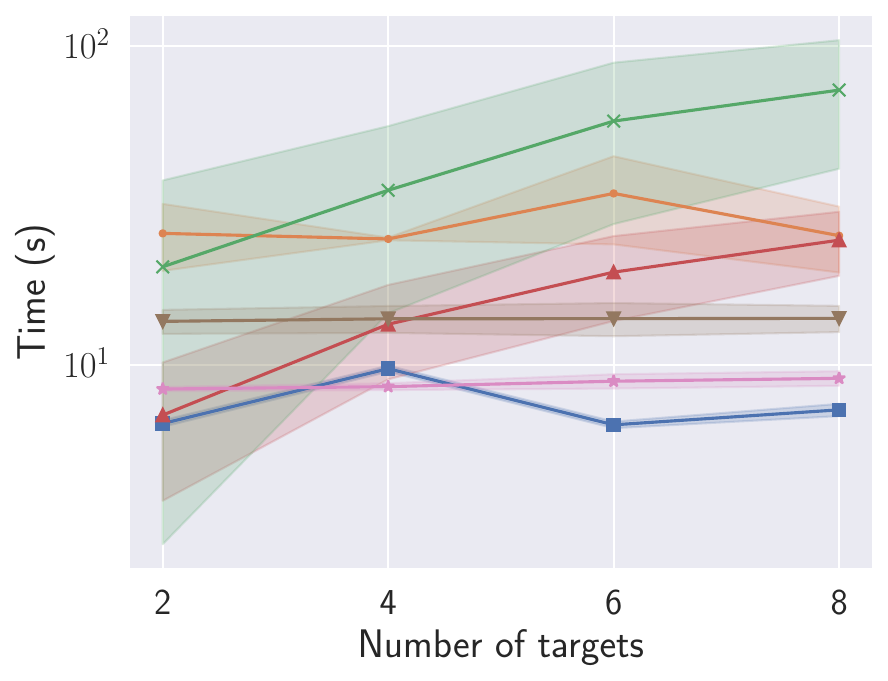}
            \caption*{$\chi^2$ tests}
            \label{fig:time_per_target_ident_chsq_std}
        \end{subfigure}
        \caption{Computation time.}
        \label{fig:time_per_target_ident_std}
    \end{subfigure}
    \caption{Number of \ac{CI} tests and computation time over number of identifiable targets, with $n_{\mathbf{V}}=10$ for KCI tests and $n_{\mathbf{V}}=200$ otherwise, $\overline{d} = 3, d_{\max}=10$ and $n_{\mathbf{D}} = 1000$ data-points. We plot values on a $\log$ scale. The shadow area denotes the range of the standard deviation.}
    \label{fig:computation_per_target_ident_std}
\end{figure}

\begin{figure}
    \centering
    \includegraphics[width=.6\linewidth]{experiments/legend_big.pdf}
    \begin{subfigure}[b]{\linewidth}
        \centering
        \begin{subfigure}[b]{0.24\linewidth}
            \includegraphics[width=\linewidth]{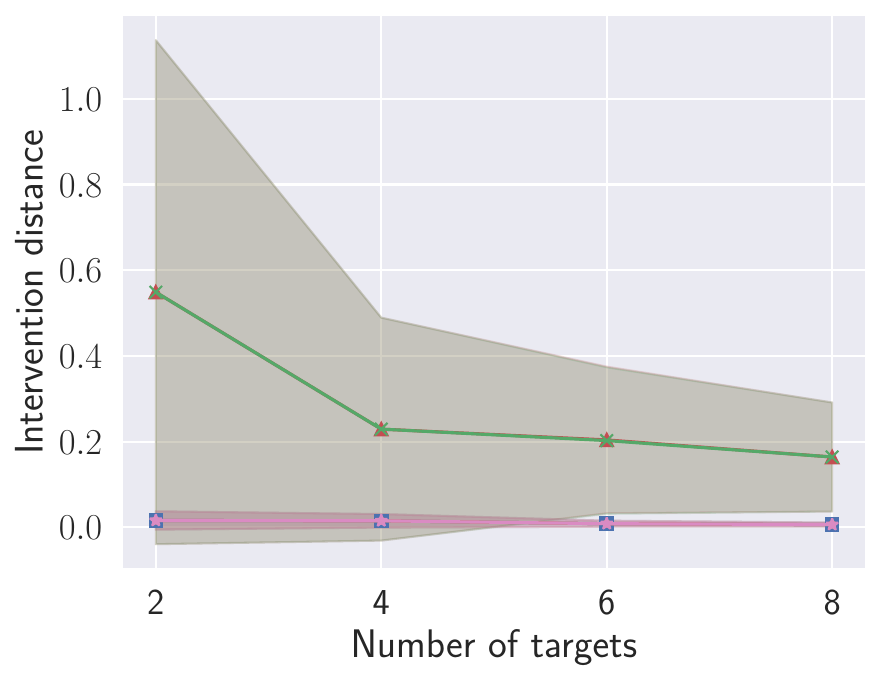}
            \caption*{d-separation tests}
            \label{fig:int_dist_per_target_ident_dsep_abs_std}
        \end{subfigure}
        \begin{subfigure}[b]{0.24\linewidth}
            \includegraphics[width=\linewidth]{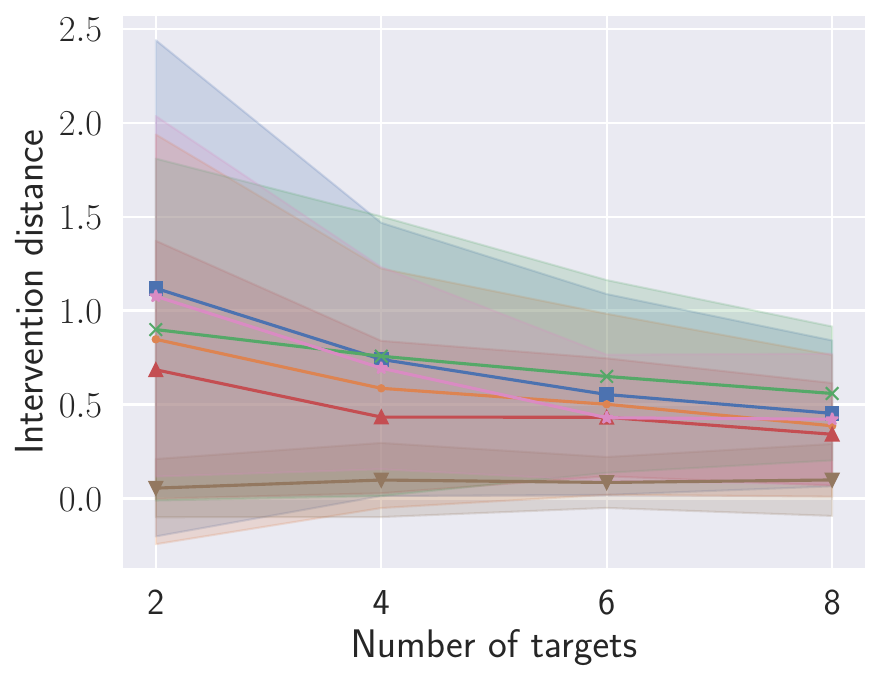}
            \caption*{Fisher-Z tests}
            \label{fig:int_dist_per_target_ident_fshz_abs_std}
        \end{subfigure}
        \begin{subfigure}[b]{0.24\linewidth}
            \includegraphics[width=\linewidth]{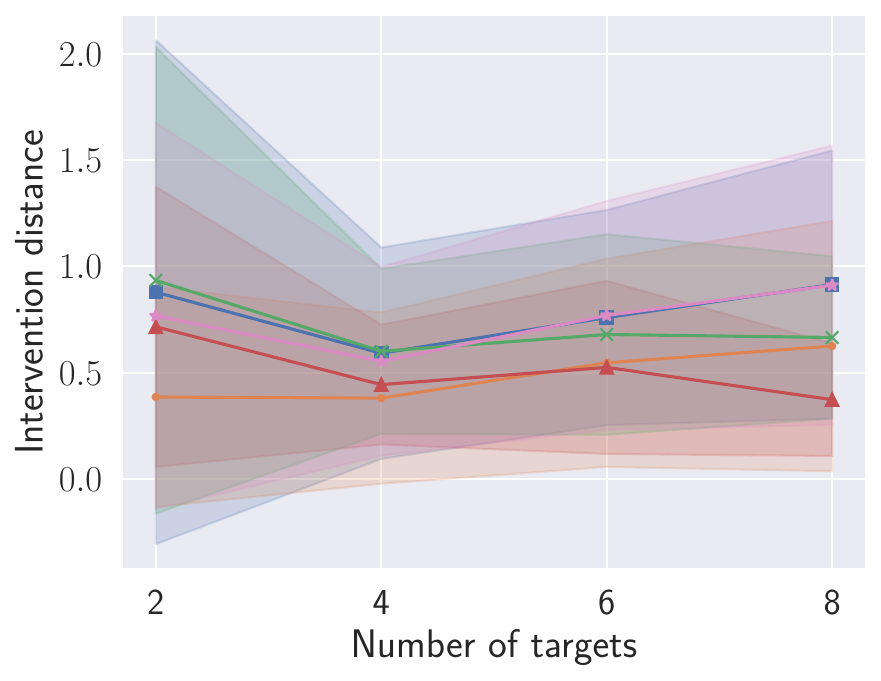}
            \caption*{KCI tests}
            \label{fig:int_dist_per_target_ident_kci_abs_std}
        \end{subfigure}
        \begin{subfigure}[b]{0.24\linewidth}
            \includegraphics[width=\linewidth]{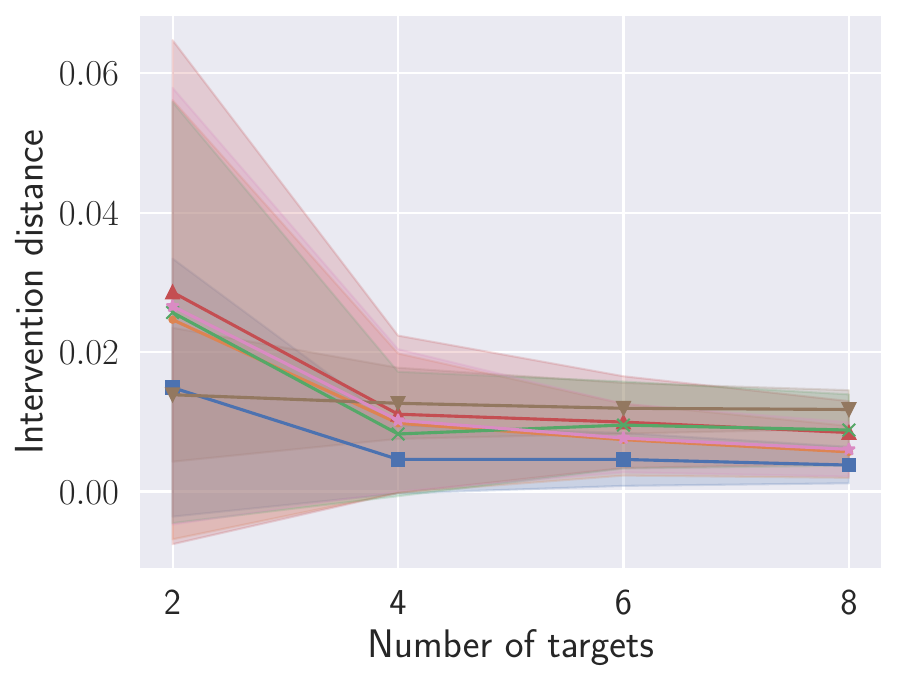}
            \caption*{$\chi^2$ tests}
            \label{fig:int_dist_per_target_ident_chsq_abs_std}
        \end{subfigure}
        \caption{Intervention distance}
        \label{fig:int_dist_per_target_ident_abs_std}
    \end{subfigure}
    \begin{subfigure}[b]{\linewidth}
        \centering
        \begin{subfigure}[b]{0.24\linewidth}
            \includegraphics[width=\linewidth]{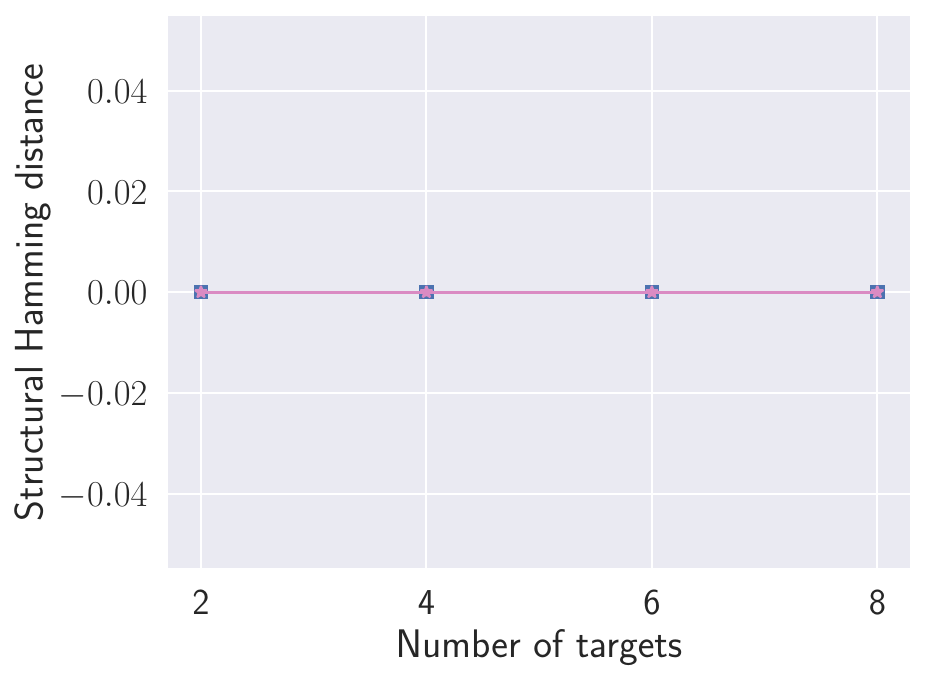}
            \caption*{d-separation tests}
            \label{fig:shd_per_target_ident_dsep_std}
        \end{subfigure}
        \begin{subfigure}[b]{0.24\linewidth}
            \includegraphics[width=\linewidth]{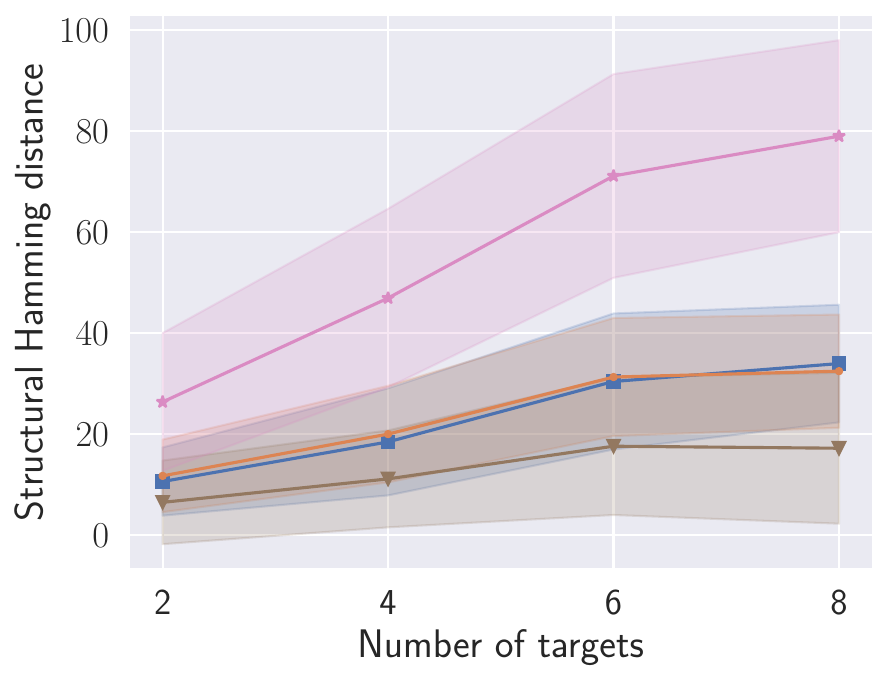}
            \caption*{Fisher-Z tests}
            \label{fig:shd_per_target_ident_fshz_std}
        \end{subfigure}
        \begin{subfigure}[b]{0.24\linewidth}
            \includegraphics[width=\linewidth]{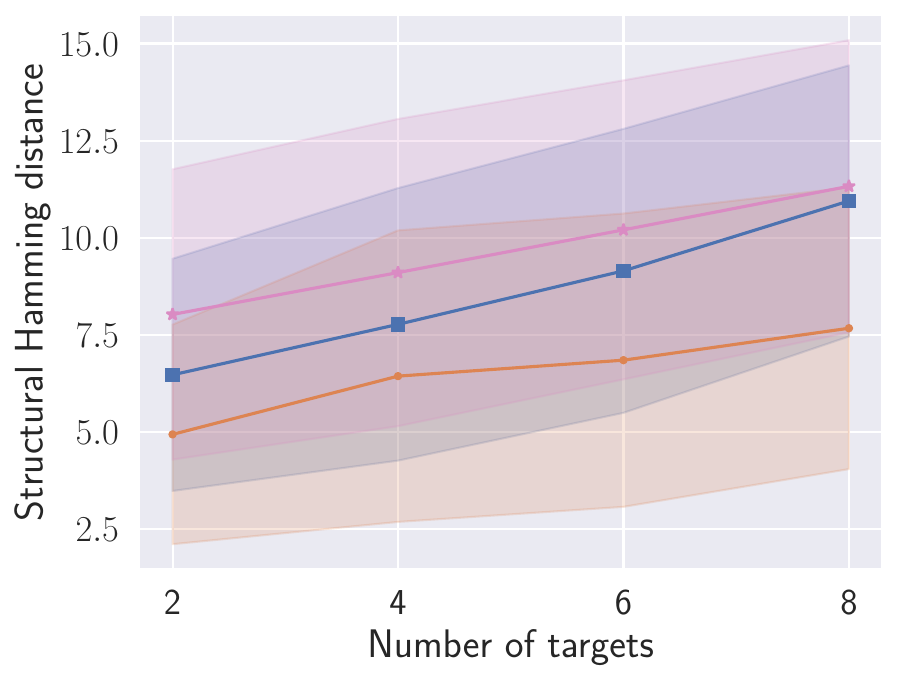}
            \caption*{KCI tests}
            \label{fig:shd_per_target_ident_kci_std}
        \end{subfigure}
        \begin{subfigure}[b]{0.24\linewidth}
            \includegraphics[width=\linewidth]{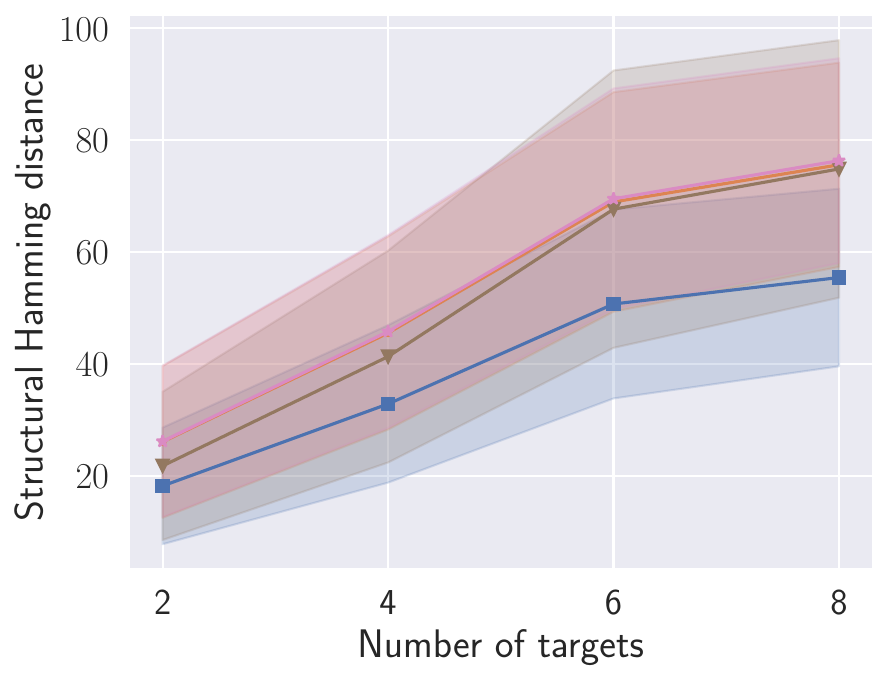}
            \caption*{$\chi^2$ tests}
            \label{fig:shd_per_target_ident_chsq_std}
        \end{subfigure}
        \caption{\Acl{SHD}}
        \label{fig:shd_per_target_ident_std}
    \end{subfigure}
    \caption{Estimation quality over number of identifiable targets, with $n_{\mathbf{V}}=10$ for KCI tests and $n_{\mathbf{V}}=200$ otherwise, $\overline{d} = 3, d_{\max}=10$ and $n_{\mathbf{D}} = 1000$ data-points. The shadow area denotes the range of the standard deviation.}
    \label{fig:quality_per_target_ident_std}
\end{figure}

\begin{figure}
    \centering
    \includegraphics[width=.6\linewidth]{experiments/legend_big.pdf}
    \begin{subfigure}[b]{\linewidth}
        \centering
        \begin{subfigure}[b]{0.24\linewidth}
            \includegraphics[width=\linewidth]{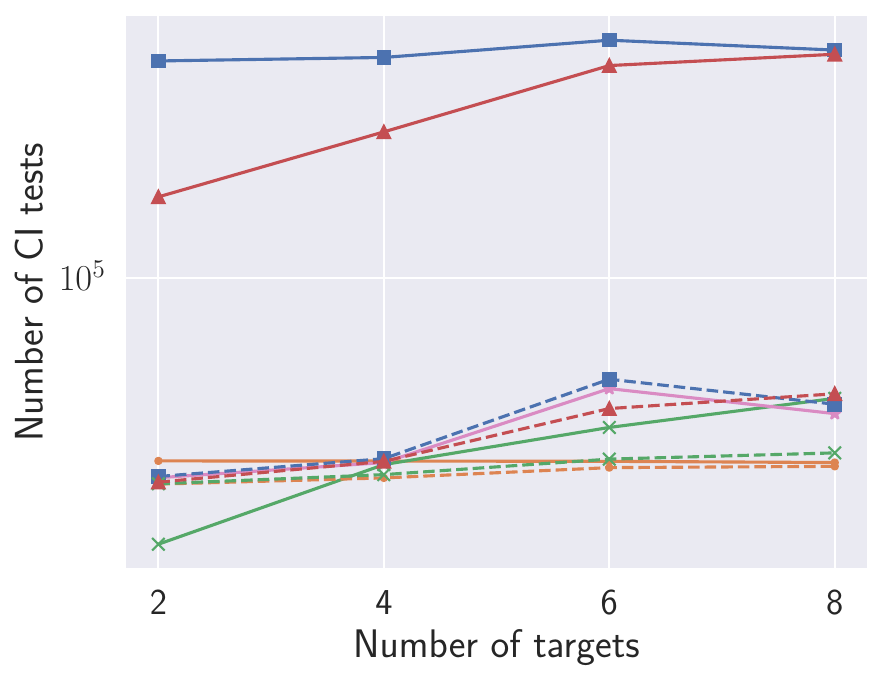}
            \caption*{d-separation tests}
            \label{fig:test_per_target_ident_dsep}
        \end{subfigure}
        \begin{subfigure}[b]{0.24\linewidth}
            \includegraphics[width=\linewidth]{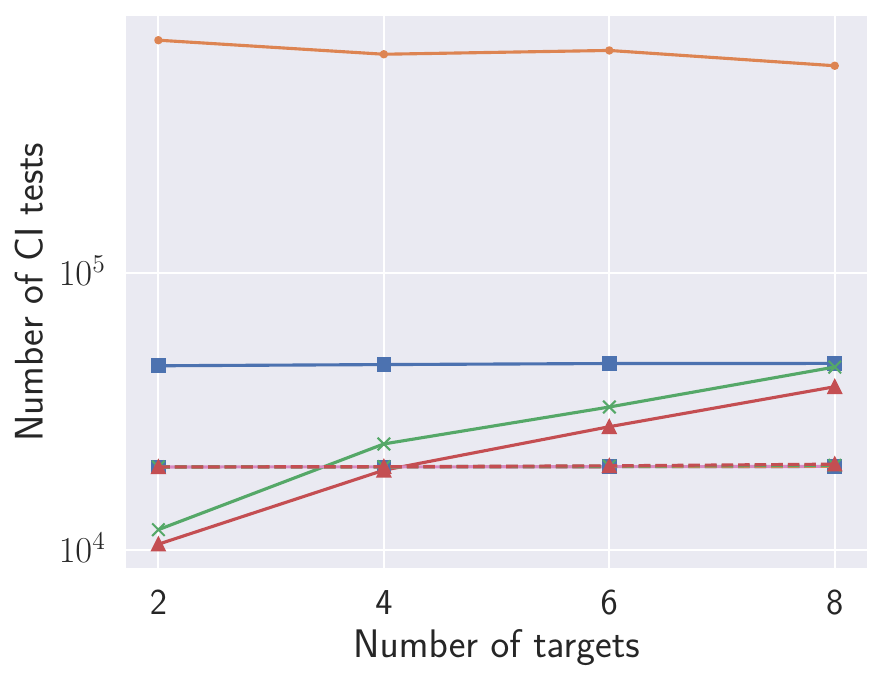}
            \caption*{Fisher-Z tests}
            \label{fig:test_per_target_ident_fshz}
        \end{subfigure}
        \begin{subfigure}[b]{0.24\linewidth}
            \includegraphics[width=\linewidth]{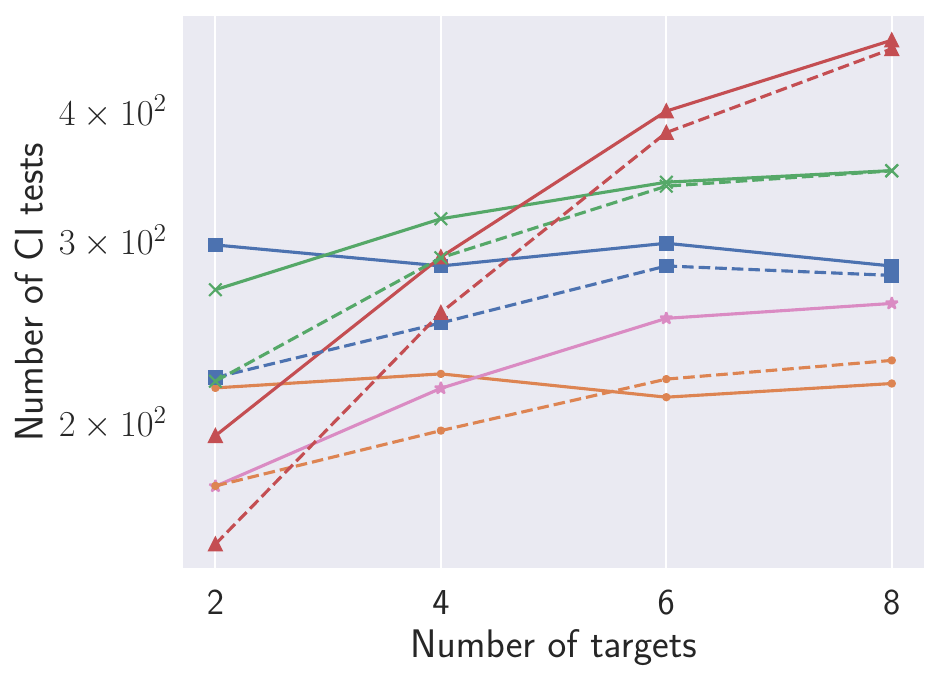}
            \caption*{KCI tests}
            \label{fig:test_per_target_ident_kci}
        \end{subfigure}
        \begin{subfigure}[b]{0.24\linewidth}
            \includegraphics[width=\linewidth]{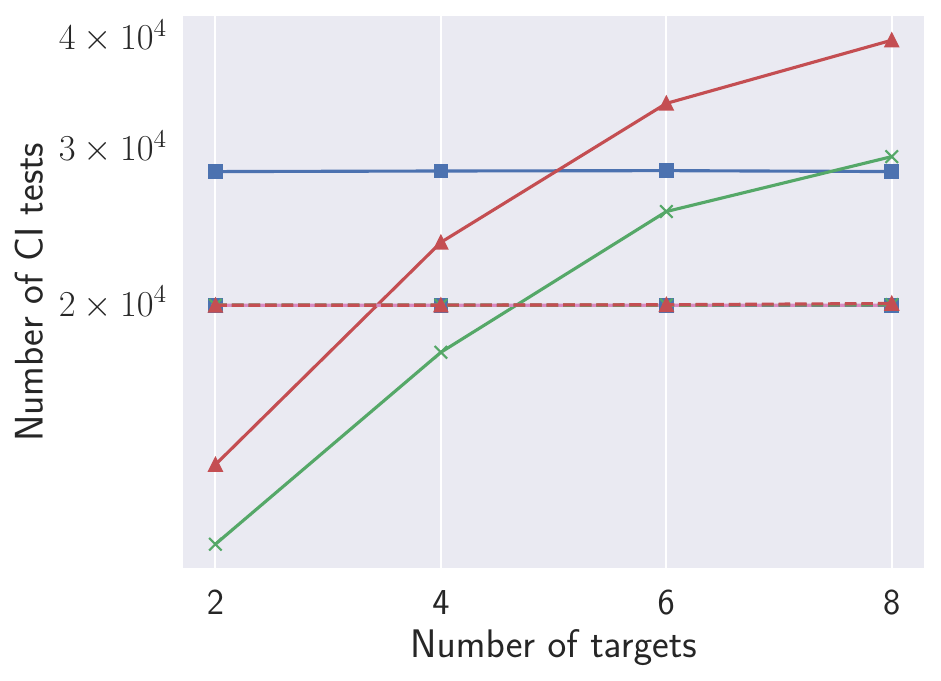}
            \caption*{$\chi^2$ tests}
            \label{fig:test_per_target_ident_chsq}
        \end{subfigure}
        \caption{Number of \ac{CI} tests.}
        \label{fig:test_per_target_ident_no}
    \end{subfigure}
    \begin{subfigure}[b]{\linewidth}
        \centering
        \begin{subfigure}[b]{0.24\linewidth}
            \includegraphics[width=\linewidth]{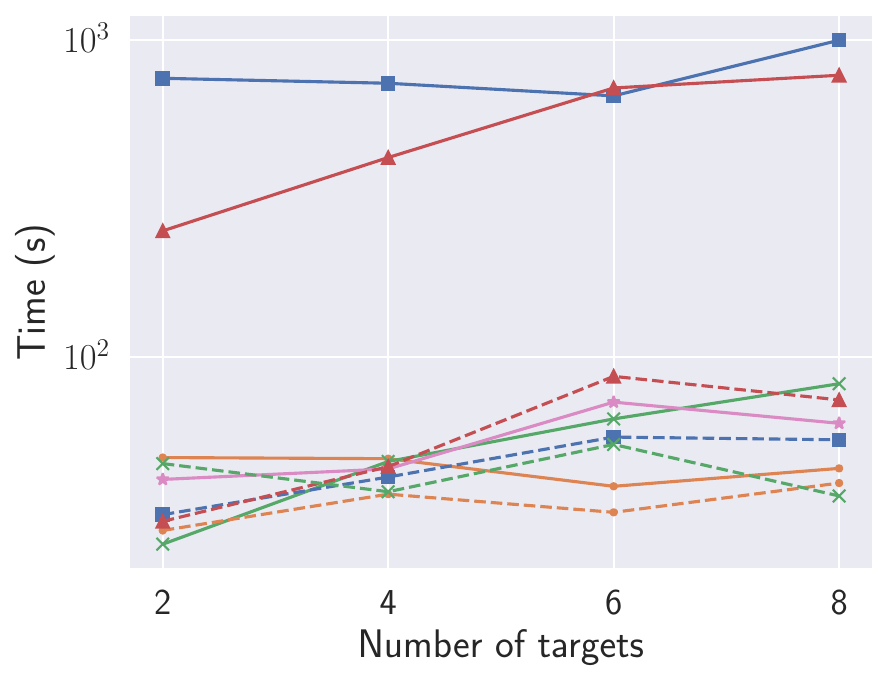}
            \caption*{d-separation tests}
            \label{fig:time_per_target_ident_dsep}
        \end{subfigure}
        \begin{subfigure}[b]{0.24\linewidth}
            \includegraphics[width=\linewidth]{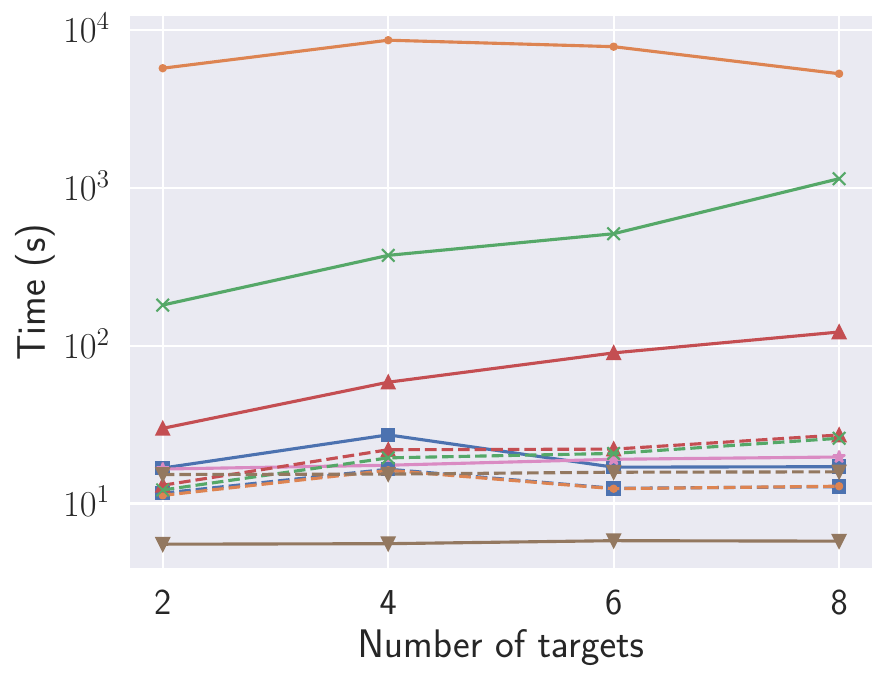}
            \caption*{Fisher-Z tests}
            \label{fig:time_per_target_ident_fshz}
        \end{subfigure}
        \begin{subfigure}[b]{0.24\linewidth}
            \includegraphics[width=\linewidth]{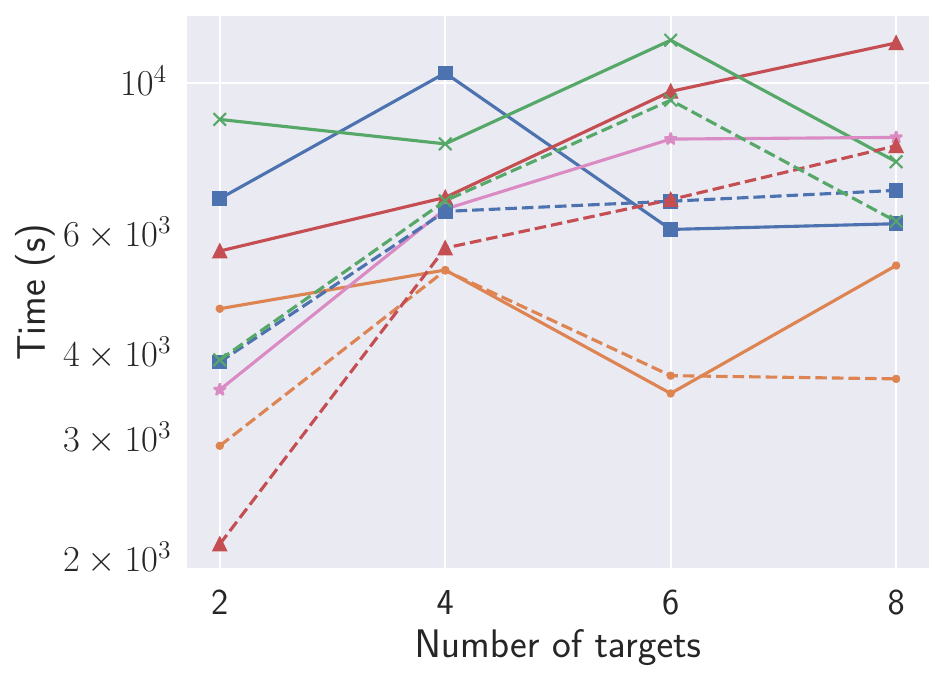}
            \caption*{KCI tests}
            \label{fig:time_per_target_ident_kci}
        \end{subfigure}
        \begin{subfigure}[b]{0.24\linewidth}
            \includegraphics[width=\linewidth]{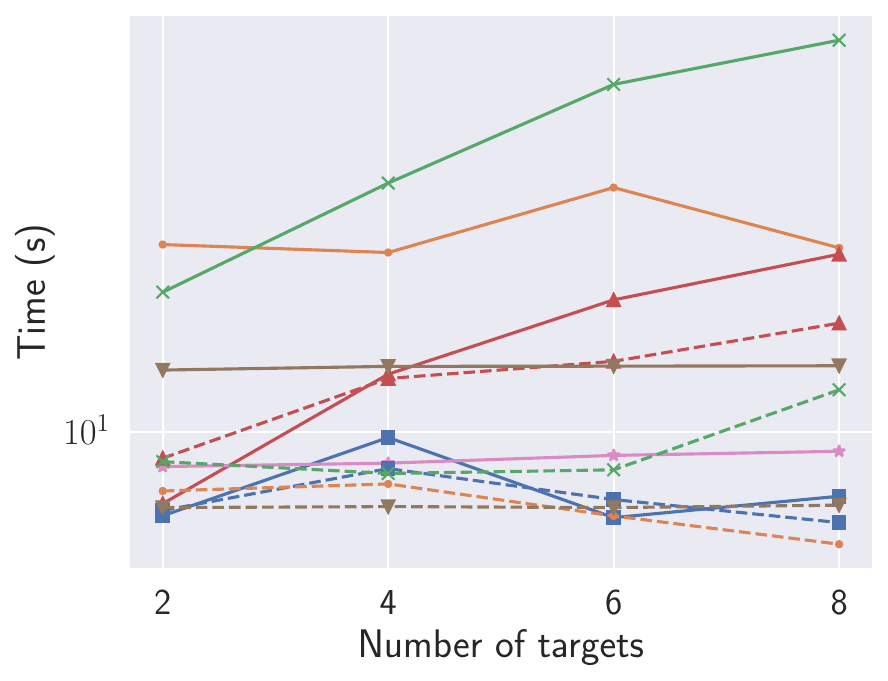}
            \caption*{$\chi^2$ tests}
            \label{fig:time_per_target_ident_chsq}
        \end{subfigure}
        \caption{Computation time.}
        \label{fig:time_per_target_ident}
    \end{subfigure}
    \caption{Number of \ac{CI} tests and computation time over number of identifiable targets, with $n_{\mathbf{V}}=10$ for KCI tests and $n_{\mathbf{V}}=200$ otherwise, $\overline{d} = 3, d_{\max}=10$ and $n_{\mathbf{D}} = 1000$ data-points. We also show baseline methods combined with SNAP$(0)$. We plot values on a $\log$ scale.}
    \label{fig:computation_per_target_ident}
\end{figure}

\begin{figure}
    \centering
    \includegraphics[width=.6\linewidth]{experiments/legend_big.pdf}
    \begin{subfigure}[b]{\linewidth}
        \centering
        \begin{subfigure}[b]{0.24\linewidth}
            \includegraphics[width=\linewidth]{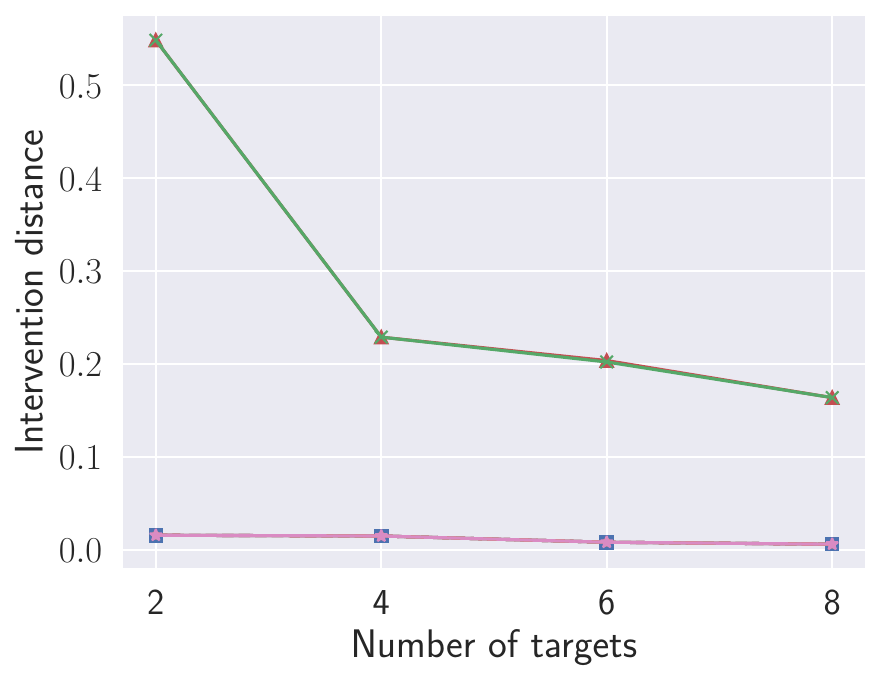}
            \caption*{d-separation tests}
            \label{fig:int_dist_per_target_ident_dsep_abs}
        \end{subfigure}
        \begin{subfigure}[b]{0.24\linewidth}
            \includegraphics[width=\linewidth]{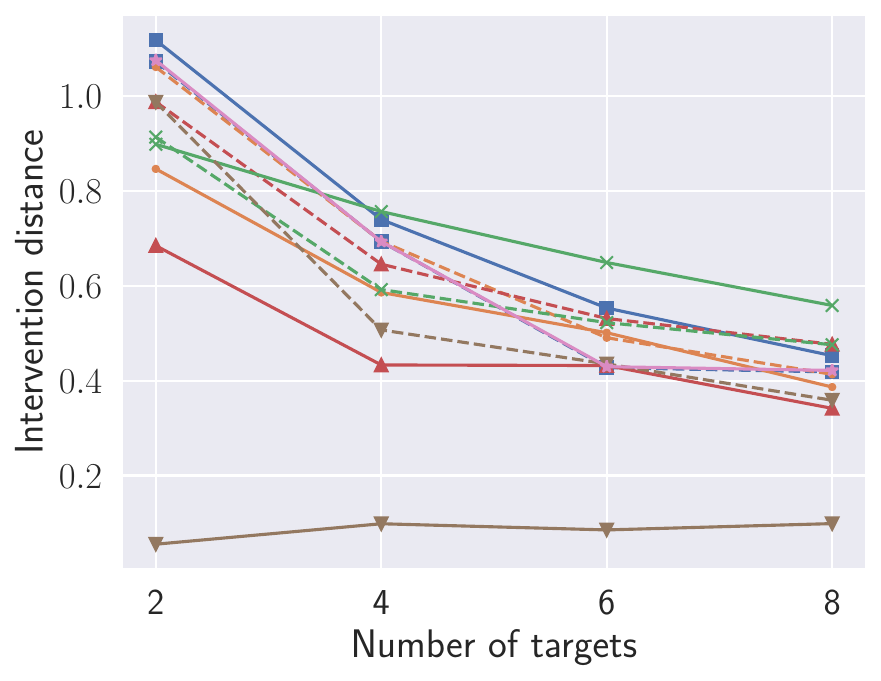}
            \caption*{Fisher-Z tests}
            \label{fig:int_dist_per_target_ident_fshz_abs}
        \end{subfigure}
        \begin{subfigure}[b]{0.24\linewidth}
            \includegraphics[width=\linewidth]{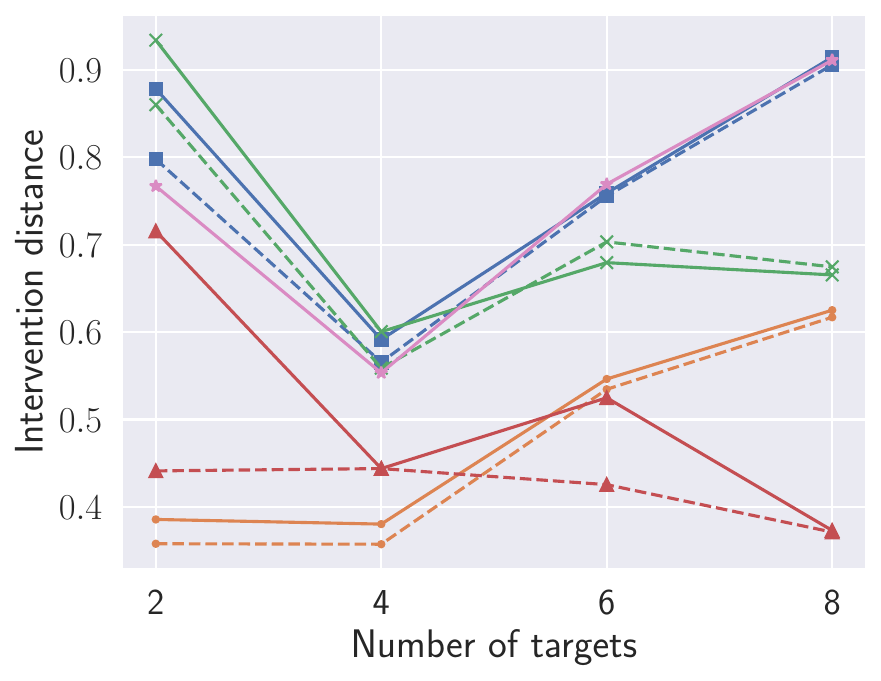}
            \caption*{KCI tests}
            \label{fig:int_dist_per_target_ident_kci_abs}
        \end{subfigure}
        \begin{subfigure}[b]{0.24\linewidth}
            \includegraphics[width=\linewidth]{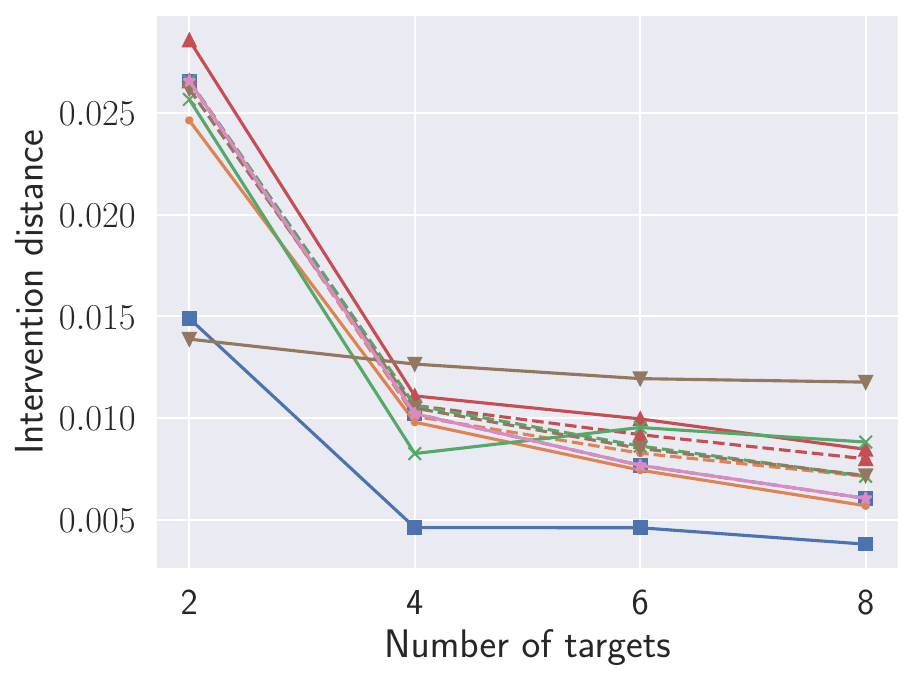}
            \caption*{$\chi^2$ tests}
            \label{fig:int_dist_per_target_ident_chsq_abs}
        \end{subfigure}
        \caption{Intervention distance}
        \label{fig:int_dist_per_target_ident_abs}
    \end{subfigure}
    \begin{subfigure}[b]{\linewidth}
        \centering
        \begin{subfigure}[b]{0.24\linewidth}
            \includegraphics[width=\linewidth]{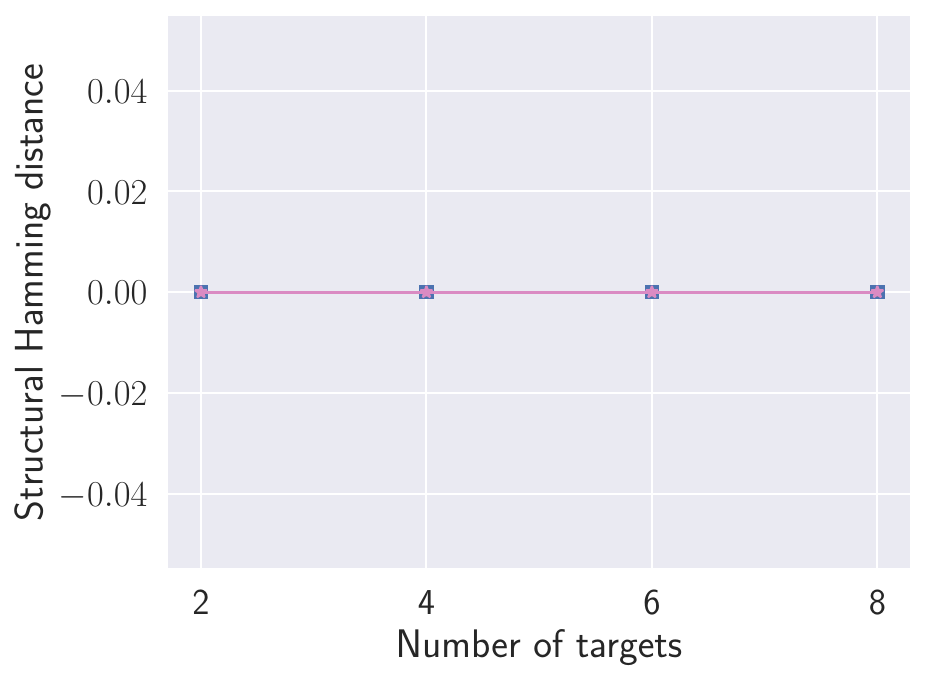}
            \caption*{d-separation tests}
            \label{fig:shd_per_target_ident_dsep}
        \end{subfigure}
        \begin{subfigure}[b]{0.24\linewidth}
            \includegraphics[width=\linewidth]{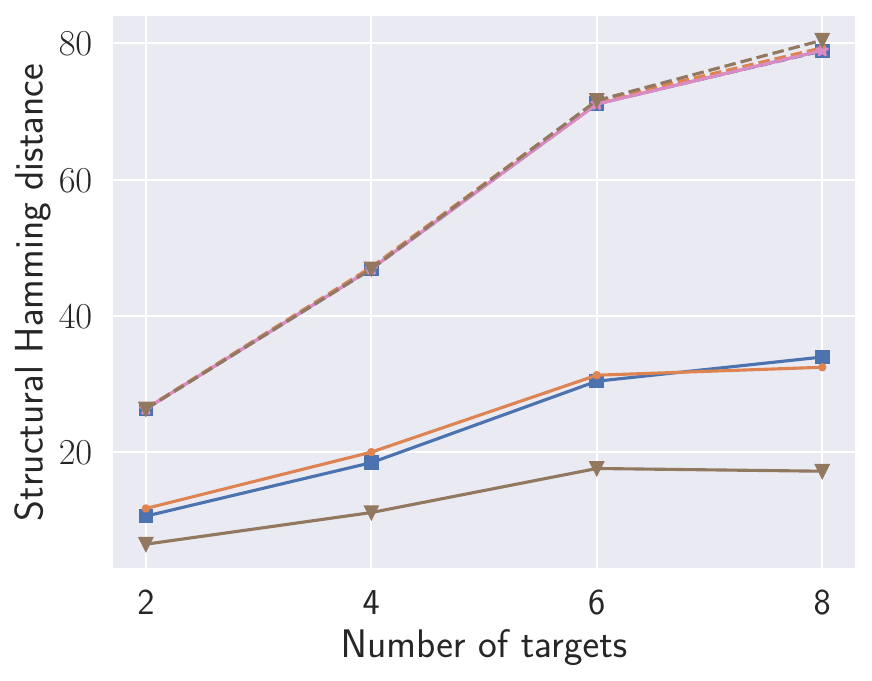}
            \caption*{Fisher-Z tests}
            \label{fig:shd_per_target_ident_fshz}
        \end{subfigure}
        \begin{subfigure}[b]{0.24\linewidth}
            \includegraphics[width=\linewidth]{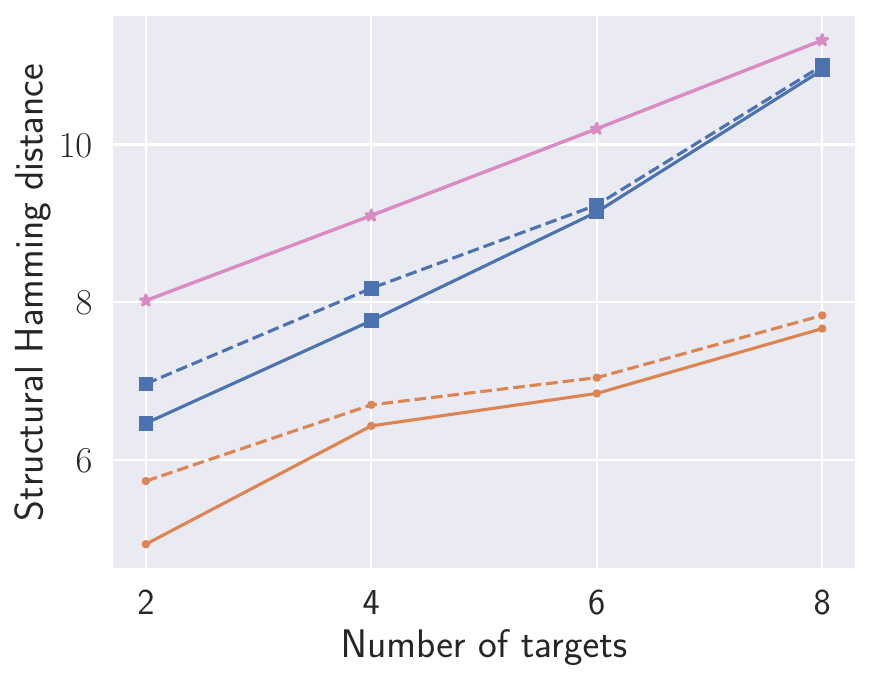}
            \caption*{KCI tests}
            \label{fig:shd_per_target_ident_kci}
        \end{subfigure}
        \begin{subfigure}[b]{0.24\linewidth}
            \includegraphics[width=\linewidth]{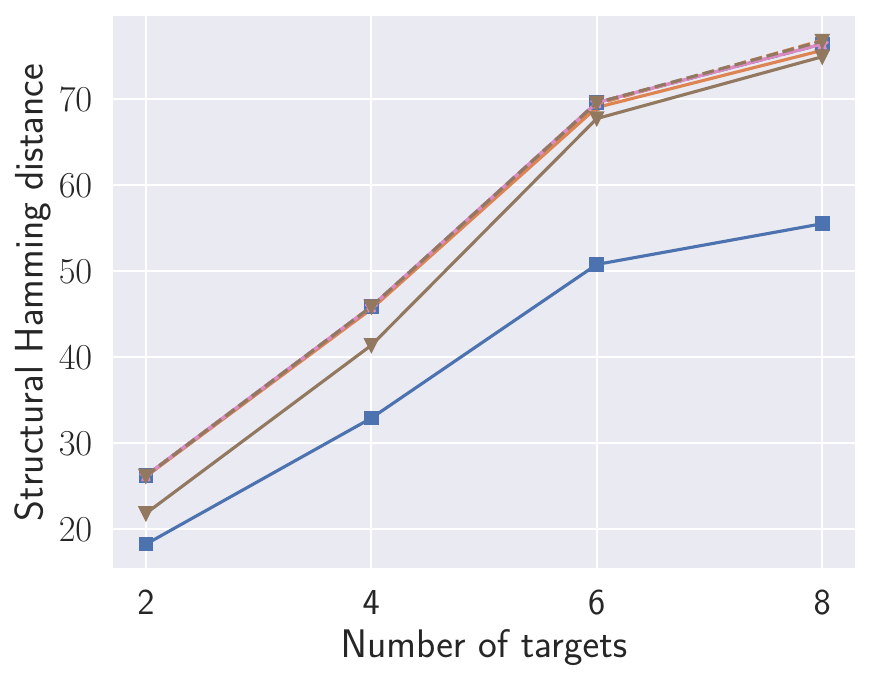}
            \caption*{$\chi^2$ tests}
            \label{fig:shd_per_target_ident_chsq}
        \end{subfigure}
        \caption{\Acl{SHD}}
        \label{fig:shd_per_target_ident}
    \end{subfigure}
    \caption{Estimation quality over number of identifiable targets, with $n_{\mathbf{V}}=10$ for KCI tests and $n_{\mathbf{V}}=200$ otherwise, $\overline{d} = 3, d_{\max}=10$ and $n_{\mathbf{D}} = 1000$ data-points. We also show baseline methods combined with SNAP$(0)$.}
    \label{fig:quality_per_target_ident}
\end{figure}


\begin{figure}
    \centering
    \includegraphics[width=.6\linewidth]{experiments/legend_small.pdf}
    \begin{subfigure}[b]{\linewidth}
        \begin{subfigure}[b]{0.24\linewidth}
            \includegraphics[width=\linewidth]{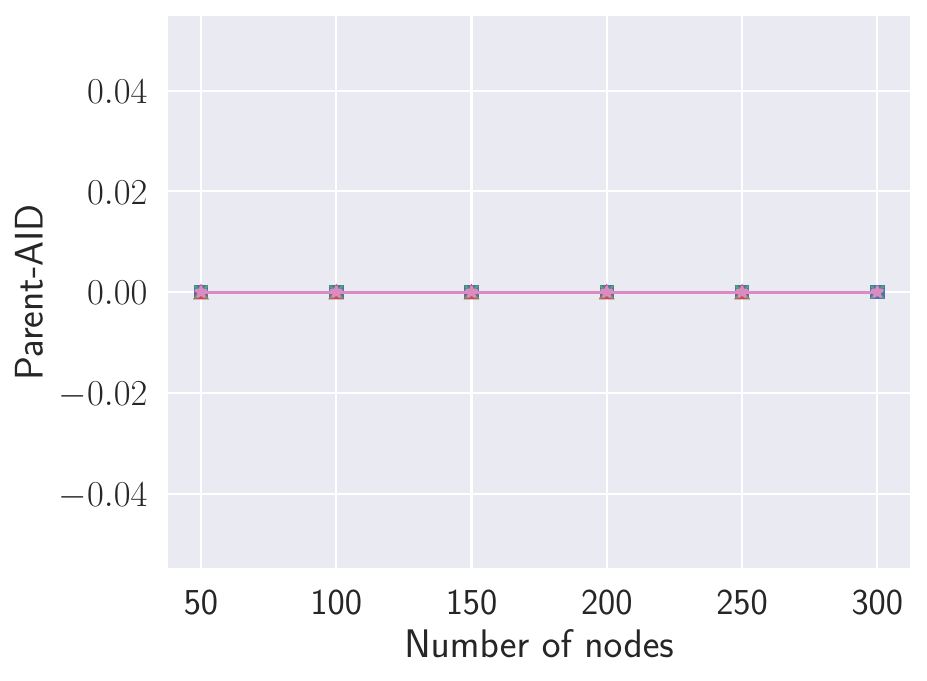}
            \caption*{d-separation tests}
            \label{fig:parent_aid_per_target_ident_dsep_std}
        \end{subfigure}
        \begin{subfigure}[b]{0.24\linewidth}
            \includegraphics[width=\linewidth]{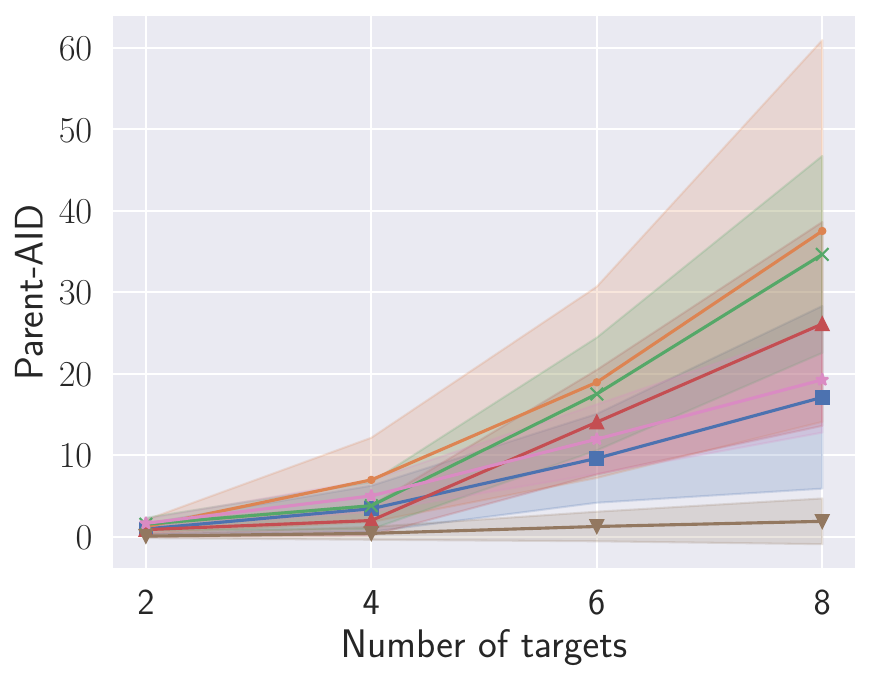}
            \caption*{Fisher-Z tests}
            \label{fig:parent_aid_per_target_ident_fshz_std}
        \end{subfigure}
        \begin{subfigure}[b]{0.24\linewidth}
            \includegraphics[width=\linewidth]{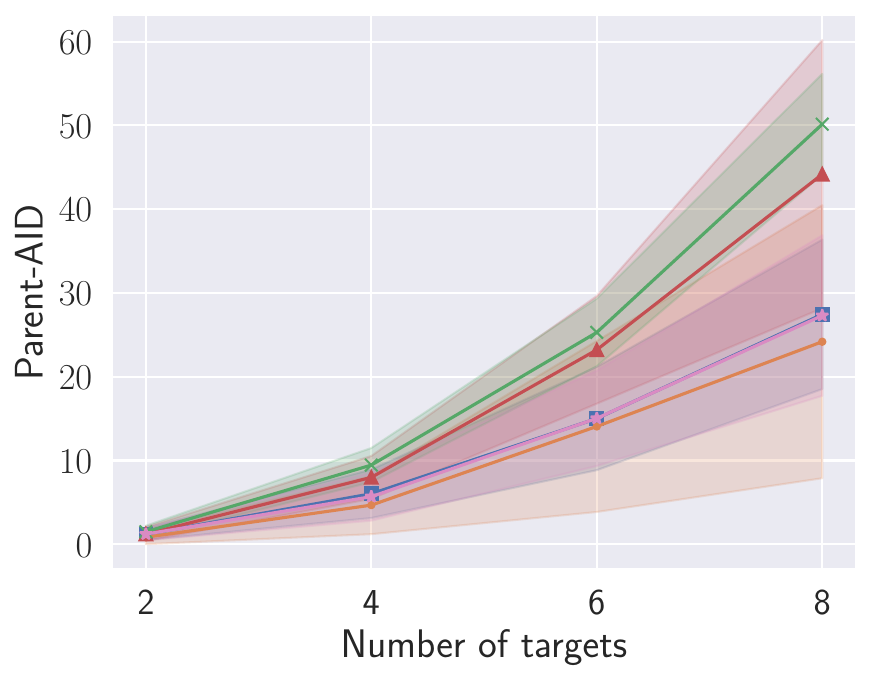}
            \caption*{KCI tests}
            \label{fig:parent_aid_per_target_ident_kci_std}
        \end{subfigure}
        \begin{subfigure}[b]{0.24\linewidth}
            \includegraphics[width=\linewidth]{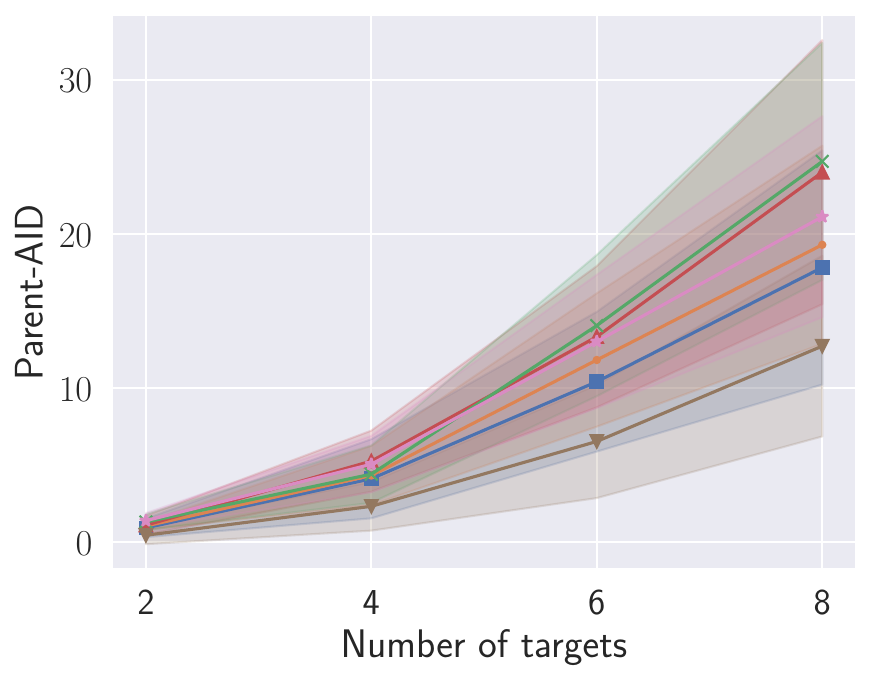}
            \caption*{$\chi^2$ tests}
            \label{fig:parent_aid_per_target_ident_chsq_std}
        \end{subfigure}
        \caption{Parent-AID.}
        \label{fig:parent_aid_per_target_ident_std}
    \end{subfigure}
    \begin{subfigure}[b]{\linewidth}
        \begin{subfigure}[b]{0.24\linewidth}
            \includegraphics[width=\linewidth]{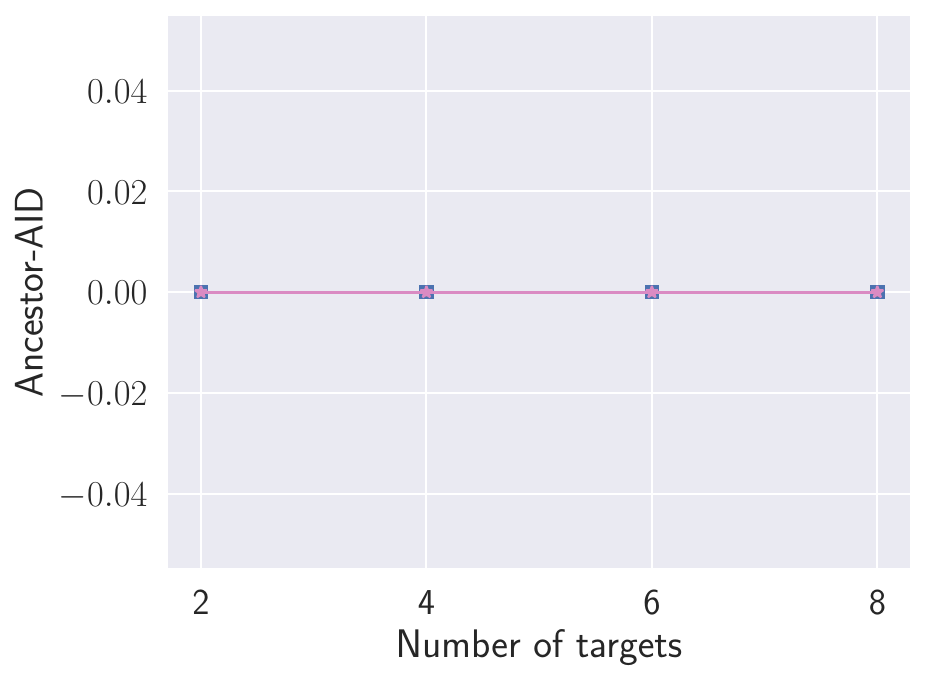}
            \caption*{d-separation tests}
            \label{fig:ancestor_aid_per_target_ident_dsep_std}
        \end{subfigure}
        \begin{subfigure}[b]{0.24\linewidth}
            \includegraphics[width=\linewidth]{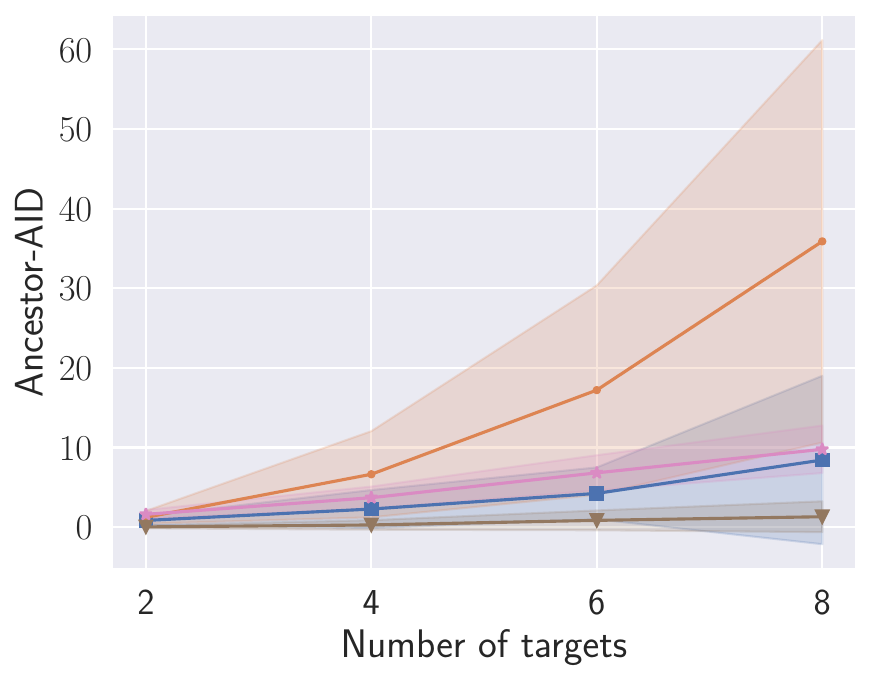}
            \caption*{Fisher-Z tests}
            \label{fig:ancestor_aid_per_target_ident_fshz_std}
        \end{subfigure}
        \begin{subfigure}[b]{0.24\linewidth}
            \includegraphics[width=\linewidth]{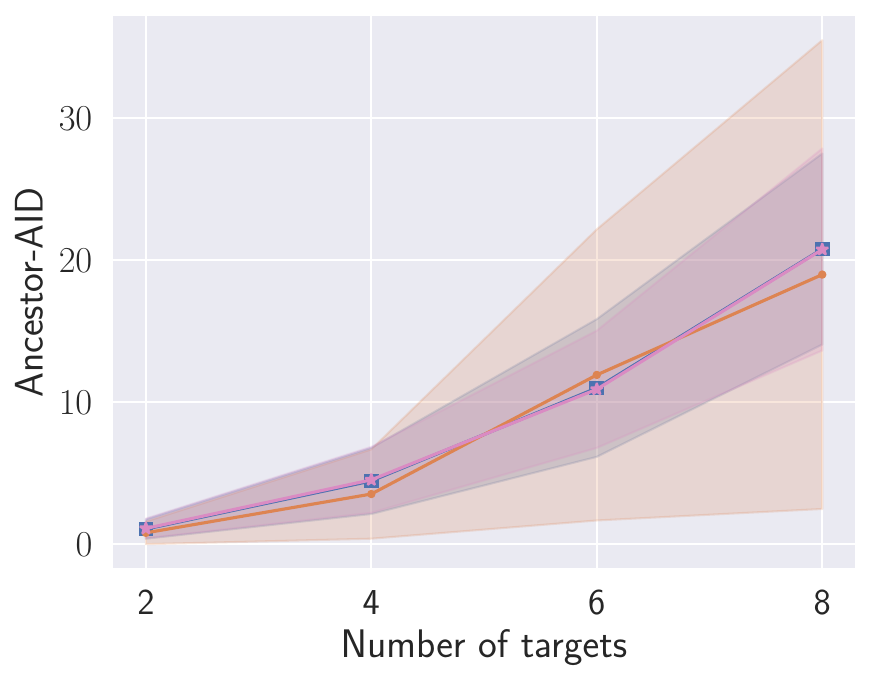}
            \caption*{KCI tests}
            \label{fig:ancestor_aid_per_target_ident_kci_std}
        \end{subfigure}
        \begin{subfigure}[b]{0.24\linewidth}
            \includegraphics[width=\linewidth]{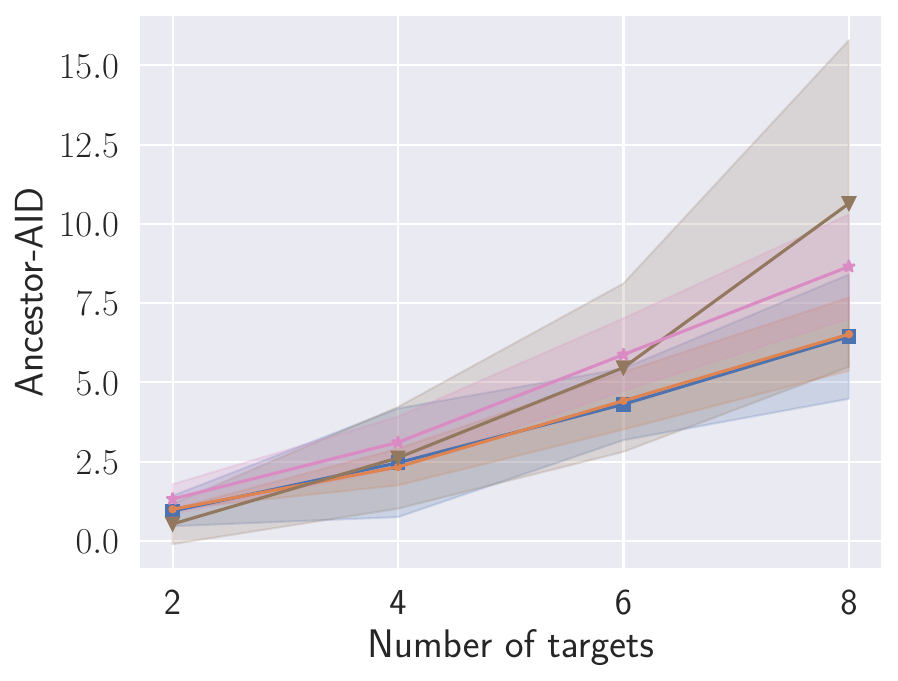}
            \caption*{$\chi^2$ tests}
            \label{fig:ancestor_aid_per_target_ident_chsq_std}
        \end{subfigure}
        \caption{Ancestor-AID.}
        \label{fig:ancestor_aid_per_target_ident_std}
    \end{subfigure}
    \begin{subfigure}[b]{\linewidth}
        \begin{subfigure}[b]{0.24\linewidth}
            \includegraphics[width=\linewidth]{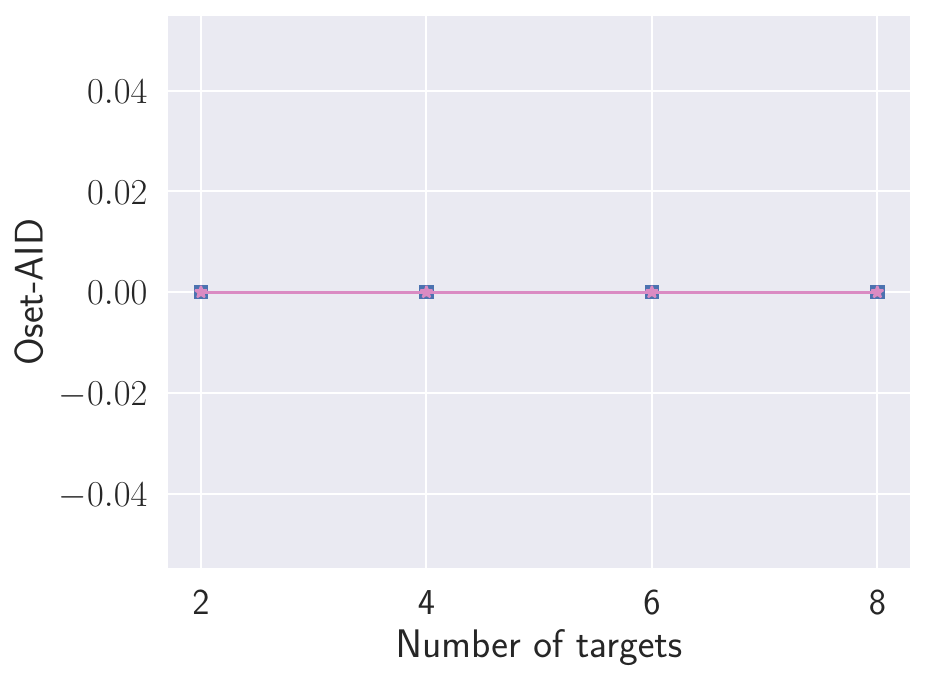}
            \caption*{d-separation tests}
            \label{fig:oset_aid_per_target_ident_dsep_std}
        \end{subfigure}
        \begin{subfigure}[b]{0.24\linewidth}
            \includegraphics[width=\linewidth]{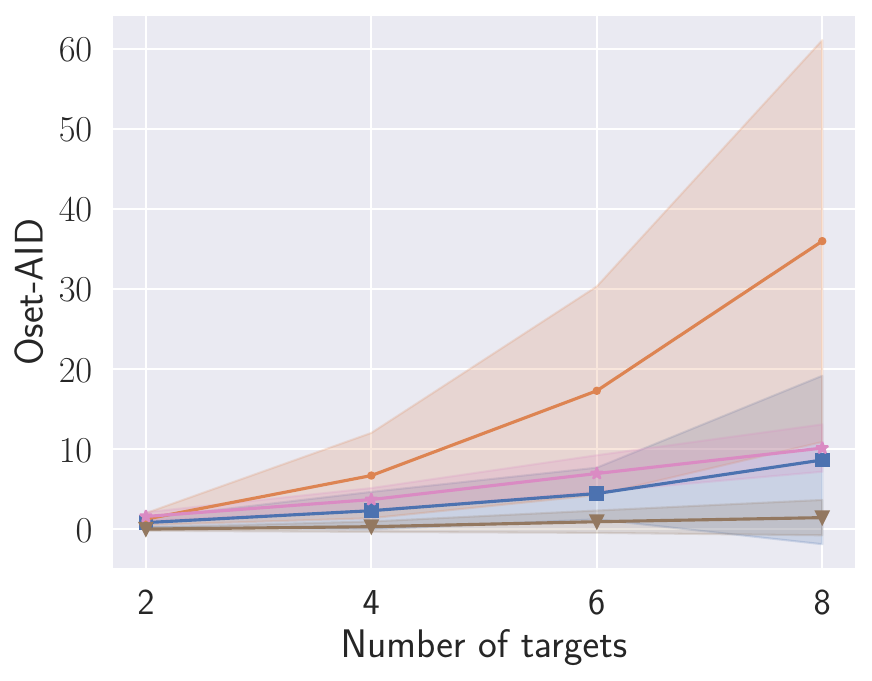}
            \caption*{Fisher-Z tests}
            \label{fig:oset_aid_per_target_ident_fshz_std}
        \end{subfigure}
        \begin{subfigure}[b]{0.24\linewidth}
            \includegraphics[width=\linewidth]{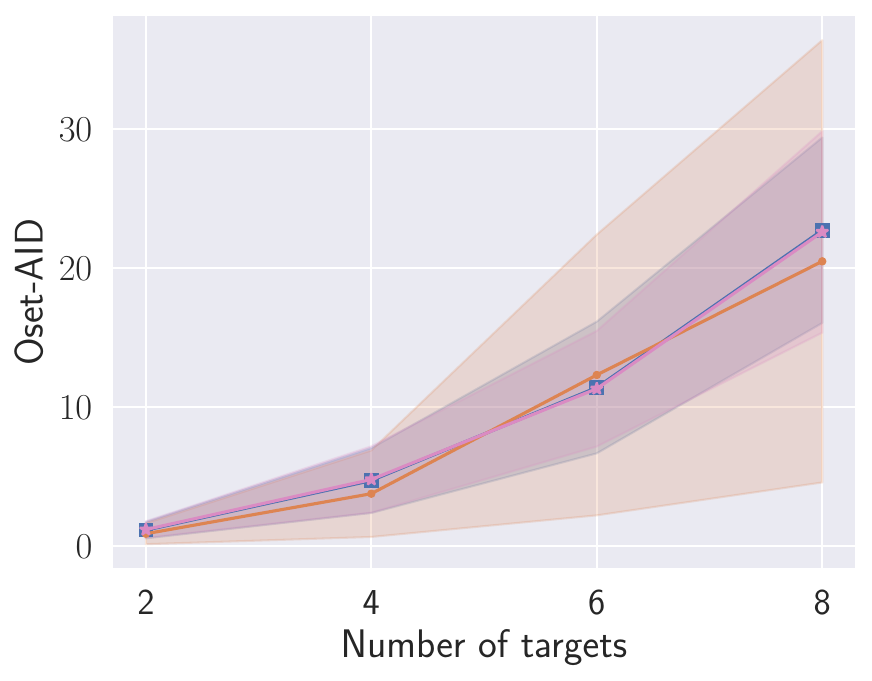}
            \caption*{KCI tests}
            \label{fig:oset_aid_per_target_ident_kci_std}
        \end{subfigure}
        \begin{subfigure}[b]{0.24\linewidth}
            \includegraphics[width=\linewidth]{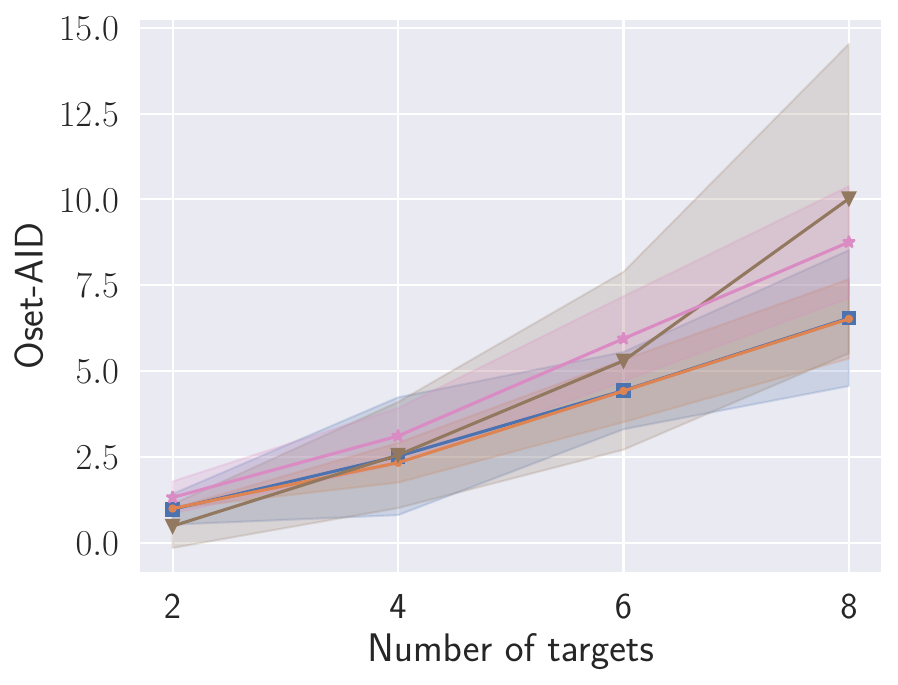}
            \caption*{$\chi^2$ tests}
            \label{fig:oset_aid_per_target_ident_chsq_std}
        \end{subfigure}
        \caption{Oset-AID.}
        \label{fig:oset_aid_per_target_ident_std}
    \end{subfigure}
    \caption{Adjustment identification distance over number of identifiable targets, with $n_{\mathbf{V}}=10$ for KCI tests and $n_{\mathbf{V}}=200$ otherwise, $\overline{d} = 3, d_{\max}=10$ and $n_{\mathbf{D}} = 1000$ data-points. The shadow area denotes the range of the standard deviation.}
    \label{fig:aid_per_target_ident_std}
\end{figure}

\begin{figure}
    \centering
    \includegraphics[width=.6\linewidth]{experiments/legend_big.pdf}
    \begin{subfigure}[b]{\linewidth}
        \begin{subfigure}[b]{0.24\linewidth}
            \includegraphics[width=\linewidth]{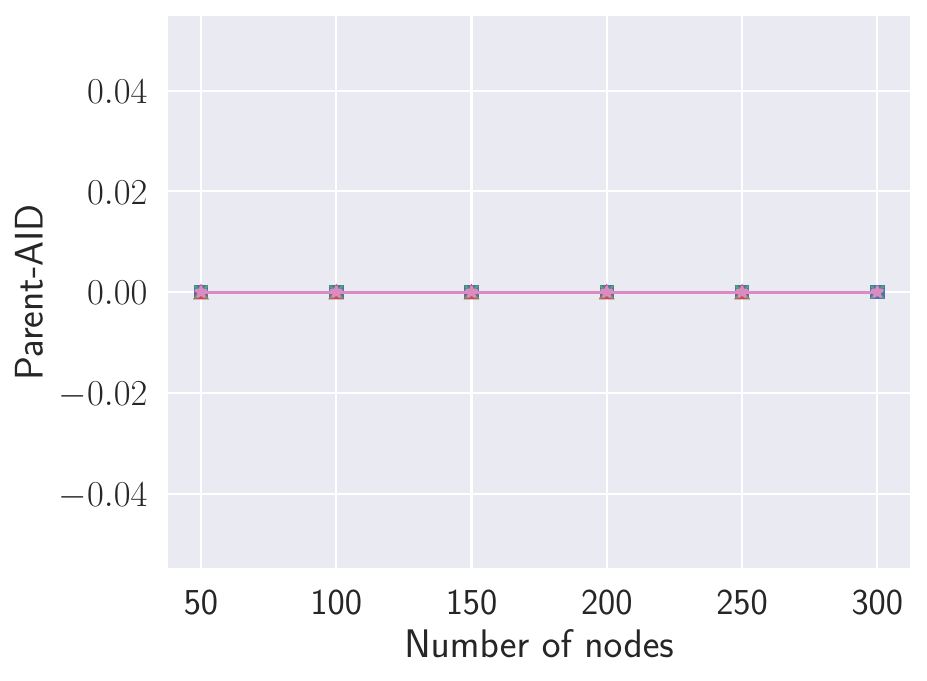}
            \caption*{d-separation tests}
            \label{fig:parent_aid_per_target_ident_dsep}
        \end{subfigure}
        \begin{subfigure}[b]{0.24\linewidth}
            \includegraphics[width=\linewidth]{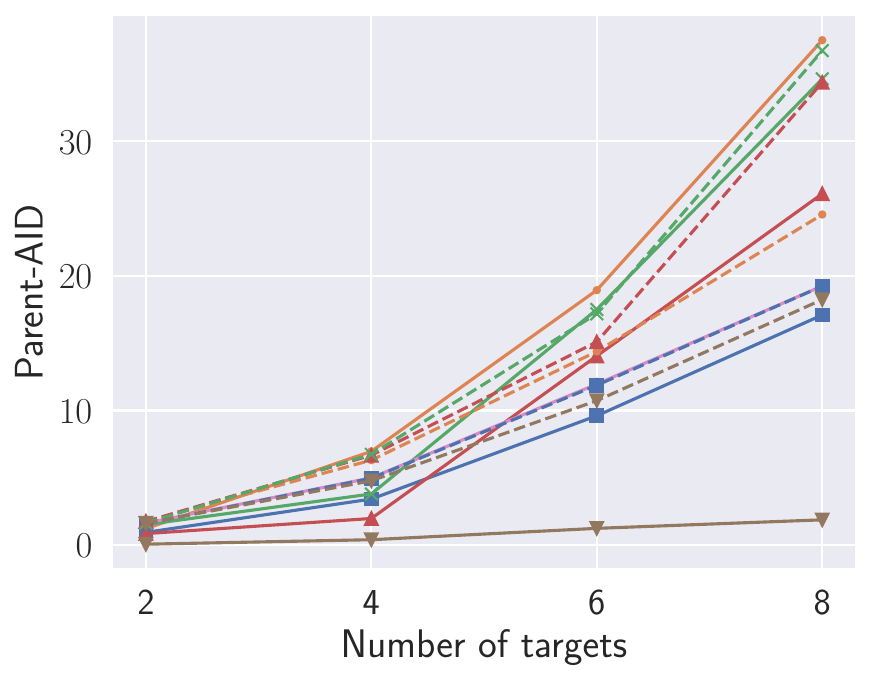}
            \caption*{Fisher-Z tests}
            \label{fig:parent_aid_per_target_ident_fshz}
        \end{subfigure}
        \begin{subfigure}[b]{0.24\linewidth}
            \includegraphics[width=\linewidth]{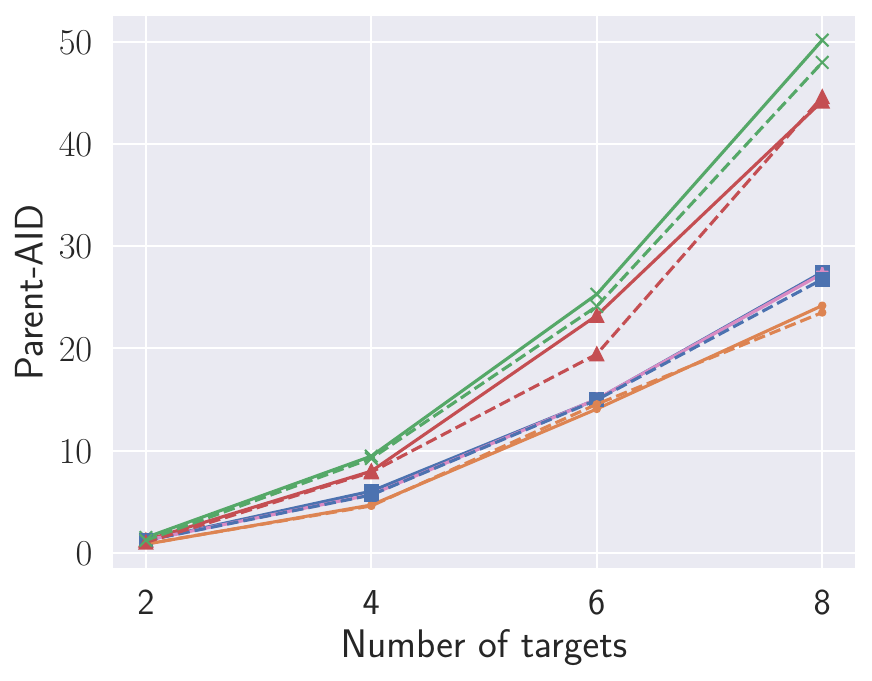}
            \caption*{KCI tests}
            \label{fig:parent_aid_per_target_ident_kci}
        \end{subfigure}
        \begin{subfigure}[b]{0.24\linewidth}
            \includegraphics[width=\linewidth]{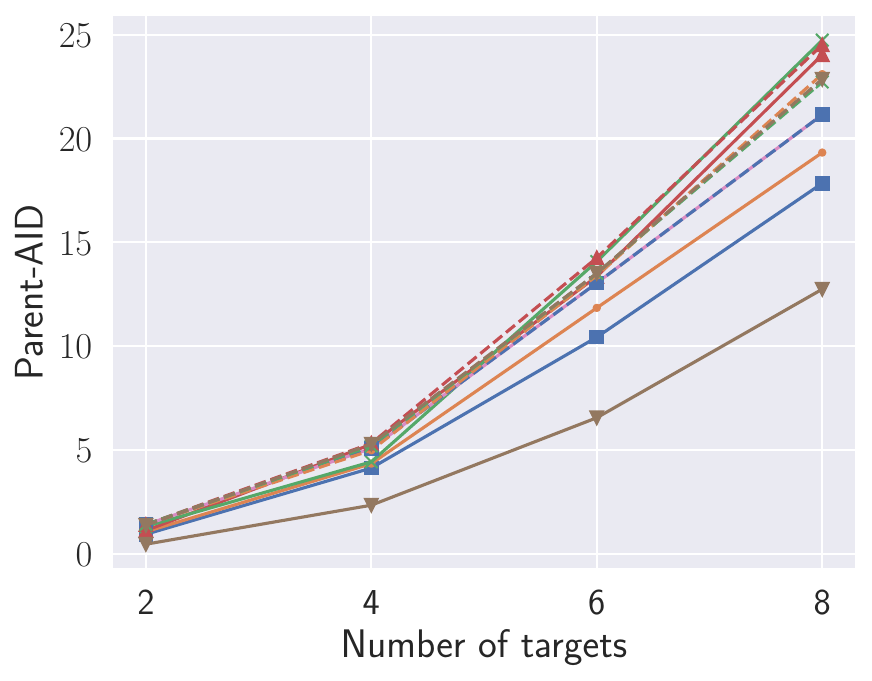}
            \caption*{$\chi^2$ tests}
            \label{fig:parent_aid_per_target_ident_chsq}
        \end{subfigure}
        \caption{Parent-AID.}
        \label{fig:parent_aid_per_target_ident}
    \end{subfigure}
    \begin{subfigure}[b]{\linewidth}
        \begin{subfigure}[b]{0.24\linewidth}
            \includegraphics[width=\linewidth]{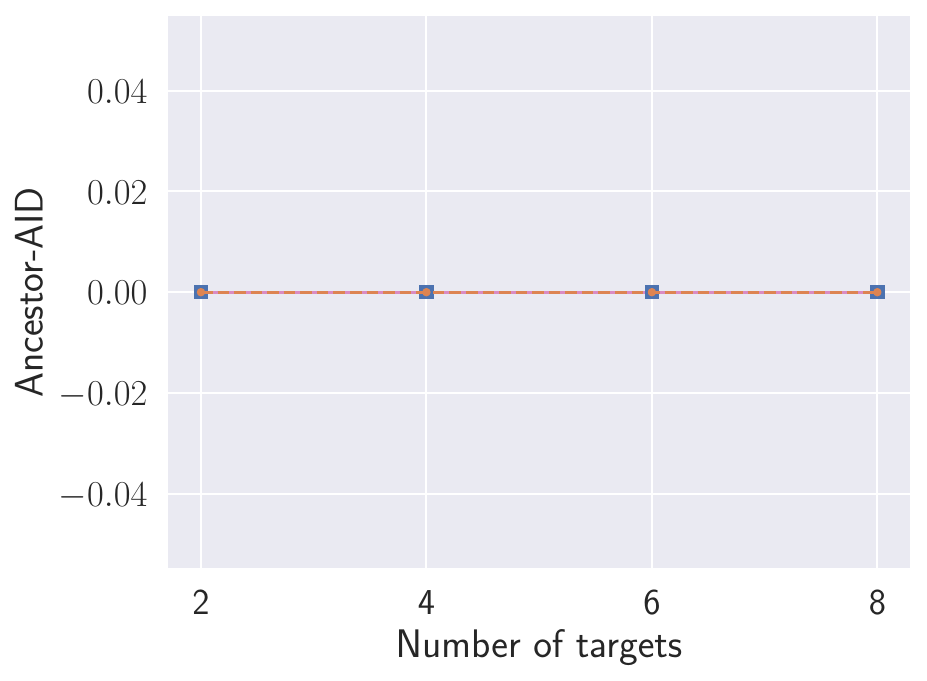}
            \caption*{d-separation tests}
            \label{fig:ancestor_aid_per_target_ident_dsep}
        \end{subfigure}
        \begin{subfigure}[b]{0.24\linewidth}
            \includegraphics[width=\linewidth]{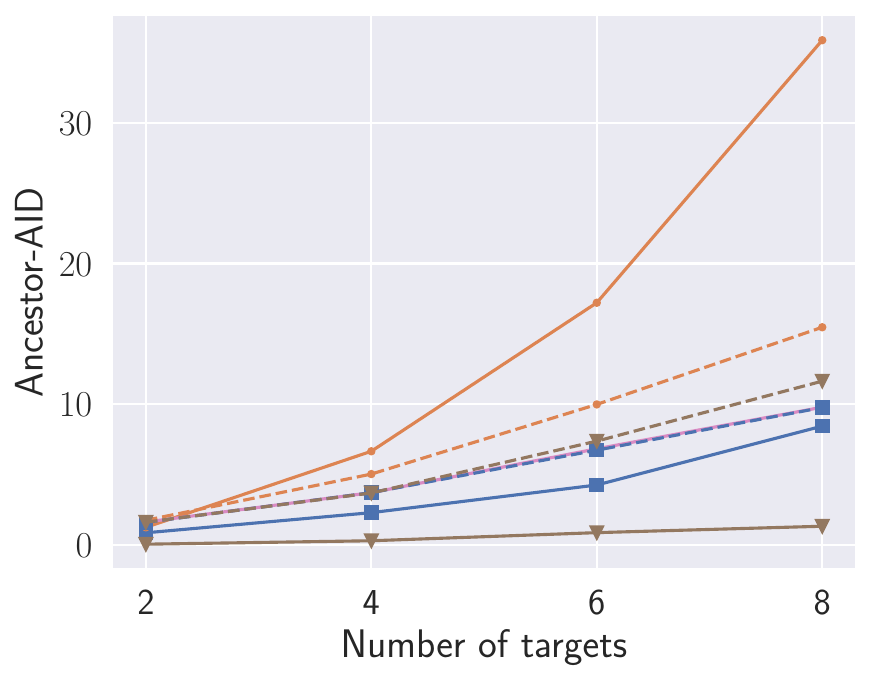}
            \caption*{Fisher-Z tests}
            \label{fig:ancestor_aid_per_target_ident_fshz}
        \end{subfigure}
        \begin{subfigure}[b]{0.24\linewidth}
            \includegraphics[width=\linewidth]{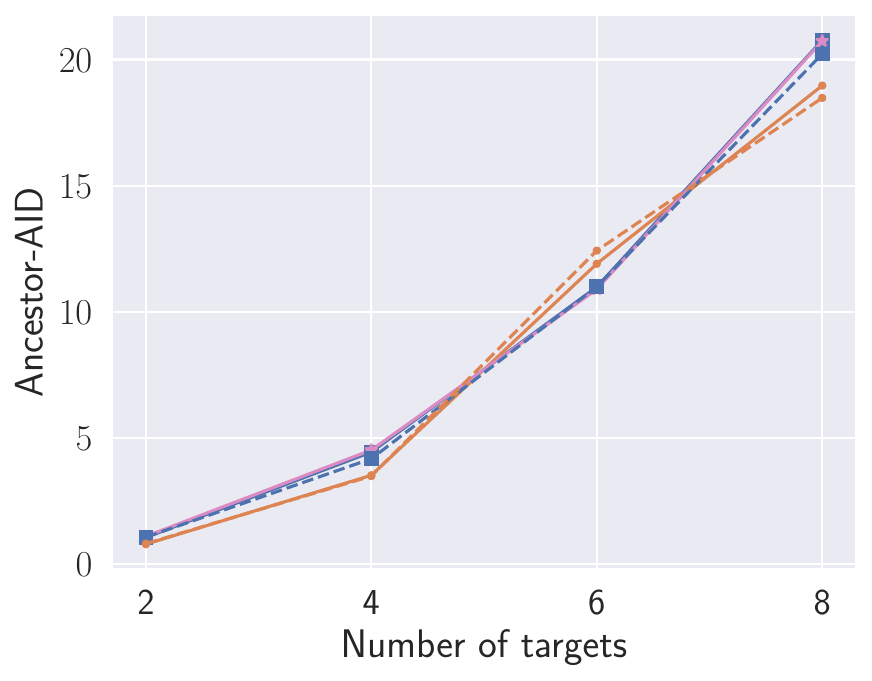}
            \caption*{KCI tests}
            \label{fig:ancestor_aid_per_target_ident_kci}
        \end{subfigure}
        \begin{subfigure}[b]{0.24\linewidth}
            \includegraphics[width=\linewidth]{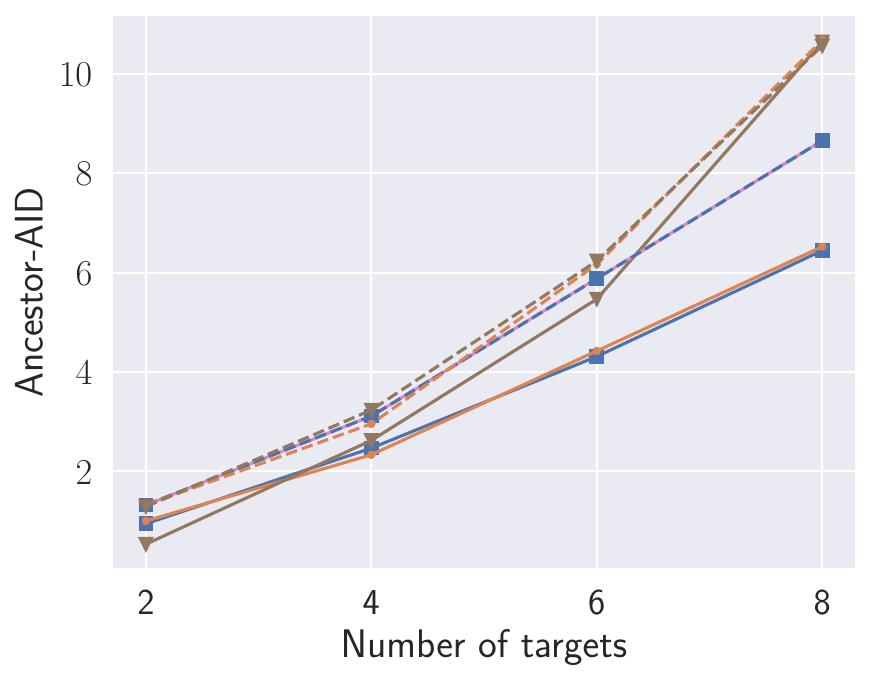}
            \caption*{$\chi^2$ tests}
            \label{fig:ancestor_aid_per_target_ident_chsq}
        \end{subfigure}
        \caption{Ancestor-AID.}
        \label{fig:ancestor_aid_per_target_ident}
    \end{subfigure}
    \begin{subfigure}[b]{\linewidth}
        \begin{subfigure}[b]{0.24\linewidth}
            \includegraphics[width=\linewidth]{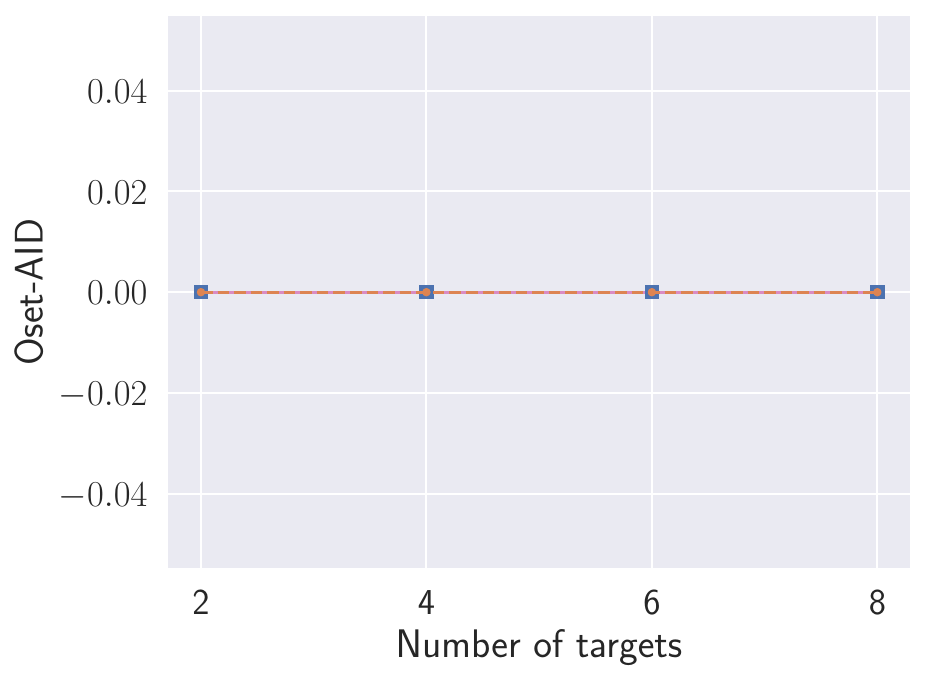}
            \caption*{d-separation tests}
            \label{fig:oset_aid_per_target_ident_dsep}
        \end{subfigure}
        \begin{subfigure}[b]{0.24\linewidth}
            \includegraphics[width=\linewidth]{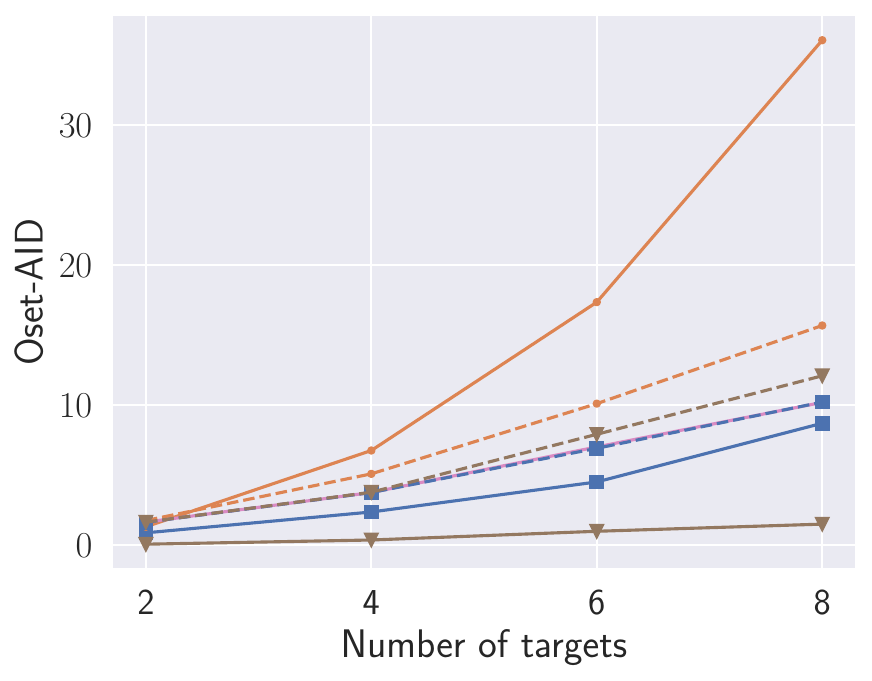}
            \caption*{Fisher-Z tests}
            \label{fig:oset_aid_per_target_ident_fshz}
        \end{subfigure}
        \begin{subfigure}[b]{0.24\linewidth}
            \includegraphics[width=\linewidth]{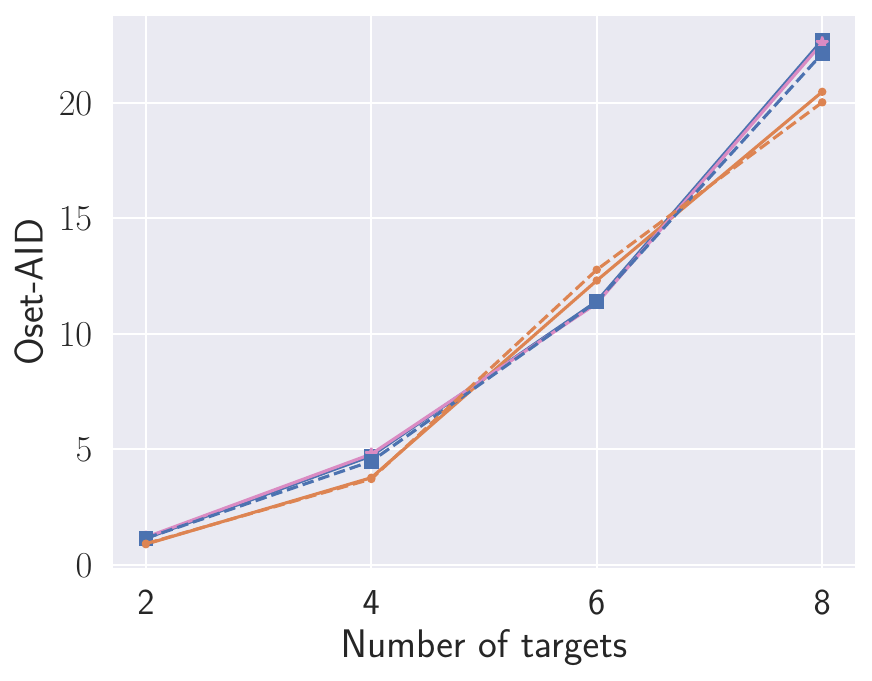}
            \caption*{KCI tests}
            \label{fig:oset_aid_per_target_ident_kci}
        \end{subfigure}
        \begin{subfigure}[b]{0.24\linewidth}
            \includegraphics[width=\linewidth]{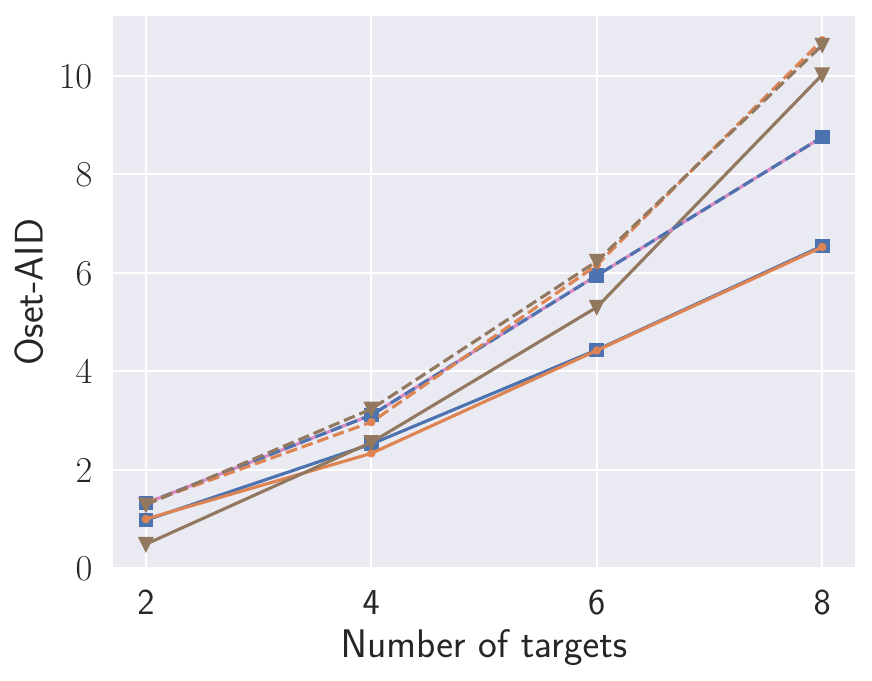}
            \caption*{$\chi^2$ tests}
            \label{fig:oset_aid_per_target_ident_chsq}
        \end{subfigure}
        \caption{Oset-AID.}
        \label{fig:oset_aid_per_target_ident}
    \end{subfigure}
    \caption{Adjustment identification distance for baseline methods combined with SNAP$(0)$ over number of identifiable targets, with $n_{\mathbf{V}}=10$ for KCI tests and $n_{\mathbf{V}}=200$ otherwise, $\overline{d} = 3, d_{\max}=10$ and $n_{\mathbf{D}} = 1000$ data-points.}
    \label{fig:aid_per_target_ident}
\end{figure}

\subsection{Various numbers of expected degree}
\label{app:over_degrees}

In this section, we evaluate methods on graphs with expected degrees of $2, \dots, 4$, $n_{\mathbf{V}}=10$ for KCI tests and $n_{\mathbf{V}}=200$ otherwise, $n_{\mathbf{T}}=4, d_{\max}=10$ and $n_{\mathbf{D}} = 1000$ data-points.
We evaluate baselines methods and SNAP$(\infty)$  on Fig.~\ref{fig:per_degree_std}.
On Fig.~\ref{fig:per_degree} we also include baseline methods combined with SNAP$(0)$ for prefiltering.
Our results show that SNAP variants are consistently among the top performers on most metrics except for \ac{SHD}, while the performance of other methods is highly dependent on the setting.
In particular, we observe that SNAP$(\infty)$ needs to do fewer KCI tests, while remaining comparable on other metrics.

\begin{figure}
    \centering
    \includegraphics[width=.6\linewidth]{experiments/legend_small.pdf}
    \begin{subfigure}[b]{\linewidth}
        \begin{subfigure}[b]{0.24\linewidth}
            \includegraphics[width=\linewidth]{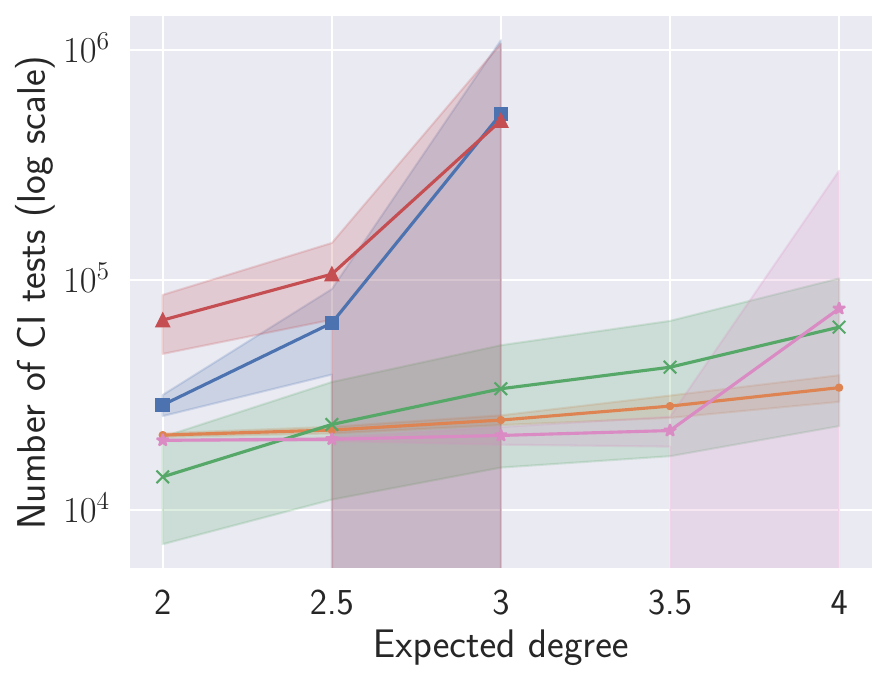}
            \caption*{d-separation tests}
            \label{fig:test_per_degree_unrest_dsep_std}
        \end{subfigure}
        \begin{subfigure}[b]{0.24\linewidth}
            \includegraphics[width=\linewidth]{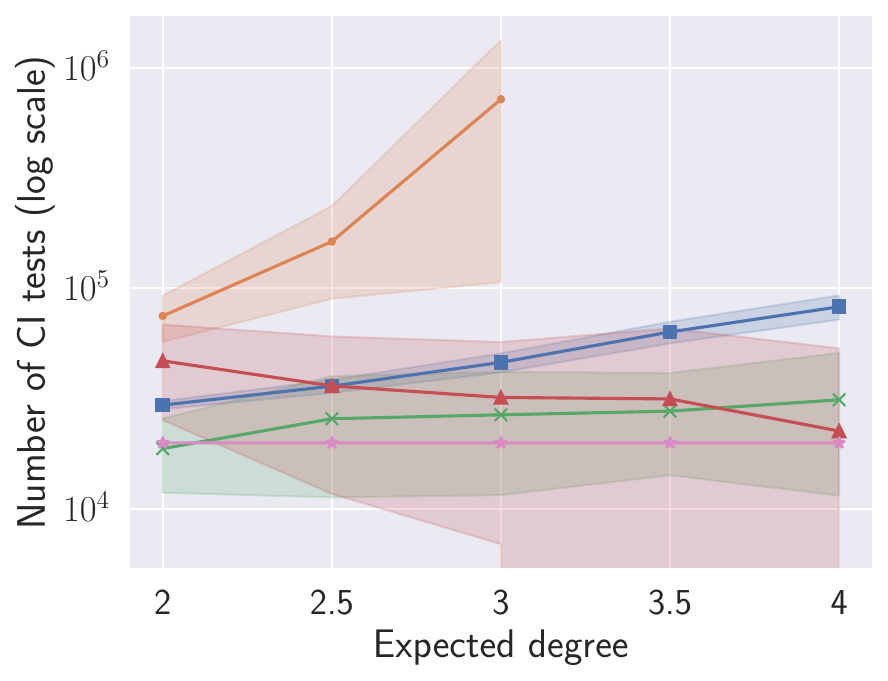}
            \caption*{Fisher-Z tests}
            \label{fig:test_per_degree_unrest_fshz_std}
        \end{subfigure}
        \begin{subfigure}[b]{0.24\linewidth}
            \includegraphics[width=\linewidth]{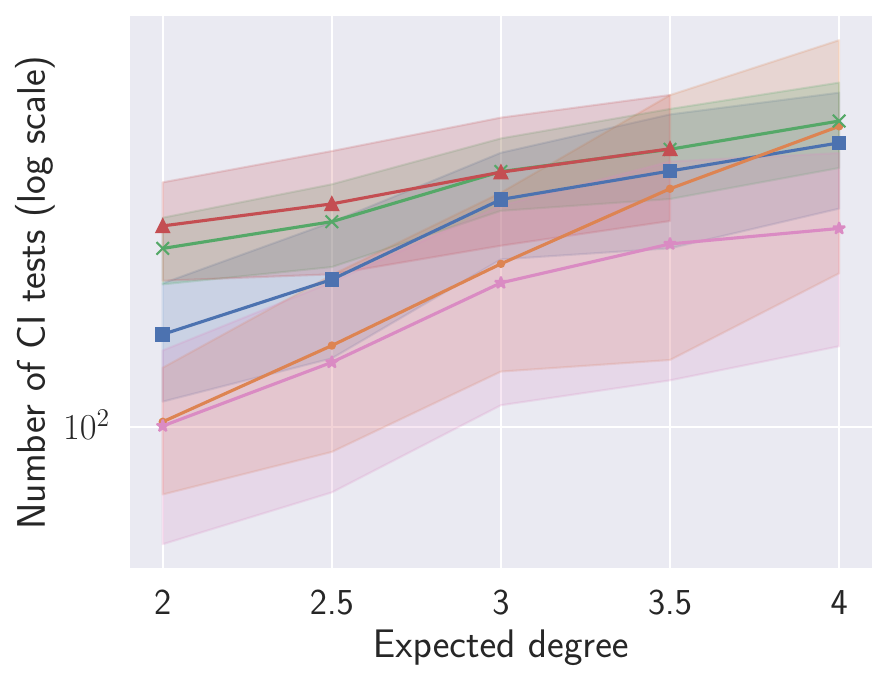}
            \caption*{KCI tests}
            \label{fig:test_per_degree_unrest_kci_std}
        \end{subfigure}
        \begin{subfigure}[b]{0.24\linewidth}
            \includegraphics[width=\linewidth]{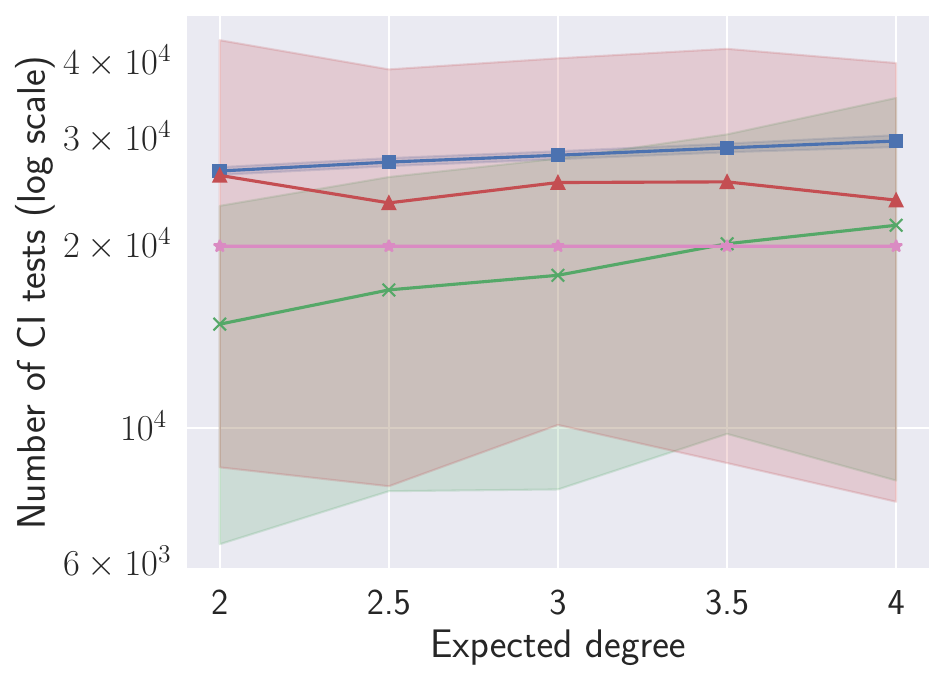}
            \caption*{$\chi^2$ tests}
            \label{fig:test_per_degree_unrest_chsq_std}
        \end{subfigure}
        \caption{Number of \ac{CI} tests.}
        \label{fig:test_per_degree_unrest_std}
    \end{subfigure}
    \begin{subfigure}[b]{\linewidth}
        \begin{subfigure}[b]{0.24\linewidth}
            \includegraphics[width=\linewidth]{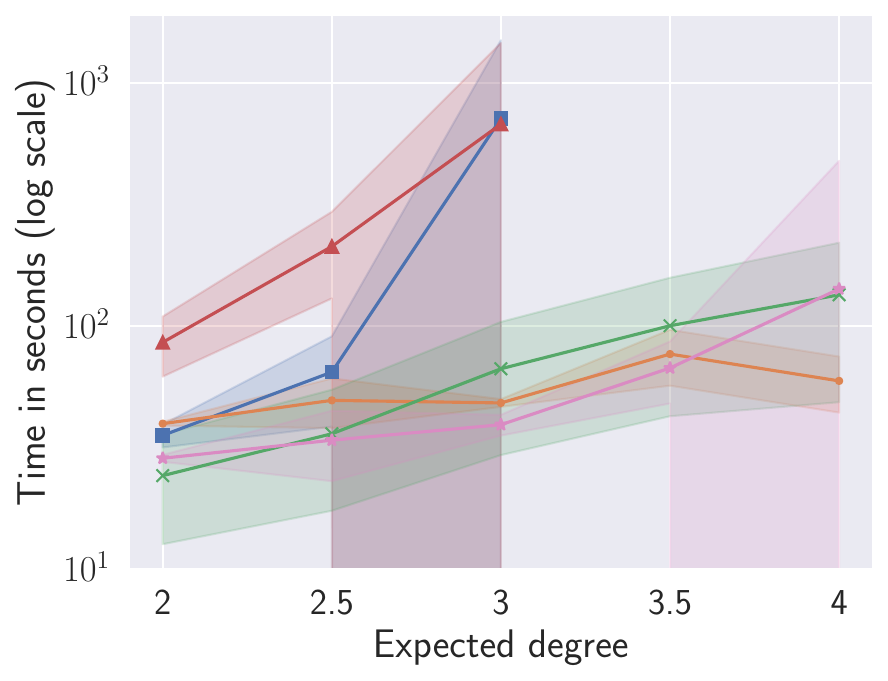}
            \caption*{d-separation tests}
            \label{fig:time_per_degree_unrest_dsep_std}
        \end{subfigure}
        \begin{subfigure}[b]{0.24\linewidth}
            \includegraphics[width=\linewidth]{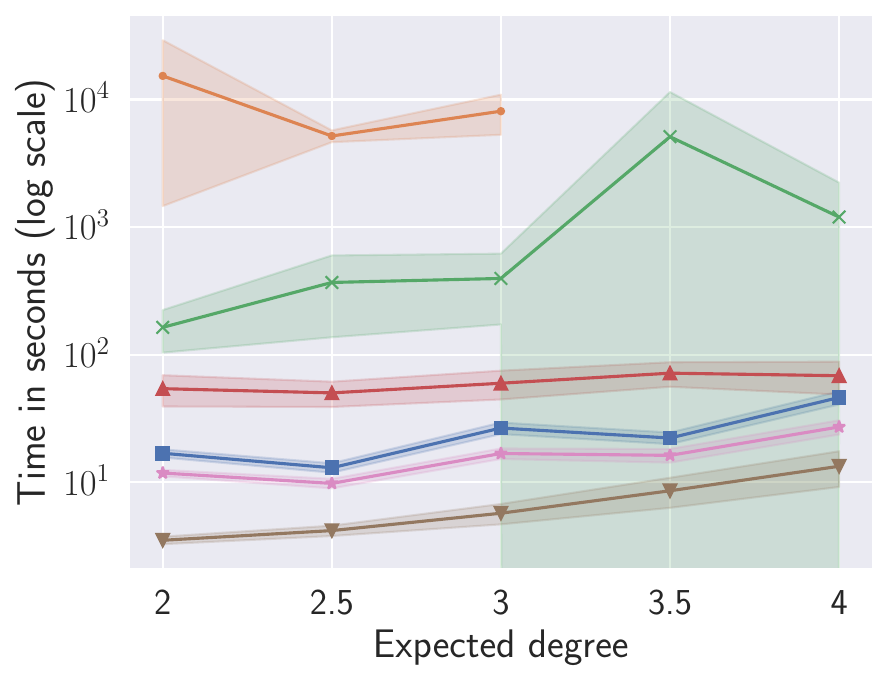}
            \caption*{Fisher-Z tests}
            \label{fig:time_per_degree_unrest_fshz_std}
        \end{subfigure}
        \begin{subfigure}[b]{0.24\linewidth}
            \includegraphics[width=\linewidth]{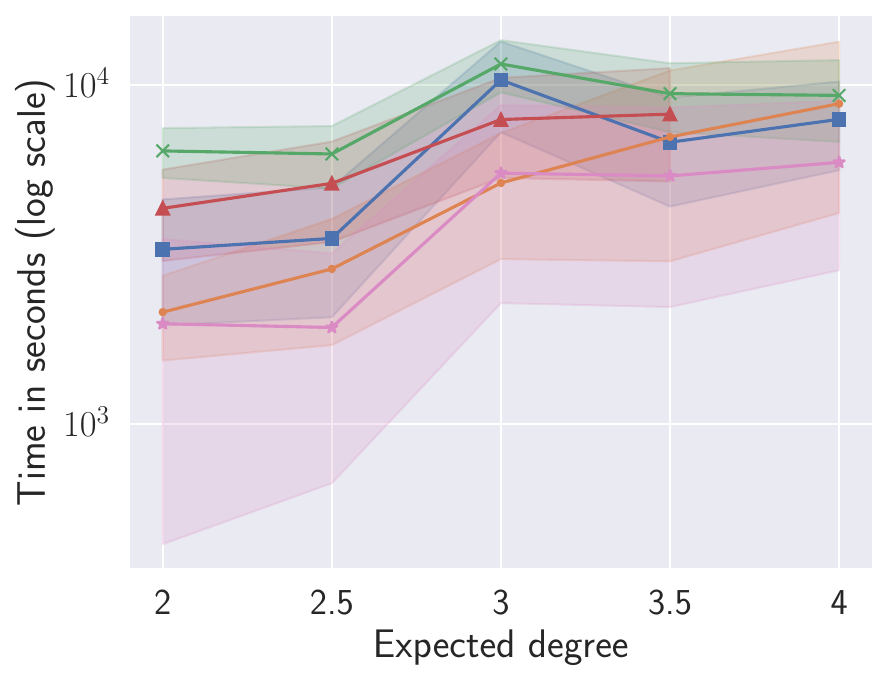}
            \caption*{KCI tests}
            \label{fig:time_per_degree_unrest_kci_std}
        \end{subfigure}
        \begin{subfigure}[b]{0.24\linewidth}
            \includegraphics[width=\linewidth]{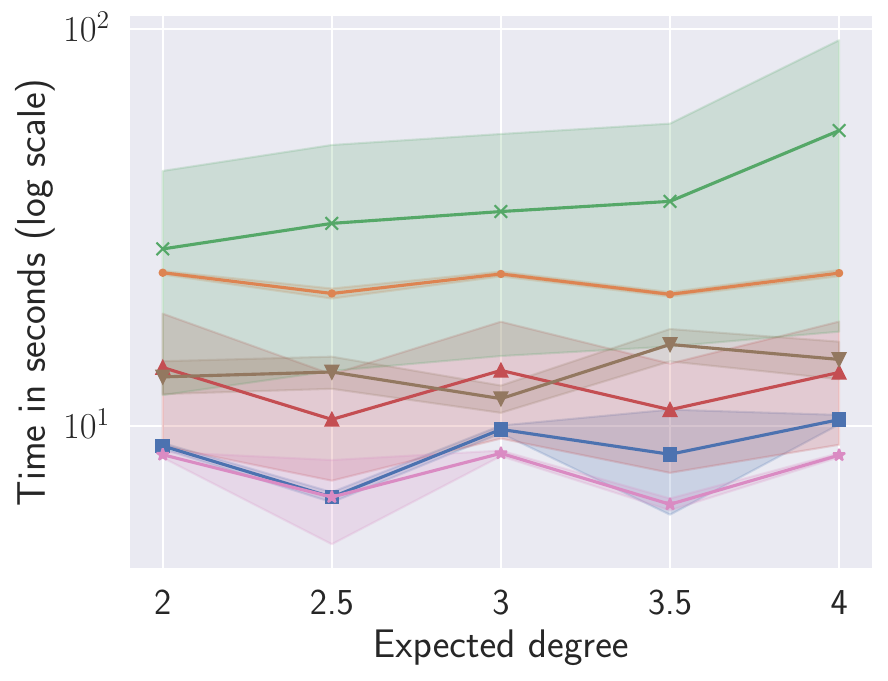}
            \caption*{$\chi^2$ tests}
            \label{fig:time_per_degree_unrest_chsq_std}
        \end{subfigure}
        \caption{Computation time.}
        \label{fig:time_per_degree_unrest_std}
    \end{subfigure}
    \begin{subfigure}[b]{\linewidth}
        \begin{subfigure}[b]{0.24\linewidth}
            \includegraphics[width=\linewidth]{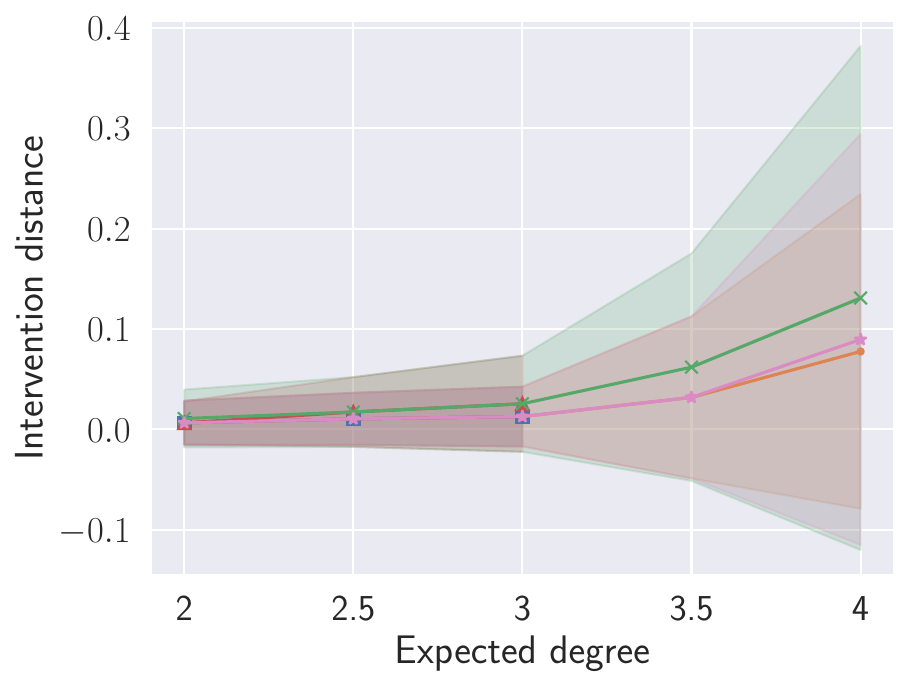}
            \caption*{d-separation tests}
            \label{fig:int_dist_per_degree_unrest_dsep_abs_std}
        \end{subfigure}
        \begin{subfigure}[b]{0.24\linewidth}
            \includegraphics[width=\linewidth]{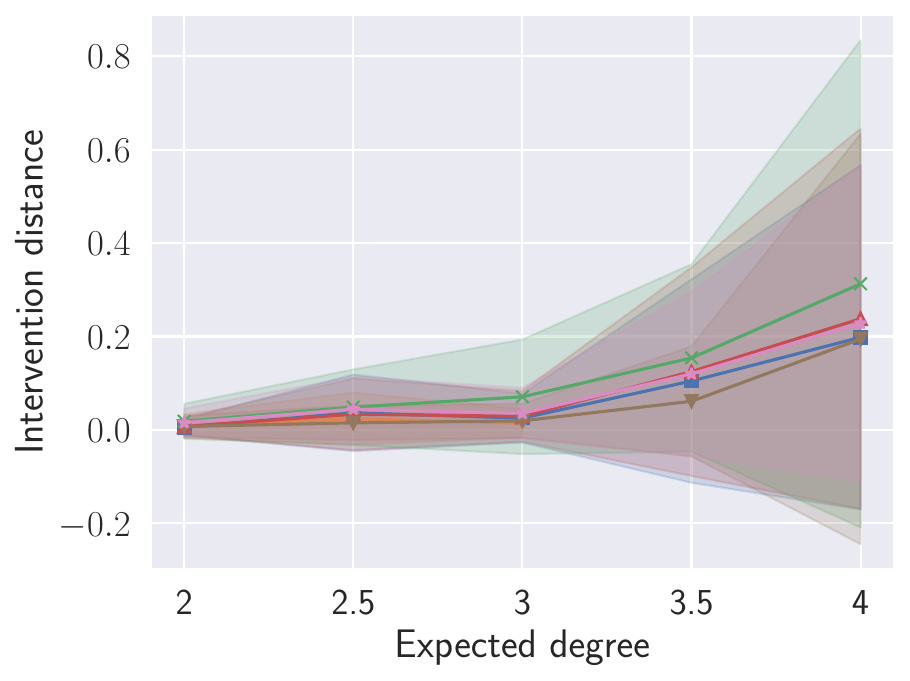}
            \caption*{Fisher-Z tests}
            \label{fig:int_dist_per_degree_unrest_fshz_abs_std}
        \end{subfigure}
        \begin{subfigure}[b]{0.24\linewidth}
            \includegraphics[width=\linewidth]{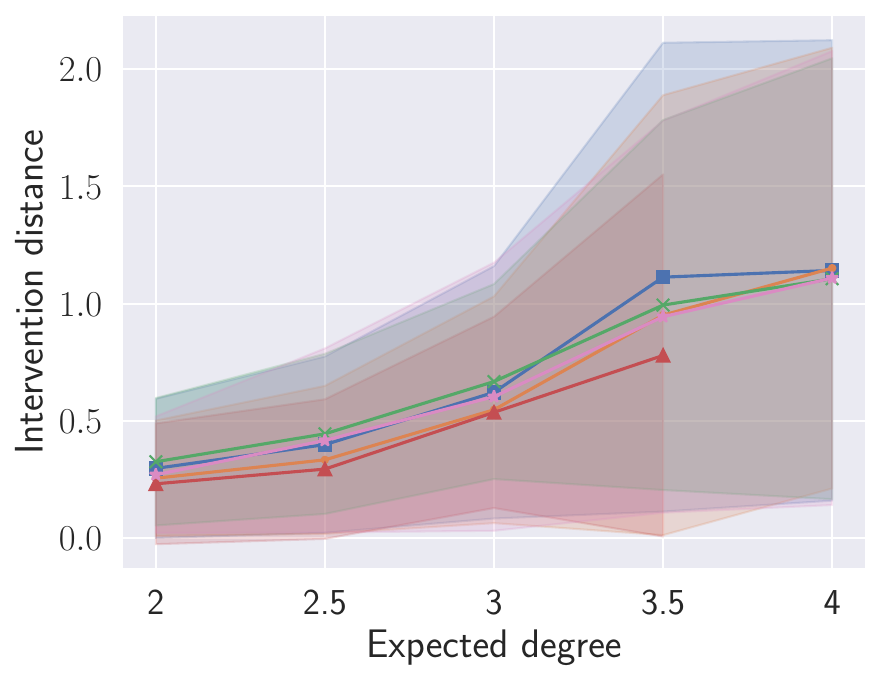}
            \caption*{KCI tests}
            \label{fig:int_dist_per_degree_unrest_kci_abs_std}
        \end{subfigure}
        \begin{subfigure}[b]{0.24\linewidth}
            \includegraphics[width=\linewidth]{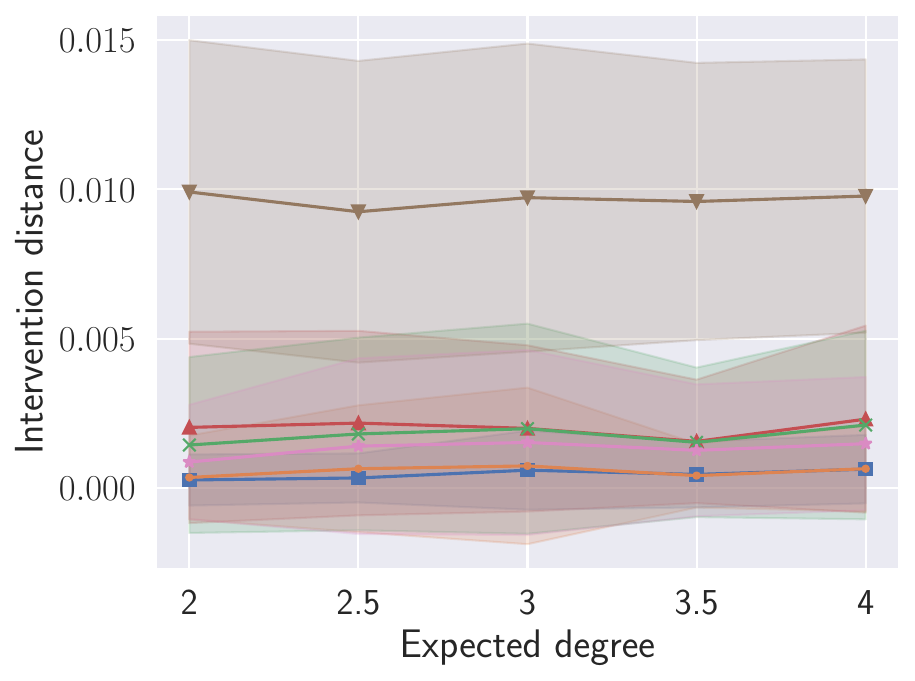}
            \caption*{$\chi^2$ tests}
            \label{fig:int_dist_per_degree_unrest_chsq_abs_std}
        \end{subfigure}
        \caption{Intervention distance.}
        \label{fig:int_dist_per_degree_unrest_std}
    \end{subfigure}
    \begin{subfigure}[b]{\linewidth}
        \begin{subfigure}[b]{0.24\linewidth}
            \includegraphics[width=\linewidth]{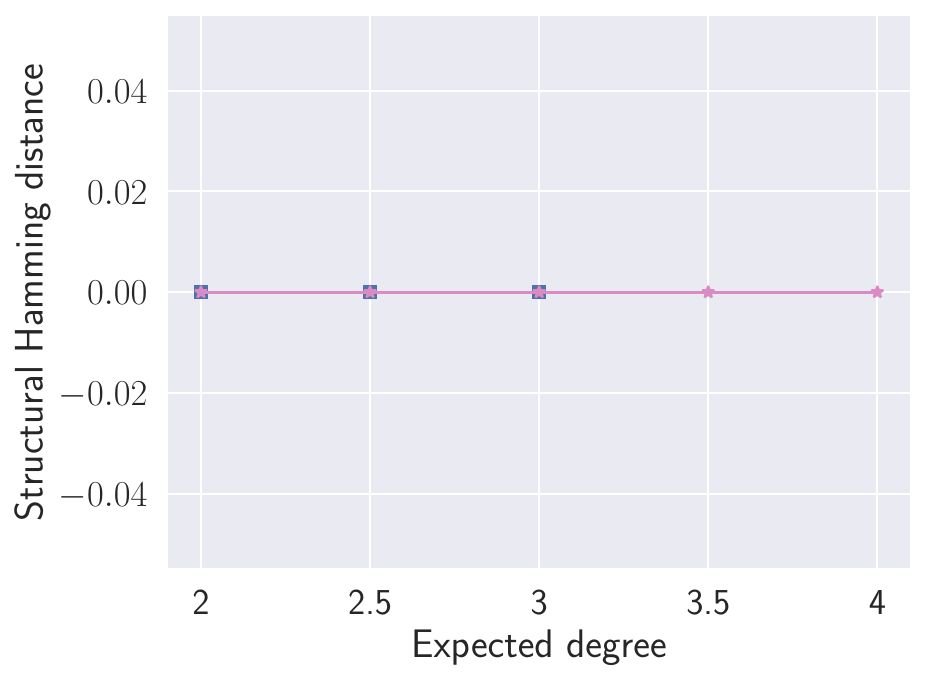}
            \caption*{d-separation tests}
            \label{fig:shd_per_degree_unrest_dsep_std}
        \end{subfigure}
        \begin{subfigure}[b]{0.24\linewidth}
            \includegraphics[width=\linewidth]{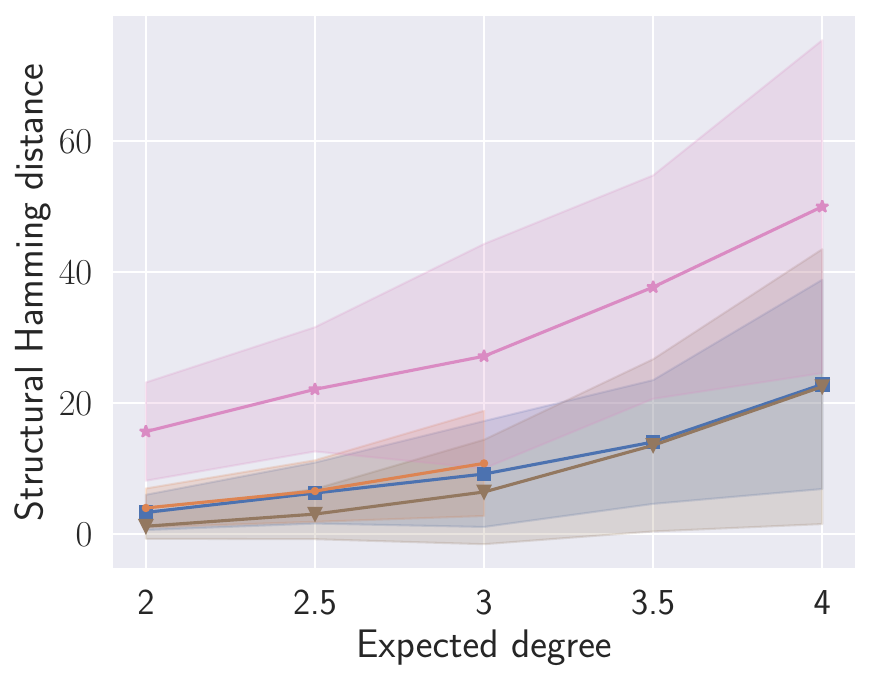}
            \caption*{Fisher-Z tests}
            \label{fig:shd_per_degree_unrest_fshz_std}
        \end{subfigure}
        \begin{subfigure}[b]{0.24\linewidth}
            \includegraphics[width=\linewidth]{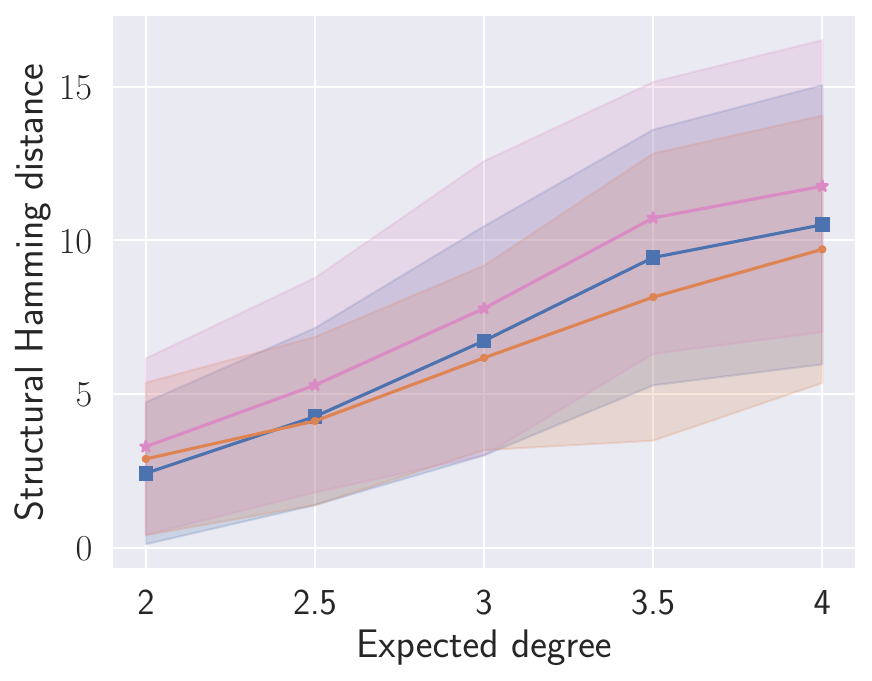}
            \caption*{KCI tests}
            \label{fig:shd_per_degree_unrest_kci_std}
        \end{subfigure}
        \begin{subfigure}[b]{0.24\linewidth}
            \includegraphics[width=\linewidth]{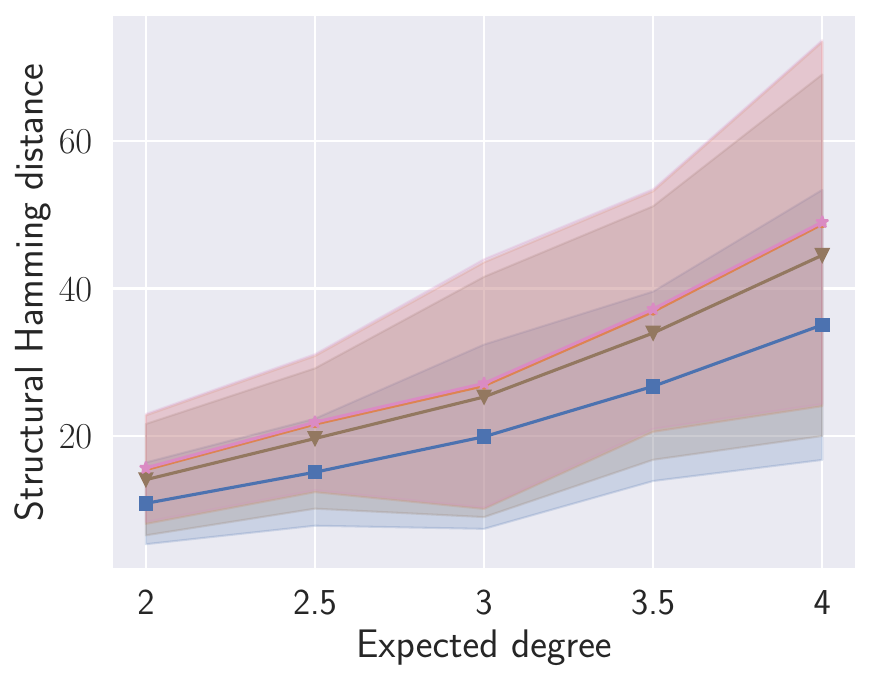}
            \caption*{$\chi^2$ tests}
            \label{fig:shd_per_degree_unrest_chsq_std}
        \end{subfigure}
        \caption{\Acf{SHD}.}
        \label{fig:shd_per_degree_unrest_std}
    \end{subfigure}
    \caption{Additional results over expected degree, with $n_{\mathbf{V}}=10$ for KCI tests and $n_{\mathbf{V}}=200$ otherwise, $n_{\mathbf{T}}=4, d_{\max}=10$ and $n_{\mathbf{D}} = 1000$ data-points. The shadow area denotes the range of the standard deviation. We compute the intervention distance in the d-separation tests case using random linear Gaussian data according to the discovered structure.}
    \label{fig:per_degree_std}
\end{figure}

\begin{figure}
    \centering
    \includegraphics[width=.6\linewidth]{experiments/legend_small.pdf}
    \begin{subfigure}[b]{\linewidth}
        \begin{subfigure}[b]{0.24\linewidth}
            \includegraphics[width=\linewidth]{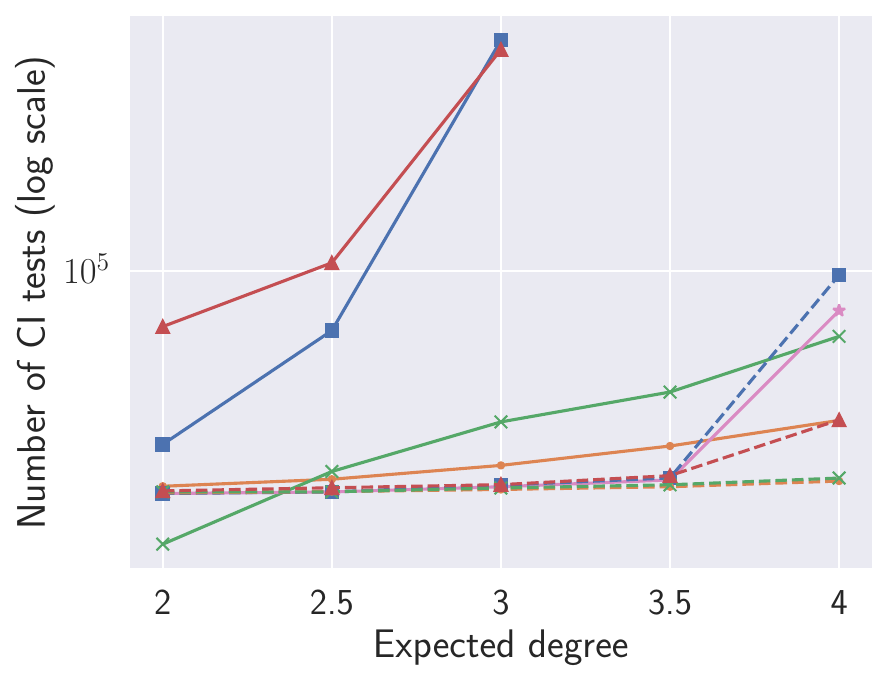}
            \caption*{d-separation tests}
            \label{fig:test_per_degree_unrest_dsep}
        \end{subfigure}
        \begin{subfigure}[b]{0.24\linewidth}
            \includegraphics[width=\linewidth]{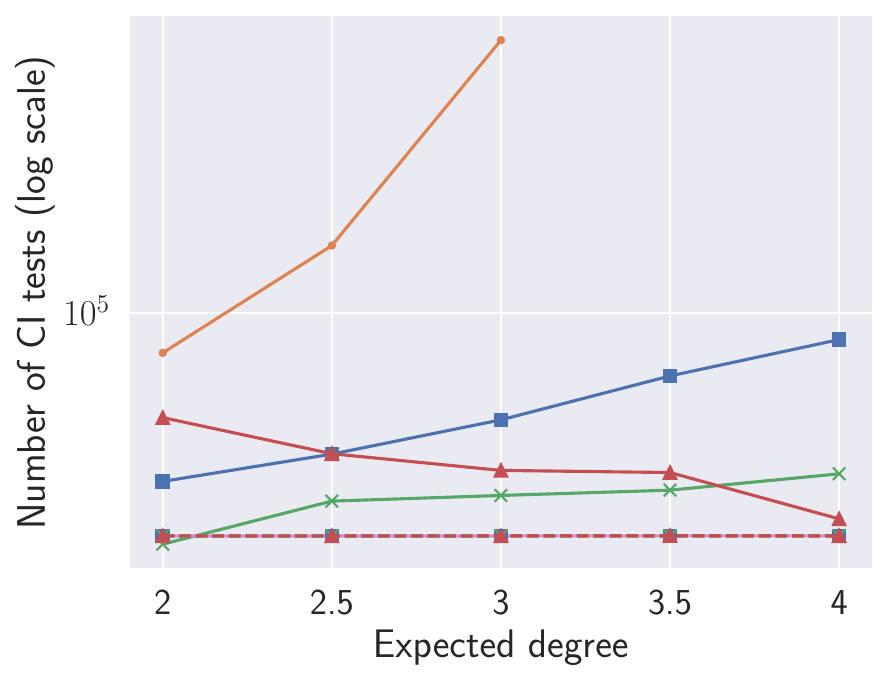}
            \caption*{Fisher-Z tests}
            \label{fig:test_per_degree_unrest_fshz}
        \end{subfigure}
        \begin{subfigure}[b]{0.24\linewidth}
            \includegraphics[width=\linewidth]{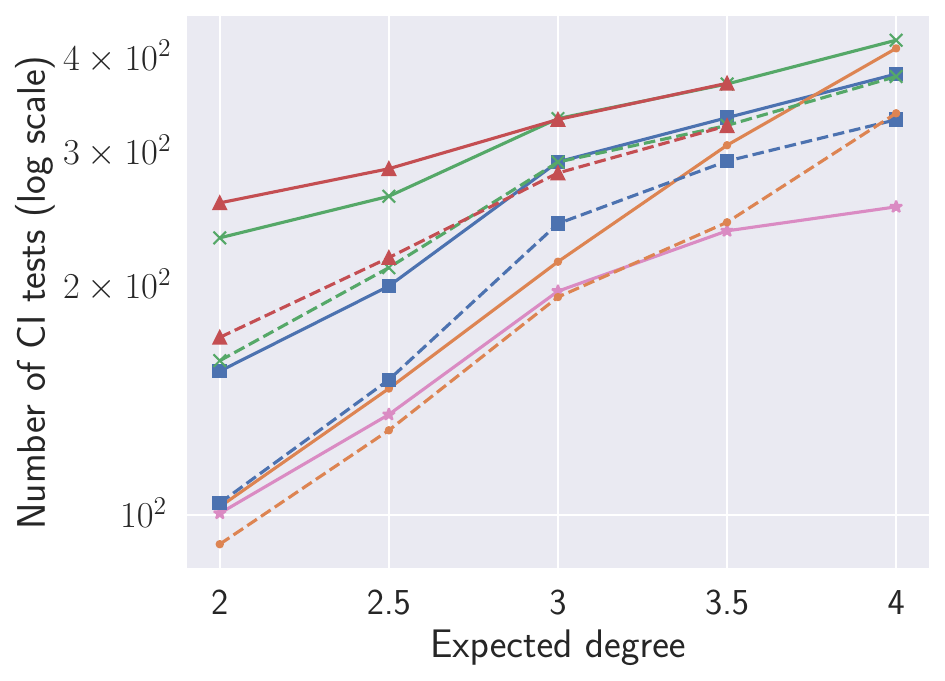}
            \caption*{KCI tests}
            \label{fig:test_per_degree_unrest_kci}
        \end{subfigure}
        \begin{subfigure}[b]{0.24\linewidth}
            \includegraphics[width=\linewidth]{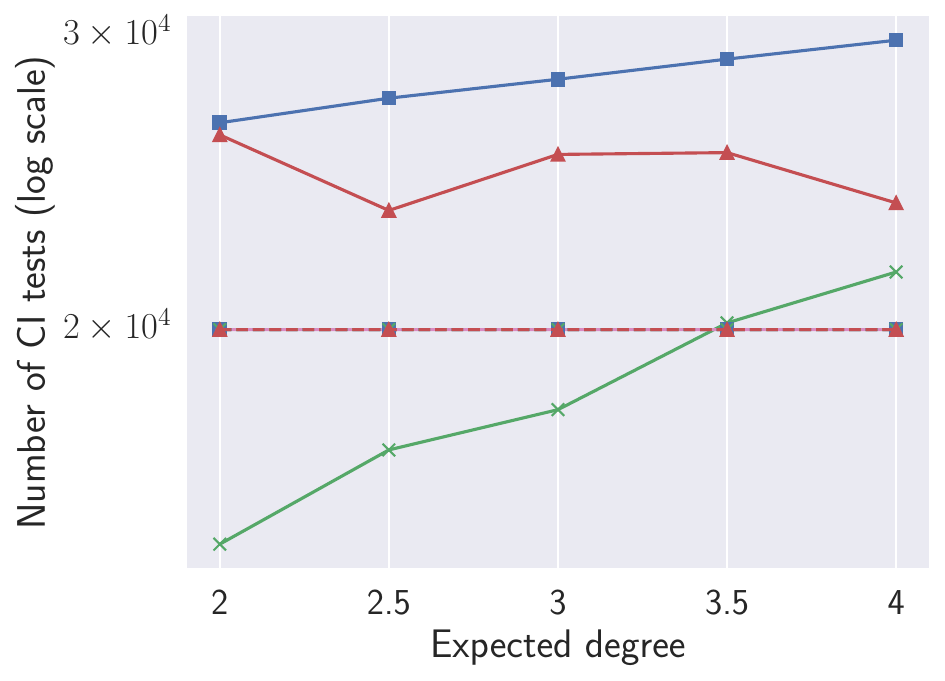}
            \caption*{$\chi^2$ tests}
            \label{fig:test_per_degree_unrest_chsq}
        \end{subfigure}
        \caption{Number of \ac{CI} tests.}
        \label{fig:test_per_degree_unrest}
    \end{subfigure}
    \begin{subfigure}[b]{\linewidth}
        \begin{subfigure}[b]{0.24\linewidth}
            \includegraphics[width=\linewidth]{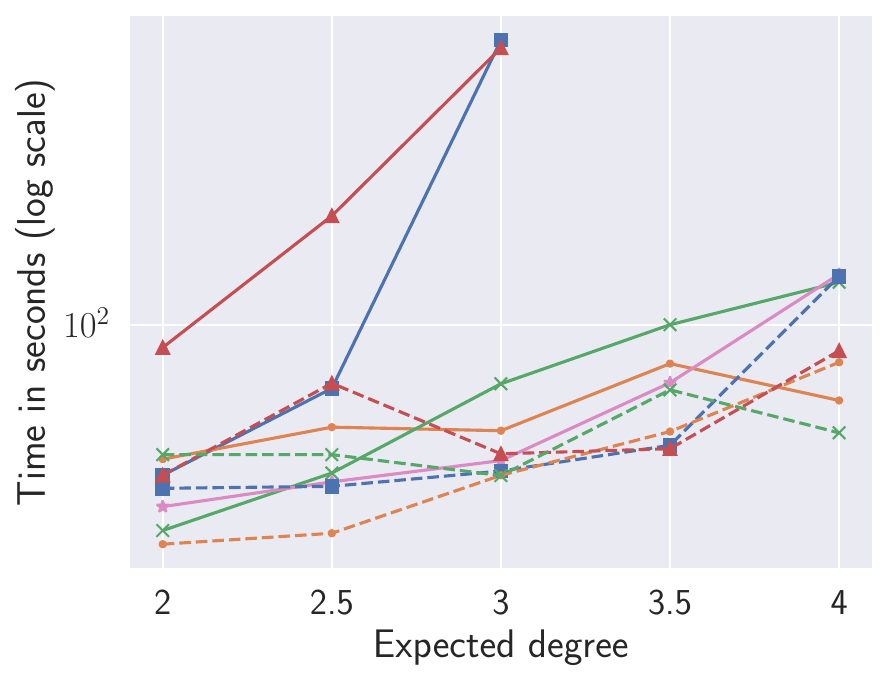}
            \caption*{d-separation tests}
            \label{fig:time_per_degree_unrest_dsep}
        \end{subfigure}
        \begin{subfigure}[b]{0.24\linewidth}
            \includegraphics[width=\linewidth]{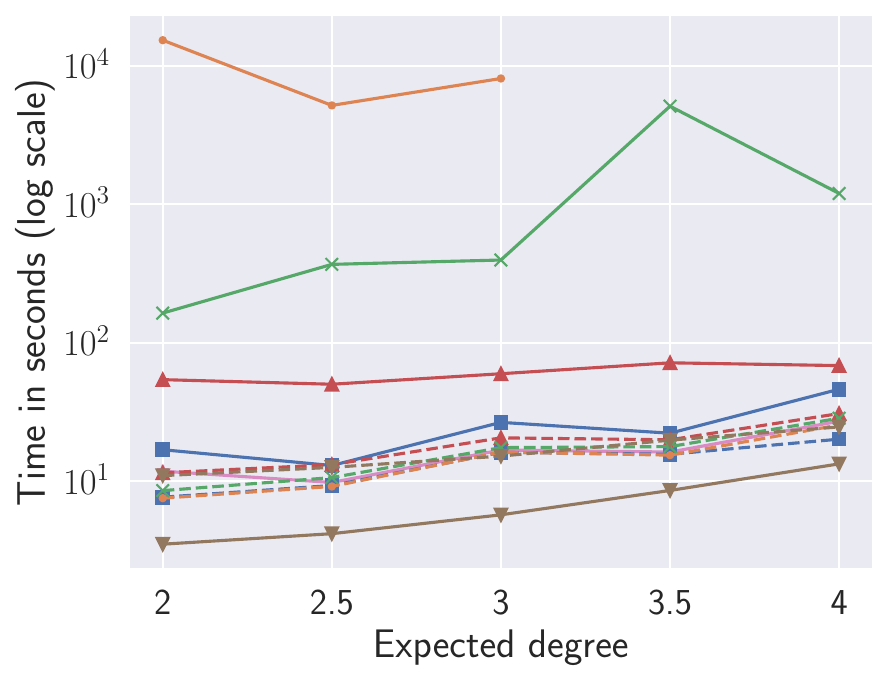}
            \caption*{Fisher-Z tests}
            \label{fig:time_per_degree_unrest_fshz}
        \end{subfigure}
        \begin{subfigure}[b]{0.24\linewidth}
            \includegraphics[width=\linewidth]{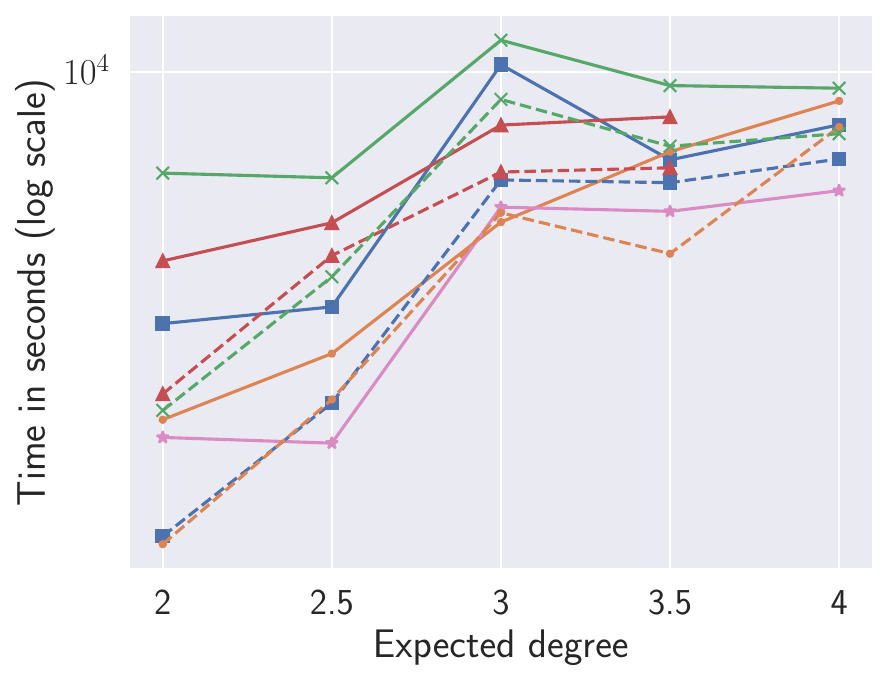}
            \caption*{KCI tests}
            \label{fig:time_per_degree_unrest_kci}
        \end{subfigure}
        \begin{subfigure}[b]{0.24\linewidth}
            \includegraphics[width=\linewidth]{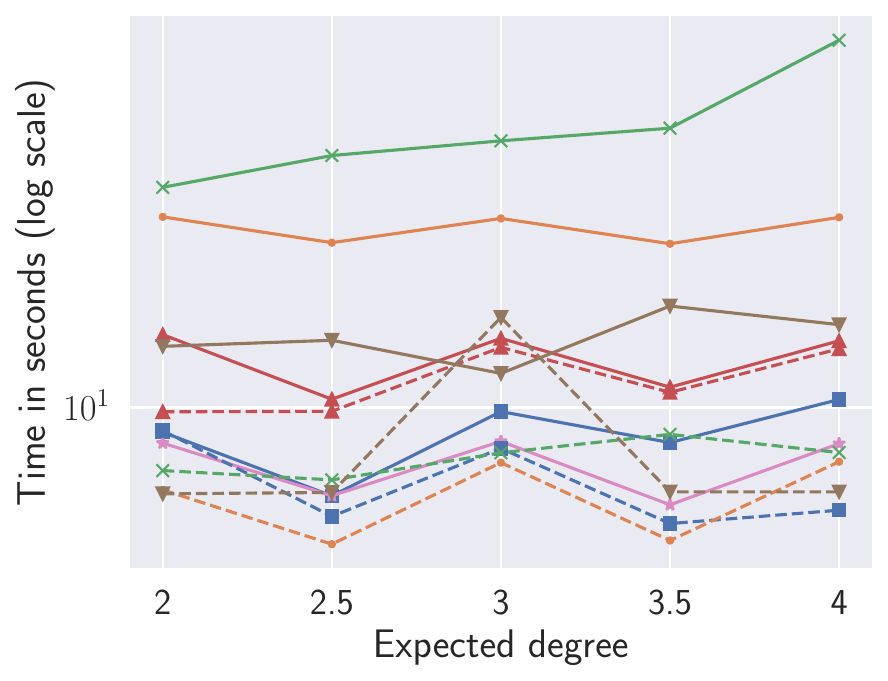}
            \caption*{$\chi^2$ tests}
            \label{fig:time_per_degree_unrest_chsq}
        \end{subfigure}
        \caption{Computation time.}
        \label{fig:time_per_degree_unrest}
    \end{subfigure}
    \begin{subfigure}[b]{\linewidth}
        \begin{subfigure}[b]{0.24\linewidth}
            \includegraphics[width=\linewidth]{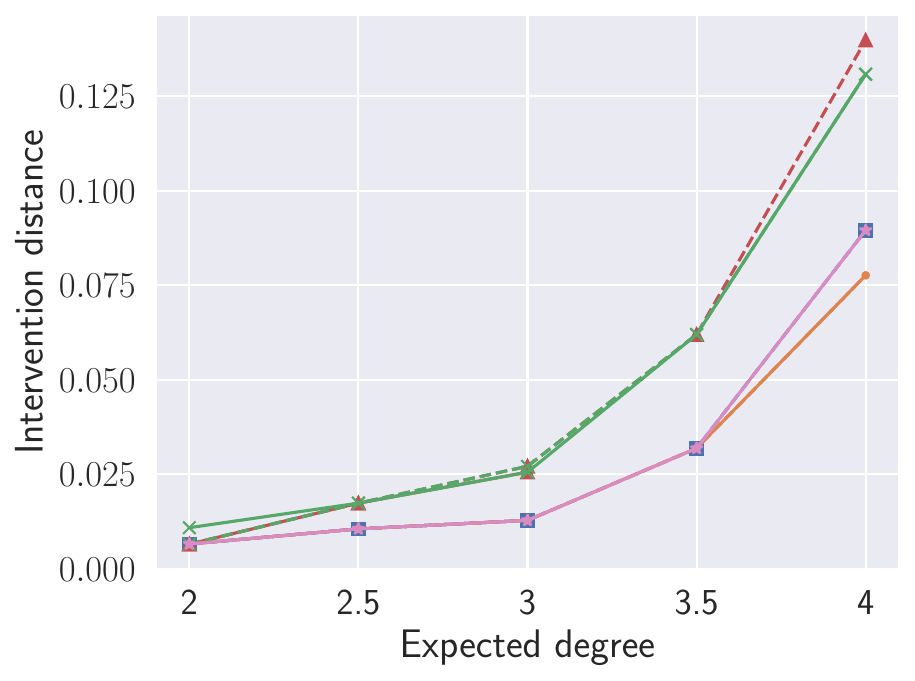}
            \caption*{d-separation tests}
            \label{fig:int_dist_per_degree_unrest_dsep_abs}
        \end{subfigure}
        \begin{subfigure}[b]{0.24\linewidth}
            \includegraphics[width=\linewidth]{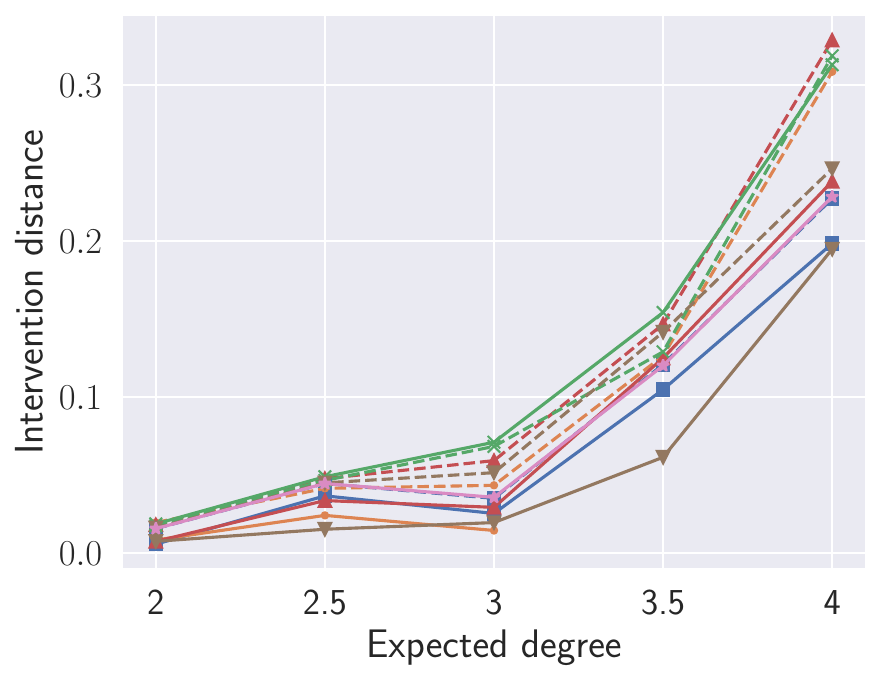}
            \caption*{Fisher-Z tests}
            \label{fig:int_dist_per_degree_unrest_fshz_abs}
        \end{subfigure}
        \begin{subfigure}[b]{0.24\linewidth}
            \includegraphics[width=\linewidth]{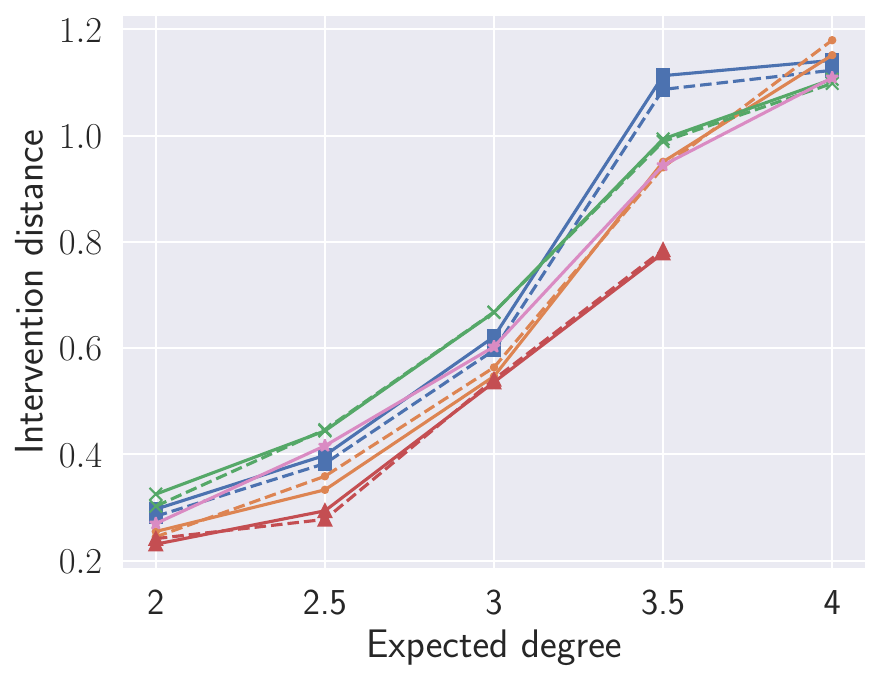}
            \caption*{KCI tests}
            \label{fig:int_dist_per_degree_unrest_kci_abs}
        \end{subfigure}
        \begin{subfigure}[b]{0.24\linewidth}
            \includegraphics[width=\linewidth]{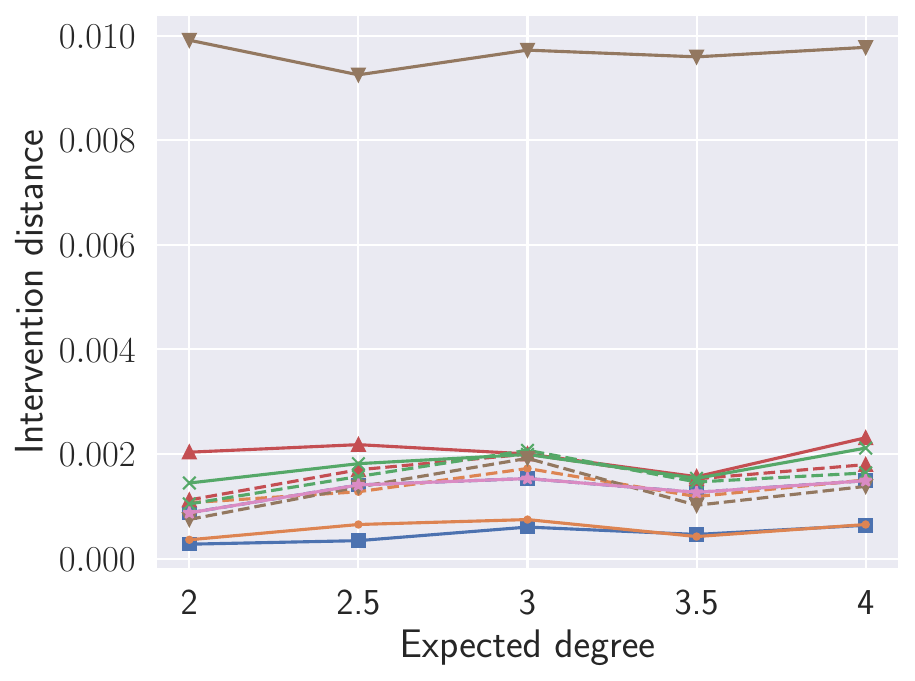}
            \caption*{$\chi^2$ tests}
            \label{fig:int_dist_per_degree_unrest_chsq_abs}
        \end{subfigure}
        \caption{Intervention distance.}
        \label{fig:int_dist_per_degree_unrest}
    \end{subfigure}
    \begin{subfigure}[b]{\linewidth}
        \begin{subfigure}[b]{0.24\linewidth}
            \includegraphics[width=\linewidth]{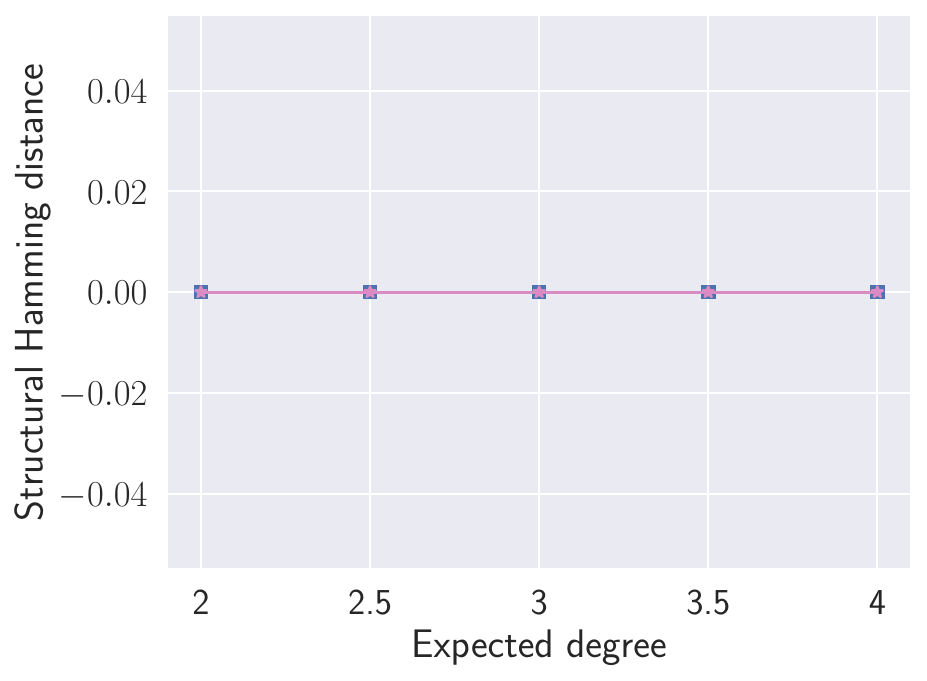}
            \caption*{d-separation tests}
            \label{fig:shd_per_degree_unrest_dsep}
        \end{subfigure}
        \begin{subfigure}[b]{0.24\linewidth}
            \includegraphics[width=\linewidth]{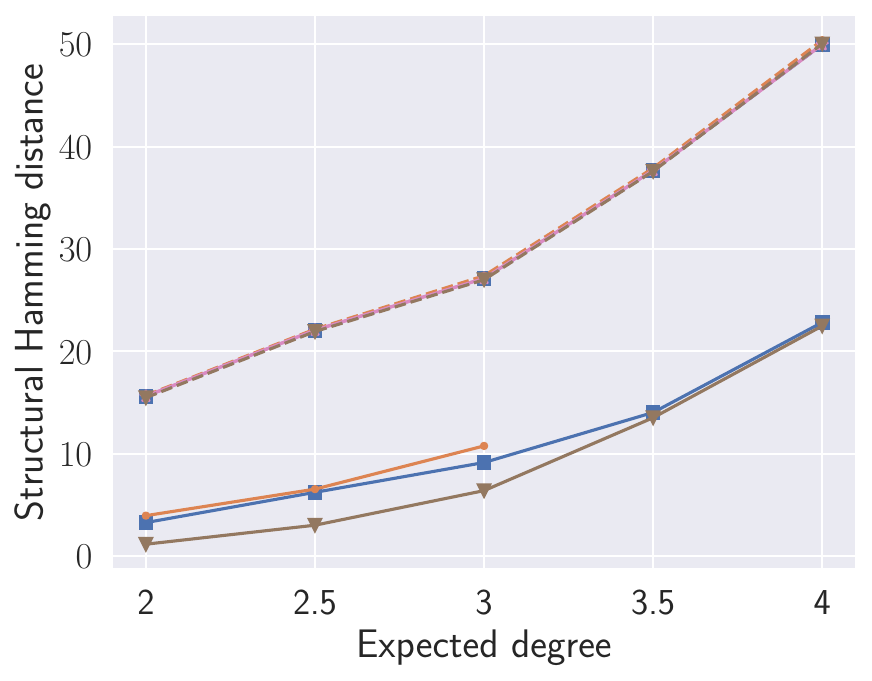}
            \caption*{Fisher-Z tests}
            \label{fig:shd_per_degree_unrest_fshz}
        \end{subfigure}
        \begin{subfigure}[b]{0.24\linewidth}
            \includegraphics[width=\linewidth]{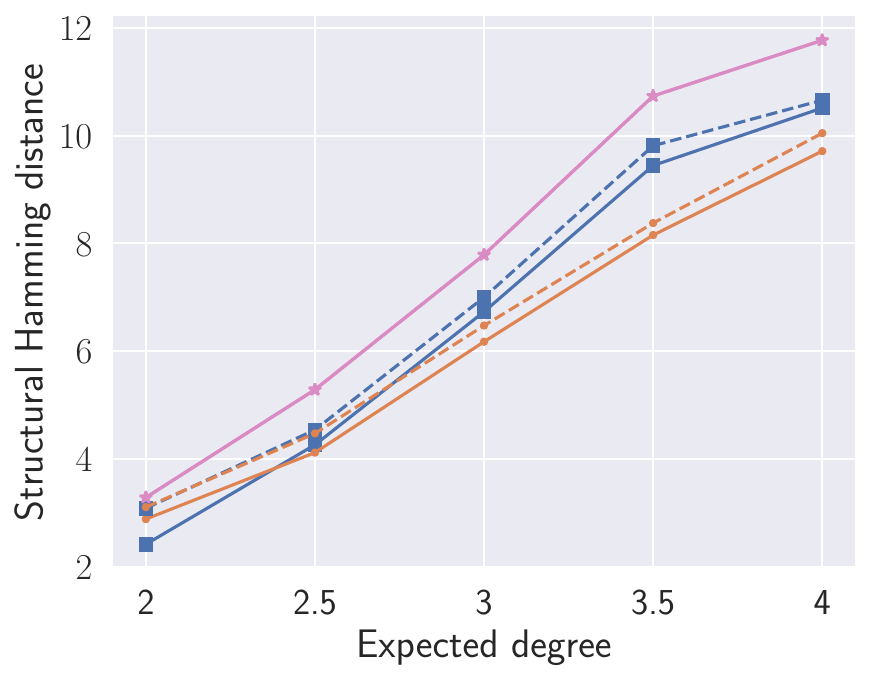}
            \caption*{KCI tests}
            \label{fig:shd_per_degree_unrest_kci}
        \end{subfigure}
        \begin{subfigure}[b]{0.24\linewidth}
            \includegraphics[width=\linewidth]{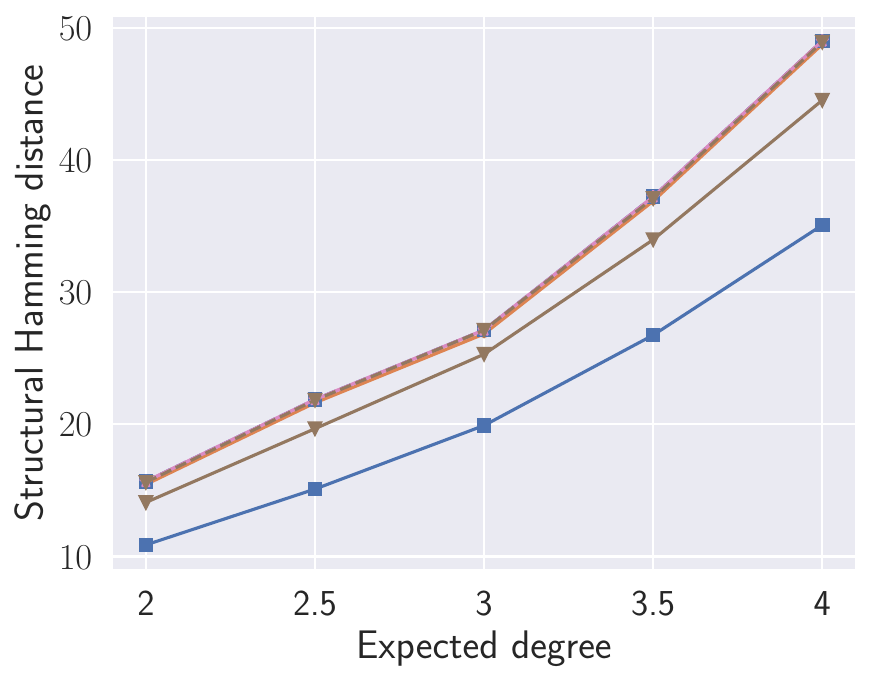}
            \caption*{$\chi^2$ tests}
            \label{fig:shd_per_degree_unrest_chsq}
        \end{subfigure}
        \caption{\Acf{SHD}.}
        \label{fig:shd_per_degree_unrest}
    \end{subfigure}
    \caption{Additional results for baseline methods combined with SNAP$(0)$ over expected degree, with $n_{\mathbf{V}}=10$ for KCI tests and $n_{\mathbf{V}}=200$ otherwise, $n_{\mathbf{T}}=4, d_{\max}=10$ and $n_{\mathbf{D}} = 1000$ data-points. We compute the intervention distance in the d-separation tests case using random linear Gaussian data according to the discovered structure.}
    \label{fig:per_degree}
\end{figure}

\subsection{Various numbers of samples}
\label{app:over_samples}
Figures~\ref{fig:per_samples_std} and \ref{fig:per_samples} show how different metrics change over different number of data samples.
More samples slightly increase the number of \ac{CI} tests and execution time for all methods.
For KCI tests, the execution time increases drastically instead.
Surprisingly, for $\chi^2$ tests, execution time increases considerably for MARVEL and MB-by-MB, but not for other methods.
While \ac{SHD} decreases for global methods with more samples, intervention distance seems mostly unaffected.
This suggests that most structural improvements are irrelevant for causal effect estimation.

\begin{figure}
    \centering
    \includegraphics[width=.8\linewidth]{experiments/legend_small.pdf}
    \begin{subfigure}[b]{\linewidth}
        \centering
        \begin{subfigure}[b]{0.28\linewidth}
            \includegraphics[width=\linewidth]{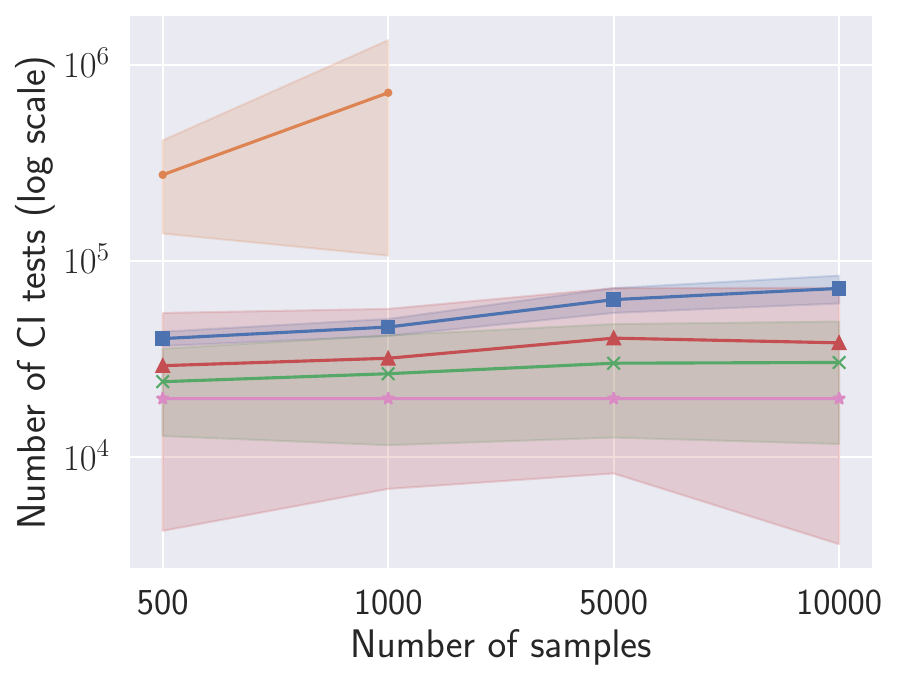}
            \caption*{Fisher-Z tests}
            \label{fig:test_per_samples_unrest_fshz_std}
        \end{subfigure}
        \begin{subfigure}[b]{0.28\linewidth}
            \includegraphics[width=\linewidth]{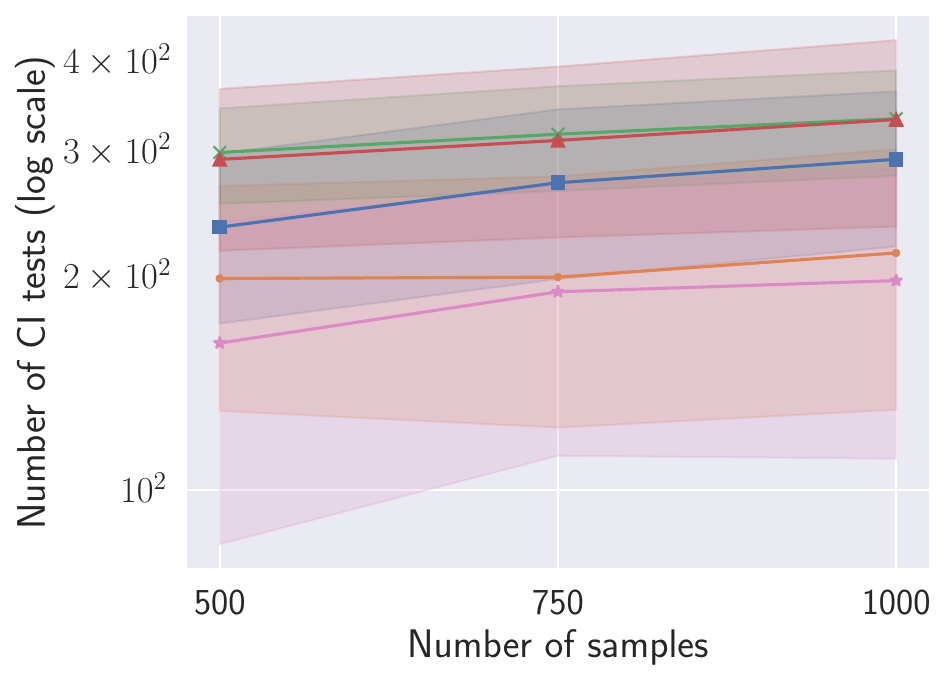}
            \caption*{KCI tests}
            \label{fig:test_per_samples_unrest_kci_std}
        \end{subfigure}
        \begin{subfigure}[b]{0.28\linewidth}
            \includegraphics[width=\linewidth]{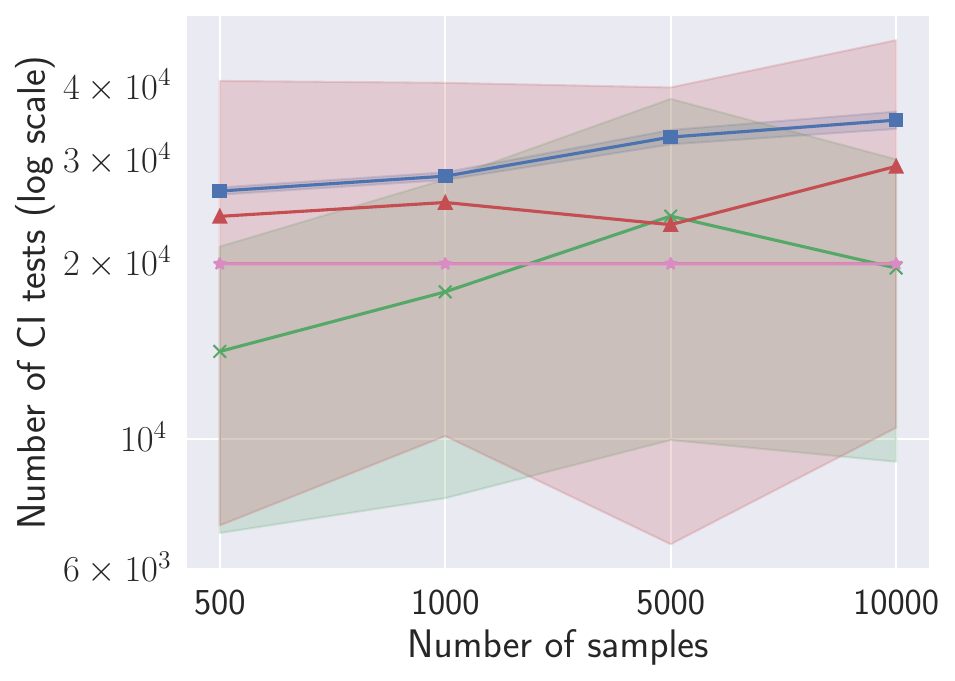}
            \caption*{$\chi^2$ tests}
            \label{fig:test_per_samples_unrest_chsq_std}
        \end{subfigure}
        \caption{Number of \ac{CI} tests.}
        \label{fig:test_per_samples_unrest_std}
    \end{subfigure}
    \begin{subfigure}[b]{\linewidth}
        \centering
        \begin{subfigure}[b]{0.28\linewidth}
            \includegraphics[width=\linewidth]{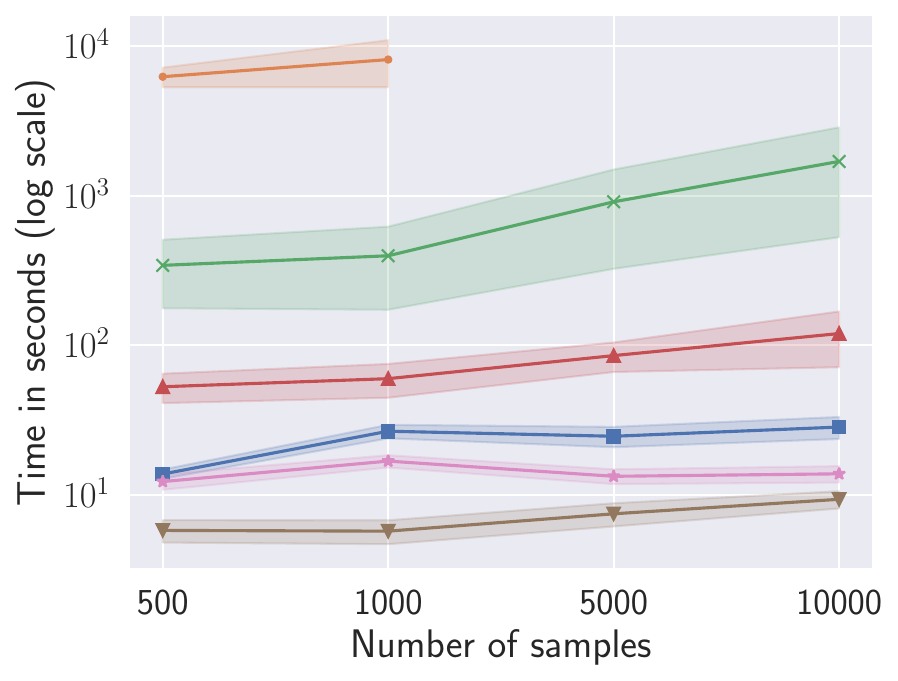}
            \caption*{Fisher-Z tests}
            \label{fig:time_per_samples_unrest_fshz_std}
        \end{subfigure}
        \begin{subfigure}[b]{0.28\linewidth}
            \includegraphics[width=\linewidth]{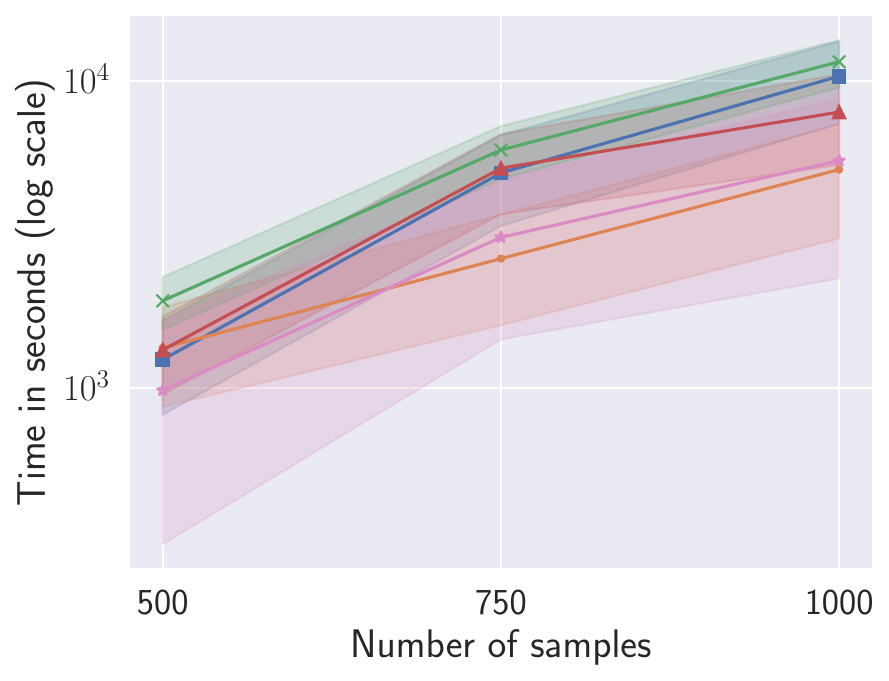}
            \caption*{KCI tests}
            \label{fig:time_per_samples_unrest_kci_std}
        \end{subfigure}
        \begin{subfigure}[b]{0.28\linewidth}
            \includegraphics[width=\linewidth]{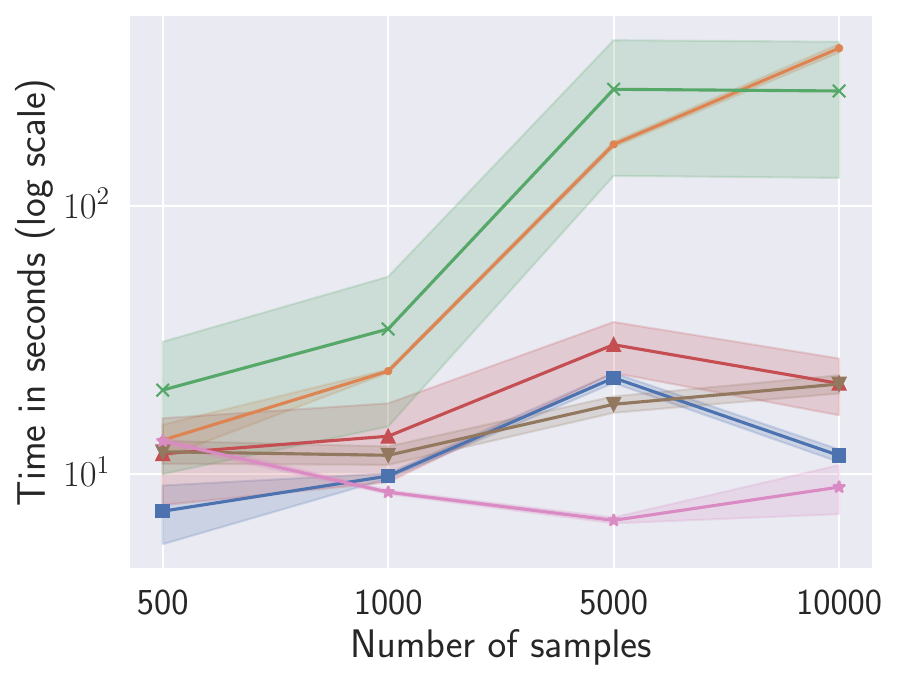}
            \caption*{$\chi^2$ tests}
            \label{fig:time_per_samples_unrest_chsq_std}
        \end{subfigure}
        \caption{Computation time.}
        \label{fig:time_per_samples_unrest_std}
    \end{subfigure}
    \begin{subfigure}[b]{\linewidth}
        \centering
        \begin{subfigure}[b]{0.28\linewidth}
            \includegraphics[width=\linewidth]{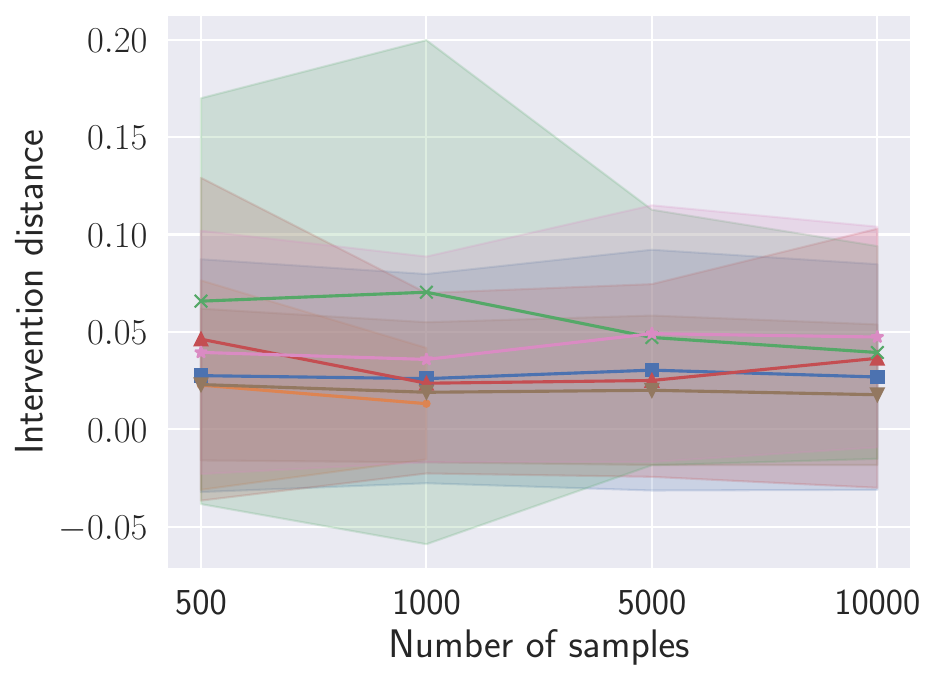}
            \caption*{Fisher-Z tests}
            \label{fig:int_dist_per_samples_unrest_fshz_abs_std}
        \end{subfigure}
        \begin{subfigure}[b]{0.28\linewidth}
            \includegraphics[width=\linewidth]{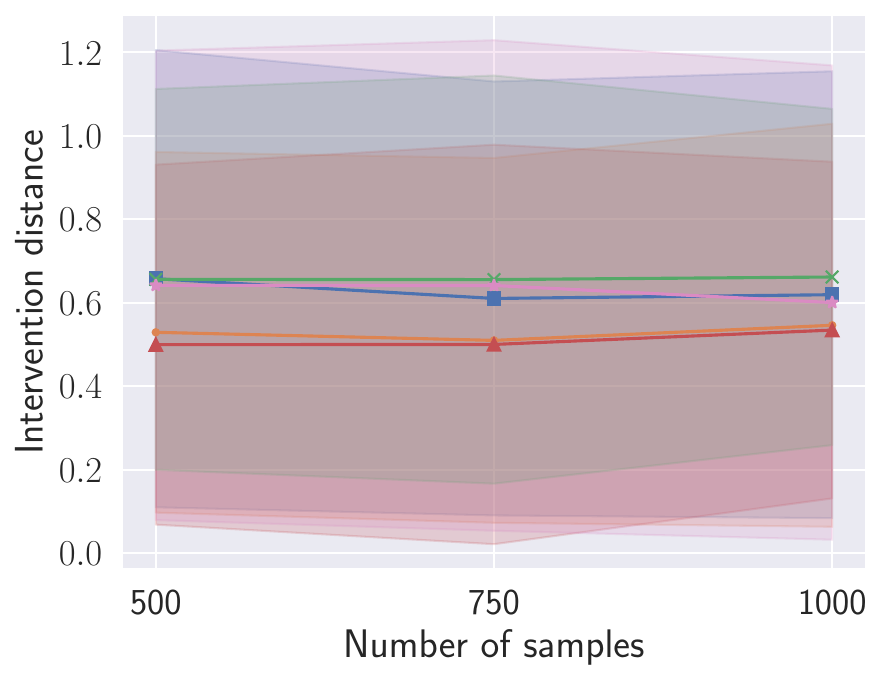}
            \caption*{KCI tests}
            \label{fig:int_dist_per_samples_unrest_kci_abs_std}
        \end{subfigure}
        \begin{subfigure}[b]{0.28\linewidth}
            \includegraphics[width=\linewidth]{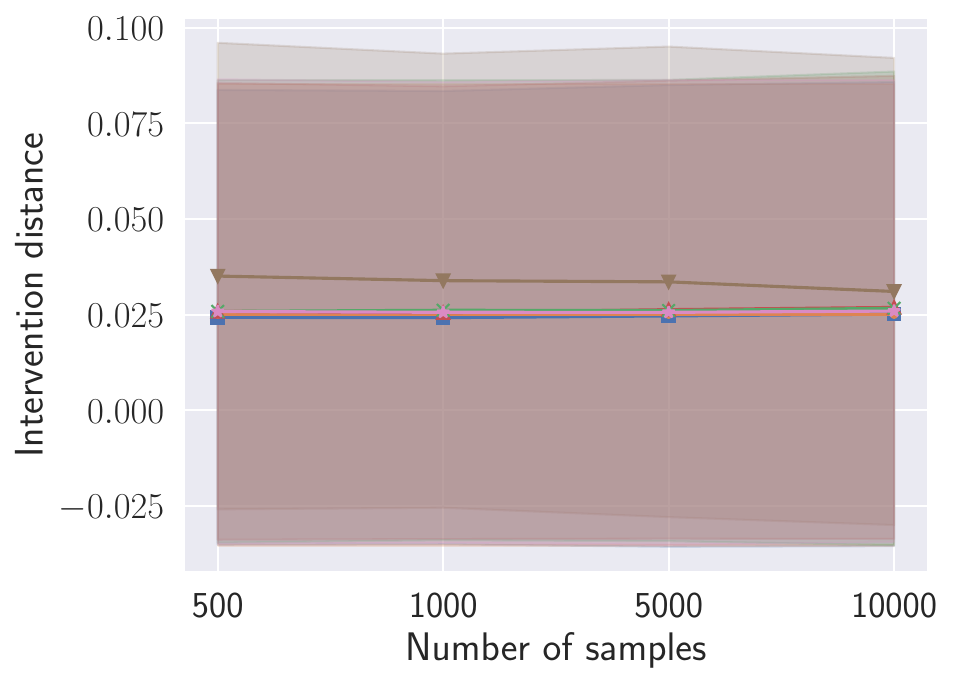}
            \caption*{$\chi^2$ tests}
            \label{fig:int_dist_per_samples_unrest_chsq_abs_std}
        \end{subfigure}
        \caption{Intervention distance.}
    \end{subfigure}
    \begin{subfigure}[b]{\linewidth}
        \centering
        \begin{subfigure}[b]{0.28\linewidth}
            \includegraphics[width=\linewidth]{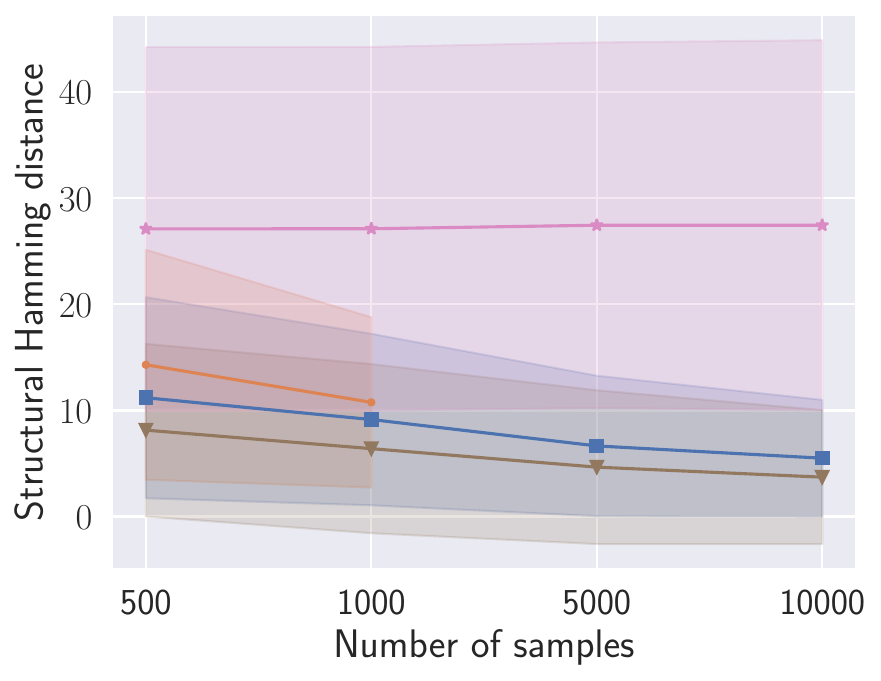}
            \caption*{Fisher-Z tests}
            \label{fig:shd_per_samples_unrest_fshz_std}
        \end{subfigure}
        \begin{subfigure}[b]{0.28\linewidth}
            \includegraphics[width=\linewidth]{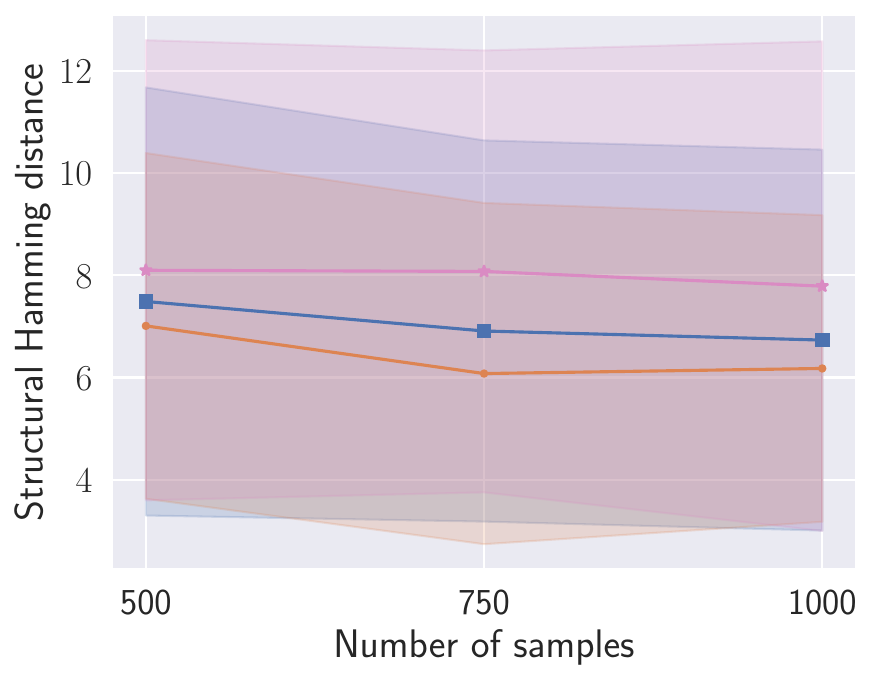}
            \caption*{KCI tests}
            \label{fig:shd_per_samples_unrest_kci_std}
        \end{subfigure}
        \begin{subfigure}[b]{0.28\linewidth}
            \includegraphics[width=\linewidth]{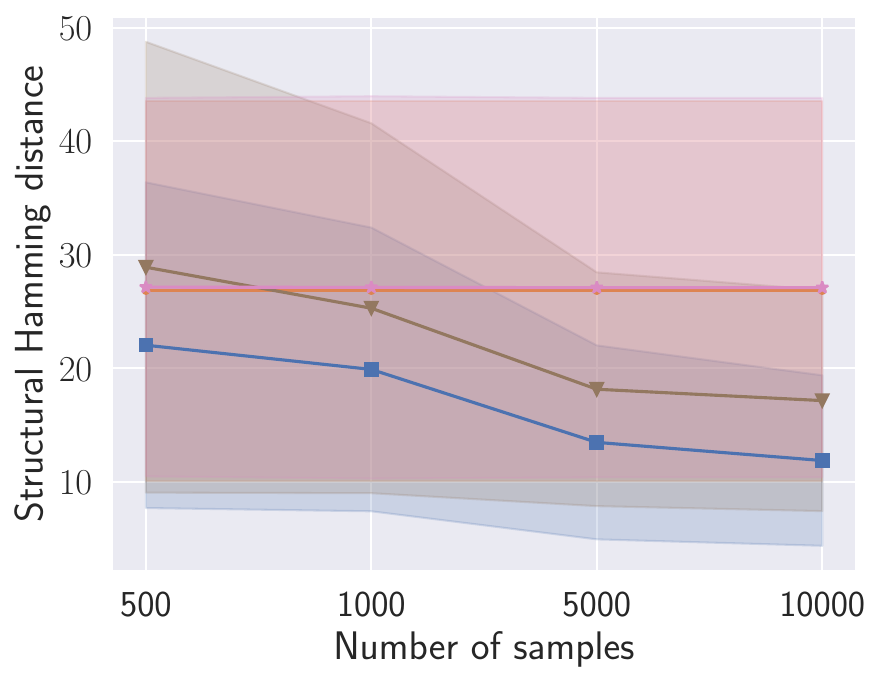}
            \caption*{$\chi^2$ tests}
            \label{fig:shd_per_samples_unrest_chsq_std}
        \end{subfigure}
        \caption{\Acf{SHD}.}
        \label{fig:shd_per_samples_unrest_std}
    \end{subfigure}
    \caption{Additional results over number of samples, with $n_{\mathbf{V}}=10$ for KCI tests and $n_{\mathbf{V}}=200$ otherwise, $n_{\mathbf{T}}=4, \overline{d} = 3$ and $d_{\max}=10$.
    The shadow area denotes the range of the standard deviation.
    We compute the intervention distance in the d-separation tests case using random linear Gaussian data according to the discovered structure.
    Results for MARVEL with Fisher-Z tests are not shown for 5000 and 10000 samples because the runs did not finish in two days.}
    \label{fig:per_samples_std}
\end{figure}

\begin{figure}
    \centering
    \includegraphics[width=.8\linewidth]{experiments/legend_small.pdf}
    \begin{subfigure}[b]{\linewidth}
        \centering
        \begin{subfigure}[b]{0.3\linewidth}
            \includegraphics[width=\linewidth]{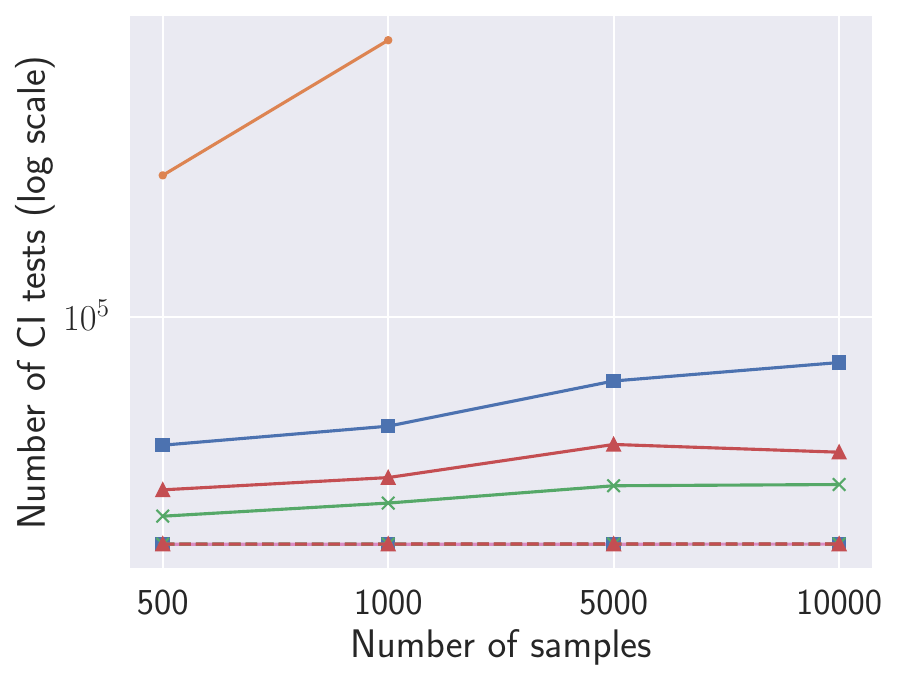}
            \caption*{Fisher-Z tests}
            \label{fig:test_per_samples_unrest_fshz}
        \end{subfigure}
        \begin{subfigure}[b]{0.3\linewidth}
            \includegraphics[width=\linewidth]{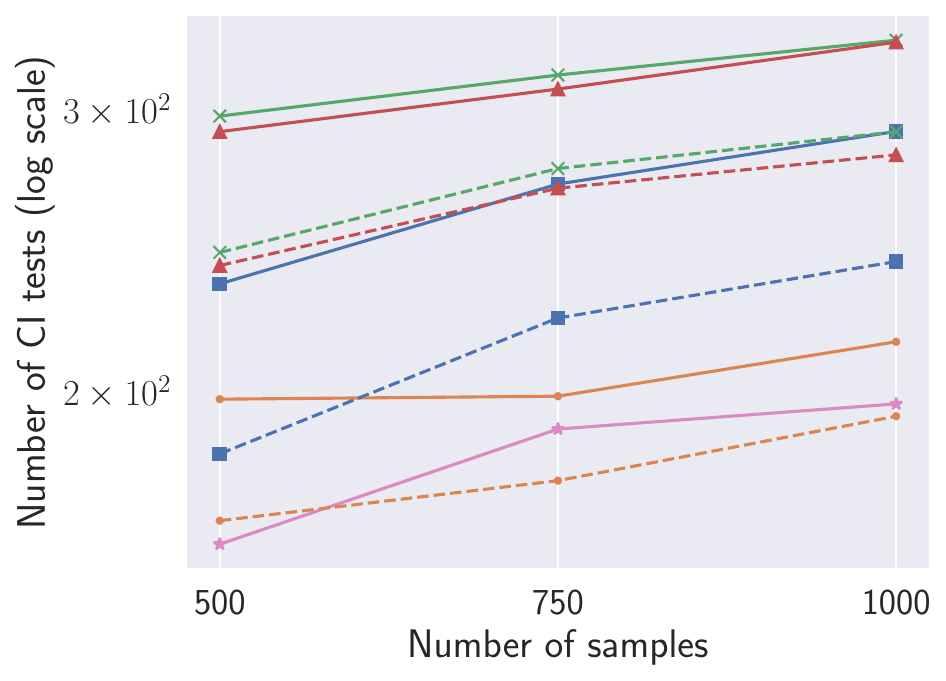}
            \caption*{KCI tests}
            \label{fig:test_per_samples_unrest_kci}
        \end{subfigure}
        \begin{subfigure}[b]{0.3\linewidth}
            \includegraphics[width=\linewidth]{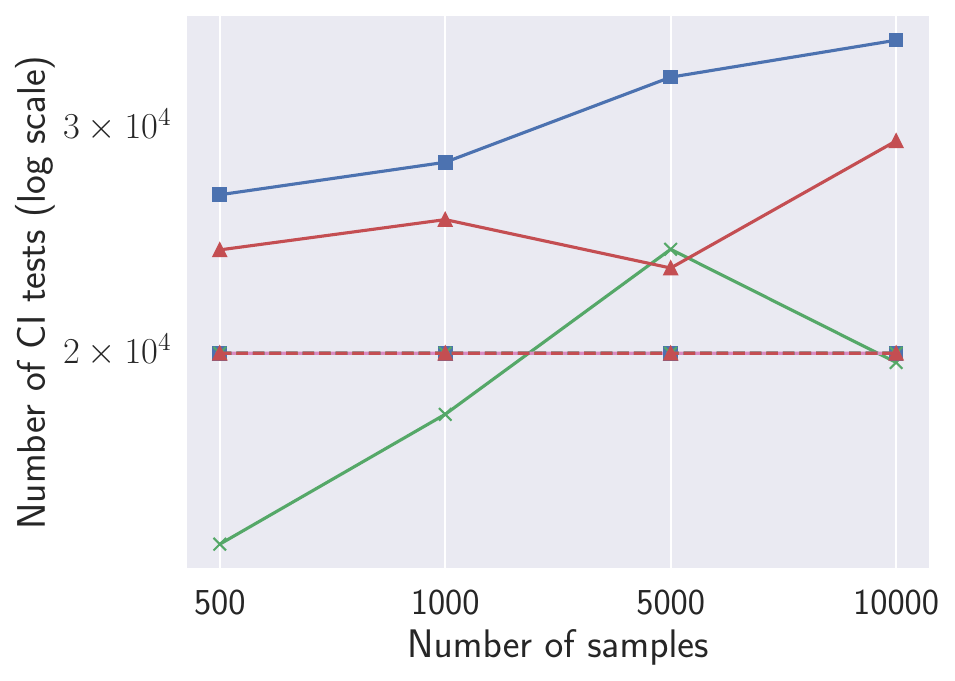}
            \caption*{$\chi^2$ tests}
            \label{fig:test_per_samples_unrest_chsq}
        \end{subfigure}
        \caption{Number of \ac{CI} tests.}
        \label{fig:test_per_samples_unrest}
    \end{subfigure}
    \begin{subfigure}[b]{\linewidth}
        \centering
        \begin{subfigure}[b]{0.3\linewidth}
            \includegraphics[width=\linewidth]{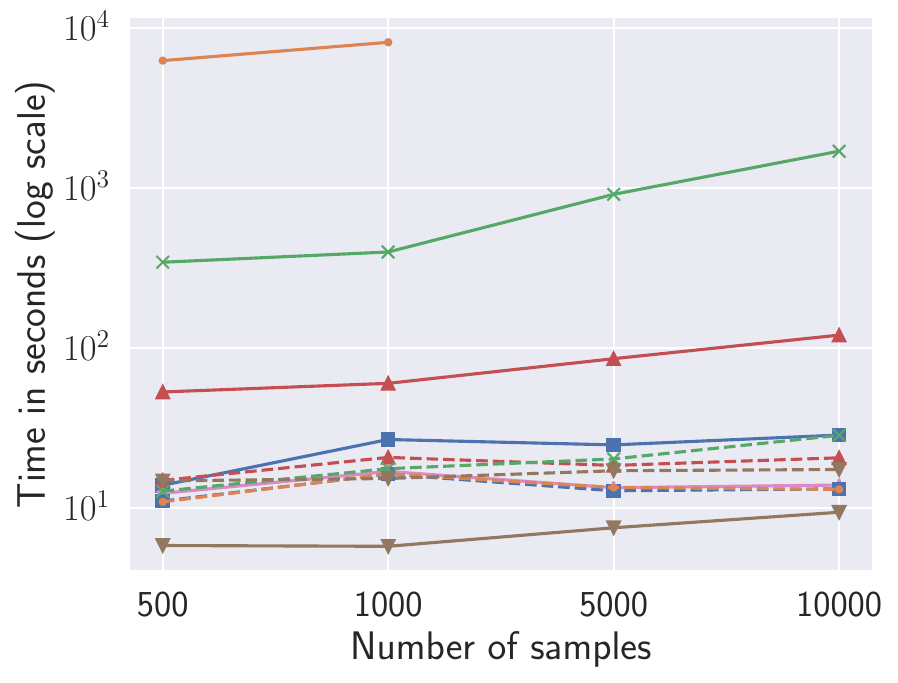}
            \caption*{Fisher-Z tests}
            \label{fig:time_per_samples_unrest_fshz}
        \end{subfigure}
        \begin{subfigure}[b]{0.3\linewidth}
            \includegraphics[width=\linewidth]{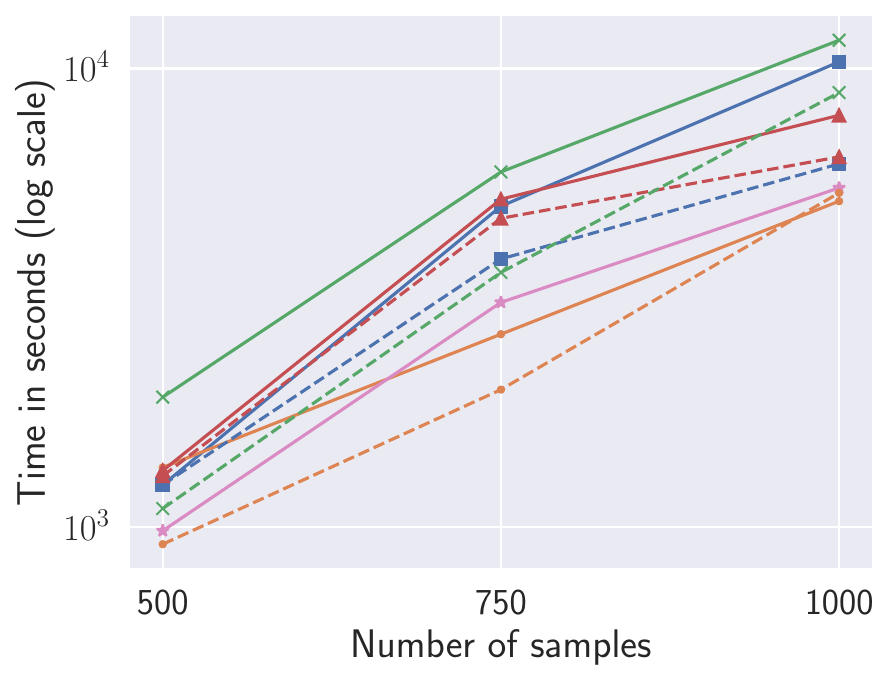}
            \caption*{KCI tests}
            \label{fig:time_per_samples_unrest_kci}
        \end{subfigure}
        \begin{subfigure}[b]{0.3\linewidth}
            \includegraphics[width=\linewidth]{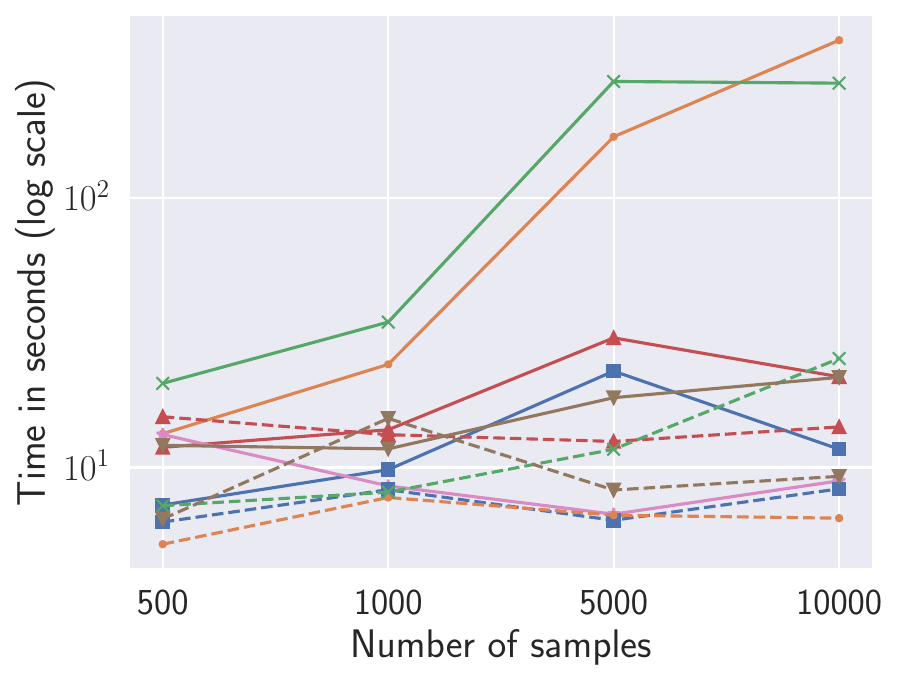}
            \caption*{$\chi^2$ tests}
            \label{fig:time_per_samples_unrest_chsq}
        \end{subfigure}
        \caption{Computation time.}
        \label{fig:time_per_samples_unrest}
    \end{subfigure}
    \begin{subfigure}[b]{\linewidth}
        \centering
        \begin{subfigure}[b]{0.3\linewidth}
            \includegraphics[width=\linewidth]{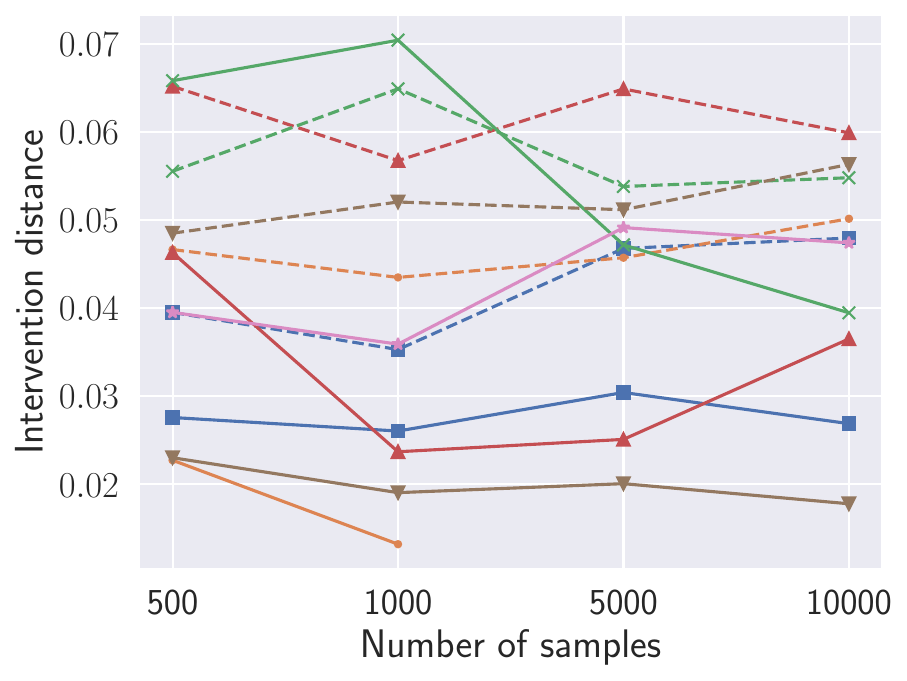}
            \caption*{Fisher-Z tests}
            \label{fig:int_dist_per_samples_unrest_fshz_abs}
        \end{subfigure}
        \begin{subfigure}[b]{0.3\linewidth}
            \includegraphics[width=\linewidth]{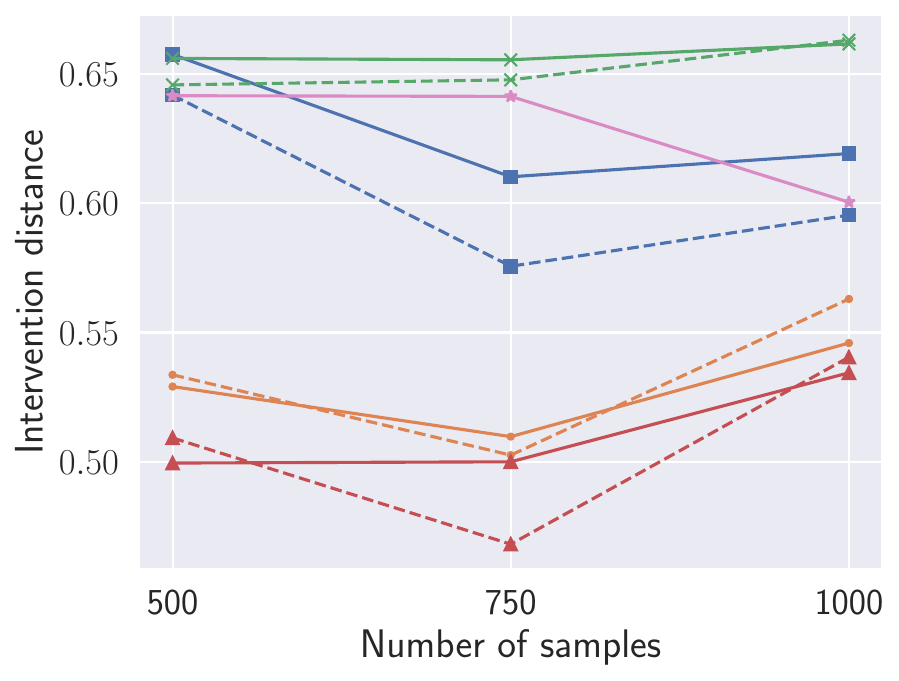}
            \caption*{KCI tests}
            \label{fig:int_dist_per_samples_unrest_kci_abs}
        \end{subfigure}
        \begin{subfigure}[b]{0.3\linewidth}
            \includegraphics[width=\linewidth]{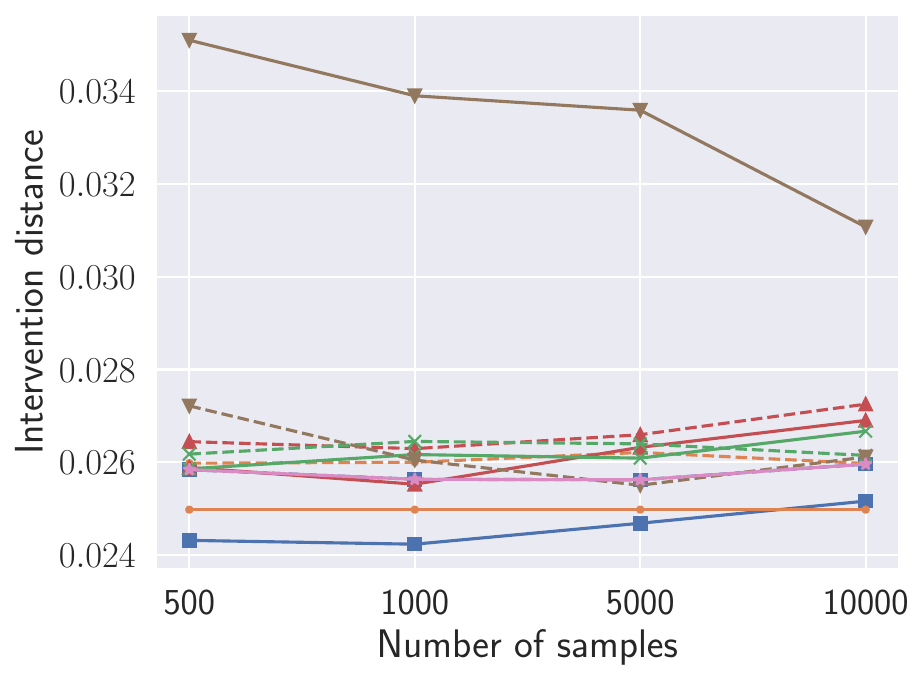}
            \caption*{$\chi^2$ tests}
            \label{fig:int_dist_per_samples_unrest_chsq_abs}
        \end{subfigure}
        \caption{Intervention distance.}
        \label{fig:int_dist_per_samples_unrest}
    \end{subfigure}
    \begin{subfigure}[b]{\linewidth}
        \centering
        \begin{subfigure}[b]{0.3\linewidth}
            \includegraphics[width=\linewidth]{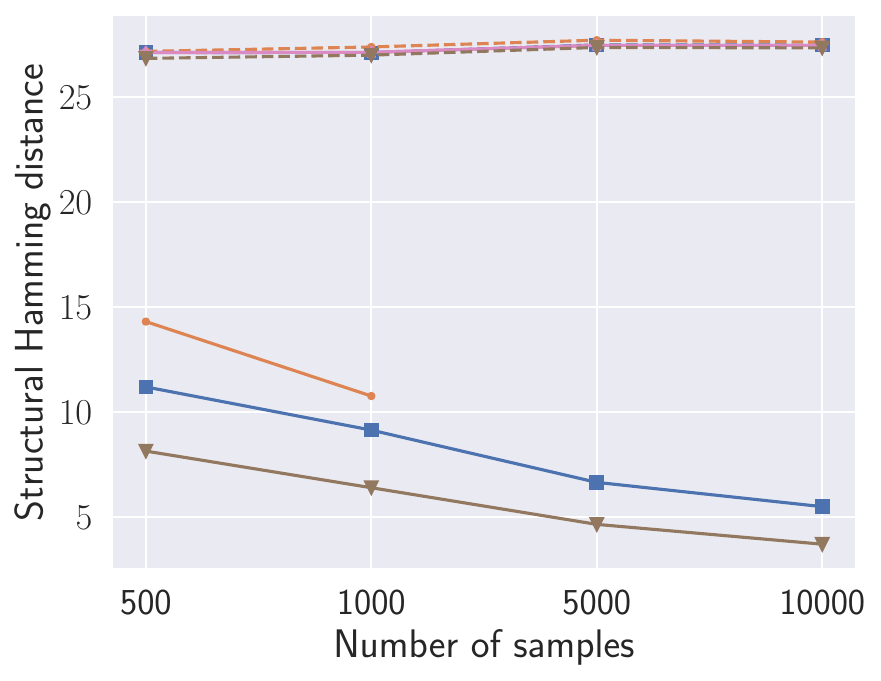}
            \caption*{Fisher-Z tests}
            \label{fig:shd_per_samples_unrest_fshz}
        \end{subfigure}
        \begin{subfigure}[b]{0.3\linewidth}
            \includegraphics[width=\linewidth]{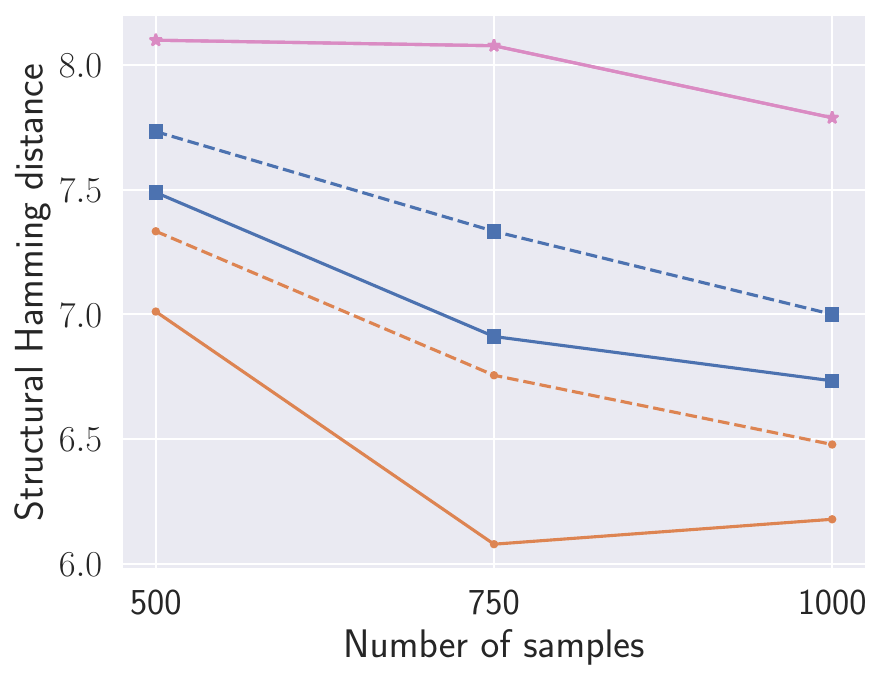}
            \caption*{KCI tests}
            \label{fig:shd_per_samples_unrest_kci}
        \end{subfigure}
        \begin{subfigure}[b]{0.3\linewidth}
            \includegraphics[width=\linewidth]{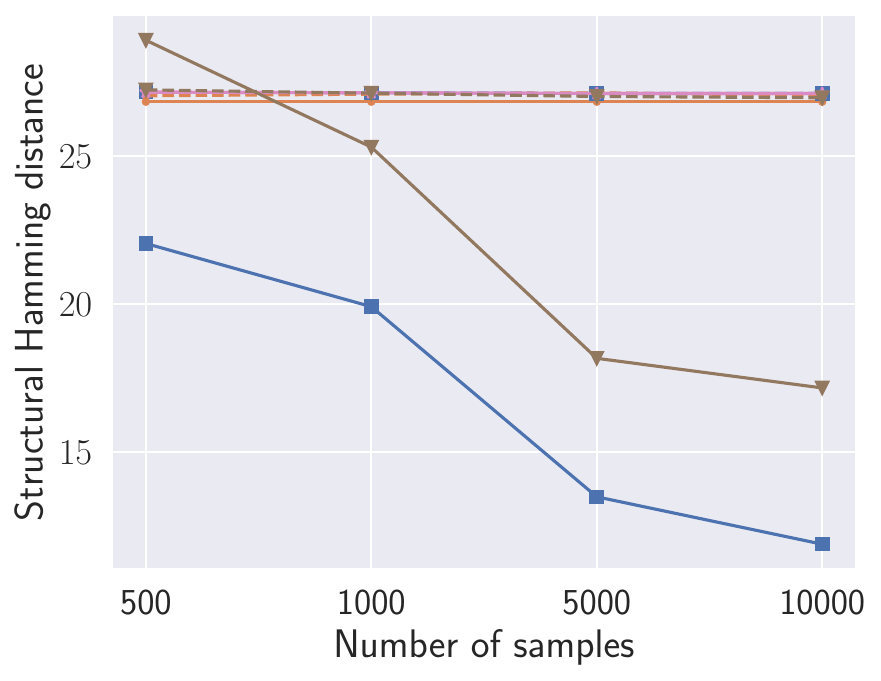}
            \caption*{$\chi^2$ tests}
            \label{fig:shd_per_samples_unrest_chsq}
        \end{subfigure}
        \caption{\Acf{SHD}.}
        \label{fig:shd_per_samples_unrest}
    \end{subfigure}
    \caption{Additional results for baseline methods combined with SNAP$(0)$ over number of samples, with $n_{\mathbf{V}}=10$ for KCI tests and $n_{\mathbf{V}}=200$ otherwise, $n_{\mathbf{T}}=4, \overline{d} = 3$ and $d_{\max}=10$.
    We compute the intervention distance in the d-separation tests case using random linear Gaussian data according to the discovered structure.
    Results for MARVEL with Fisher-Z tests are not shown for 5000 and 10000 samples because the runs did not finish in two days.
    }
    \label{fig:per_samples}
\end{figure}

\subsection{SNAP(k) with higher order k}
\label{app:snapk}
We run baseline methods combined with SNAP$(k)$ for prefiltering with $k=0,\dots,2$, on graphs with various number of nodes, $n_{\mathbf{T}}=4$, $\overline{d} = 3, d_{\max}=10$ and $n_{\mathbf{D}} = 1000$ data-points, to compare the impact of maximum iterations $k$.
Our results in \Cref{fig:test_snapk,fig:time_snapk,fig:shd_snapk,fig:int_dist_snapk} show that increasing $k$ only by one or two does not have a large impact on computation or estimation quality.

\begin{figure}
    \centering
    \includegraphics[width=.6\linewidth]{experiments/legend_big.pdf}
    \begin{subfigure}[b]{\linewidth}
        \centering
        \begin{subfigure}[b]{0.24\linewidth}
            \includegraphics[width=\linewidth]{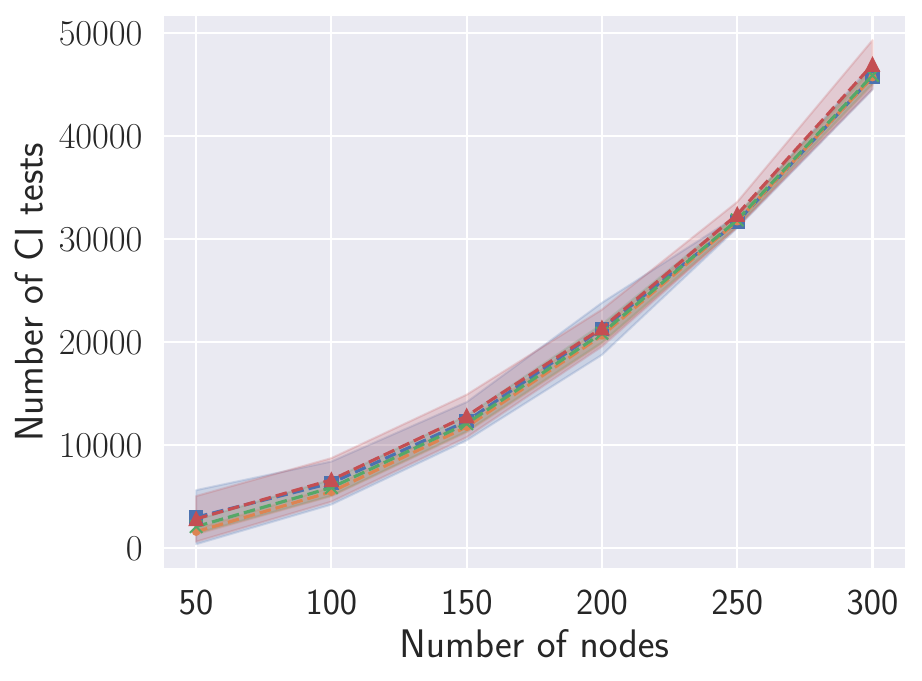}
            \caption*{d-separation tests}
        \end{subfigure}
        \begin{subfigure}[b]{0.24\linewidth}
            \includegraphics[width=\linewidth]{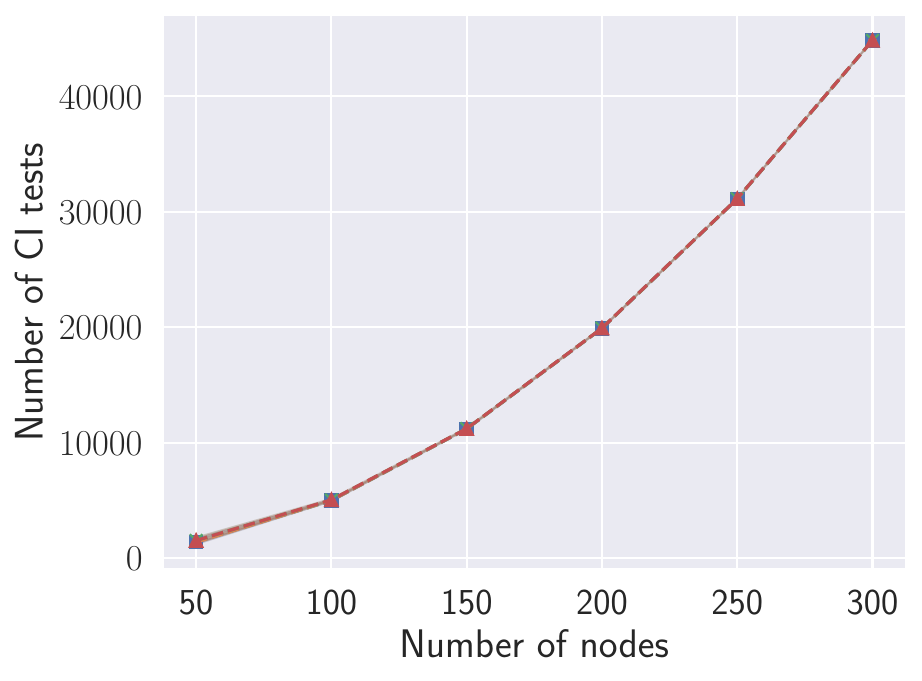}
            \caption*{Fisher-Z tests}
        \end{subfigure}
        \begin{subfigure}[b]{0.24\linewidth}
            \includegraphics[width=\linewidth]{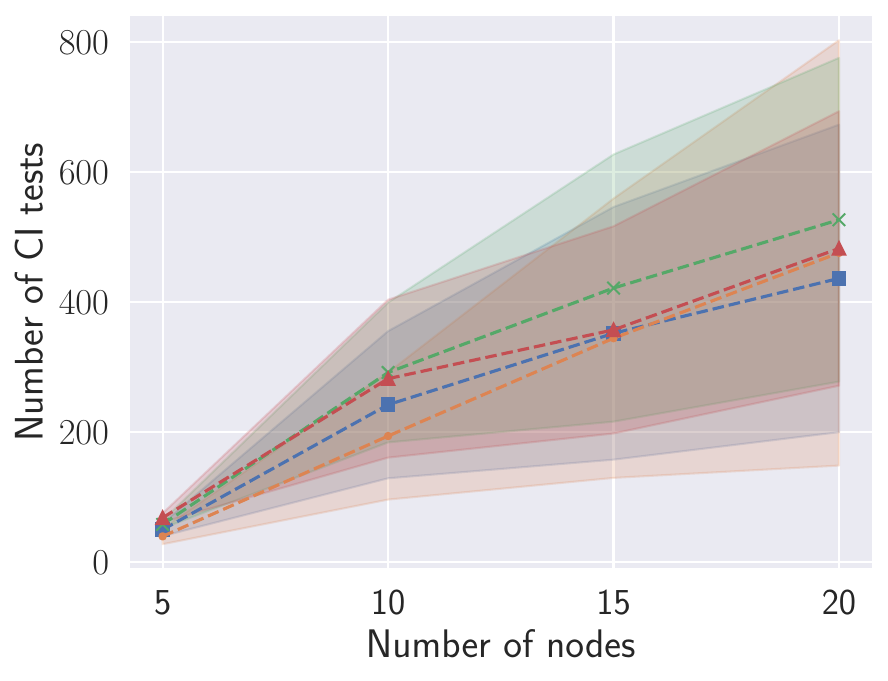}
            \caption*{KCI tests}
        \end{subfigure}
        \begin{subfigure}[b]{0.24\linewidth}
            \includegraphics[width=\linewidth]{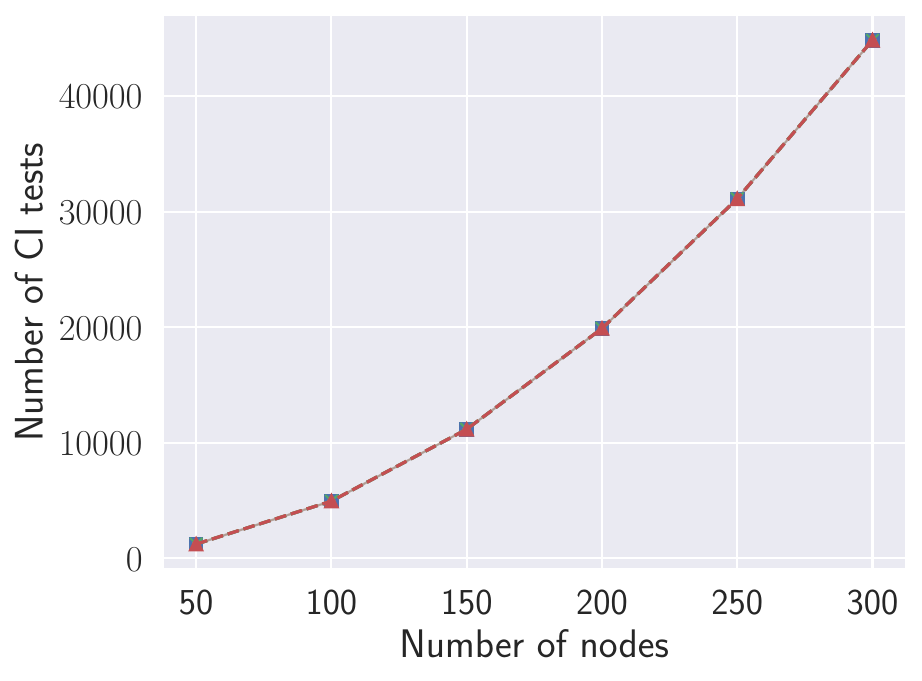}
            \caption*{$\chi^2$ tests}
        \end{subfigure}
        \caption{SNAP$(0)$.}
        \label{fig:test_snap0}
    \end{subfigure}
    \begin{subfigure}[b]{\linewidth}
        \centering
        \begin{subfigure}[b]{0.24\linewidth}
            \includegraphics[width=\linewidth]{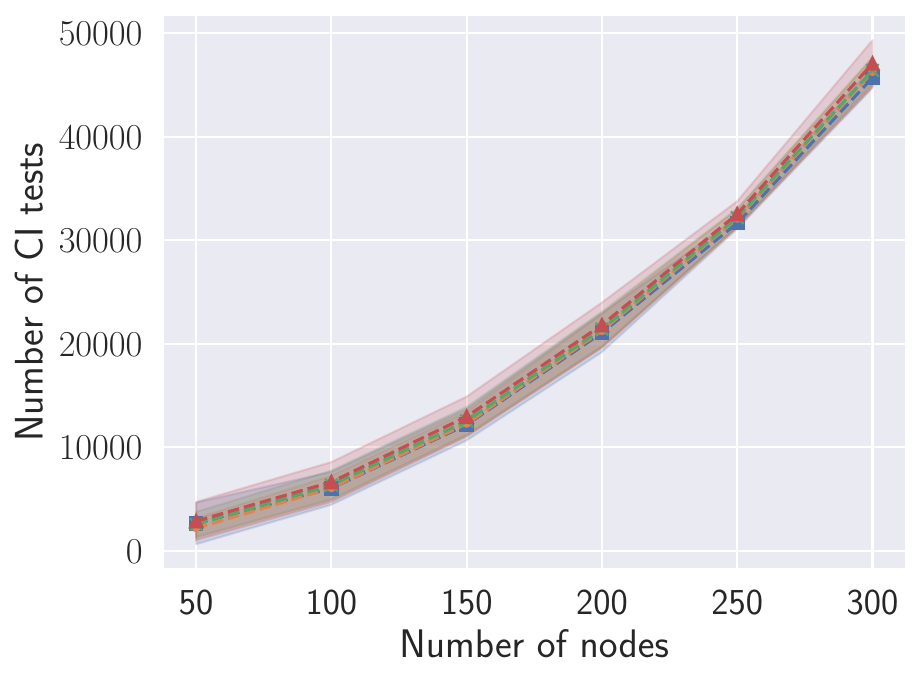}
            \caption*{d-separation tests}
        \end{subfigure}
        \begin{subfigure}[b]{0.24\linewidth}
            \includegraphics[width=\linewidth]{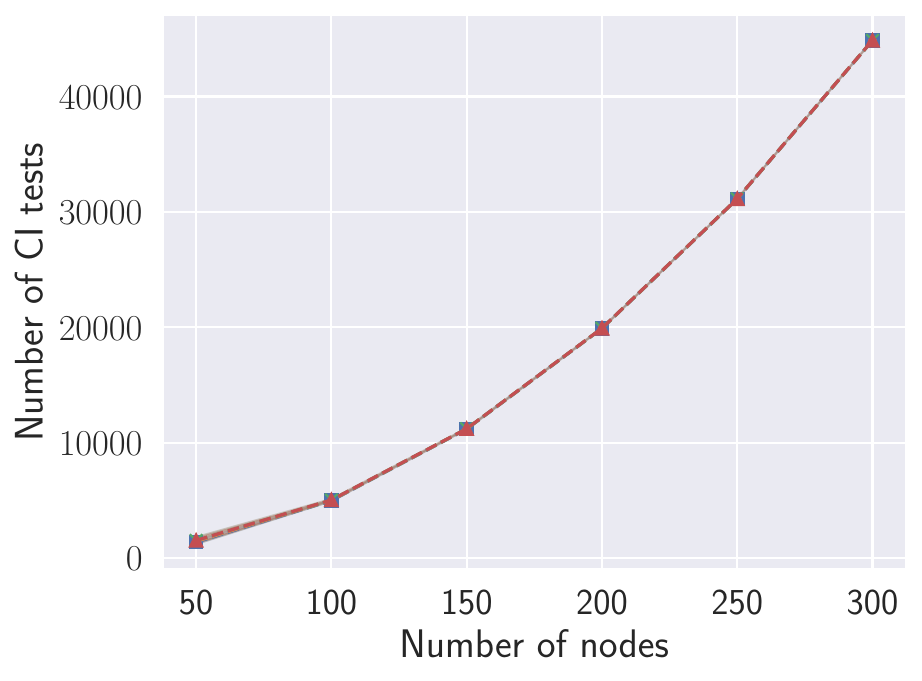}
            \caption*{Fisher-Z tests}
        \end{subfigure}
        \begin{subfigure}[b]{0.24\linewidth}
            \includegraphics[width=\linewidth]{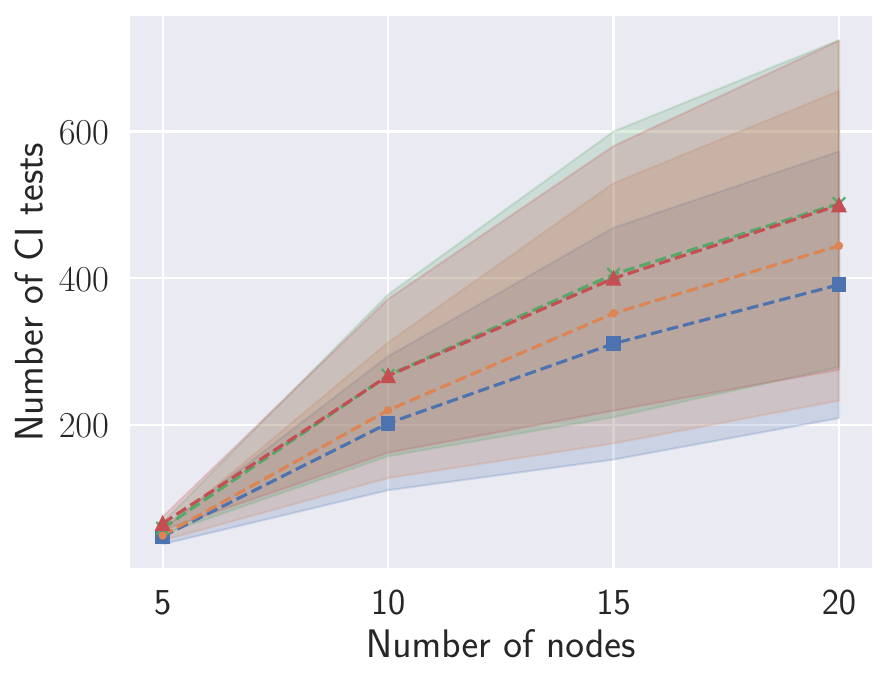}
            \caption*{KCI tests}
        \end{subfigure}
        \begin{subfigure}[b]{0.24\linewidth}
            \includegraphics[width=\linewidth]{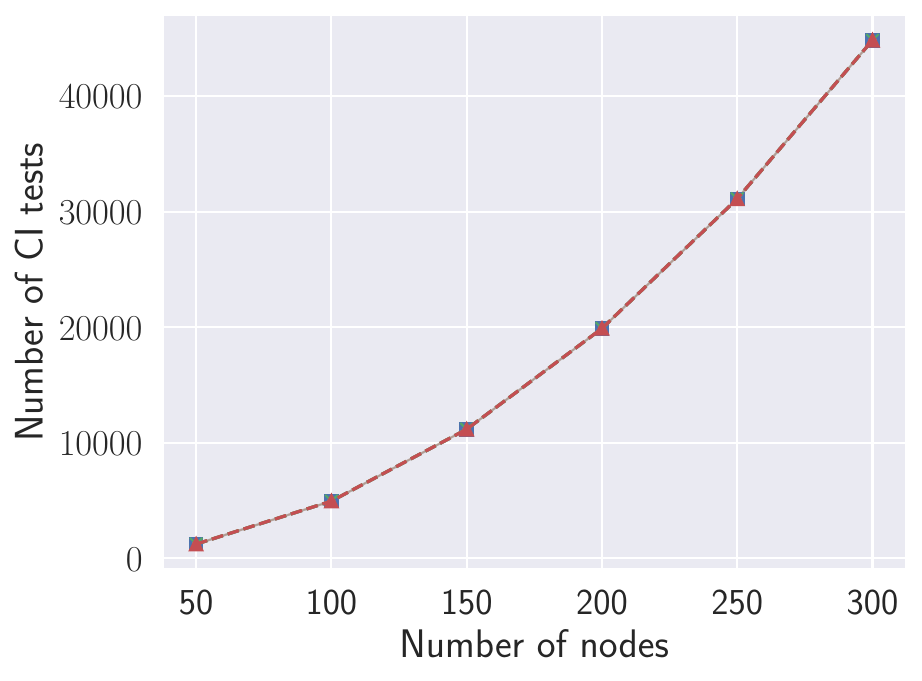}
            \caption*{$\chi^2$ tests}
        \end{subfigure}
        \caption{SNAP$(1)$.}
        \label{fig:test_snap1}
    \end{subfigure}
    \begin{subfigure}[b]{\linewidth}
        \centering
        \begin{subfigure}[b]{0.24\linewidth}
            \includegraphics[width=\linewidth]{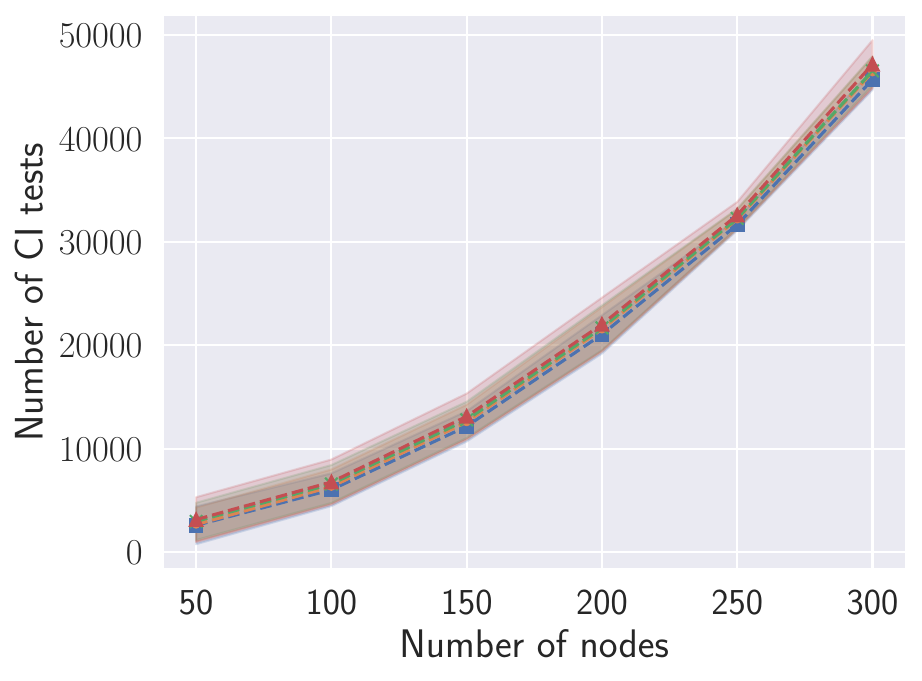}
            \caption*{d-separation tests}
        \end{subfigure}
        \begin{subfigure}[b]{0.24\linewidth}
            \includegraphics[width=\linewidth]{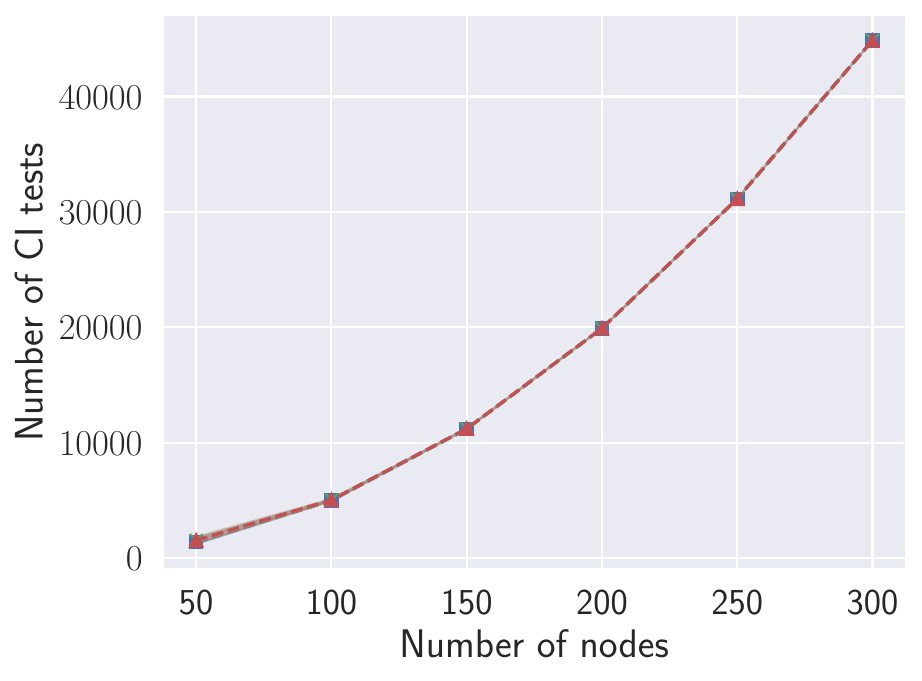}
            \caption*{Fisher-Z tests}
        \end{subfigure}
        \begin{subfigure}[b]{0.24\linewidth}
            \includegraphics[width=\linewidth]{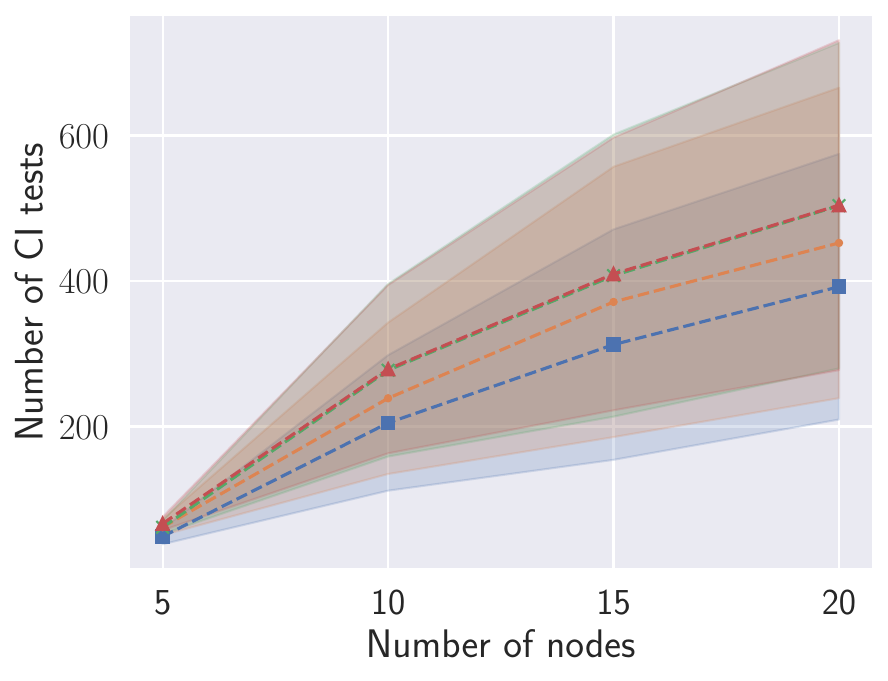}
            \caption*{KCI tests}
        \end{subfigure}
        \begin{subfigure}[b]{0.24\linewidth}
            \includegraphics[width=\linewidth]{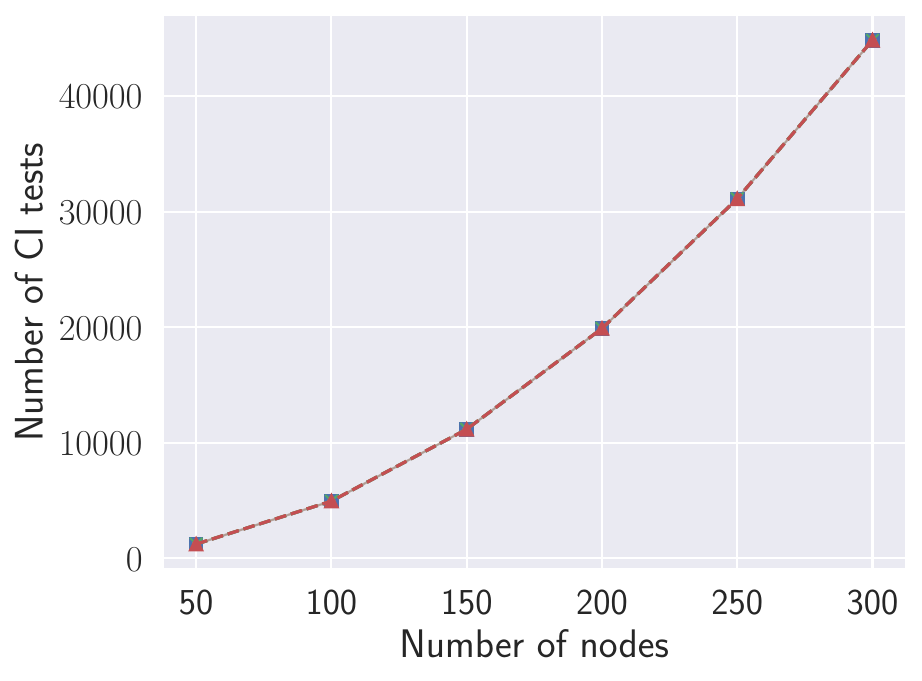}
            \caption*{$\chi^2$ tests}
        \end{subfigure}
        \caption{SNAP$(2)$.}
        \label{fig:test_snap2}
    \end{subfigure}
    \caption{Number of \ac{CI} tests over number of nodes for SNAP($k$) with $k= 0,\dots,2$, with $n_{\mathbf{T}}=4$, $\overline{d} = 3, d_{\max}=10$ and $n_{\mathbf{D}} = 1000$ data-points.}
    \label{fig:test_snapk}
\end{figure}

\begin{figure}
    \centering
    \includegraphics[width=.6\linewidth]{experiments/legend_big.pdf}
    \begin{subfigure}[b]{\linewidth}
        \centering
        \begin{subfigure}[b]{0.24\linewidth}
            \includegraphics[width=\linewidth]{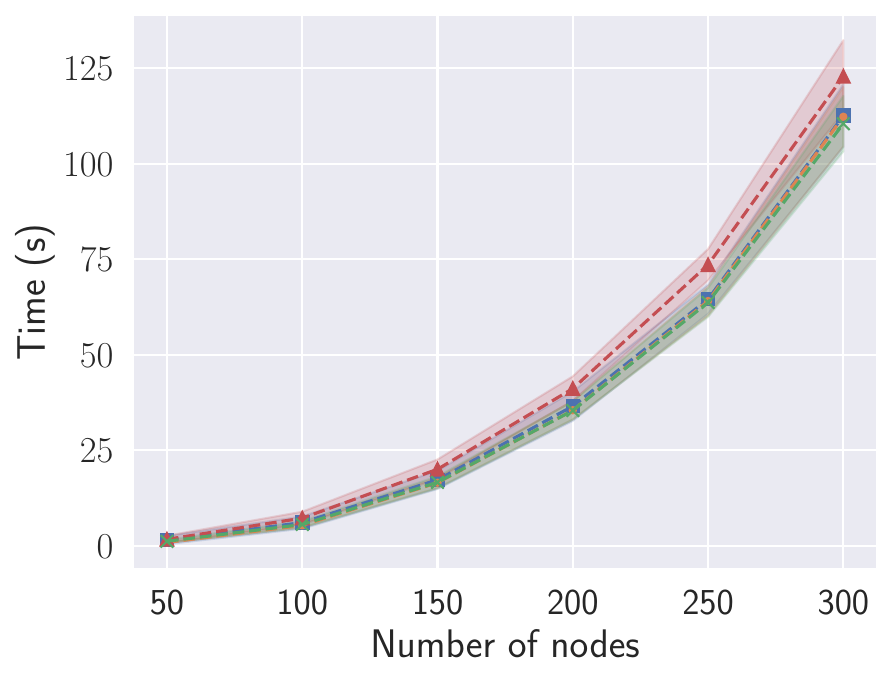}
            \caption*{d-separation tests}
        \end{subfigure}
        \begin{subfigure}[b]{0.24\linewidth}
            \includegraphics[width=\linewidth]{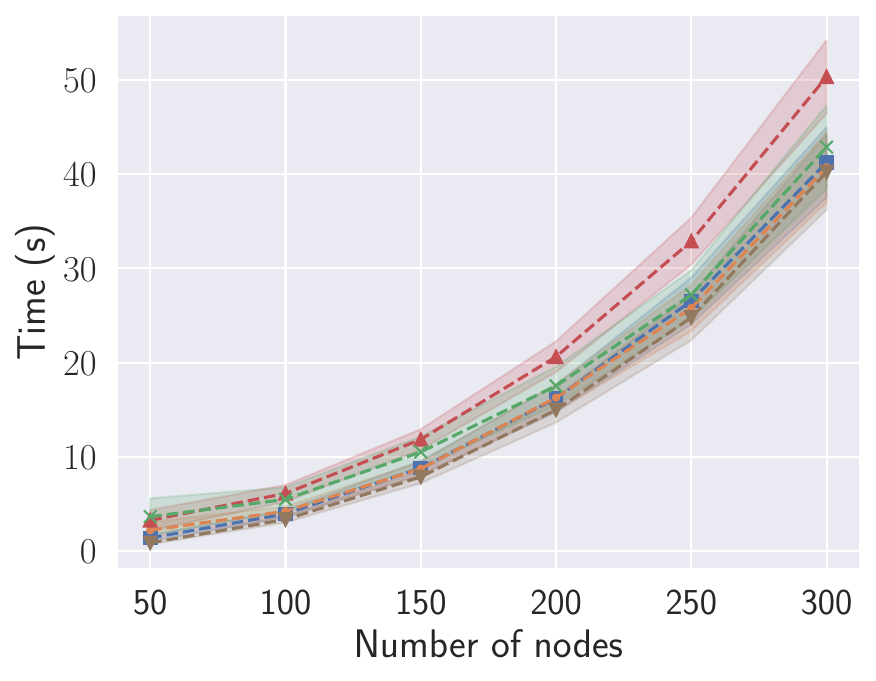}
            \caption*{Fisher-Z tests}
        \end{subfigure}
        \begin{subfigure}[b]{0.24\linewidth}
            \includegraphics[width=\linewidth]{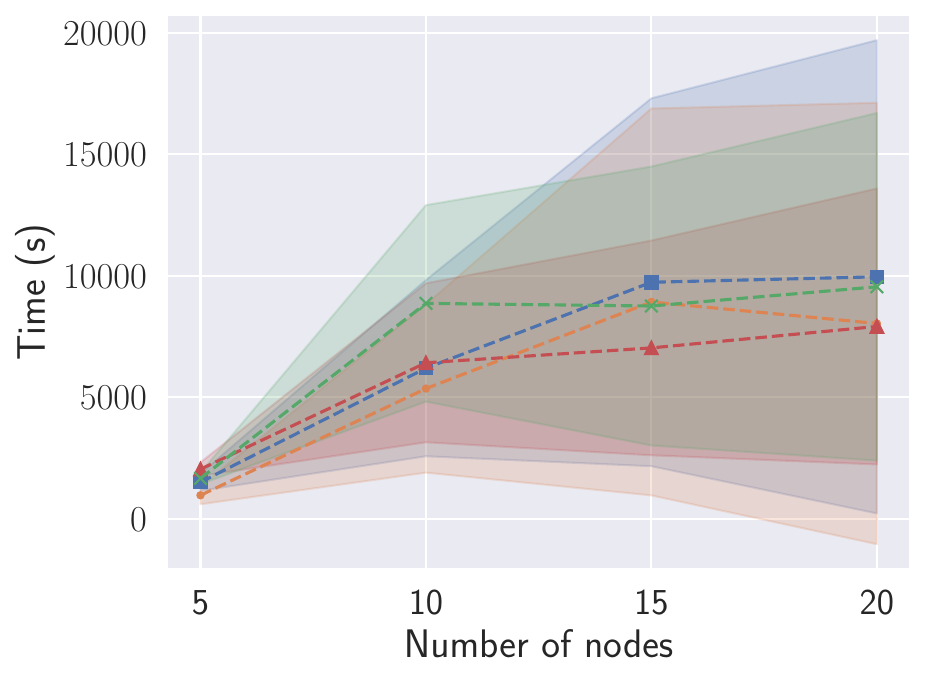}
            \caption*{KCI tests}
        \end{subfigure}
        \begin{subfigure}[b]{0.24\linewidth}
            \includegraphics[width=\linewidth]{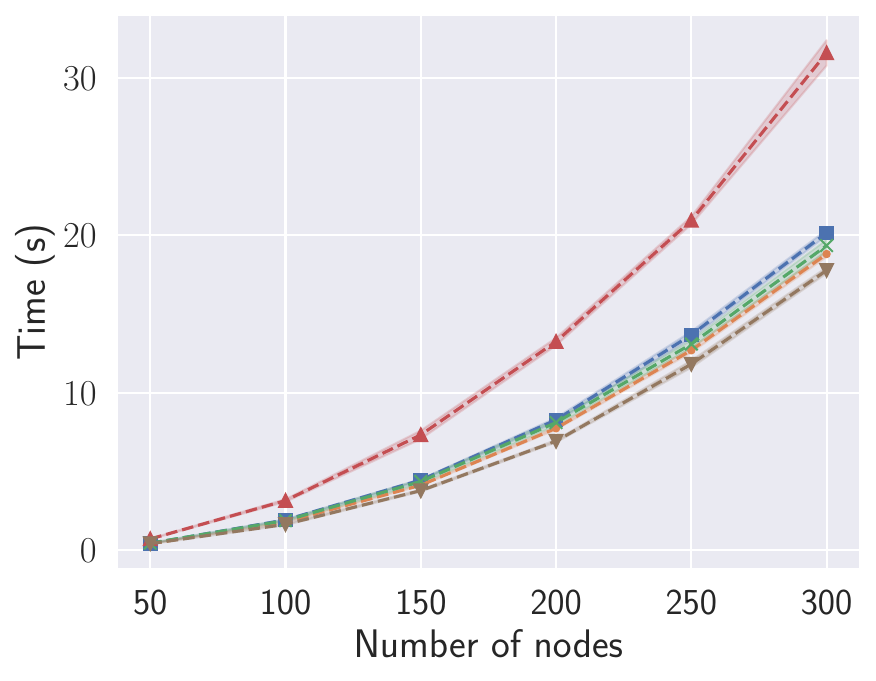}
            \caption*{$\chi^2$ tests}
        \end{subfigure}
        \caption{SNAP$(0)$.}
        \label{fig:time_snap0}
    \end{subfigure}
    \begin{subfigure}[b]{\linewidth}
        \centering
        \begin{subfigure}[b]{0.24\linewidth}
            \includegraphics[width=\linewidth]{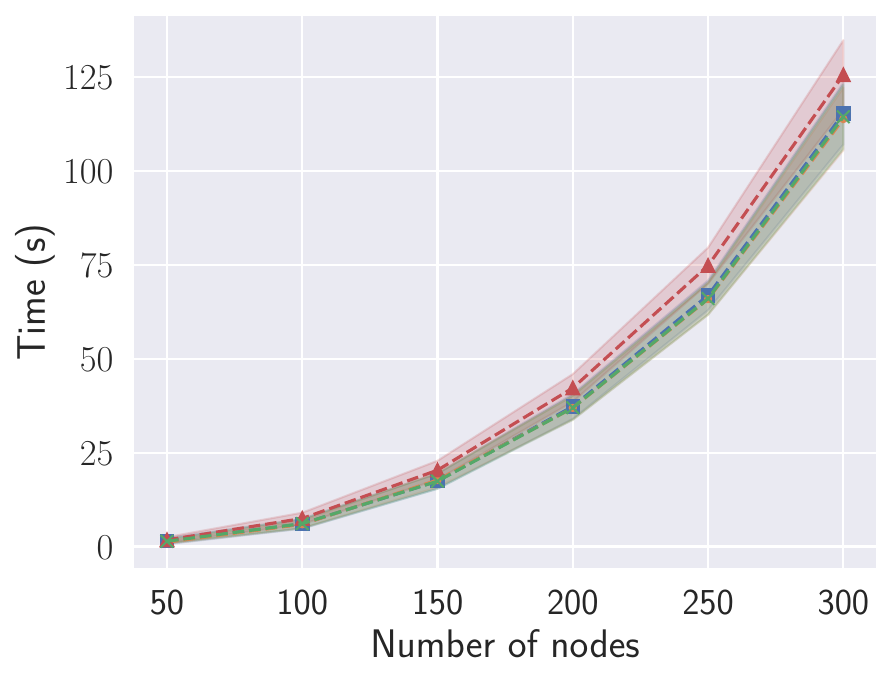}
            \caption*{d-separation tests}
        \end{subfigure}
        \begin{subfigure}[b]{0.24\linewidth}
            \includegraphics[width=\linewidth]{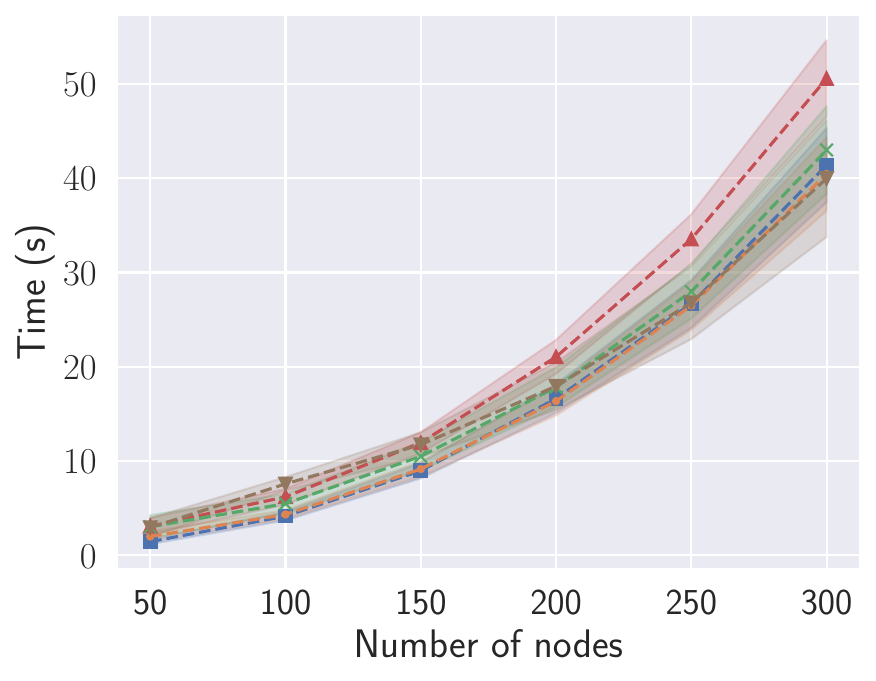}
            \caption*{Fisher-Z tests}
        \end{subfigure}
        \begin{subfigure}[b]{0.24\linewidth}
            \includegraphics[width=\linewidth]{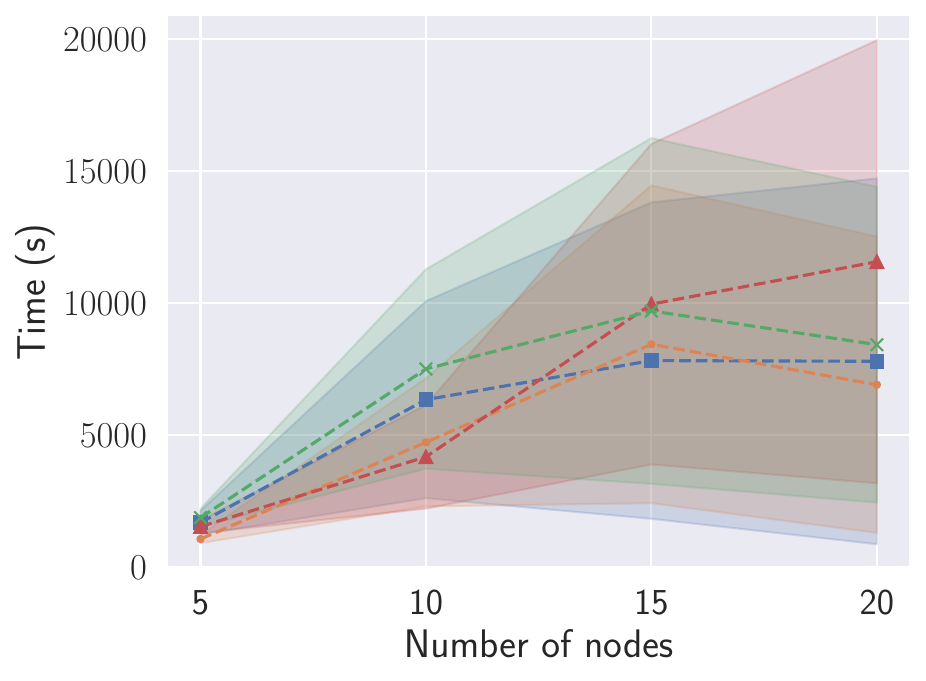}
            \caption*{KCI tests}
        \end{subfigure}
        \begin{subfigure}[b]{0.24\linewidth}
            \includegraphics[width=\linewidth]{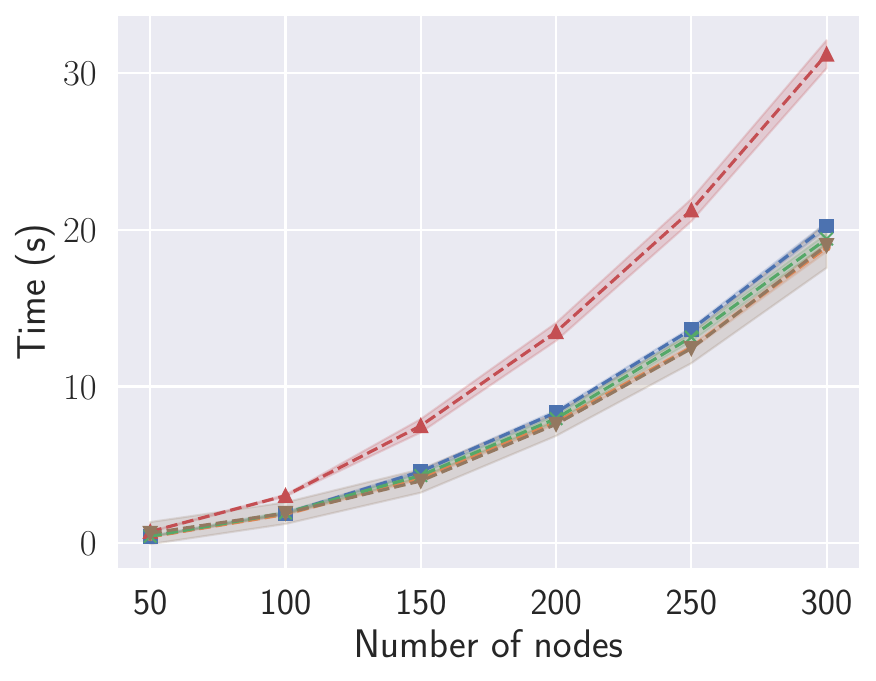}
            \caption*{$\chi^2$ tests}
        \end{subfigure}
        \caption{SNAP$(1)$.}
        \label{fig:time_snap1}
    \end{subfigure}
    \begin{subfigure}[b]{\linewidth}
        \centering
        \begin{subfigure}[b]{0.24\linewidth}
            \includegraphics[width=\linewidth]{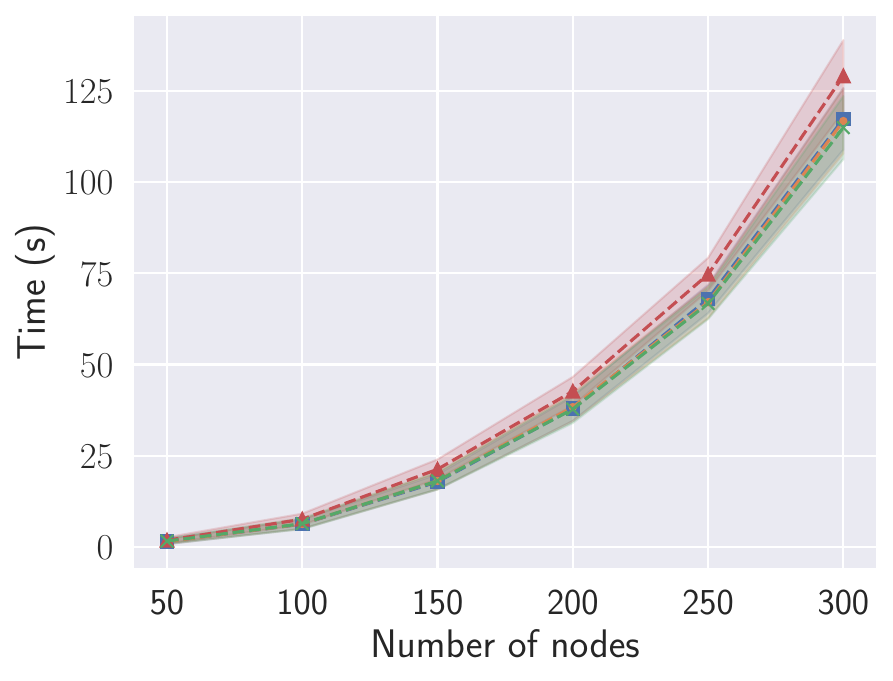}
            \caption*{d-separation tests}
        \end{subfigure}
        \begin{subfigure}[b]{0.24\linewidth}
            \includegraphics[width=\linewidth]{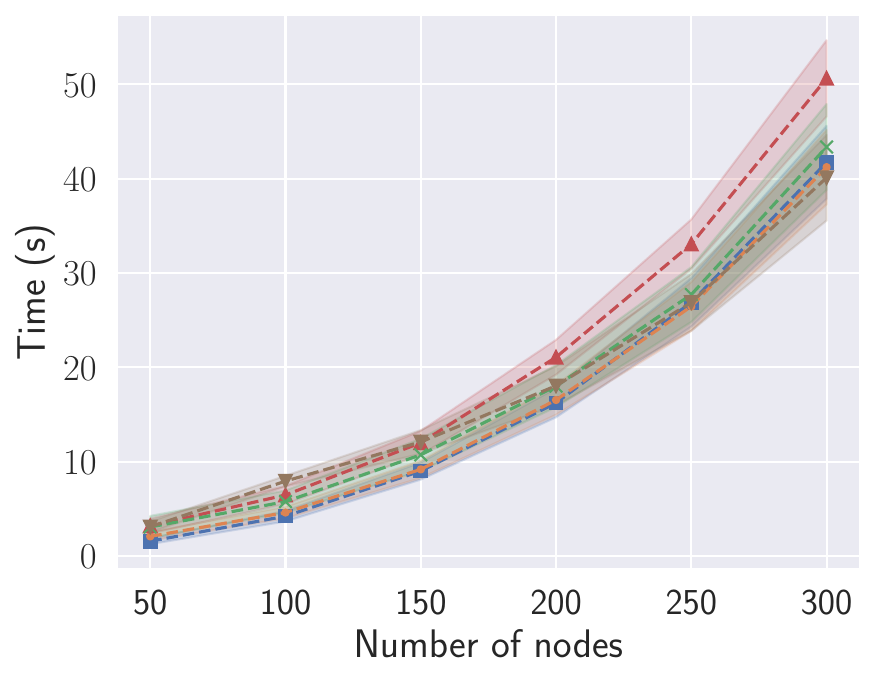}
            \caption*{Fisher-Z tests}
        \end{subfigure}
        \begin{subfigure}[b]{0.24\linewidth}
            \includegraphics[width=\linewidth]{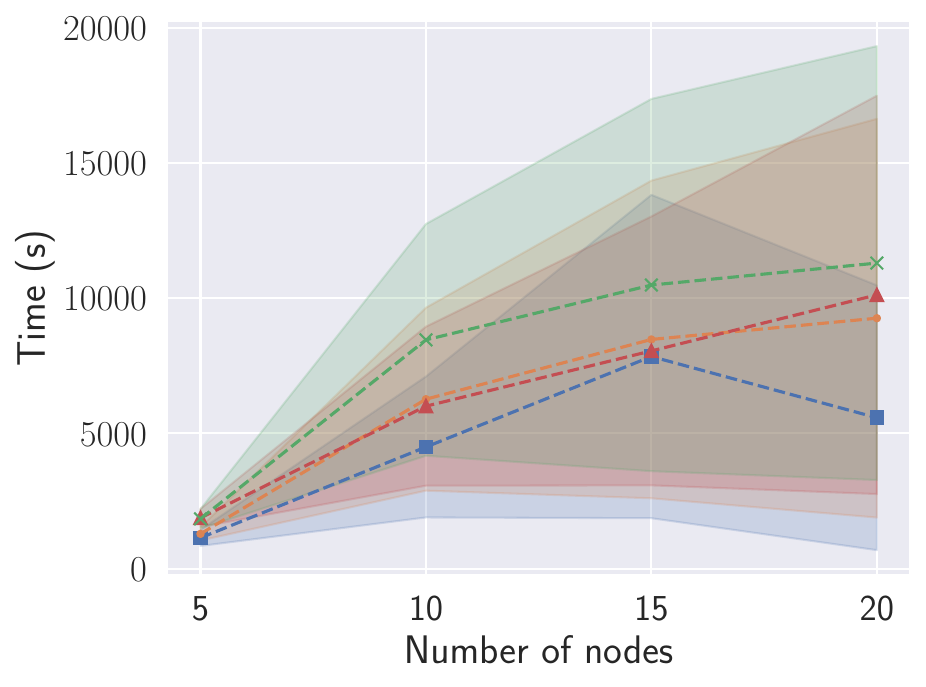}
            \caption*{KCI tests}
        \end{subfigure}
        \begin{subfigure}[b]{0.24\linewidth}
            \includegraphics[width=\linewidth]{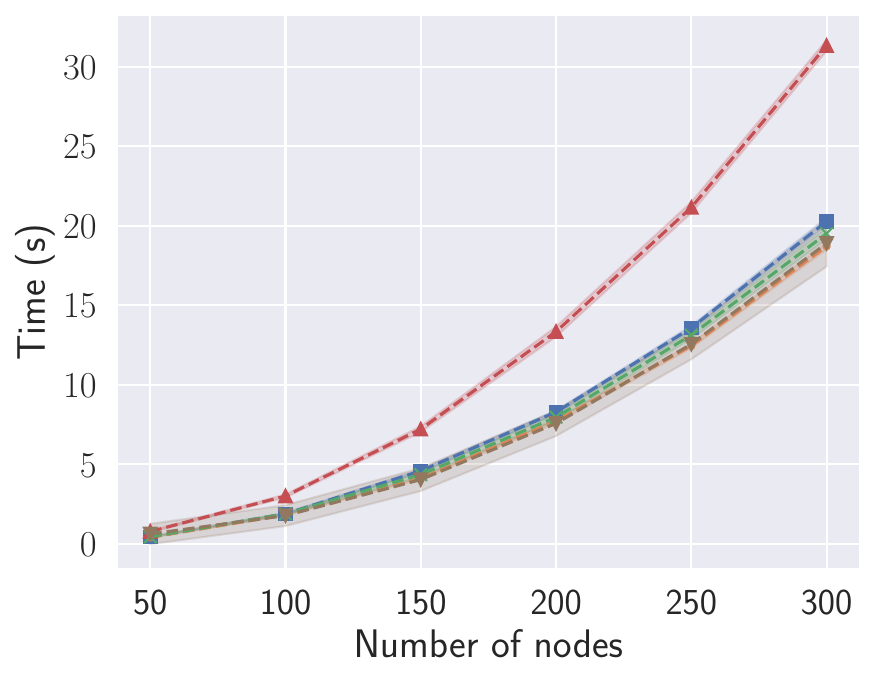}
            \caption*{$\chi^2$ tests}
        \end{subfigure}
        \caption{SNAP$(2)$.}
        \label{fig:time_snap2}
    \end{subfigure}
    \caption{Computation time over number of nodes for SNAP($k$) with $k= 0,\dots,2$, with $n_{\mathbf{T}}=4$, $\overline{d} = 3, d_{\max}=10$ and $n_{\mathbf{D}} = 1000$ data-points.}
    \label{fig:time_snapk}
\end{figure}

\begin{figure}
    \centering
    \includegraphics[width=.6\linewidth]{experiments/legend_big.pdf}
    \begin{subfigure}[b]{\linewidth}
        \centering
        \begin{subfigure}[b]{0.24\linewidth}
            \includegraphics[width=\linewidth]{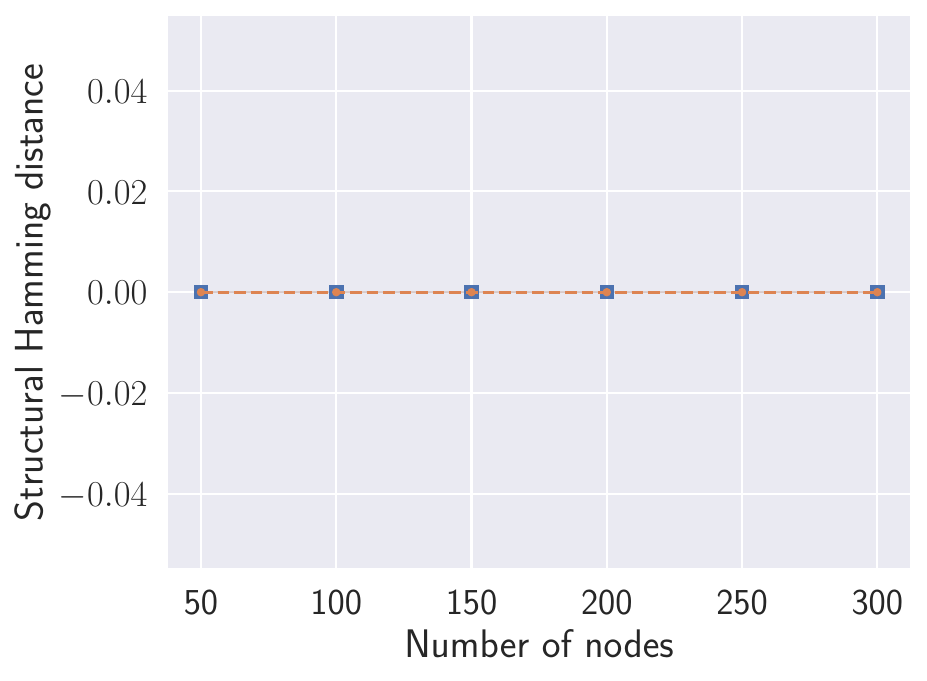}
            \caption*{d-separation tests}
        \end{subfigure}
        \begin{subfigure}[b]{0.24\linewidth}
            \includegraphics[width=\linewidth]{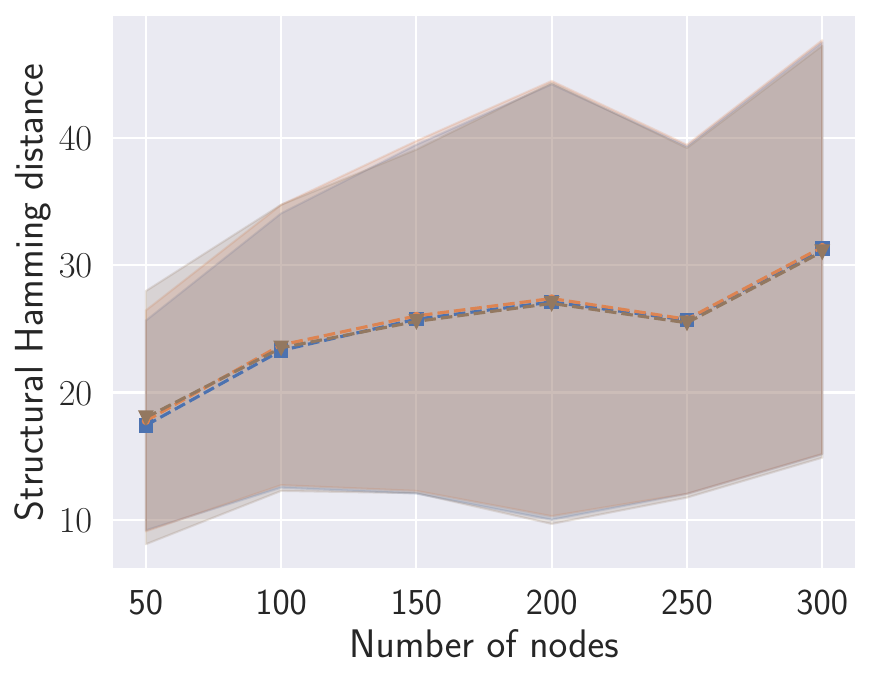}
            \caption*{Fisher-Z tests}
        \end{subfigure}
        \begin{subfigure}[b]{0.24\linewidth}
            \includegraphics[width=\linewidth]{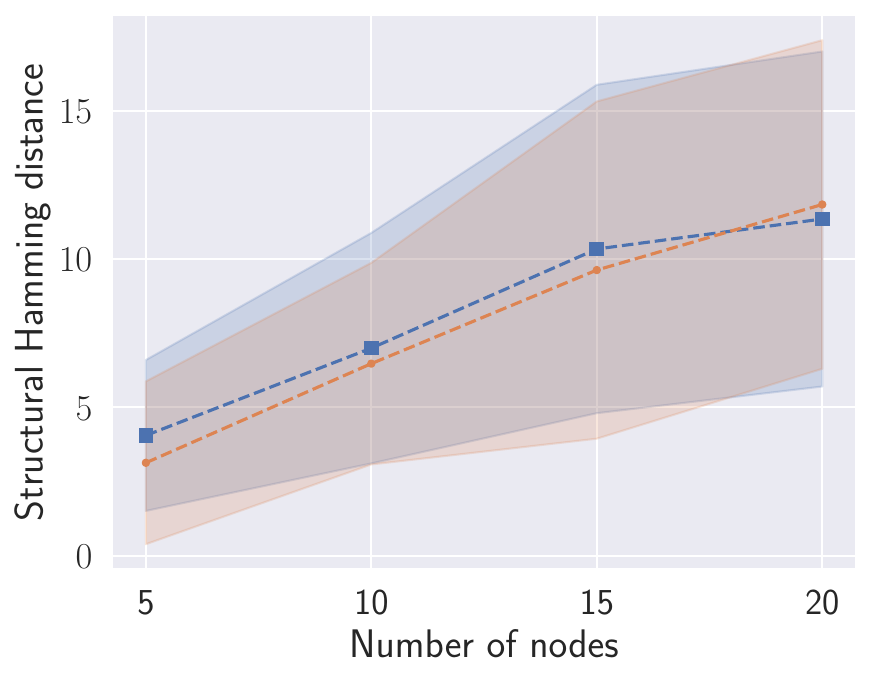}
            \caption*{KCI tests}
        \end{subfigure}
        \begin{subfigure}[b]{0.24\linewidth}
            \includegraphics[width=\linewidth]{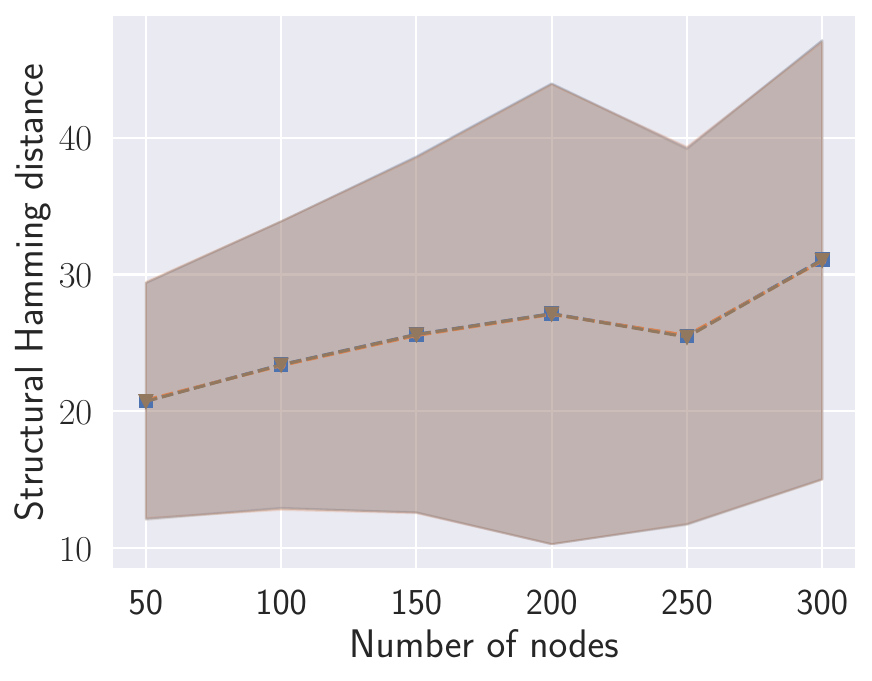}
            \caption*{$\chi^2$ tests}
        \end{subfigure}
        \caption{SNAP$(0)$.}
        \label{fig:shd_snap0}
    \end{subfigure}
    \begin{subfigure}[b]{\linewidth}
        \centering
        \begin{subfigure}[b]{0.24\linewidth}
            \includegraphics[width=\linewidth]{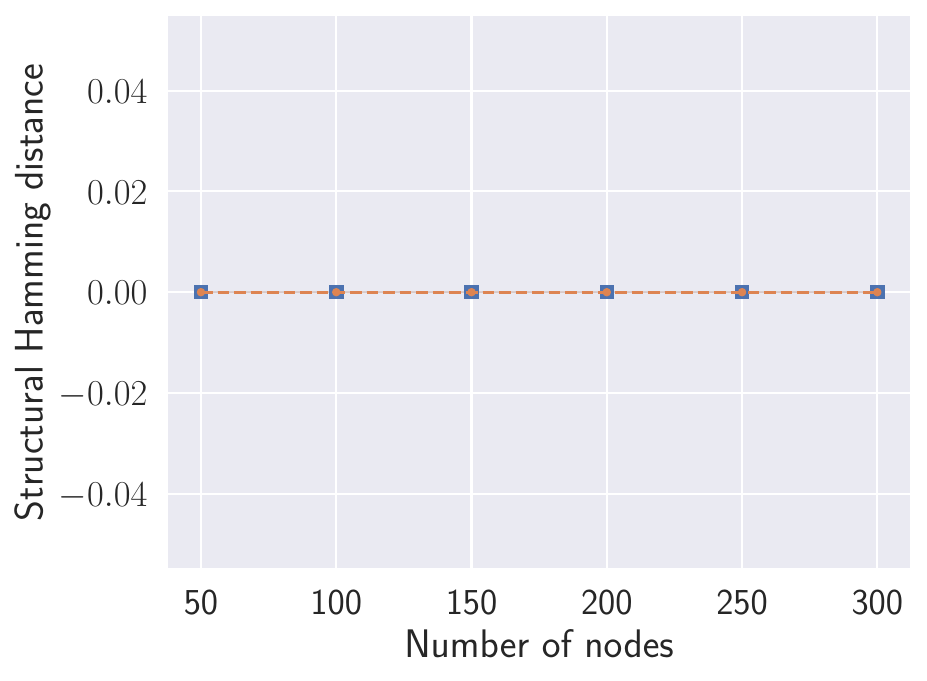}
            \caption*{d-separation tests}
        \end{subfigure}
        \begin{subfigure}[b]{0.24\linewidth}
            \includegraphics[width=\linewidth]{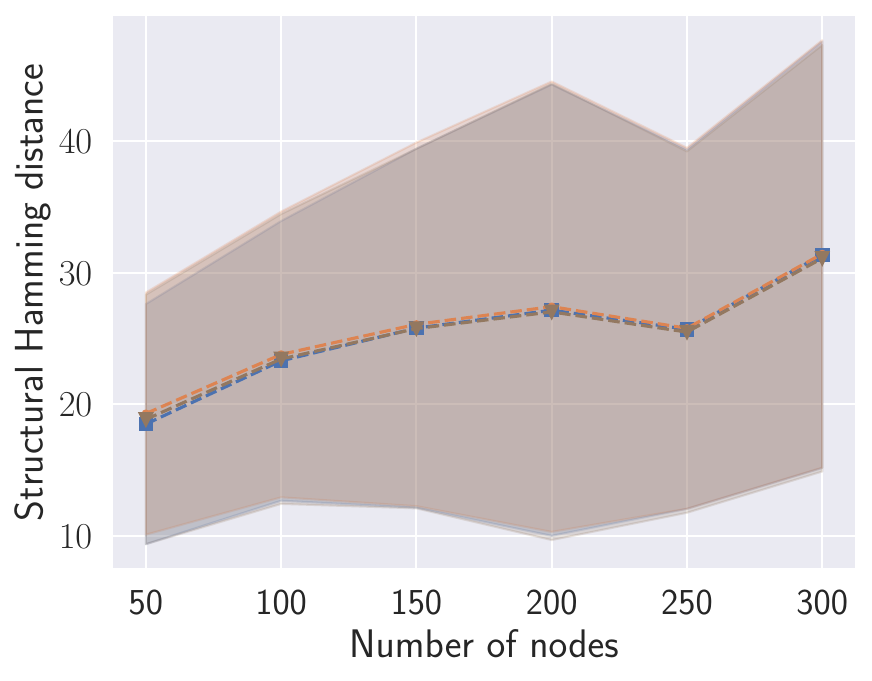}
            \caption*{Fisher-Z tests}
        \end{subfigure}
        \begin{subfigure}[b]{0.24\linewidth}
            \includegraphics[width=\linewidth]{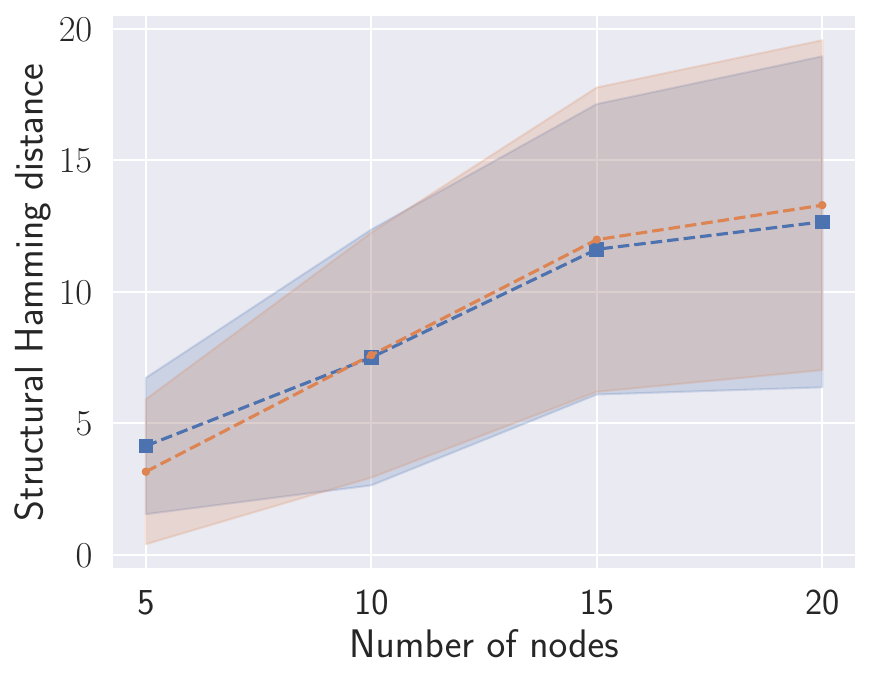}
            \caption*{KCI tests}
        \end{subfigure}
        \begin{subfigure}[b]{0.24\linewidth}
            \includegraphics[width=\linewidth]{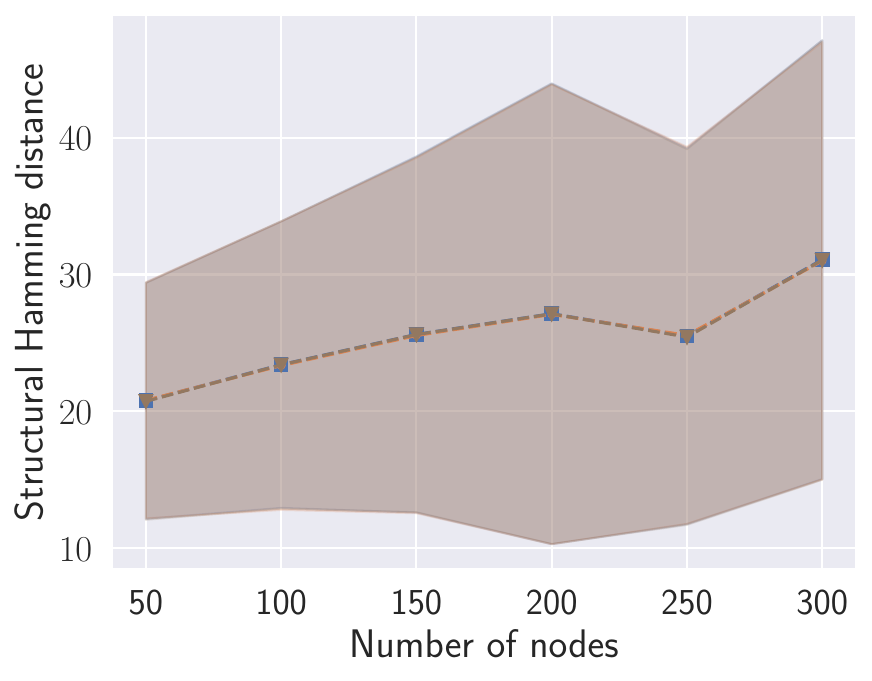}
            \caption*{$\chi^2$ tests}
        \end{subfigure}
        \caption{SNAP$(1)$.}
        \label{fig:shd_snap1}
    \end{subfigure}
    \begin{subfigure}[b]{\linewidth}
        \centering
        \begin{subfigure}[b]{0.24\linewidth}
            \includegraphics[width=\linewidth]{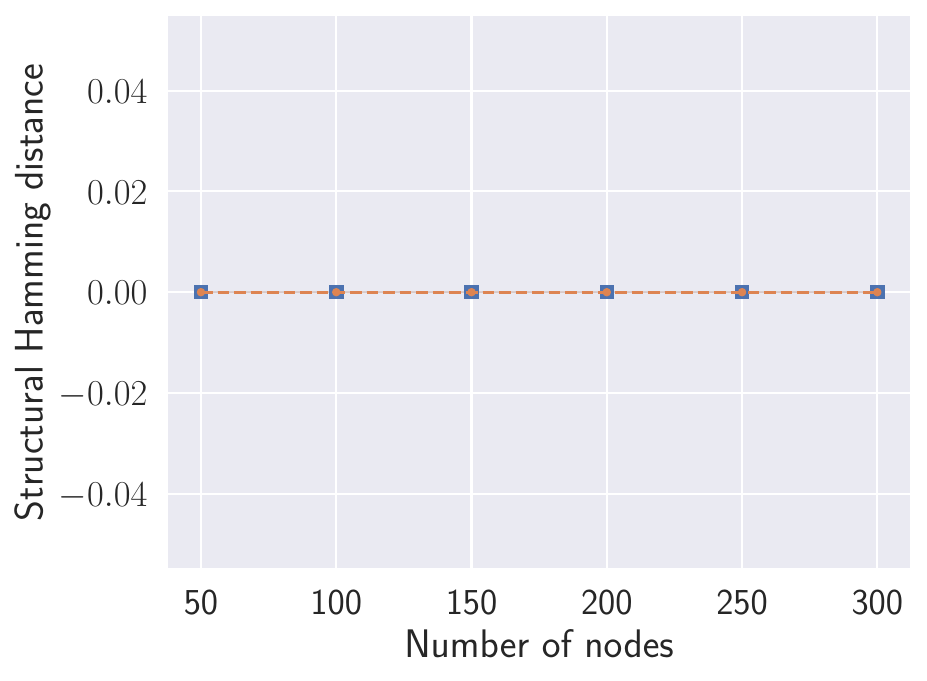}
            \caption*{d-separation tests}
        \end{subfigure}
        \begin{subfigure}[b]{0.24\linewidth}
            \includegraphics[width=\linewidth]{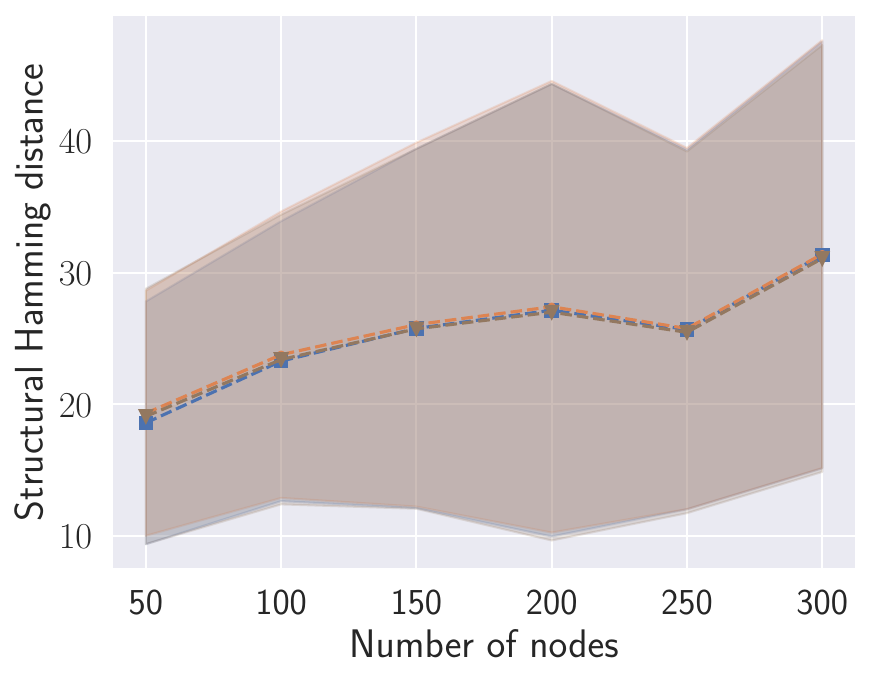}
            \caption*{Fisher-Z tests}
        \end{subfigure}
        \begin{subfigure}[b]{0.24\linewidth}
            \includegraphics[width=\linewidth]{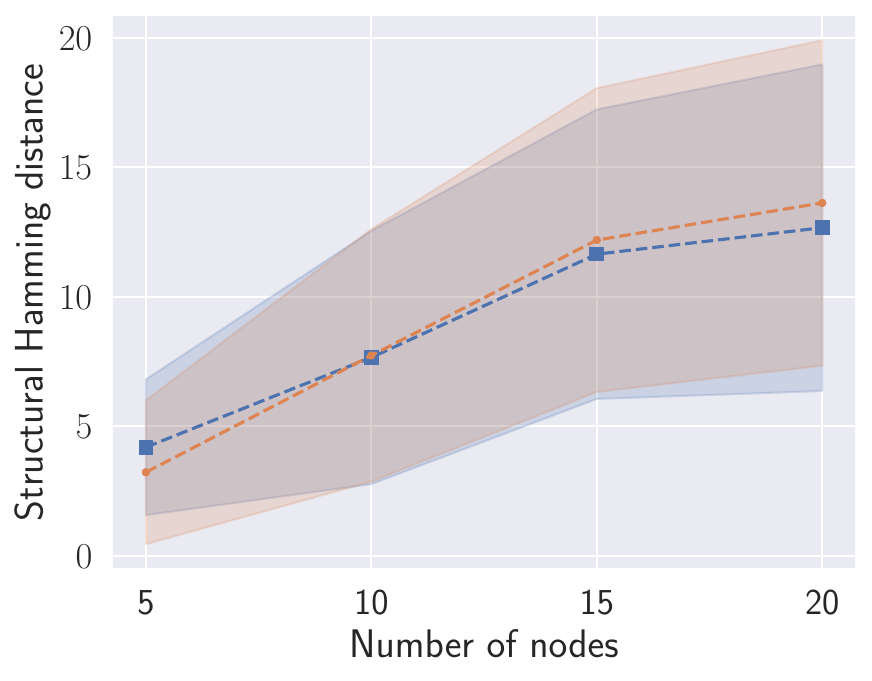}
            \caption*{KCI tests}
        \end{subfigure}
        \begin{subfigure}[b]{0.24\linewidth}
            \includegraphics[width=\linewidth]{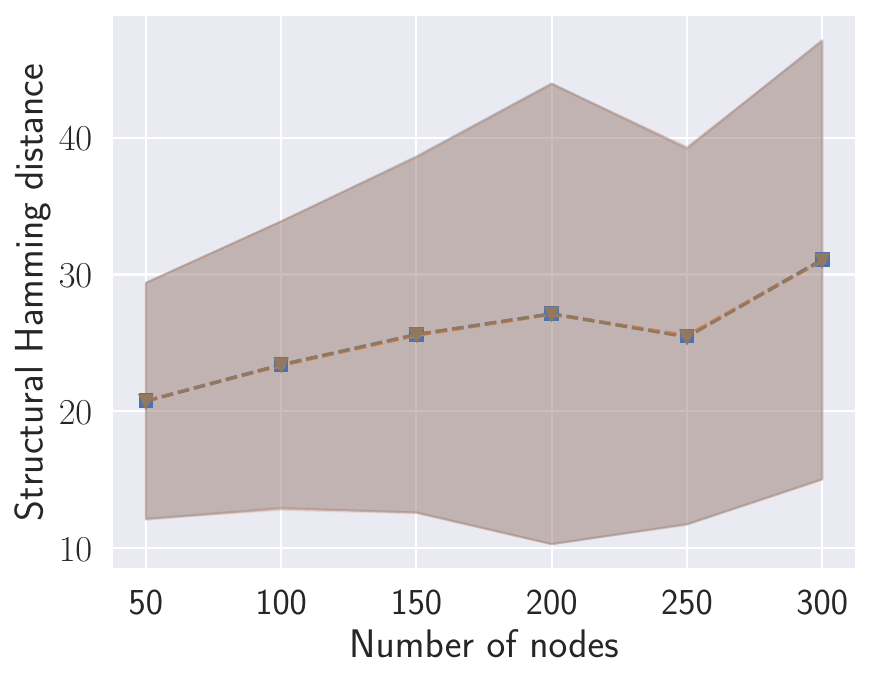}
            \caption*{$\chi^2$ tests}
        \end{subfigure}
        \caption{SNAP$(2)$.}
        \label{fig:shd_snap2}
    \end{subfigure}
    \caption{Structural Hamming Distance (SHD) over number targets for SNAP($k$) with $k= 0,\dots,2$, with $n_{\mathbf{T}}=4$, $\overline{d} = 3, d_{\max}=10$ and $n_{\mathbf{D}} = 1000$ data-points.}
    \label{fig:shd_snapk}
\end{figure}

\begin{figure}
    \centering
    \includegraphics[width=.6\linewidth]{experiments/legend_big.pdf}
    \begin{subfigure}[b]{\linewidth}
        \centering
        \begin{subfigure}[b]{0.24\linewidth}
            \includegraphics[width=\linewidth]{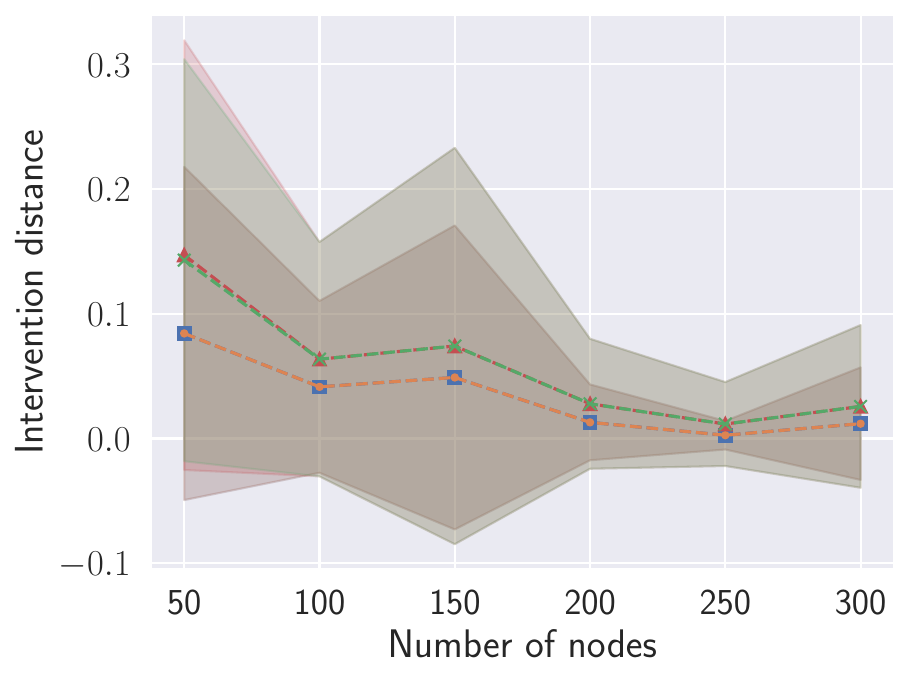}
            \caption*{d-separation tests}
        \end{subfigure}
        \begin{subfigure}[b]{0.24\linewidth}
            \includegraphics[width=\linewidth]{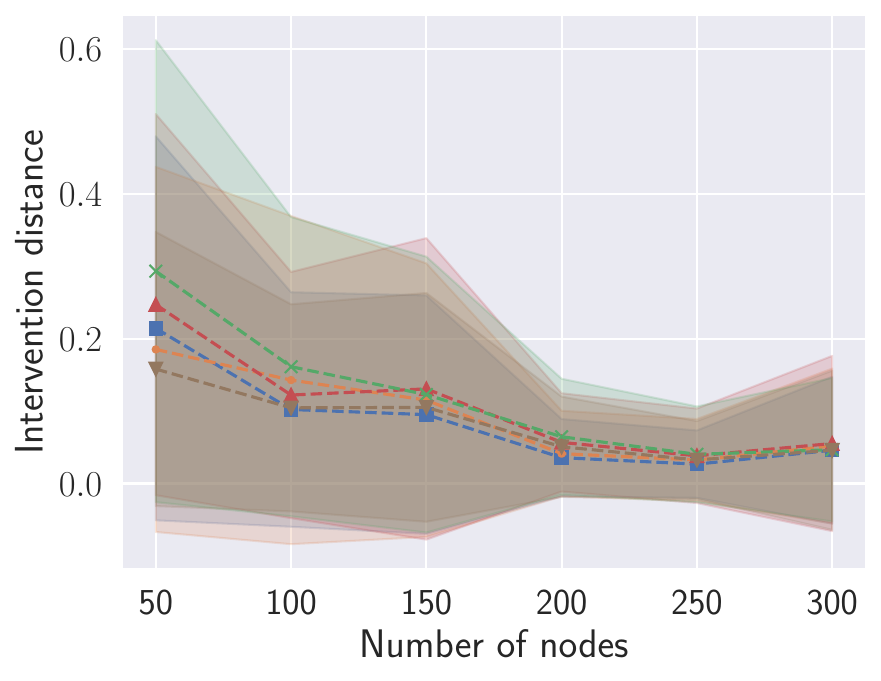}
            \caption*{Fisher-Z tests}
        \end{subfigure}
        \begin{subfigure}[b]{0.24\linewidth}
            \includegraphics[width=\linewidth]{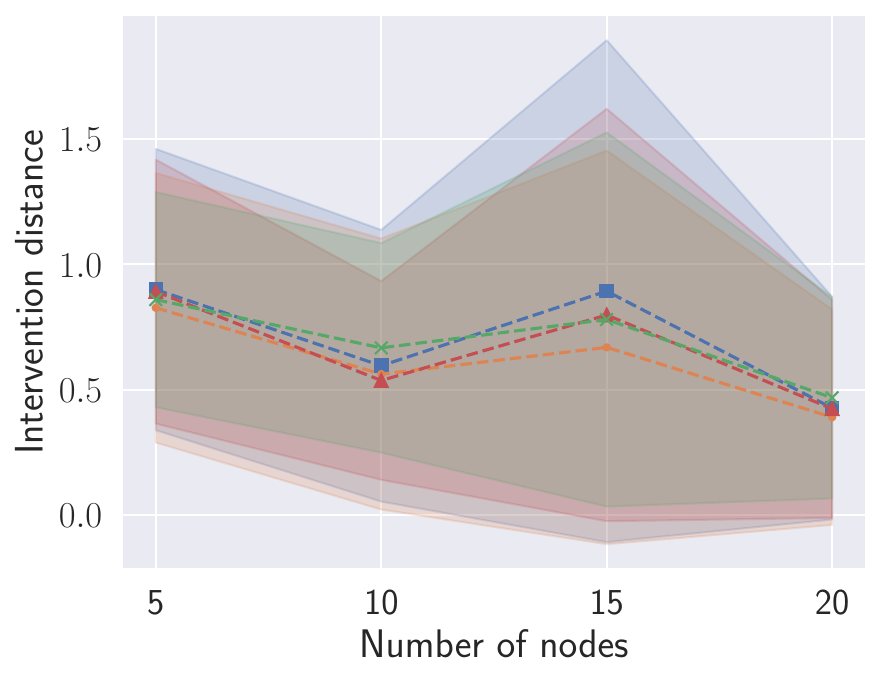}
            \caption*{KCI tests}
        \end{subfigure}
        \begin{subfigure}[b]{0.24\linewidth}
            \includegraphics[width=\linewidth]{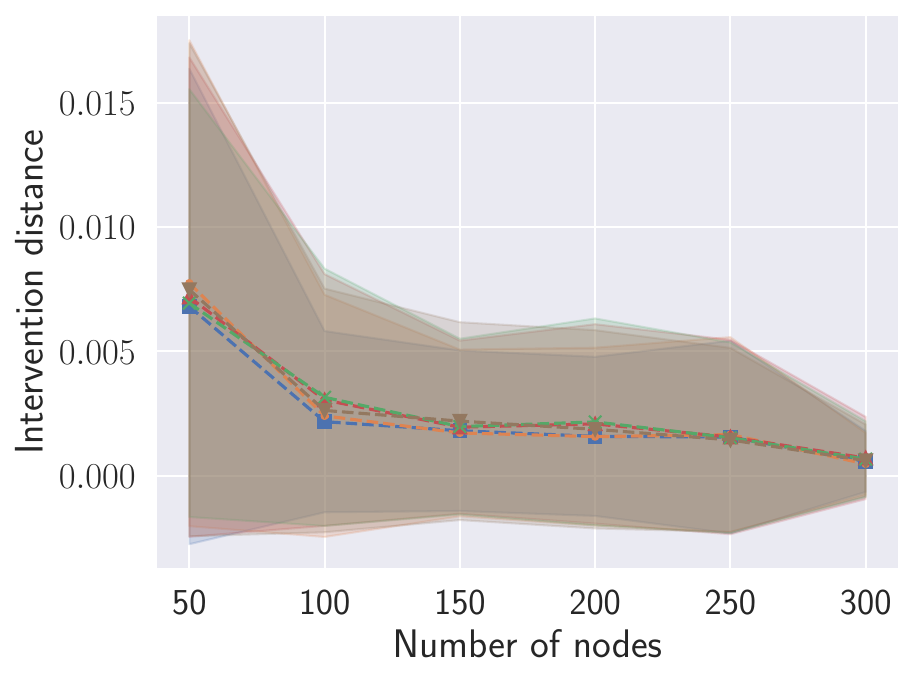}
            \caption*{$\chi^2$ tests}
        \end{subfigure}
        \caption{SNAP$(0)$.}
        \label{fig:int_dist_snap0}
    \end{subfigure}
    \begin{subfigure}[b]{\linewidth}
        \centering
        \begin{subfigure}[b]{0.24\linewidth}
            \includegraphics[width=\linewidth]{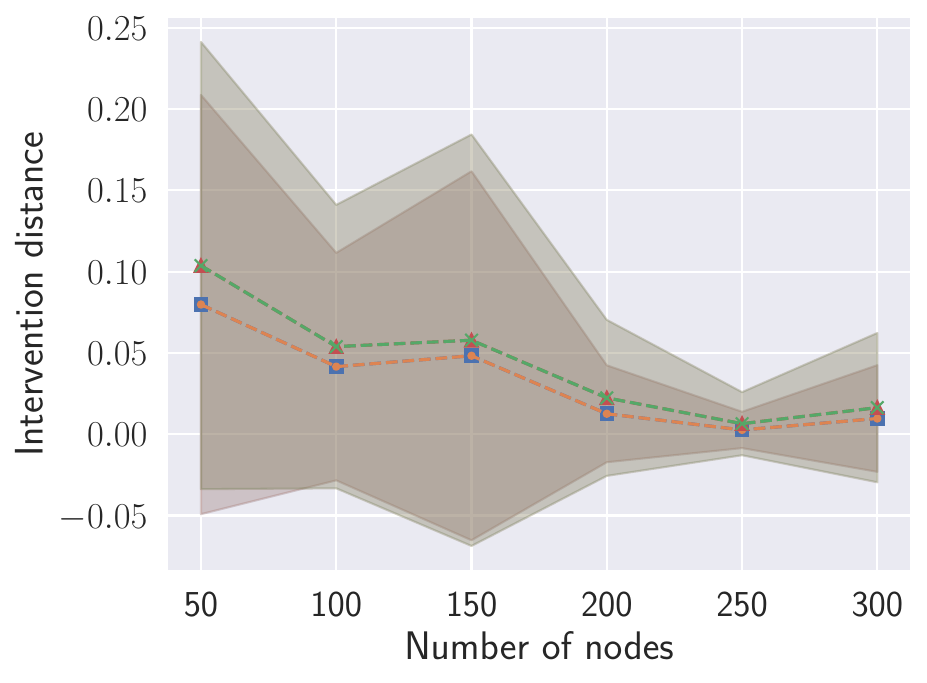}
            \caption*{d-separation tests}
        \end{subfigure}
        \begin{subfigure}[b]{0.24\linewidth}
            \includegraphics[width=\linewidth]{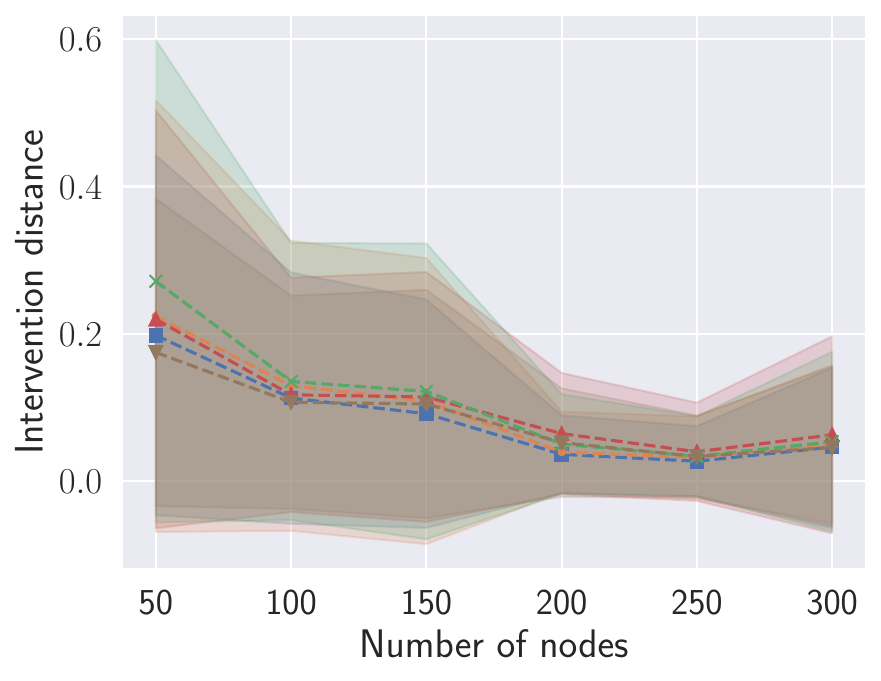}
            \caption*{Fisher-Z tests}
        \end{subfigure}
        \begin{subfigure}[b]{0.24\linewidth}
            \includegraphics[width=\linewidth]{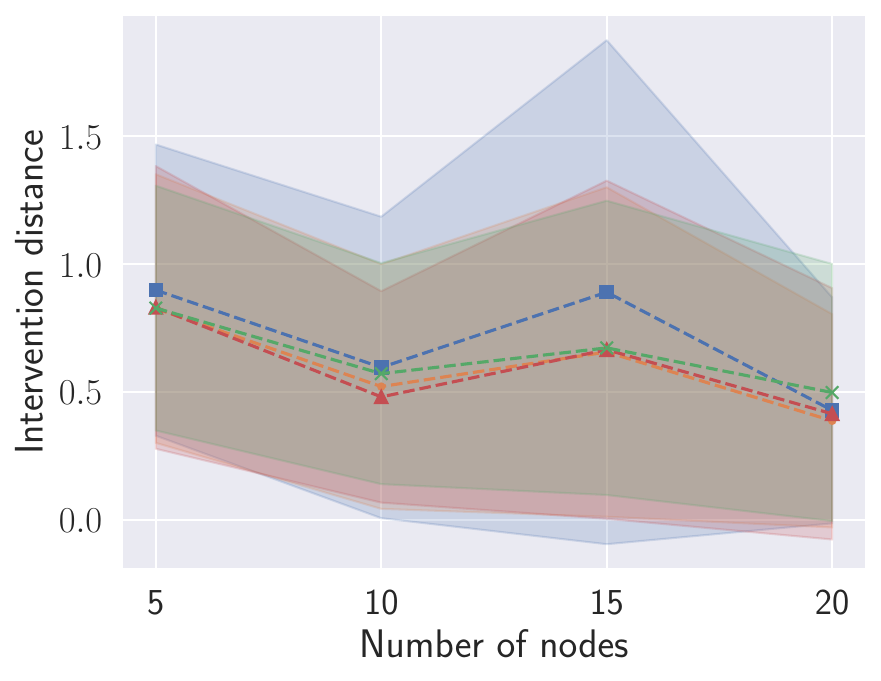}
            \caption*{KCI tests}
        \end{subfigure}
        \begin{subfigure}[b]{0.24\linewidth}
            \includegraphics[width=\linewidth]{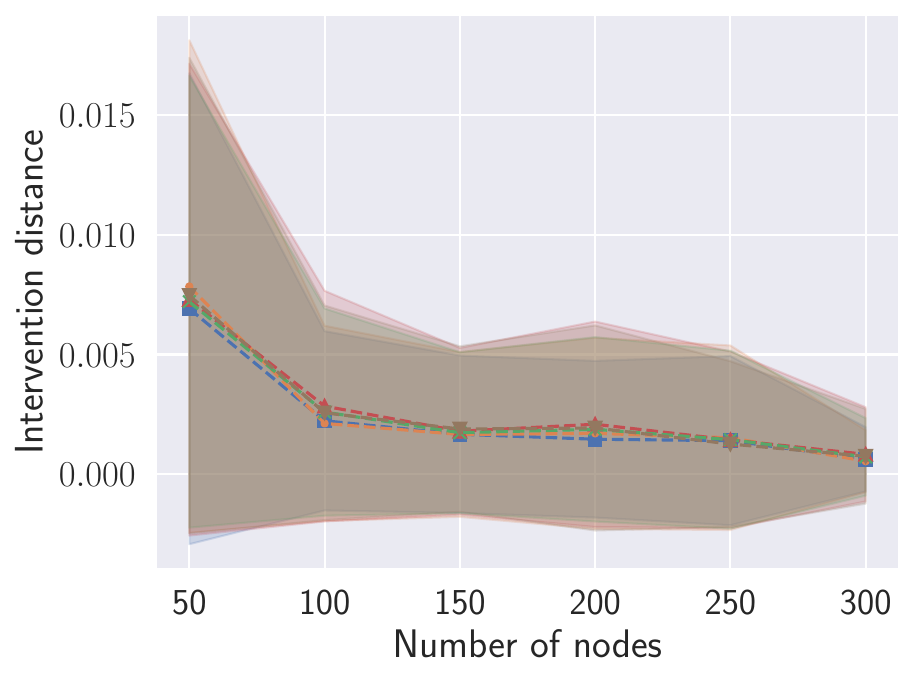}
            \caption*{$\chi^2$ tests}
        \end{subfigure}
        \caption{SNAP$(1)$.}
        \label{fig:int_dist_snap1}
    \end{subfigure}
    \begin{subfigure}[b]{\linewidth}
        \centering
        \begin{subfigure}[b]{0.24\linewidth}
            \includegraphics[width=\linewidth]{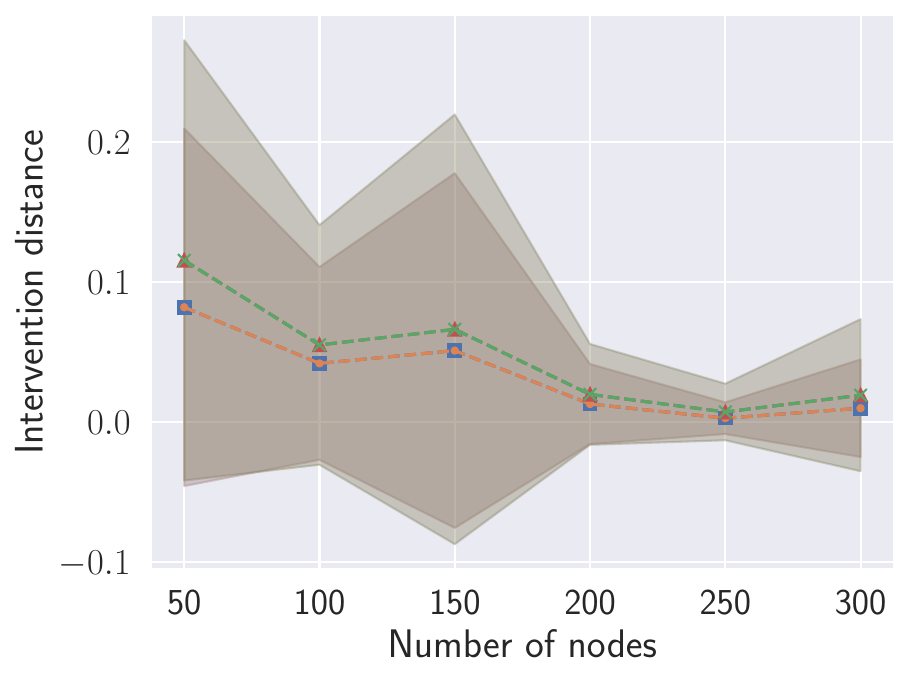}
            \caption*{d-separation tests}
        \end{subfigure}
        \begin{subfigure}[b]{0.24\linewidth}
            \includegraphics[width=\linewidth]{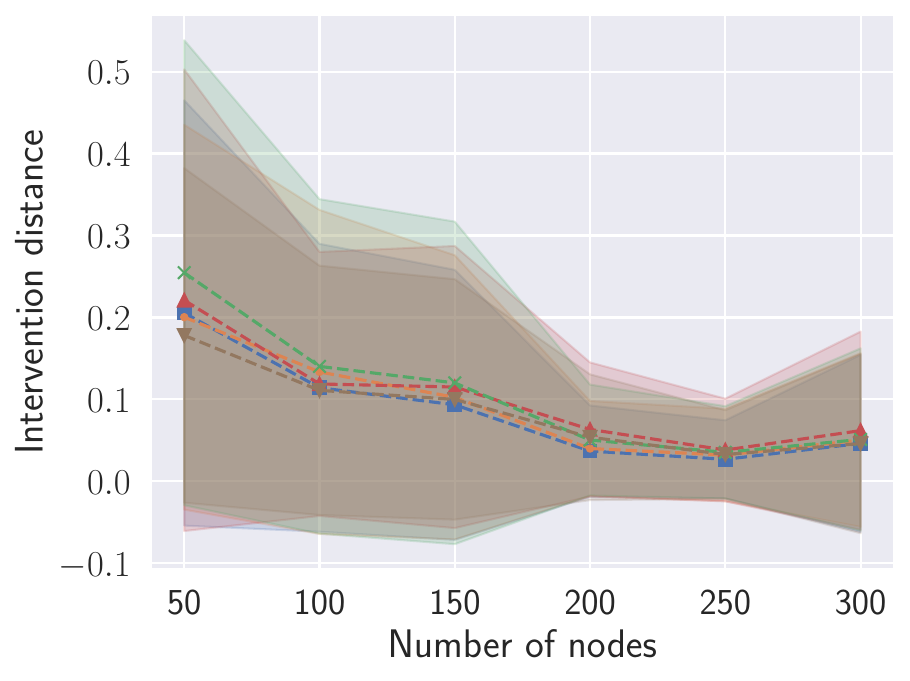}
            \caption*{Fisher-Z tests}
        \end{subfigure}
        \begin{subfigure}[b]{0.24\linewidth}
            \includegraphics[width=\linewidth]{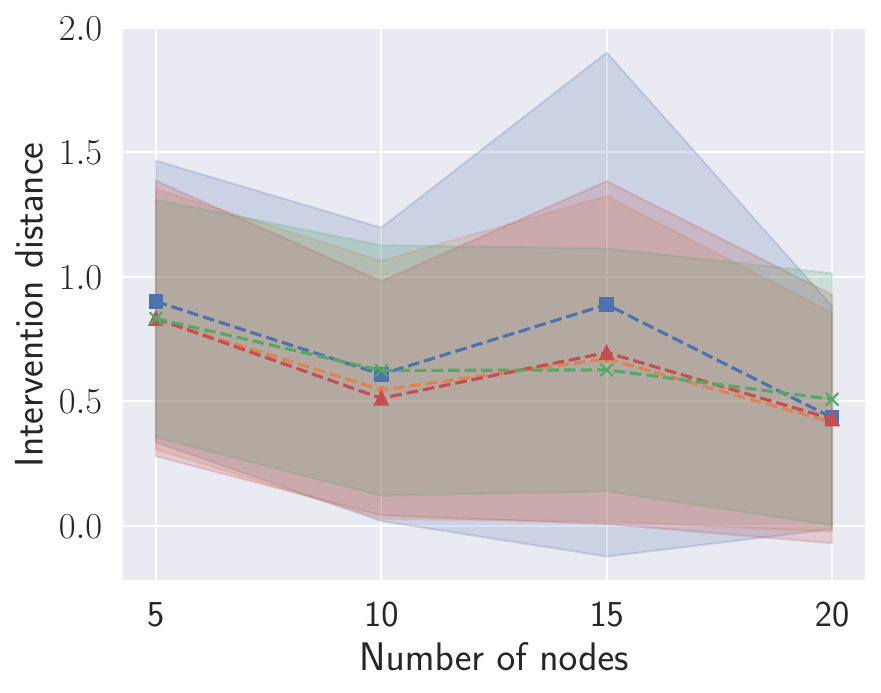}
            \caption*{KCI tests}
        \end{subfigure}
        \begin{subfigure}[b]{0.24\linewidth}
            \includegraphics[width=\linewidth]{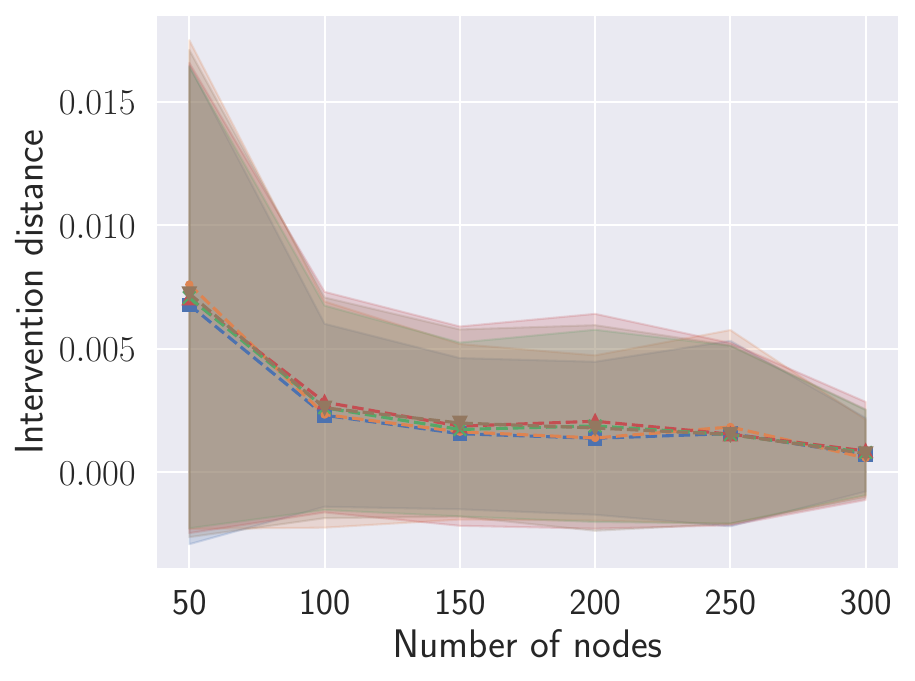}
            \caption*{$\chi^2$ tests}
        \end{subfigure}
        \caption{SNAP$(2)$.}
        \label{fig:int_dist_snap2}
    \end{subfigure}
    \caption{Intervention distance over number targets for SNAP($k$) with $k= 0,\dots,2$, with $n_{\mathbf{T}}=4$, $\overline{d} = 3, d_{\max}=10$ and $n_{\mathbf{D}} = 1000$ data-points.}
    \label{fig:int_dist_snapk}
\end{figure}

\subsection{Order of CI tests}
\label{app:order}
Fig.~\ref{fig:test_per_order} highlights the fact that SNAP avoids higher order independence tests.
We plot the number of d-separation \ac{CI} tests performed over the order of the tests, averaged over graphs with a fixed parameter of $n_{\mathbf{T}}=4, n_{\mathbf{V}}=20, \overline{d} = 3$ and $d_{\max}=10$.
All SNAP variants do the same amount of marginal tests.
At higher orders, combining any baseline, except for MARVEL, with SNAP$(0)$ for prefiltering allows it to perform fewer higher order tests.
MARVEL exchanges low order \ac{CI} tests to tests of maximum order (18), due to total conditioning.
Combining MARVEL with SNAP$(0)$ often allows for lower order total conditioning over only the remaining variables.

\begin{figure}
    \centering
    \includegraphics[width=.6\linewidth]{experiments/legend_big.pdf}
    \begin{subfigure}[b]{.49\linewidth}
        \centering
        \includegraphics[width=0.8\linewidth]{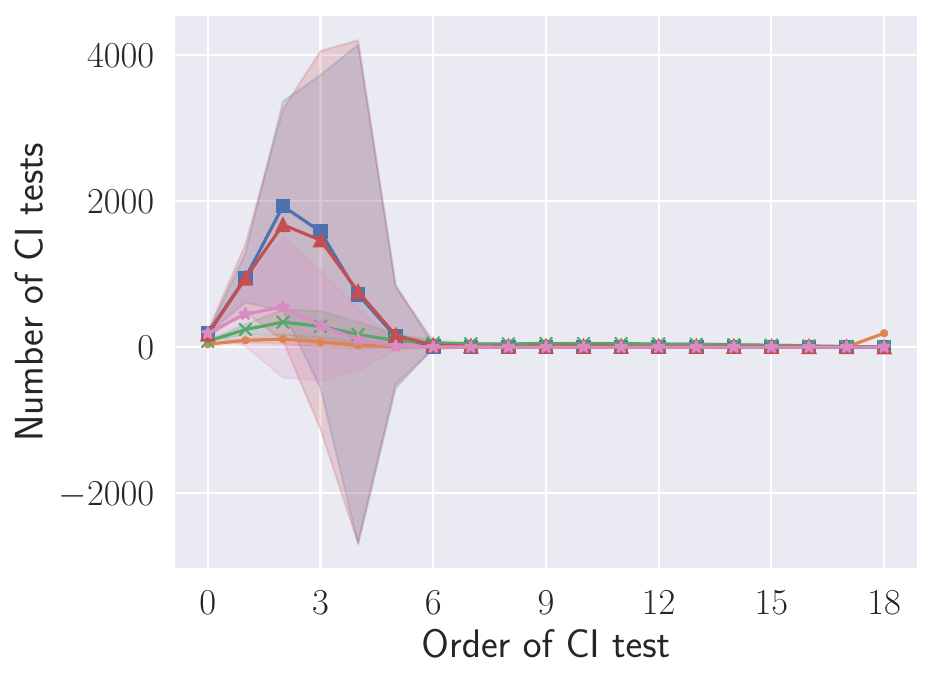}
    \end{subfigure}
    \begin{subfigure}[b]{.49\linewidth}
        \centering
        \includegraphics[width=0.8\linewidth]{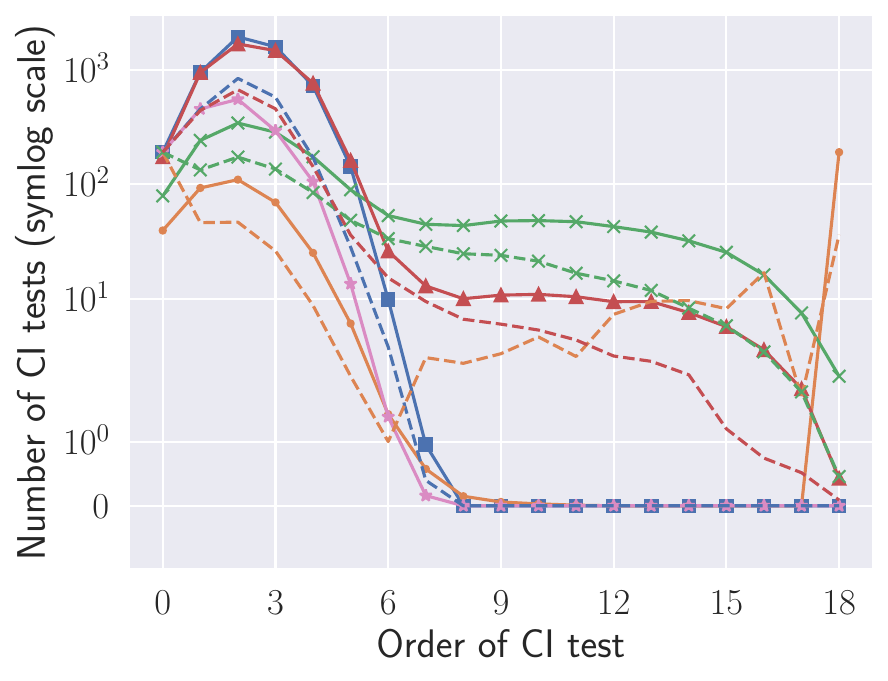}
    \end{subfigure}
    \caption{Tests per order for d-separation \ac{CI} tests with $n_{\mathbf{T}}=4, n_{\mathbf{V}}=20, \overline{d} = 3$ and $d_{\max}=10$. SNAP variants use lower order \ac{CI} tests than baselines. The best method is indicated in \textbf{bold}.}
    \label{fig:test_per_order}
\end{figure}

\subsection{Real data}
\label{app:real-network-results}
We show all results for the MAGIC-NIAB network in Tab.~\ref{tab:magic-niab_app}, which shows that the different in intervention distance between most methods in not significant due to high variance.

\begin{table*}
\centering
\begin{tabular}{lrrrr}
\hline
\multicolumn{1}{c}{} & \multicolumn{1}{c}{CI tests} & \multicolumn{1}{c}{Time} & \multicolumn{1}{c}{Int. dist.} & \multicolumn{1}{c}{SHD} \\ \hline
PC                   & $12806.95 (\pm2085.96)$           & $79.27 (\pm24.75)$            & $\mathbf{0.009 (\pm 0.007)}$           & $22.33 (\pm6.02)$   \\
PC-SNAP$(0)$           & $\mathbf{955.15 (\pm9.50)}$       & $0.47 (\pm0.12)$              & $0.012 (\pm0.011)$                    & $44.84 (\pm9.03)$            \\ \hline
MARVEL               & $8873.08 (\pm3055.54)$   & $27.33 (\pm9.41)$                        & $0.019 (\pm0.020)$                    & $40.81 (\pm9.11)$            \\
MARVEL-SNAP$(0)$       & $959.89 (\pm5.49)$                & $0.58 (\pm0.15)$              & $0.021 (\pm0.015)$                    & $46.39 (\pm8.95)$            \\ \hline
LDECC*                & $18142.34 (\pm2608.34)$           & $19.22 (\pm4.02)$             & $0.022 (\pm0.015)$                    & -            \\
LDECC*-SNAP$(0)$        & $980.55 (\pm22.52)$               & $0.85 (\pm0.15)$              & $0.020 (\pm0.015)$                    & -            \\ \hline
MB-by-MB*             & $11464.10 (\pm1995.24)$           & $25.72 (\pm4.70)  $           & $0.024 (\pm0.021)$                    & -            \\
MB-by-MB*-SNAP$(0)$     & $971.74 (\pm17.21)  $             & $0.73 (\pm0.22)$              & $0.019 (\pm0.021)$                    & -            \\ \hline
FGES                 & -                            & $0.72 (\pm0.13)$              & $0.011 (\pm0.010)$         & $\mathbf{18.64 (\pm5.03)}$   \\
FGES-SNAP$(0)$         & -                            & $\mathbf{0.40 (\pm0.07)}$     & $0.015 (\pm0.011)$                    & $44.83 (\pm8.66)$            \\ \hline
SNAP($\infty$)       & $\mathbf{955.15 (\pm9.50)}$       & $0.61 (\pm0.16)$              & $0.012 (\pm0.011)$                    & $44.84 (\pm9.03)$            \\ \hline
\end{tabular}
\caption{Results for the MAGIC-NIAB network, with $n_{\mathbf{V}} = 66, \overline{d} = 3, n_{\mathbf{T}} = 4$ identifiable targets and $n_{\mathbf{D}} = 1000$ linear Gaussian data-points. We repeat each experiment 100 times and remove the best and worst 5 results for each method. The best method is indicated in \textbf{bold}.}
\label{tab:magic-niab_app}
\end{table*}

We also try the Andes network from bnlearn \citep{scutari2010bnlearn} and sample binary data according to its parameters.
Tab.~\ref{tab:andes} shows that SNAP variants always improve running time and are comparable in terms of \ac{CI} tests.
Furthermore, SNAP variants achieve comparable intervention distance to local methods, and greatly improves the intervention distance of SHD.

\begin{table*}
\begin{tabular}{lrrrr}
\hline
\multicolumn{1}{c}{} & \multicolumn{1}{c}{CI tests} & \multicolumn{1}{c}{Time} & \multicolumn{1}{c}{Int. dist.} & \multicolumn{1}{c}{SHD} \\ \hline
PC                   & $46917.81 (\pm1582.70)$           & $6.65 (\pm0.39)$              & $\mathbf{0.0030 (\pm0.0114)}$        & $\mathbf{74.33 (\pm10.15)}$  \\
PC-SNAP$(0)$           & $24753.07 (\pm0.33)$              & $5.63 (\pm0.25)$              & $0.0066 (\pm0.0255)$                & $168.93 (\pm15.12)$          \\ \hline
MARVEL               & $\mathbf{24753.00 (\pm0.00)}$     & $19.02 (\pm0.38)$             & $0.0043 (\pm0.0245)$                & $168.66 (\pm15.12)$          \\
MARVEL-SNAP$(0)$       & $24759.09 (\pm0.53)$              & $4.96 (\pm0.17)$              & $0.0076 (\pm0.0270)$                 & $169.04 (\pm15.10)$          \\ \hline
LDECC*                & $25588.54 (\pm20192.74)$          & $7.60 (\pm2.57)$              & $0.0063 (\pm0.0241)$                & -          \\
LDECC*-SNAP$(0)$        & $24755.58 (\pm3.58)$              & $8.65 (\pm0.43)$              & $0.0074 (\pm0.0262)$                 & -          \\ \hline
MB-by-MB*             & $29520.48 (\pm14909.30)$          & $31.06 (\pm17.81)$            & $0.0069 (\pm0.0287)$                 & -          \\
MB-by-MB*-SNAP$(0)$     & $24754.68 (\pm2.32)$              & $5.25 (\pm0.23)$              & $0.0073 (\pm0.0287)$                 & -          \\ \hline
FGES                 & -                            & $19.69 (\pm2.40)$             & $0.0216 (\pm0.0545)$        & $204.71 (\pm24.13)$ \\
FGES-SNAP$(0)$         & -                            & $\mathbf{4.51 (\pm0.59)}$     & $0.0080 (\pm0.0283)$                 & $169.90 (\pm22.53)$         \\ \hline
SNAP($\infty$)       & $24753.07 (\pm0.33)$              & $5.45 (\pm0.23)$              & $0.0066 (\pm0.0255)$                 & $168.93 (\pm15.12)$          \\ \hline
\end{tabular}
\caption{Results for the Andes network, with $n_{\mathbf{V}} = 223, \overline{d} = 3.03, n_{\mathbf{T}} = 4$ identifiable targets and $n_{\mathbf{D}} = 1000$ binary data-points. We repeat each experiment 100 times and remove the best and worst 5 results for each method. The best method is indicated in \textbf{bold}.}
\label{tab:andes}
\end{table*}

\subsection{CI test error rates}

\label{app:error_rates}
In this section, we analyze the type I and type II error rates of the CI tests used by SNAP($\infty$), and their relation to missing and extra edges in the output graph.
Figure \ref{fig:error_rate_per_node} shows the type I and type II error rates for different CI tests over the number of nodes.
Even though we use both Fisher-Z and KCI tests on linear Gaussian data with a significance threshold $\alpha = 0.05$, Fisher-Z tests have a consistently higher type I error rate compared to type II on these data, while KCI tests show the opposite performance with consistently higher type II error rate compared to type I.
On the other hand, the type II error rate of $\chi^2$ tests is significantly higher and decreases as the number of nodes grow.

\begin{figure}
    \centering
    \begin{subfigure}[b]{\linewidth}
        \centering
        \begin{subfigure}[b]{0.3\linewidth}
            \includegraphics[width=\linewidth]{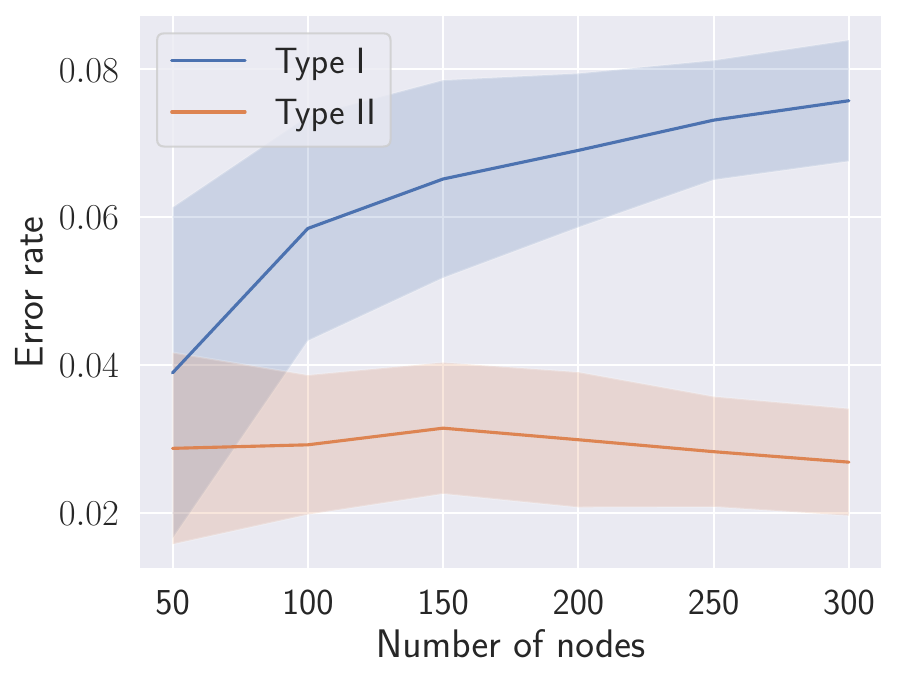}
            \caption*{Fisher-Z tests}
            \label{fig:error_rate_per_node_fshz}
        \end{subfigure}
        \begin{subfigure}[b]{0.3\linewidth}
            \includegraphics[width=\linewidth]{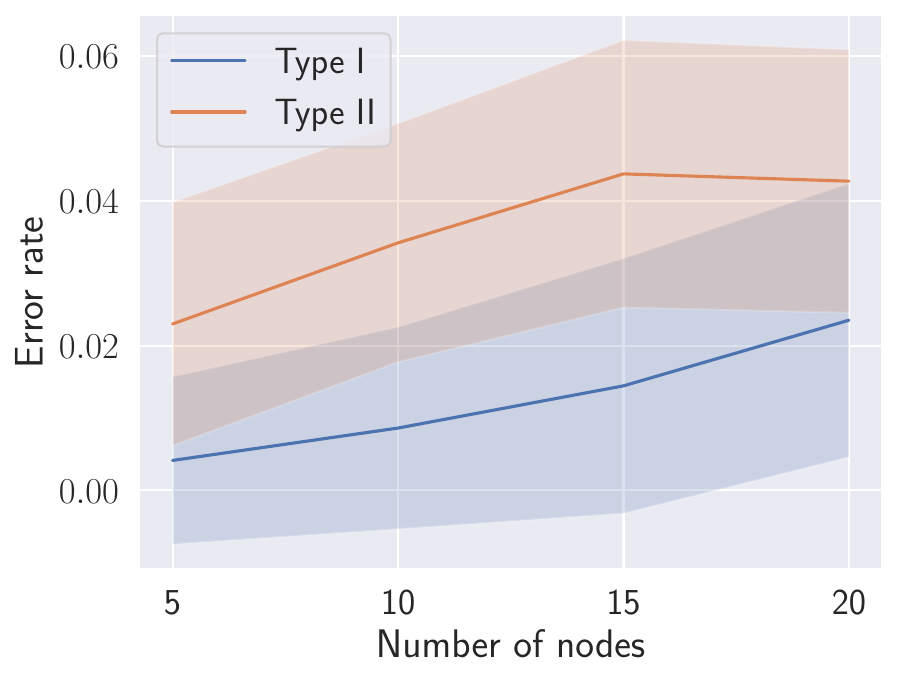}
            \caption*{KCI tests}
            \label{fig:error_rate_per_node_kci}
        \end{subfigure}
        \begin{subfigure}[b]{0.3\linewidth}
            \includegraphics[width=\linewidth]{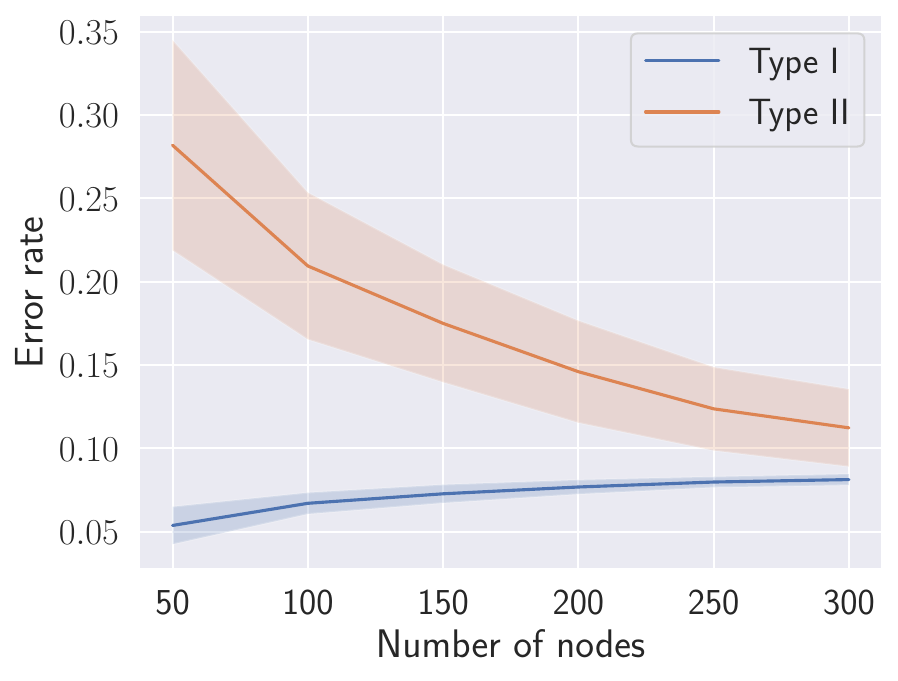}
            \caption*{$\chi^2$ tests}
            \label{fig:error_rate_per_node_chsq}
        \end{subfigure}
    \end{subfigure}
    \caption{Type I and type II error rates of different CI tests performed by SNAP($\infty$) over number of nodes, with $n_{\mathbf{T}}=4, \overline{d} = 3, d_{\max}=10$ and $n_{\mathbf{D}} = 1000$ data-points.}
    \label{fig:error_rate_per_node}
\end{figure}

Figure~\ref{fig:extra_missing_edges} shows the number of extra and missing edges in the graph returned by SNAP($\infty$) compared to the true CPDAG over all the possible ancestors of the targets.
Our results show that for all CI tests, there are much more missing edges than extra ones.
In particular, this holds for Fisher-Z tests, where the type II error rate is lower.
This indicates that missing edges arise not from type II CI testing errors, but from pruning the wrong nodes.

\begin{figure}
    \centering
    \begin{subfigure}[b]{\linewidth}
        \centering
        \begin{subfigure}[b]{0.3\linewidth}
            \includegraphics[width=\linewidth]{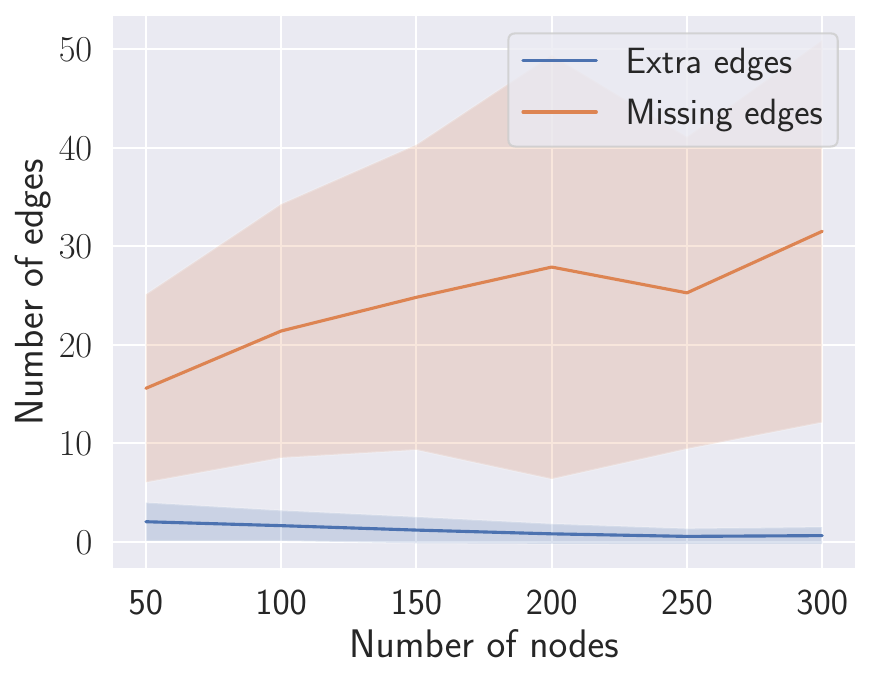}
            \caption*{Fisher-Z tests}
            \label{fig:extra_missing_edges_fshz}
        \end{subfigure}
        \begin{subfigure}[b]{0.3\linewidth}
            \includegraphics[width=\linewidth]{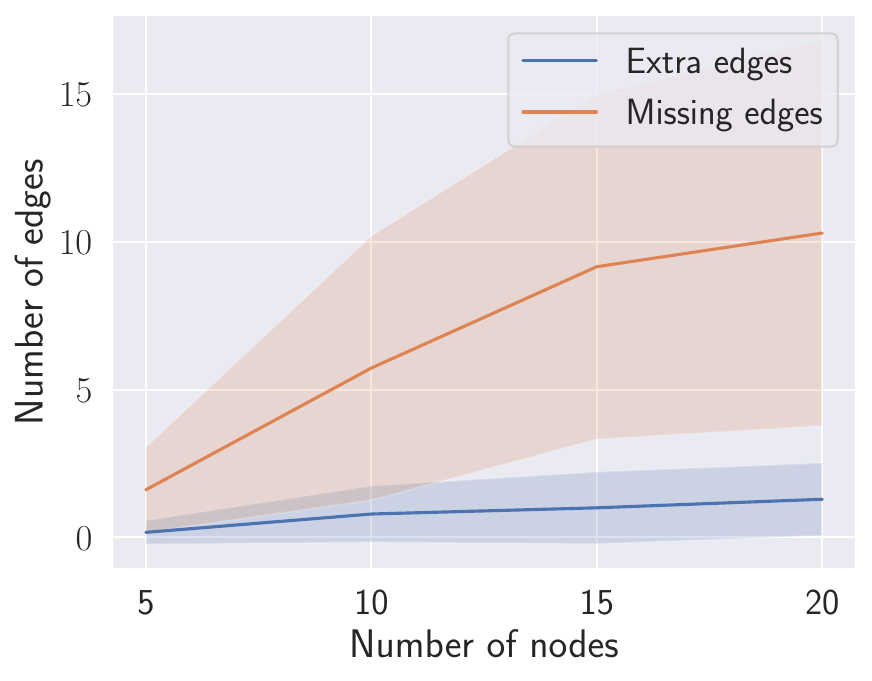}
            \caption*{KCI tests}
            \label{fig:extra_missing_edges_kci}
        \end{subfigure}
        \begin{subfigure}[b]{0.3\linewidth}
            \includegraphics[width=\linewidth]{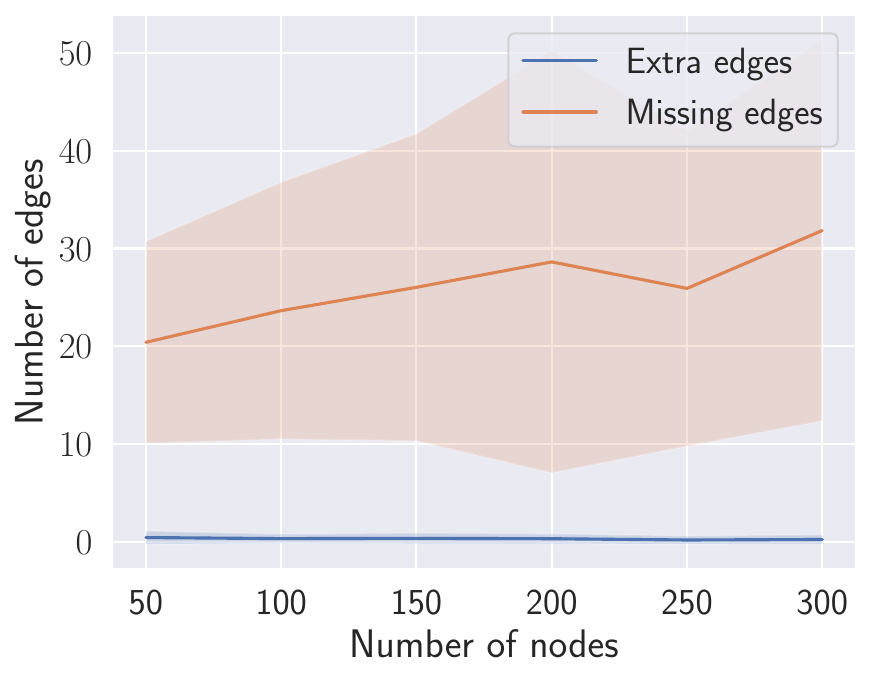}
            \caption*{$\chi^2$ tests}
            \label{fig:extra_missing_edges_chsq}
        \end{subfigure}
    \end{subfigure}
    \caption{Number of extra and missing edges in the output graphs by SNAP($\infty$) over number of nodes, with $n_{\mathbf{T}}=4, \overline{d} = 3, d_{\max}=10$ and $n_{\mathbf{D}} = 1000$ data-points for different CI tests.}
    \label{fig:extra_missing_edges}
\end{figure}

\subsection{SHD on the optimal adjustment set}
\label{app:shd_opt}

As shown in Figures~\ref{fig:quality_per_node_unrest_std} and \ref{fig:appendix_per_node_ident_std}, even though SNAP($\infty$) achieves slightly higher \ac{SHD} than the baselines in the finite data cases, especially when using Fisher-Z tests, its intervention distance remains comparable.
This could be due to measuring \ac{SHD} on the induced subgraph of the true CPDAG over all possible ancestors of the targets, even though a large portion of these possible ancestors might not be part of the optimal adjustment set.

Thus, in this section, we evaluate \ac{SHD} on the induced subgraph of the true CPDAG over only the nodes included in the optimal adjustment sets for some target pair.
In particular, we consider the same setting as in Figure~\ref{fig:appendix_per_node_ident_std}, namely graphs with varying number of nodes, expected degree of 3, maximum degree of 10 and 4 identifiable targets, which ensures that an optimal adjustment set exists.
Compared to the results shown in Figure~\ref{fig:shd_per_node_ident_std}, our results in Figure~\ref{fig:shd_per_node_ident_std_optadjset} show that the difference in \ac{SHD} between SNAP and baselines is considerably smaller when only considering optimal adjustment sets, especially in the case of Fisher-Z and KCI tests.

\begin{figure}
    \centering
    \includegraphics[width=.8\linewidth]{experiments/legend_small.pdf}
    \begin{subfigure}[b]{\linewidth}
        \centering
        \begin{subfigure}[b]{0.3\linewidth}
            \includegraphics[width=\linewidth]{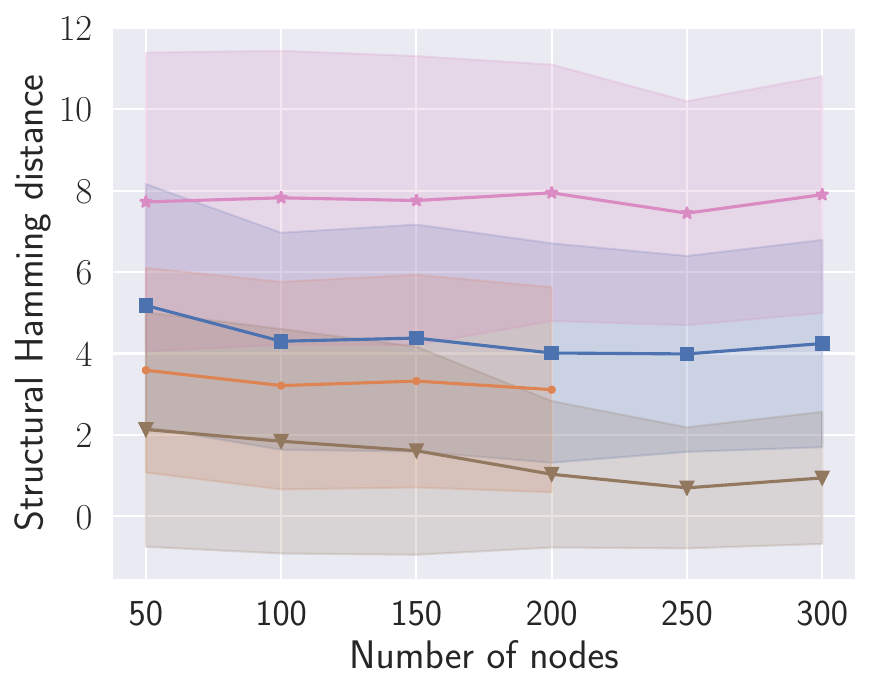}
            \caption*{Fisher-Z tests}
            \label{fig:shd_per_node_ident_fshz_opt_adj_set}
        \end{subfigure}
        \begin{subfigure}[b]{0.3\linewidth}
            \includegraphics[width=\linewidth]{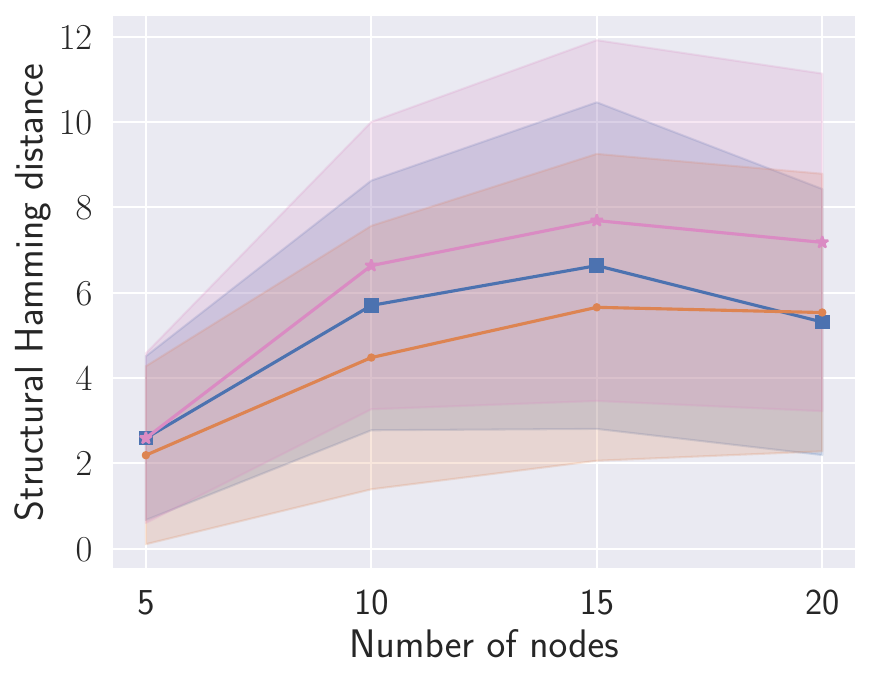}
            \caption*{KCI tests}
            \label{fig:shd_per_node_ident_kci_opt_adj_set}
        \end{subfigure}
        \begin{subfigure}[b]{0.3\linewidth}
            \includegraphics[width=\linewidth]{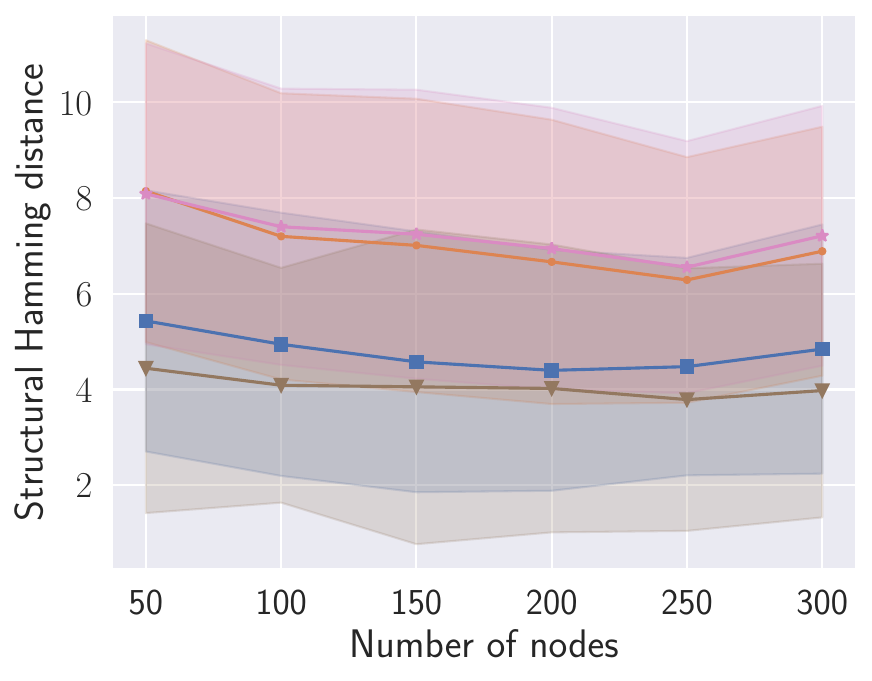}
            \caption*{$\chi^2$ tests}
            \label{fig:shd_per_node_ident_chsq_opt_adj_set}
        \end{subfigure}
    \end{subfigure}
    \caption{\ac{SHD} on the induced subgraph of the true CPDAG over only the nodes included in the optimal adjustment sets for some target pair. Results are shown over number of nodes for identifiable targets, with $n_{\mathbf{T}}=4, \overline{d} = 3, d_{\max}=10$ and $n_{\mathbf{D}} = 1000$ data-points. The shadow area denotes the range of the standard deviation.}
    \label{fig:shd_per_node_ident_std_optadjset}
\end{figure}

We also show \ac{SHD} results over the optimal adjustment sets for the MAGIC-NIAB and Andes networks on Figure~\ref{fig:shd_ident_real_data_opt_adj_set}.
Our results show that SNAP($\infty$) performs on par with most baselines.
For MAGIC-NIAB network, the average number of nodes involved in optimal adjustment sets among target pairs is $21.78$.
While there is still a noticeable difference in SHD, it is smaller than when considering all possible ancestors.
For the Andes network, the average number of nodes involved in optimal adjustment sets is more than twice of that at $48.38$, while the difference in SHD becomes much smaller compared to the results for SHD over all possible ancestors shown in Table~\ref{tab:andes}.

\begin{figure}
    \centering
    \begin{subfigure}[b]{\linewidth}
        \centering
        \begin{subfigure}[b]{0.45\linewidth}
            \includegraphics[width=\linewidth]{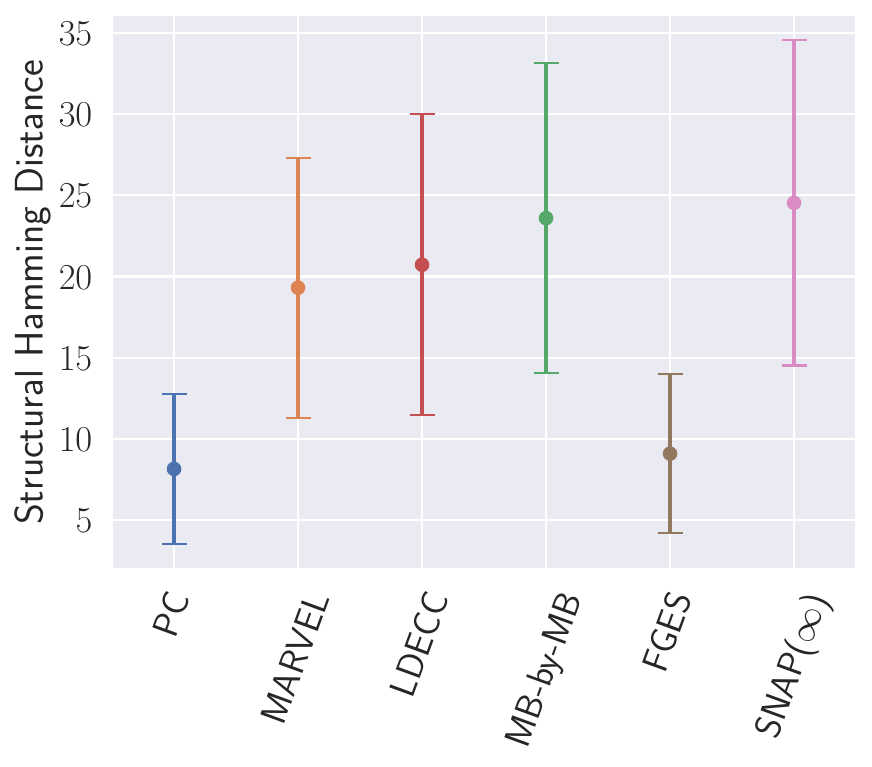}
            \caption*{MAGIC-NIAB}
            \label{fig:shd_ident_magic_niab_opt_adj_set}
        \end{subfigure}
        \begin{subfigure}[b]{0.45\linewidth}
            \includegraphics[width=\linewidth]{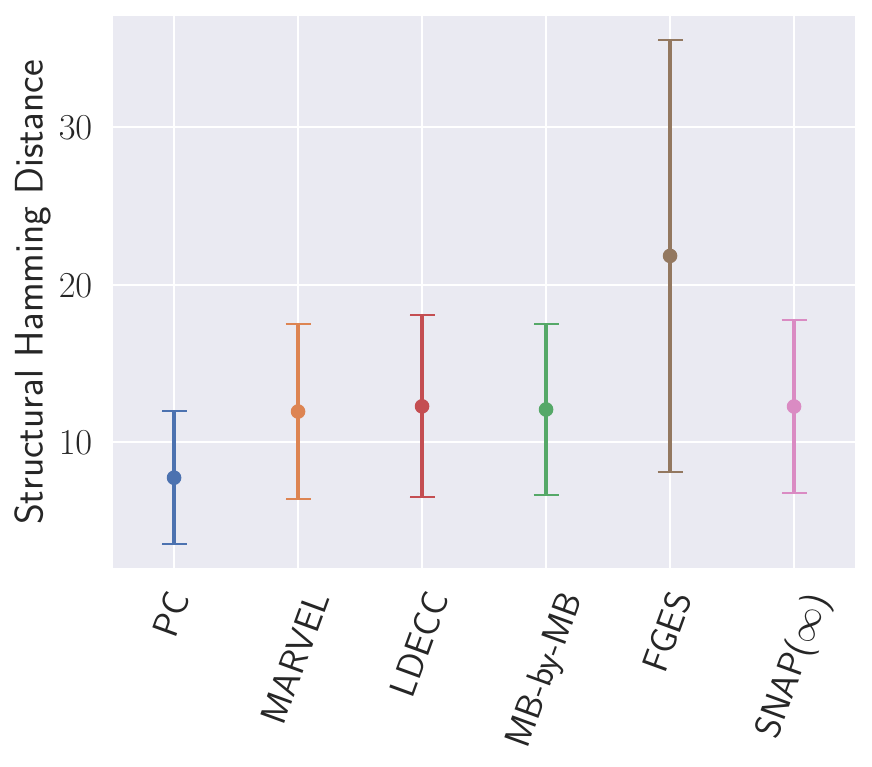}
            \caption*{Andes}
            \label{fig:shd_ident_andes_opt_adj_set}
        \end{subfigure}
    \end{subfigure}
    \caption{\ac{SHD} on the induced subgraph of the true CPDAG over only the nodes included in the optimal adjustment sets for some target pair, for the MAGIC-NIAB and Andes networks with $n_{\mathbf{T}} = 4$ identifiable targets and $n_{\mathbf{D}} = 1000$ data-points. The dots denote the mean, while the error bars indicate standard deviation.}
    \label{fig:shd_ident_real_data_opt_adj_set}
\end{figure}

\end{document}